\documentclass{article}

\usepackage{microtype}
\usepackage{graphicx}
\usepackage{subcaption}
\usepackage{booktabs} 

\usepackage{hyperref}

\usepackage[accepted]{icml2026}

\makeatletter
\renewcommand{\Notice@String}{This work was accepted for the International Conference on Machine Learning (ICML 2026).}
\makeatother

\usepackage{amsmath}
\usepackage{amssymb}
\usepackage{mathtools}
\usepackage{amsthm}
\usepackage{nicematrix}

\usepackage[capitalize,noabbrev]{cleveref}

\theoremstyle{plain}
\newtheorem{theorem}{Theorem}[section]
\newtheorem{proposition}[theorem]{Proposition}
\newtheorem{lemma}[theorem]{Lemma}
\newtheorem{corollary}[theorem]{Corollary}
\theoremstyle{definition}
\newtheorem{definition}[theorem]{Definition}
\newtheorem{assumption}[theorem]{Assumption}
\theoremstyle{remark}

\usepackage[textsize=tiny]{todonotes}

\usepackage{enumitem}
\usepackage[skip=0pt]{caption}
\setlist[itemize]{topsep=0pt,itemsep=0pt,parsep=0pt,partopsep=0pt,leftmargin=3mm }
\setlist[enumerate]{topsep=0pt,itemsep=0pt,parsep=0pt,partopsep=0pt,leftmargin=0.8\labelwidth}
\icmltitlerunning{Flatland: The Adventures of Gradient Descent with Large Step Sizes}

\makeatother
\newcommand{\R}{\mathbb{R}}
\newcommand{\Rn}{\mathbb{R}^n}

\DeclareMathOperator*{\grad}{\mathit{\nabla \!f}}

\newcommand{\norm}[1]{\left\|#1\right\|}
\makeatletter

\begin{document}

\twocolumn[
  \icmltitle{Flatland: The Adventures of Gradient Descent with Large Step Sizes}



  \icmlsetsymbol{equal}{*}

  \begin{icmlauthorlist}
    \icmlauthor{Leonardo Galli}{equal,LMU,MCML}
    \icmlauthor{Curtis Fox}{equal,UBC}
    \icmlauthor{Wiebke Bartolomaeus}{LMU,MCML}
    \icmlauthor{Mark Schmidt}{UBC,CIFAR}
    \icmlauthor{Holger Rauhut}{LMU,MCML}
  \end{icmlauthorlist}

  \icmlaffiliation{UBC}{University of British Columbia}
  \icmlaffiliation{LMU}{LMU Munich}
  \icmlaffiliation{MCML}{Munich Center for Machine Learning}
  \icmlaffiliation{CIFAR}{Canada CIFAR AI Chair (Amii)}

  \icmlcorrespondingauthor{Leonardo Galli}{galli@math.lmu.de}
  \icmlcorrespondingauthor{Curtis Fox}{curtfox@cs.ubc.ca}

  \icmlkeywords{Machine Learning, ICML}

  \vskip 0.3in
]



\printAffiliationsAndNotice{\icmlEqualContribution}

\begin{abstract}

The training of neural networks often entails objective functions that are not globally $L$-smooth. For these functions, it is both theoretically and practically difficult to answer the question: \textit{what is the largest possible step size that ensures the convergence of gradient descent (GD)?} 
We address this longstanding open question in deep learning by providing a unifying definition of ``large'' step sizes that requires only local Lipschitz (or even H\"older) continuity of the gradient. We design first-order adaptive methods that provably yield large step sizes and show that they operate at the edge of stability (EoS) right from the start of the training. In particular, the loss decreases nonmonotonically and the product between the step size and sharpness, i.e., the largest eigenvalue of the Hessian, stays above the EoS threshold of 2 throughout training. Using our method, we are able to decrease the sharpness down to its global minimum. Contrary to expectation, we find that encountering globally-flat regions too early in training may slow down convergence and jeopardize the generalization ability of the network. Exploiting a self-stabilization argument, we allow GD to enter slightly sharper valleys and turn unsuccessful training runs into successful ones. 
\end{abstract}

\section{Introduction}
 
In the recent literature on Deep Learning (DL), the term sharpness commonly refers to the largest eigenvalue of the Hessian $H$ of the training loss. A longstanding hypothesis \citep{hochreiter97a} suggests that solutions with low sharpness (flat minima) may lead to better generalization error when training neural networks. Despite the ongoing debate concerning the correlation between low sharpness and good generalization \cite{keskar17a,dinh17a,petzka21a,andriushchenko23b}, the success of Sharpness Aware Minimization (SAM) \citep{foret20a} in improving the generalization properties of neural networks across datasets and tasks 
suggests that flatter minima may be (at least on average) preferable over sharper ones.

In a large variety of architectures and datasets, the connection between sharpness and step sizes has been extensively explored in \citet{cohen20a}. The paper numerically observes that gradient descent (GD) with a fixed step size $\eta$ goes through two distinct phases. In the first phase (progressive sharpening), the loss function decreases monotonically, while the sharpness increases. In the second phase (Edge of Stability, EoS), the loss decreases nonmonotonically, while the sharpness stabilizes around the value $2/\eta$. By testing various fixed step sizes, the paper also shows that the step sizes that hit the EoS very early in the training yield the most spiking oscillations in function values while also achieving the fastest convergence. Beyond these step sizes, the training usually diverges because of large, uncontrolled oscillations. Given these observations, large step sizes seem preferable over small ones because they both speed up the training and yield flatter solutions.

The interest in large step sizes has rapidly grown as a consequence of the results in \citet{cohen20a}. In the recent literature, various effects of large step sizes have been proven under specific assumptions, e.g., speeding up convergence \citep{wu24a}, escaping sharp minima \citep{mohtashami23a}, and improving generalization \citep{ren24a}. In applications, large learning rates have been shown to enhance model compressibility and robustness to duplicated features \citep{barsbey25a}, and to speed up the overwriting of previous learned features in continual learning \citep{lyle25a}. However, the definition of ``large'' and ``small'' step sizes is incoherent between papers \citep{mohtashami23a,wang25a}. Furthermore, as already observed in \citet{cohen20a}, it is unclear whether GD with a certain step size $\eta$ will ever hit the edge of stability, and if so, how early this will occur in the training process. In this paper, we address these questions by providing a unifying definition of ``large'' step sizes which still allows GD to converge. Moreover, we propose an algorithm that finds these step sizes and consequently show that GD operates at the EoS right from the start of the training without the use of second-order information \citep{roulet2024a}. With this method, we provide a concrete answer to a longstanding open question in DL \citep{goodfellow16a}:

\textit{Q: How to select the largest possible learning rates while still ensuring the convergence of GD?}

In answering this question, we find that GD with the largest possible converging step sizes often finds regions of the loss landscape where the sharpness achieves its global minimum. Interestingly enough, these globally-flat regions are found by a first-order method that may be able to pick up second-order information only through its oscillatory behavior \cite{cohen25a,shugart25a}. Contrary to expectations, our results suggest that finding points where the sharpness is \textit{globally} minimized is not beneficial for the speed of convergence nor for the generalization ability of the resulting network. As we discuss further in Section \ref{Results}, these are saddle-points that when encountered too early in training, do not allow the network to learn meaningful features. On the other hand, we show that if GD avoids the globally-flat regions and takes steps through slightly sharper valleys, it very efficiently minimizes the training loss while achieving improved test accuracy (see Figure \ref{fig:exp6_CIFAR10_c0001_1}). 

Our contributions can be summarized as follows:
\begin{itemize}
\item We propose a new definition of large step sizes and an algorithm that practically finds them. Given the similar oscillating behavior achieved by the step sizes operating at the EoS \cite{cohen20a} and that of nonmonotone line searches \citep{grippo86a,galli23a}, our method exploits the latter to answer the question \textit{Q}. We analyze the properties of these large step sizes both in terms of convergence and relationship to sharpness. We show that they are not only Lipschitz-aware, but also H\"older-aware for any $k\geq0$ (see Propositions \ref{lemma:lipschitz_aware} and \ref{lemma:hoelder_aware}). Without ever employing second-order information, we further show that these properties translate into an approximate sharpness-awareness in proximity of a stationary point (see Corollary \ref{cor:at_convergence}).

\item Beyond verifying numerically that the above properties hold, we show that these large step sizes operate at the EoS right from the start of the training. On various architectures trained on different $K$-class classification datasets, we showcase that GD with large step sizes may quickly run into regions with very low and unexpectedly constant sharpness values. We further characterize these points and find that they are almost stationary and that the $K$-largest eigenvalues of the Hessian all achieve the exact value of $\frac{2}{K}$. We prove that $\frac{2}{K}$ is the global minimum value of the sharpness, as the sub-Hessian of a square error loss calculated w.r.t. the bias term of the last layer is indeed diagonal with $K$ entries at $\frac{2}{K}$. Moreover, we observe that in some cases the eigenvalues just below the first $K$ pair-wise sum to 0. This implies that the above-mentioned stationary points are actually saddle. 
\item Among the conducted experiments, we identify a subset of those that ``get stuck'' in the globally-flat regions for many iterations, while not managing to reduce the training loss. We label these trainings as ``unsuccessfully-flat'' and notice that all other experiments in our benchmark never bring the sharpness to its global minimum. We thus identify another subset of runs that get very close to the globally-flat regions, but with sharpness values that are slightly higher than $\frac{2}{K}$. We label these runs ``successfully-flat'', as they often achieve very low training loss and high test accuracy. By exploiting a self-stabilization argument \citep{damian22a}, we propose to limit our large step sizes to stay below the value of $K$ and we turn ``unsuccessfully-flat'' runs into ``successfully-flat'' ones.
\end{itemize}

\section{Large Step Sizes and How to Find Them}
The training of modern neural networks is known to involve objective functions that may not be globally smooth \citep{li23a}. We overcome this limitation by assuming instead that $f$ is only locally $l(w,\eta_w)$-smooth and that this property holds only along the anti-gradient direction. We call \textit{segment smoothness} the resulting intersection between local and directional smoothness \citep{mishkin24a}. 
\begin{definition}\label{def:segment_smoothness}
     Let $f$ be continuously differentiable ($f\in C^1(\R^n)$) and let $w_\eta:= w-\eta\grad(w)$, then we define the local segment smoothness function $l:\R^n\times \R_+ \to \R_+$ as
\begin{align}\label{eq:locally_L}
l(w, \eta) := \sup_{{\hat{\eta}}\in (0, \eta]} \frac{\| \grad(w) - \grad(w_{\hat{\eta}}) \|}{\|w-w_{\hat{\eta}}\|},
\end{align}
with the convention that $\frac{0}{0}=0.$
\end{definition}
\begin{assumption}[Segment Smoothness]\label{asmt:segment_smoothness}
    Let $f\in C^1(\R^n)$, we say that $f$ is segment smooth if for every $w\in \R^n$ there exists an $\eta_w>0$ such that $l(w, \eta_w)$ is bounded. Without loss of generality\footnote{The key point is that $l(w, \eta)$ is monotone $\eta$, which allows us to increase $\eta_w$ until it satisfies $\eta_w \geq \frac{2}{l(w, \eta_w)}$.  More details are given in Lemma \ref{lemma:WLOG assumption}}, 
     we can assume that $\eta_w\geq \frac{2}{l(w, \eta_w)}$.
\end{assumption}

Given the assumption above, one can derive a local descent lemma (see Lemma \ref{lemma:LC1_bound}) which guarantees monotone decrease of the function value at iteration $k$ as long as $\eta < \frac{2}{l(w_k,\eta_{w_k})}$. On the contrary, if $\eta$ is always above $\frac{2}{l(w_k,\eta_{w_k})}$, it is easy to find examples for which GD diverges (e.g., on quadratic functions). Our definition of large step sizes requires that $\eta> \frac{2}{l(w_k,\eta_{w_k})}$ holds at least every $P$ iterations, but not for any $k$ \citep{wang25a}. In fact, the goal is to enforce an oscillatory behavior in $f$, as happens at the EoS and as required in \citet{wu24a} to prove GD's improved convergence speed.
\begin{definition}[Large Step Sizes]\label{def:large_step_sizes}
     Let $\{\eta_k\}$ be a sequence of step sizes, $\{w_k\}$ the corresponding sequence of iterates, and $P\in \mathbb{N}$. Then for any iteration $k$, the step size $\eta_k$ is said to be large if $(i)$ $\eta_k\geq \frac{1}{l(w_k,\eta_{w_k})}$ and $(ii)$ there exists a integer $p \in [0,P]$ such that $\eta_{k+p} > \frac{2}{l(w_{k+p},\eta_{w_{k+p}})}$.
\end{definition}

Given the above definition of large step sizes, we now propose a practical algorithm to find them. Although the nonmonotone convergence of GD at the EoS may be at first surprising, a deeper study reveals that the same behavior was already observed when employing nonmonotone line searches. These methods were first designed in \citet{grippo86a} to accept large step sizes that may not lead to a monotonic decrease of the objective function $f$, as required instead by their monotone counter-part \citep{armijo66a}. Additionally, the convergence of GD with a nonmonotone line search can be ensured despite the nonmonotonic decrease of $f$ (see Theorem \ref{thm:convergence}).
\begin{definition}\label{def:reference_value}
    We call $R_k$ a \textit{reference value} for a function $f$ and iterates $\{w_k\}_{k\in \mathbb{N}}$ if $R_k \geq f(w_k)$ for all $k\in \mathbb{N}$.
\end{definition}
For example, $R_k=f(w_k)$ recovers the monotone line search, and $R_k = \max_{i} f(w_{k-i})$ is the one suggested in \cite{grippo86a}.  
For $c\in (0,1)$, the  step size $\eta_k>0$ in (non-)monotone line searches has to satisfy  
\begin{align}
    f(w_k\! -\! \eta_k \nabla f(w_k)) &\leq R_k -c \eta_k \| \nabla f(w_k) \|^2.
\label{eq:line_search}
\end{align}
Setting $\epsilon_k:=R_k\!-\!f(w_k)\!\geq\!0,$ the above inequality can be reformulated as 
\begin{align}
    f(w_k\! -\! \eta_k \nabla f(w_k)) &\leq f(w_k) -c \eta_k \| \nabla f(w_k) \|^2 \!+\!\epsilon_k.
\label{eq:nonmonotone}
\end{align}

In order to find the step sizes that are ``large'' according to Definition \ref{def:large_step_sizes}, we design a new line search algorithm with both backtracking and extrapolation steps. More specifically, if \eqref{eq:nonmonotone} is initially false, the step size is halved until the inequality is true; if \eqref{eq:nonmonotone} is initially true, the step size is instead doubled (see Algorithm \ref{alg:lip_aware_line_search} for more details). We call line searches following this procedure \textit{equality line searches} as one can show that $\eta_k$ approximates $\eta_k^*$, i.e., the step size for which \eqref{eq:nonmonotone} is satisfied as an equality. In particular, from Lemma \ref{lemma:finite_termination} we have $\eta_k^*\in (\eta_k,\frac{\eta_k}{2})$.
Notice that instead of doubling or halving the step size at each internal iteration of the line search, one could more generally multiply or divide the step size by a constant $\delta\in(0,1)$. For the step sizes yielded by the equality line searches, we have the following properties.

\noindent 
\begin{proposition}[Lipschitz-Awareness]\label{lemma:lipschitz_aware} 
Let $f$ be segment smooth and let $\eta_{k}$ be the result of an equality line search (Algorithm \ref{alg:lip_aware_line_search}) employed to satisfy (\ref{eq:nonmonotone}) with $\epsilon_k\geq 0$, then 
	\begin{equation}\label{eq:eta_k_lb_mp}
		\eta_k \geq  \frac{2\delta(1-c)}{l(w_k, \eta_{w_k})}.
	\end{equation}
\end{proposition}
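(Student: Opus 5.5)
The plan is to combine the local descent lemma (Lemma \ref{lemma:LC1_bound}) with the mechanics of the equality line search. The key intermediate claim is that every step size $\eta \le \min\{\eta_{w_k}, \frac{2(1-c)}{l(w_k,\eta_{w_k})}\}$ satisfies (\ref{eq:nonmonotone}), even with $\epsilon_k = 0$. To see this, fix $\eta \in (0,\eta_{w_k}]$. Since $l(w_k,\cdot)$ is a supremum over $\hat\eta\in(0,\eta]$, it is monotone nondecreasing in its second argument, so $l(w_k,\eta)\le l(w_k,\eta_{w_k}) =: l_k$. The gradient therefore changes at rate at most $l_k$ along the segment $[w_k, w_k-\eta\nabla f(w_k)]$. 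Writing $f(w_\eta)-f(w_k)$ as an integral of $\langle \nabla f(w_k - t\nabla f(w_k)),-\nabla f(w_k)\rangle$ over $t\in[0,\eta]$ and bounding the gradient difference by $l_k t\|\nabla f(w_k)\|$ gives
\begin{equation*}
f(w_k-\eta\nabla f(w_k)) \le f(w_k) - \eta\Bigl(1-\tfrac{l_k\eta}{2}\Bigr)\|\nabla f(w_k)\|^2 ,
\end{equation*}
which is what Lemma \ref{lemma:LC1_bound} should provide. The right-hand side is at most $f(w_k)-c\eta\|\nabla f(w_k)\|^2+\epsilon_k$ whenever $1-\frac{l_k\eta}{2}\ge c$, i.e.\ whenever $\eta\le \frac{2(1-c)}{l_k}$. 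The assumption $\eta_{w_k}\ge 2/l_k \ge 2(1-c)/l_k$ from Assumption \ref{asmt:segment_smoothness} shows that the threshold $\frac{2(1-c)}{l_k}$ lies inside $(0,\eta_{w_k}]$. Hence the whole interval $(0,\frac{2(1-c)}{l_k}]$ consists of acceptable step sizes.

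Next I would split into the two branches of Algorithm \ref{alg:lip_aware_line_search}.

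\emph{Backtracking branch.} Here the initial trial fails and the step is multiplied by $\delta$ until (\ref{eq:nonmonotone}) holds. The returned $\eta_k$ equals $\delta\eta'$, where $\eta'$ is the previous trial, which was rejected. By the claim above, a rejected step must satisfy $\eta' > \frac{2(1-c)}{l_k}$, so $\eta_k > \frac{2\delta(1-c)}{l_k}$.

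\emph{Extrapolation branch.} Here the initial trial is accepted and the step is divided by $\delta$ until the first rejection, returning the last accepted step $\eta_k$. The next trial $\eta_k/\delta$ is rejected, so $\eta_k/\delta > \frac{2(1-c)}{l_k}$, which again gives the bound.

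If the algorithm also caps the number of extrapolations or returns the initial step in some edge case, I would check that case separately using the initial step size. The finite-termination Lemma \ref{lemma:finite_termination} guarantees that both loops stop.

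The main obstacle is the claim that acceptability holds on the \emph{entire} interval $(0,\frac{2(1-c)}{l_k}]$. It is not enough to know this at isolated points, because the extrapolation branch could in principle stop at an earlier rejection. This is exactly where the monotonicity of $l(w_k,\cdot)$ and the WLOG condition $\eta_{w_k}\ge 2/l_k$ (Lemma \ref{lemma:WLOG assumption}) are needed. Some care is also required in the descent-lemma integral, since $l$ only controls gradient differences relative to $w_k$ and not between arbitrary pairs of points on the segment. Fortunately, that one-sided control is exactly what the integral bound uses.
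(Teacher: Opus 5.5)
Your proposal is correct and is essentially the paper's argument. The paper likewise combines the segment descent lemma (Lemma \ref{lemma:LC1_bound}) with the fact that $\eta_k/\delta$ is always rejected (Lemma \ref{lemma:finite_termination}(b)), and treats the case $\eta_k/\delta>\eta_{w_k}$ separately via $\eta_{w_k}\ge 2/l(w_k,\eta_{w_k})$; your version absorbs that case split up front by showing that the whole interval $(0,2(1-c)/l(w_k,\eta_{w_k})]$ is acceptable, which is logically equivalent.
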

Thus, if we choose $c$ and $\delta$ such that $2\delta(1-c)\geq 1$, the property $(i)$ of Definition \ref{def:large_step_sizes} is guaranteed. Moreover, by setting $\delta\approx1$ and $c\approx0$, then $(ii)$ also holds with $P\approx0$ (see Lemma \ref{lemma:b9} for more details).

\par While the previous bound is a property of both monotone and nonmonotone equality line searches, the next property only holds for the latter. That is, we can obtain a similar result for a more general class of functions, i.e., functions with H\"older continuous gradients along the segment $[w,w_{\eta_w}]$.
\begin{definition}\label{def:hoelder_smooth}
    Let $f\!\in\!C^1(\R^n)$ and $\nu\!\in\![0,1]$, we define the local H\"older segment smoothness function $M_\nu:\!\R^n\!\times\!\R_+\!\to\!\R$ as
    \begin{equation*}
        M_\nu(w, \eta) = \sup_{\hat{\eta} \in (0, \eta]} \frac{\|\nabla f(w) - \nabla f(w_{\hat{\eta}})\|}{\|w-w_{\hat{\eta}}\|^\nu},
    \end{equation*}
    again with the convention that $\frac{0}{0}=0.$
\end{definition}
\begin{assumption}[Hölder segment smoothness]
    \label{asmt:hoelder}
    Let $f\!\in\!C^1(\R^n)$ and $\nu \in [0,1]$, we say that $f$ is H\"older segment smooth if $\forall w\in \R^n, \; \exists\,\eta_w>0:$  $M_\nu(w, \eta_w)$ is finite. Without loss of generality we can assume for any fixed $d>0$, that \begin{align*}
        \eta_w \geq \frac{2}{d M_\nu(w, \eta_{w})^{\frac{2}{1+\nu}}}.
    \end{align*}
    We then say $f$ is Hölder segment smooth with constant $d>0$. 
\end{assumption}

\begin{proposition}\label{lemma:hoelder_aware}
    Let $f$ be H\"older segment smooth with $d=\min\{1, e^{-2\frac{f(x_0)-f^*}{e}}\}$ and let $\eta_{k}$ be the result of an equality line search employed to satisfy (\ref{eq:nonmonotone}) with $\epsilon_k>0$, then
    \begin{equation}\label{eq:eta_lb_hoelder_mp}
            \eta_k \geq \frac{2\delta(1-c)}{\alpha(M_{\nu}(w_k, \eta_{w_k}),2 \epsilon_k)}
    \end{equation}
with $$\alpha(M_{\nu}(w_k, \eta_{w_k}),\epsilon_k)\!:=\!M_{\nu}(w_k, \eta_{w_k})^{\frac{2}{1+\nu}} \cdot \left(\frac{1-\nu}{1+\nu}\!\cdot\!\frac{1}{\epsilon_k}\right)^{\frac{1-\nu}{1+\nu}}.$$ 
\end{proposition}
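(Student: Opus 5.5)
The plan follows the proof of Proposition \ref{lemma:lipschitz_aware}, with the quadratic descent lemma replaced by a Hölder version whose error is absorbed by the slack $\epsilon_k$.

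\textbf{Step 1: Hölder descent bound.} For $\eta\le \eta_{w_k}$, integrate $\nabla f$ along the segment $[w_k, w_k-\eta\nabla f(w_k)]$ and use the definition of $M_\nu:=M_\nu(w_k,\eta_{w_k})$ to get
\[
f(w_k-\eta\nabla f(w_k)) \le f(w_k) - \eta\|\nabla f(w_k)\|^2 + \frac{M_\nu}{1+\nu}\,\eta^{1+\nu}\|\nabla f(w_k)\|^{1+\nu}.
\]

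\textbf{Step 2: Young's inequality.} Bound the last term with Young's inequality, using exponents $\frac{2}{1+\nu}$ and $\frac{2}{1-\nu}$ (the standard inexact-smoothness trick of Nesterov). This gives, for any $\epsilon>0$,
\[
\frac{M_\nu}{1+\nu}t^{1+\nu}\le \frac{\alpha(M_\nu,\epsilon)}{2}t^2+\frac{\epsilon}{2}, \qquad t=\eta\|\nabla f(w_k)\|.
\]
I would choose $\epsilon=\epsilon_k$ and verify by direct computation that the constant matches $\alpha(M_\nu,2\epsilon_k)$ as written. Combining with Step 1 gives
\[
f(w_k-\eta\nabla f(w_k))\le f(w_k)-\eta\Bigl(1-\tfrac{\alpha\eta}{2}\Bigr)\|\nabla f(w_k)\|^2+\epsilon_k .
\]
Hence \eqref{eq:nonmonotone} holds whenever $\eta\le \min\{\eta_{w_k}, 2(1-c)/\alpha\}$. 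Getting the constants in this Young step exactly right is a bookkeeping issue.

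\textbf{Step 3: line search argument.} Argue as in the Lipschitz case.
\begin{itemize}
\item If backtracking occurred, the last rejected step $\eta_k/\delta$ violated \eqref{eq:nonmonotone}. Then either $\eta_k/\delta>\eta_{w_k}$ or $\eta_k/\delta>2(1-c)/\alpha$.
\item If extrapolation occurred, the accepted step only grows from a value already satisfying the condition, so the bound again follows from the stopping rule (via Lemma \ref{lemma:finite_termination}).
\end{itemize}

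\textbf{Main obstacle: ruling out the case $\eta_k/\delta>\eta_{w_k}$.} Here Step 1 cannot be applied, and the bound $\eta_k\ge\delta\eta_{w_k}$ must be shown to dominate $2\delta(1-c)/\alpha$. By Assumption \ref{asmt:hoelder}, $\eta_{w_k}\ge 2/(dM_\nu^{2/(1+\nu)})$, so it suffices that
\[
d\,M_\nu^{2/(1+\nu)}\le \alpha(M_\nu,2\epsilon_k),\quad\text{i.e.}\quad d\le\Bigl(\frac{1-\nu}{2(1+\nu)\epsilon_k}\Bigr)^{\frac{1-\nu}{1+\nu}} .
\]
This is where the particular choice $d=\min\{1,e^{-2(f(x_0)-f^*)/e}\}$ enters. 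I would bound $\epsilon_k\le R_k-f^*\le f(x_0)-f^*$, assuming the reference values are nonincreasing and $R_0=f(x_0)$. It then remains to check that
\[
\Bigl(\frac{1-\nu}{2(1+\nu)\epsilon}\Bigr)^{\frac{1-\nu}{1+\nu}}\ge \min\{1,e^{-2\epsilon/e}\}\quad\text{uniformly in }\nu\in[0,1].
\]
Substituting $s=\frac{1-\nu}{1+\nu}\in[0,1]$, this reduces to minimizing $(s/(2\epsilon))^s$ over $s$. Since $s^s\ge e^{-1/e}$, the minimum is at least $e^{-1/e}(2\epsilon)^{-s}$, which is controlled by $e^{-2\epsilon/e}$ after an elementary estimate on $\log$ (or handled separately when $2\epsilon\le1$, where the bound is at least $1$ up to the $s^s$ factor). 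I expect this scalar inequality, and making it hold for all $\nu$, to be the delicate step. If it fails as stated, I would fall back on a slightly larger $\epsilon$ argument or a case split on $\epsilon_k\lessgtr 1/2$.
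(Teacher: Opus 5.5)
Your plan matches the paper's proof step for step. It uses the H\"older descent lemma on the segment, Nesterov's Young-inequality trick with exponents $\frac{2}{1+\nu},\frac{2}{1-\nu}$, and the fact (Lemma \ref{lemma:finite_termination}(b)) that $\eta_k/\delta$ always violates \eqref{eq:nonmonotone}, so backtracking and extrapolation need no separate treatment. It then splits on whether $\eta_k/\delta\le\eta_{w_k}$, reducing the second case to $d\le (s/(2\epsilon_k))^s$ for all $s=\frac{1-\nu}{1+\nu}\in[0,1]$. One bookkeeping fix: your Young bound has slack $\epsilon/2$, so you must take $\epsilon=2\epsilon_k$, not $\epsilon_k$. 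That gives slack exactly $\epsilon_k$ together with the constant $\alpha(M_\nu,2\epsilon_k)$; with $\epsilon=\epsilon_k$ you only obtain the weaker constant $\alpha(M_\nu,\epsilon_k)$.

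The step that actually fails is your proposed proof of the scalar inequality. Writing $(s/(2\epsilon))^s=s^s(2\epsilon)^{-s}$ and bounding $s^s\ge e^{-1/e}$ separately loses too much. The two factors are minimized at different values of $s$, and the target inequality is essentially tight.
\begin{itemize}
\item Concretely, if $f(x_0)-f^*=0.1$, then $d=e^{-0.2/e}\approx 0.93$, while for $2\epsilon_k\le 0.2$ your estimate only gives $e^{-1/e}\approx 0.69$.
\item More generally, whenever $f(x_0)-f^*\le e/2$, the true minimum over $s\in[0,1]$ is exactly $e^{-2\epsilon_k/e}$. This tends to $d$ as $\epsilon_k\to f(x_0)-f^*$, so no case split that keeps the separate bound on $s^s$, and no enlargement of $\epsilon$, can rescue it.
\end{itemize}
The missing idea is to minimize jointly. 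The function $g(s)=s\ln\frac{s}{2\epsilon_k}$ is convex ($g''(s)=1/s$), and $g'(s)=\ln\frac{s}{2\epsilon_k}+1$ vanishes at $s=2\epsilon_k/e$, where $g=-2\epsilon_k/e$. Hence $(s/(2\epsilon_k))^s\ge e^{-2\epsilon_k/e}\ge e^{-2(f(x_0)-f^*)/e}=d$ for every $s\in[0,1]$ (with $0^0=1$), using $\epsilon_k\le f(x_0)-f^*$. This is the paper's argument. It also checks the endpoint value $1/(2\epsilon_k)$ via $1/x\ge e^{-x/e}$, which is redundant once you use the unconstrained minimum. With this replacement, and your bound $\epsilon_k\le f(x_0)-f^*$ (valid for the Zhang--Hager reference value, since $R_k$ is nonincreasing with $R_0=f(x_0)$), the argument goes through.
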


Notice that for the reference value (Definition \ref{def:reference_value}) from \citet{zhang04a} we have $\epsilon_k\!>\!0$ for any finite $k$, but $\epsilon_k$ is sometimes 0 in the case of the original nonmonotone line search \citep{grippo86a}. For the former, one can also prove the following convergence result without global Lipschitz continuity of $f$ as required instead by the latter. From now on, we will only consider the line search by \citet{zhang04a}.
\begin{theorem}\label{thm:convergence}
     Let $f\in C^1(\R^n)$ be lower bounded with compact level set $\mathcal{L}_0$. Let $\{w_k\}$ be generated by GD and $\{\eta_k\}$ be generated by a nonmonotone equality line search of the type \citet{zhang04a} (Algorithm \ref{alg:GD})
    , then
    $$\lim_{k \to \infty} ||\grad (w_k)|| = 0.$$
\end{theorem}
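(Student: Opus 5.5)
Recall the Zhang--Hager reference value: $R_0=f(w_0)$, $Q_0=1$, and
\[
Q_{k+1}=\xi Q_k+1,\qquad R_{k+1}=\frac{\xi Q_k R_k+f(w_{k+1})}{Q_{k+1}},\qquad \xi\in[0,1).
\]
Here $Q_k\le 1/(1-\xi)$ is bounded. The plan follows the standard Zhang--Hager argument, adapted to the fact that $f$ is only $C^1$ with compact level set. The step size may be obtained either by backtracking or by extrapolation; in both cases it satisfies the acceptance condition \eqref{eq:line_search}.

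\textbf{Step 1: $R_k$ is nonincreasing and the iterates stay in $\mathcal{L}_0$.} Since $f(w_{k+1})\le R_k-c\eta_k\|\nabla f(w_k)\|^2$, the update gives
\[
R_{k+1}\le R_k-\frac{c\,\eta_k\|\nabla f(w_k)\|^2}{Q_{k+1}}.
\]
So $R_k$ decreases monotonically. Moreover $f(w_{k+1})\le R_{k+1}\le R_0=f(w_0)$, and an induction gives $f(w_k)\le R_k$ for all $k$. Hence every $w_k$ lies in $\mathcal{L}_0$, and $R_k$ is bounded below by $\inf f$. Summing the inequality above and using $Q_{k+1}\le 1/(1-\xi)$ yields
\[
\sum_k \eta_k\|\nabla f(w_k)\|^2<\infty.
\]

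\textbf{Step 2: a uniform lower bound on the step size.} This is the main obstacle. Without global Lipschitz continuity we cannot simply invoke Proposition \ref{lemma:lipschitz_aware} with a uniform constant. Instead I would use compactness of $\mathcal{L}_0$ together with continuity of $\nabla f$. Whenever backtracking occurs, the rejected trial step $\eta_k/\delta$ violates \eqref{eq:nonmonotone}. Since $\epsilon_k\ge0$, it therefore also violates the monotone Armijo condition:
\[
f\bigl(w_k-\tfrac{\eta_k}{\delta}\nabla f(w_k)\bigr)>f(w_k)-c\,\tfrac{\eta_k}{\delta}\|\nabla f(w_k)\|^2 .
\]
Argue by contradiction. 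Suppose that on a subsequence $\|\nabla f(w_k)\|\ge\varepsilon>0$ while $\eta_k\to0$. By compactness, pass to a convergent subsequence $w_k\to\bar w$. Apply the mean value theorem to the displayed inequality and use uniform continuity of $\nabla f$ on a compact neighborhood of $\mathcal{L}_0$; the trial points stay in such a neighborhood because the trial step goes to $0$ and $\nabla f$ is bounded on $\mathcal{L}_0$. In the limit this gives $\|\nabla f(\bar w)\|^2\le c\|\nabla f(\bar w)\|^2$, which contradicts $c<1$ and $\|\nabla f(\bar w)\|\ge\varepsilon$.

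Steps where extrapolation is used only make $\eta_k$ larger than the initial trial. Assuming the initial trial step is bounded below (for instance by the previous accepted step, or by a fixed $\eta_0$), we still get a lower bound on this subsequence. One must also ensure that extrapolation terminates, by Lemma \ref{lemma:finite_termination}; this uses boundedness below of $f$ together with $\epsilon_k$ being bounded.

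\textbf{Step 3: conclusion.} Step 2 shows that $\eta_k\ge\bar\eta(\varepsilon)>0$ whenever $\|\nabla f(w_k)\|\ge\varepsilon$. Combined with the summability from Step 1, this means $\|\nabla f(w_k)\|\ge\varepsilon$ can hold for only finitely many $k$. Since $\varepsilon$ is arbitrary, $\|\nabla f(w_k)\|\to0$.

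The delicate points are the compactness argument in Step 2, in particular keeping the rejected trial points inside a compact set, and handling the extrapolation branch of Algorithm \ref{alg:GD}.
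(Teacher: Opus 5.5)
Your Steps 1 and 3 and the core of Step 2 follow the paper's argument. The Zhang--Hager recursion gives $R_{k+1}\le R_k-c\,\eta_k\|\nabla f(w_k)\|^2/Q_{k+1}$, hence summability of $\eta_k\|\nabla f(w_k)\|^2$. Vanishing steps along a subsequence with gradients bounded away from zero are then excluded by applying the mean value theorem to the rejected trial step $\eta_k/\delta$ and using continuity of $\nabla f$ on a compact neighbourhood of $\mathcal{L}_0$. The paper organizes this last part as a case analysis on subsequential limits of $\eta_k$ (infinite, positive, or zero), which is equivalent to your uniform bound $\bar\eta(\varepsilon)$.

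The gap is in your treatment of the extrapolation branch. You use the rejection of $\eta_k/\delta$ only ``whenever backtracking occurs''. For the other iterations you fall back on $\eta_k\ge\eta_{k,0}$ plus an assumed lower bound on $\eta_{k,0}$, and that assumption is not available for the algorithm in the theorem. Algorithm \ref{alg:GD} sets $\eta_{k,0}=\eta_{k-1}$, and nothing you have proved prevents $\eta_{k-1}$ from being tiny. For instance, it can follow a backtracking iteration with $\|\nabla f(w_{k-1})\|<\varepsilon$, where your Step 2 gives no bound. Then $\eta_k\ge\eta_{k-1}$ does not produce a uniform $\bar\eta(\varepsilon)$, and using ``the previous accepted step'' as the lower bound is circular.

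The missing observation is that no separate treatment is needed. The extrapolation loop of Algorithm \ref{alg:lip_aware_line_search} stops at the first rejected trial $\delta^{-m}\eta_{k,0}$ and returns $\eta_k=\delta^{-m+1}\eta_{k,0}$. So in both branches the output satisfies property (b) of Lemma \ref{lemma:finite_termination}: $\eta_k/\delta$ violates \eqref{eq:nonmonotone}, and hence also the monotone Armijo inequality, since $\epsilon_k\ge 0$. With this, your contradiction argument applies to every index of the subsequence, and Step 3 goes through with no hypothesis on $\eta_{k,0}$. This ``always one rejected trial'' property is exactly what the paper's proof uses. It is also what distinguishes the equality line search from plain backtracking, where $l_k=0$ leaves $\eta_k=\eta_{k,0}$ uncontrolled.
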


To conclude on the properties, we have that when the gradient vanishes and $w_k\!\to\!w^*$, $l(w_k, \eta)$ converges to the directional sharpness at $w^*$ which implies that for large enough $k$, the step size $\eta_k$ yielded by the new equality line searches is lower bounded by the reciprocal of the sharpness.
\begin{corollary}[Sharpness-Awareness] \label{cor:at_convergence}
    Let $f\in C^2(\R^n)$. Let $\{w_k\}$ be generated by GD where $\eta_k$ is generated by Algorithm \ref{alg:lip_aware_line_search}.  Let $f$ satisfy Assumption \ref{asmt:segment_smoothness} on $w_k \;\forall k > \bar{k}$, and let $\lambda_i$ be  the eigenvalues of the hessian ordered by their absolute value. Assume that $\{w_k\}$ converges to $w^*$.
    Then $\forall \varepsilon>0, \exists \hat{k}>\bar{k}:$
    $$\eta_k\geq\frac{2(1-c)\delta}{\displaystyle |\lambda_1(H(w^*))|+ \varepsilon} \quad \forall k\geq \hat{k}.$$
\end{corollary}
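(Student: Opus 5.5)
The argument reduces to \cref{lemma:lipschitz_aware}. For every $k>\bar k$ the equality line search already satisfies $\eta_k\geq 2\delta(1-c)/l(w_k,\eta_{w_k})$. It is therefore enough to show that, for every $\varepsilon>0$, eventually
\[
l(w_k,\eta_{w_k})\leq |\lambda_1(H(w^*))|+\varepsilon .
\]
Since $\lambda_1$ is the eigenvalue of largest absolute value, $|\lambda_1(H(w^*))|=\|H(w^*)\|$ in operator norm. This reduces the statement to a continuity estimate of the segment-smoothness function near $w^*$.

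To obtain that estimate, fix $\hat\eta\in(0,\eta_{w_k}]$ and write $w_{\hat\eta}=w_k-\hat\eta\nabla f(w_k)$. Because $f\in C^2$, the mean value theorem in integral form gives
\[
\nabla f(w_k)-\nabla f(w_{\hat\eta})=\int_0^1 H\bigl(w_k+t(w_{\hat\eta}-w_k)\bigr)\,dt\,(w_k-w_{\hat\eta}).
\]
Hence the difference quotient in \eqref{eq:locally_L} is at most $\sup_{t\in[0,1]}\|H(w_k-t\hat\eta\nabla f(w_k))\|$. Every point of the segment lies within distance $\|w_k-w^*\|+\eta_{w_k}\|\nabla f(w_k)\|$ of $w^*$. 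By continuity of $H$ at $w^*$, there is a radius $r>0$ with $\|H(v)\|\leq\|H(w^*)\|+\varepsilon$ for all $\|v-w^*\|\leq r$. Taking the supremum over $\hat\eta$ then gives the required bound on $l(w_k,\eta_{w_k})$, as long as the whole segment stays in this ball.

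The main obstacle is that the segment length $\eta_{w_k}\|\nabla f(w_k)\|$ must tend to zero. We know $w_k\to w^*$, so $\nabla f(w_k)\to\nabla f(w^*)$. Also $w^*$ is stationary, since $\|\nabla f(w^*)\|=\lim_k\|\nabla f(w_k)\|$, and we can take this to be $0$: either by \cref{thm:convergence}, or directly from $w_{k+1}-w_k=-\eta_k\nabla f(w_k)\to0$ together with $\eta_k$ being bounded below by \cref{lemma:lipschitz_aware} applied near $w^*$. The remaining issue is $\eta_{w_k}$, which Assumption \ref{asmt:segment_smoothness} allows to be enlarged arbitrarily.

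I would handle this by choosing $\eta_{w_k}$ as in Lemma \ref{lemma:WLOG assumption}, i.e.\ the minimal admissible value, essentially of order $2/l(w_k,\cdot)$. Near $w^*$ this value stays bounded, because for small $\eta$ the segment smoothness is close to the local curvature along $\nabla f(w_k)$. If that curvature degenerates, I would instead use the fact that any admissible $\eta_{w_k}$ can be replaced by $\min\{\eta_{w_k},\bar\eta\}$ for a fixed $\bar\eta$. This replacement can only decrease $l$ by monotonicity in $\eta$, and the line-search bound in \cref{lemma:lipschitz_aware} only needs the WLOG condition $\eta_{w_k}\geq 2/l(w_k,\eta_{w_k})$, which can be rechecked with the bounded choice.

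With $\eta_{w_k}$ bounded by some constant $\bar\eta$, choose $\hat k>\bar k$ such that $\|w_k-w^*\|+\bar\eta\|\nabla f(w_k)\|\leq r$ for all $k\geq\hat k$. Then $l(w_k,\eta_{w_k})\leq|\lambda_1(H(w^*))|+\varepsilon$ for these $k$. Substituting this into the bound of \cref{lemma:lipschitz_aware} gives the corollary.
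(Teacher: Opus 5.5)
Your route is the paper's: bound $l(w_k,\eta_{w_k})$ by the largest Hessian norm on the segment as in Lemma~\ref{cor:sharpnessxstep}, use continuity of $H$ at $w^*$, and plug into Proposition~\ref{lemma:lipschitz_aware}. You also correctly flag something the paper passes over: its assertion $w_k-\hat\eta_k\nabla f(w_k)\to w^*$ needs $\eta_{w_k}\|\nabla f(w_k)\|\to0$.

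\textbf{The gap.} Your handling of this step fails when $H(w^*)$ is singular. Take $f(w)=w^4$, $w^*=0$. For $w\neq0$ one has $l(w,\eta)=12w^2$ whenever $4\eta w^2\le 3$. So the minimal admissible $\eta_w$ is $1/(6w^2)$, which is unbounded as $w\to0$. (The segment length $2|w|/3$ does shrink, so boundedness of $\eta_{w_k}$ is not even the right target.) Capping at a fixed $\bar\eta$ does not help either. In this example $l(w_k,\bar\eta)=12w_k^2\to0$, so $\bar\eta<2/l(w_k,\bar\eta)$ and the WLOG condition cannot be ``rechecked''. Without that condition, Proposition~\ref{lemma:lipschitz_aware} cannot be invoked with $l(w_k,\bar\eta)$, because its case $\eta_k/\delta>\eta_{w_k}$ is exactly where the condition is used.

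\textbf{The fix.} Check the condition against a uniform constant instead of against $l$. Fix $\varepsilon$ and write $g_k:=\nabla f(w_k)$. Let $L:=|\lambda_1(H(w^*))|+\varepsilon$, pick $r$ with $\|H\|\le L$ on $\{v:\|v-w^*\|\le r\}$, and use the fixed parameter $\bar\eta:=2/L$. For large $k$ we have $\|w_k-w^*\|+\tfrac2L\|g_k\|\le r$. Your integral estimate then gives $f(w_k-\eta g_k)\le f(w_k)-(\eta-\tfrac L2\eta^2)\|g_k\|^2$ for all $\eta\in[0,2/L]$. Now use that $\eta_k/\delta$ is rejected (Lemma~\ref{lemma:finite_termination}(b)) and that $R_k\ge f(w_k)$:
\begin{itemize}
\item if $\eta_k/\delta\le 2/L$, combining the two inequalities yields $\eta_k\ge 2\delta(1-c)/L$;
\item if $\eta_k/\delta>2/L$, then $\eta_k>2\delta/L$ outright.
\end{itemize}
This argument never involves $\eta_{w_k}$.

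\textbf{Stationarity.} The same device repairs your ``direct'' stationarity argument, which is circular as written. It needs a uniform bound on $l(w_k,\eta_{w_k})$, and you obtained that bound only after assuming $\nabla f(w_k)\to0$. Instead, let $G:=\sup_k\|g_k\|$ and let $L'\ge1$ bound $\|H\|$ on $\{v:\|v-w^*\|\le 1+2G\}$. The same case split gives $\eta_k\ge2\delta(1-c)/L'$ once $\|w_k-w^*\|\le1$. Hence $g_k=(w_k-w_{k+1})/\eta_k\to0$, without the compact-level-set hypothesis of Theorem~\ref{thm:convergence}.
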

If we have that only a subsequence of $w_k$ converges to $w^*$, then the statement still holds by restricting to that subsequence. The proofs of all the results in the current and following sections are given in Appendix \ref{Mathematical Details}. 
\subsection{Other Large Step Sizes} \label{sec:other}
Beyond equality line searches, we show that there are other ways to achieve large step sizes and approximate sharpness-awareness. A step size with very similar properties is the one obtained via classical backtracking line searches initialized with a Polyak step size \citep{polyak1987introduction,galli23a}. In this case, the initial step size on each iteration is set such that
$\eta_{k,0} = \frac{f(w_k)-f^*}{c_p||\grad(w_k)||^2},$ where $f^*$ denotes the minimum value of $f$.
For these step sizes (see Algorithm \ref{alg:polyak_line_search} in the Appendix), we can also show Lipschitz-awareness and sharpness-awareness, but not H\"older-awareness.
\begin{proposition}\label{lemma:polyak}
Let $f\in C^2(\R^n)$ and satisfy Assumption \ref{asmt:segment_smoothness}. Let $\eta_{k}$ be generated by Algorithm \ref{alg:polyak_line_search} and 
$H_{\eta}(w_k):= \nabla^2 f(w_k-\eta\nabla f(w_k))$, then
\begin{equation}\label{eq:eta_lb_polyak_mp}
    \eta_k\geq\frac{\min \{2(1-c)\delta, 1/(2c_p)\}}{\displaystyle\max_{\eta \in [0,\eta_{w_k}]} |\lambda_1(H_{\eta}(w_k))|}.
\end{equation}
\end{proposition}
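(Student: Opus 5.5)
We prove the bound by splitting on how Algorithm \ref{alg:polyak_line_search} terminates. The algorithm starts from $\eta_{k,0}=\frac{f(w_k)-f^*}{c_p\|\nabla f(w_k)\|^2}$ and backtracks by the factor $\delta$ until the (nonmonotone) sufficient decrease condition \eqref{eq:nonmonotone} holds. Write $\Lambda_k:=\max_{\eta\in[0,\eta_{w_k}]}|\lambda_1(H_\eta(w_k))|$. There are two cases: either the initial Polyak step is accepted, so $\eta_k=\eta_{k,0}$, or at least one backtrack occurs.

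\textbf{Step 1 (relating $l$ to Hessians).} For $f\in C^2$, the mean value theorem in integral form gives
\[
\nabla f(w)-\nabla f(w_{\hat\eta})=\int_0^1 \nabla^2 f\bigl(w-t\hat\eta\nabla f(w)\bigr)\,\hat\eta\nabla f(w)\,dt .
\]
Every point on this segment has the form $w-\eta\nabla f(w)$ with $\eta\in[0,\hat\eta]\subseteq[0,\eta_{w}]$. Hence its Hessian has operator norm at most $\Lambda_k$, and $l(w_k,\eta_{w_k})\le \Lambda_k$.

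\textbf{Step 2 (backtracking case).} Suppose the step $\eta_k/\delta$ was rejected. The argument used for Proposition \ref{lemma:lipschitz_aware} then gives $\eta_k\ge 2\delta(1-c)/l(w_k,\eta_{w_k})$: the local descent Lemma \ref{lemma:LC1_bound} shows that any $\eta\le 2(1-c)/l$ satisfies \eqref{eq:nonmonotone} even with $\epsilon_k=0$. I will reuse that argument rather than redo it. One subtlety needs care: the rejected step $\eta_k/\delta$ must lie in $(0,\eta_{w_k}]$ for the segment bound to apply. If it does not, then $\eta_k/\delta>\eta_{w_k}\ge 2/l$ by the WLOG part of Assumption \ref{asmt:segment_smoothness}, which again yields the bound. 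Combining with Step 1 gives $\eta_k\ge 2\delta(1-c)/\Lambda_k$.

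\textbf{Step 3 (Polyak step accepted).} Here we need $\eta_{k,0}\ge \frac{1}{2c_p\Lambda_k}$, which reduces to showing
\[
f(w_k)-f^*\ge \frac{\|\nabla f(w_k)\|^2}{2\Lambda_k}.
\]
This is the standard smooth lower bound $f(w-\tfrac1L\nabla f)\le f(w)-\tfrac{1}{2L}\|\nabla f\|^2$ combined with $f^*\le f(w-\tfrac1L\nabla f)$.

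\textbf{Main obstacle.} The descent bound of Step 3 needs smoothness along the segment up to step $1/\Lambda_k$. Two facts supply this. First, $1/\Lambda_k\le 1/l$. Second, by the WLOG assumption $\eta_{w_k}\ge 2/l(w_k,\eta_{w_k})\ge 1/l$, so $1/l$ lies inside the segment on which $l$ controls the gradient variation. I will therefore take $L=l(w_k,\eta_{w_k})$ and apply the local descent Lemma \ref{lemma:LC1_bound} at step $1/l$. This gives
\[
f(w_k)-f^*\ge \frac{\|\nabla f(w_k)\|^2}{2l}\ge \frac{\|\nabla f(w_k)\|^2}{2\Lambda_k},
\]
hence $\eta_k\ge 1/(2c_p\Lambda_k)$.

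Taking the minimum over the two cases yields \eqref{eq:eta_lb_polyak_mp}. The degenerate case $\nabla f(w_k)=0$ is excluded or trivial.
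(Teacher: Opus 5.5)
Your proof is correct and follows the paper's route. The paper first proves the bound with $l(w_k,\eta_{w_k})$ in the denominator, using Proposition \ref{prop:lipschitz_aware_back} when at least one backtrack occurs and the descent lemma at step $1/l$ together with $f^*$ as a lower bound when the Polyak step is accepted, and then replaces $l$ by the maximal Hessian eigenvalue along the segment exactly as in your Step 1. Your integral form of the mean value theorem is slightly more careful than the paper's pointwise version, and your explicit check that $1/l\le\eta_{w_k}$ via the WLOG clause fills a step the paper leaves implicit.
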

In contraposition to line search methods, \citet{malitsky20a} proposed an adaptive step size for which one can show that $\eta_k \geq \displaystyle\min_{i=1,\dots, k-1} \frac{1}{2l(w_i, \eta_{w_i})}$, but not Proposition \ref{lemma:lipschitz_aware}. Another more recent method suggested by \citet{roulet2024a} employs second-order information to set the step size as
\begin{equation}\label{eq:CDAT}
    \eta_k := \sigma \frac{\|\nabla f(w_k)\|^2}{|\nabla f(w_k)^T H(w_k)\nabla f(w_k)|+ \epsilon},
\end{equation}
for some small $\epsilon\!>\!0$ and $\sigma\!>\!0$ the ``stepping-on-the-edge" parameter. Note that for this step size one can derive that 
\begin{equation*}
    \eta_k \approx \sigma \frac{\|\nabla f(w_k)\|^2}{|\nabla f(w_k)^T H(w_k)\nabla f(w_k)| } \geq \sigma \frac{1}{|\lambda_1(H(w_k))|}.
\end{equation*}
However, for values of $\sigma$ that allow the method to operate at the EoS (i.e., $\sigma\geq2$), the corresponding step size does not necessarily lead to the convergence of the method.

\section{Numerical Results}
\label{Results}

\par We train various neural networks for image classification to verify whether the bounds derived in the previous section are satisfied and whether GD with these large step sizes operates at the EoS. As in \citet{cohen20a}, we run full-batch GD on a subsample of 5000 images from the original datasets and compute the sharpness every 100 iterations. Our experiments are conducted on the cartesian product of the datasets CIFAR10, CIFAR100, SVHN, and EMNIST with the following networks: a Multi-Layer Perceptron (MLP), a Convolutional Neural Network (CNN), resnet34, wide$\_$resnet50$\_$2, densenet121, and vgg11. In the main paper, we focus on 3 datasets, each with a different number of classes $K$, and train each model on a different dataset. In particular, a CNN on EMNIST letters with 26 classes, resnet34 on SVHN with 10 classes, and vgg11 on CIFAR100 with 100 classes. The rest of the experiments can be found in the appendix. We use the Mean Square Error (MSE) loss for the experiments in the main paper, while the cross-entropy experiments can be found in Appendix \ref{app:cross_entropy_loss_experiments}. For more experimental details, see Appendix \ref{app:experimental_details}.

\newcommand{\dir}{plot}
\newcommand{\dataset}{CIFAR10}
\newcommand{\batchsize}{5000}
\newcommand{\model}{\dataset/CNN/\batchsize/}
\newcommand{\cval}{0001}
\newcommand{\imgS}{0.33}

\renewcommand{\dir}{exp1}
\begin{figure*}[!ht] 
	\centering
	\begin{minipage}{1\textwidth}
		\renewcommand{\model}{EMNIST/CNN/4992/}
		\includegraphics[width=\imgS\linewidth]{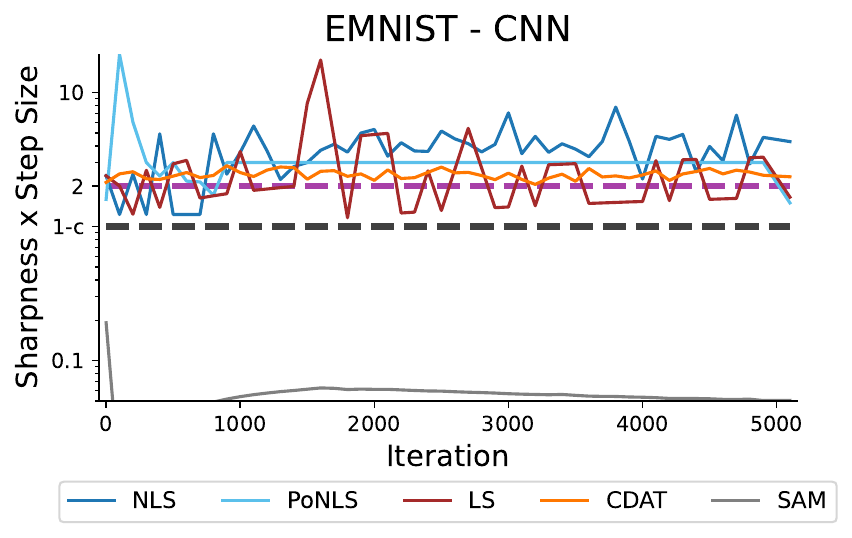}
		\renewcommand{\model}{SVHN/resnet34/4994/}
		\includegraphics[width=\imgS\linewidth]{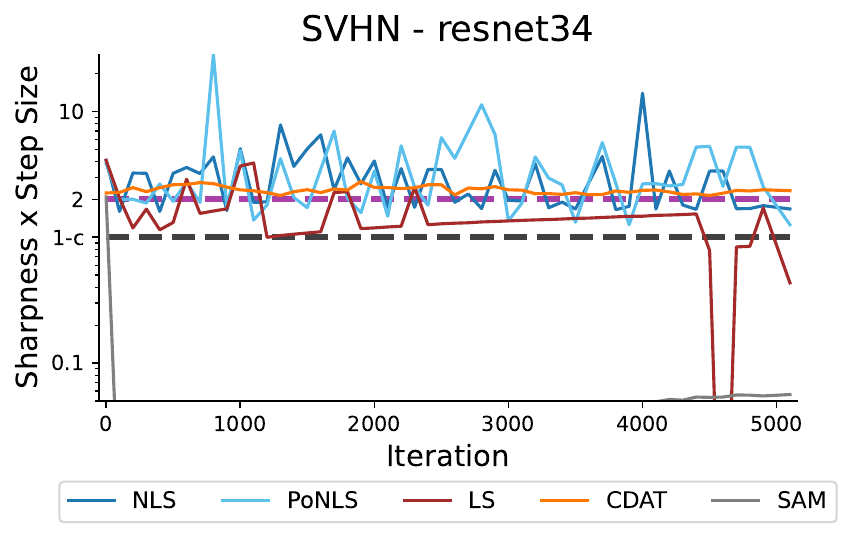}
		\renewcommand{\model}{CIFAR100/vgg11/\batchsize/}
		\includegraphics[width=\imgS\linewidth]{\dir/\model/Sharpnessxstep_\cval_it_\dir}\\
        \renewcommand{\model}{EMNIST/CNN/4992/}
		\includegraphics[width=\imgS\linewidth]{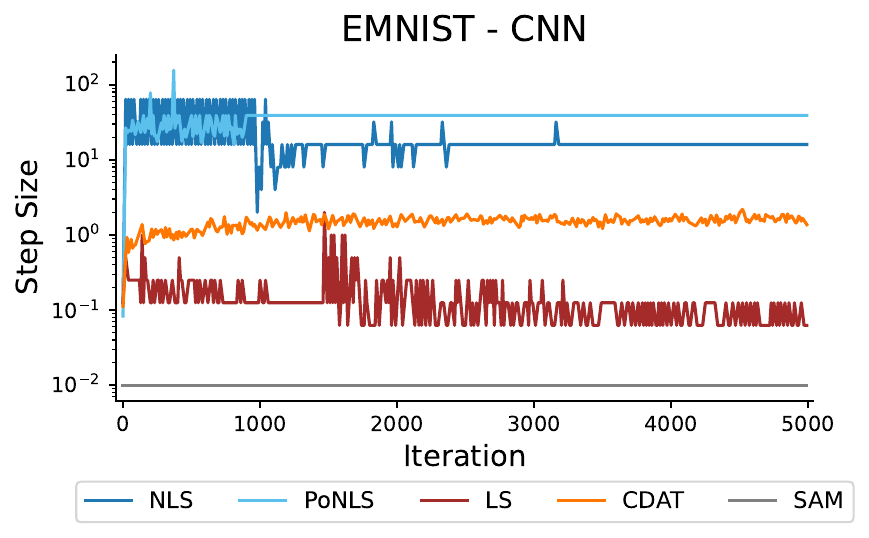}
		\renewcommand{\model}{SVHN/resnet34/4994/}
		\includegraphics[width=\imgS\linewidth]{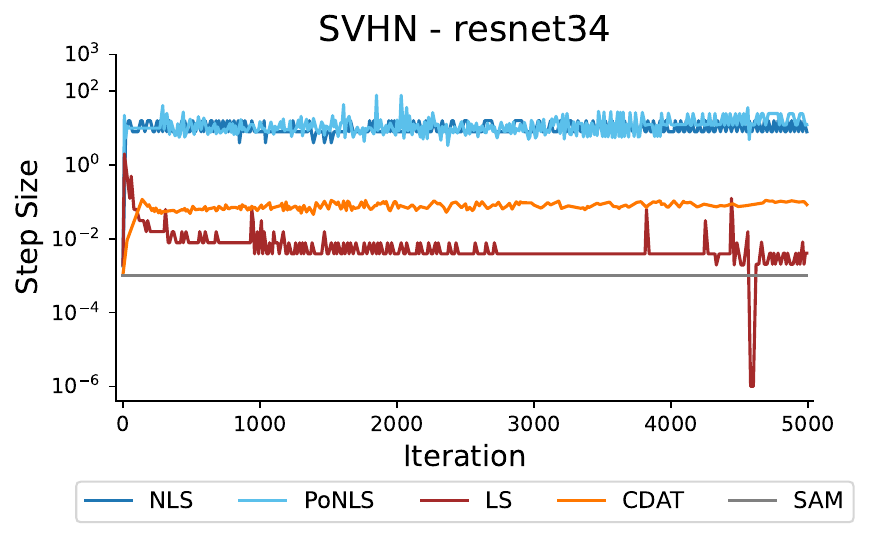}
		\renewcommand{\model}{CIFAR100/vgg11/\batchsize/}
		\includegraphics[width=\imgS\linewidth]{\dir/\model/FinalStepSize_\cval_it_\dir}\\
		\renewcommand{\model}{EMNIST/CNN/4992/}
		\includegraphics[width=\imgS\linewidth]{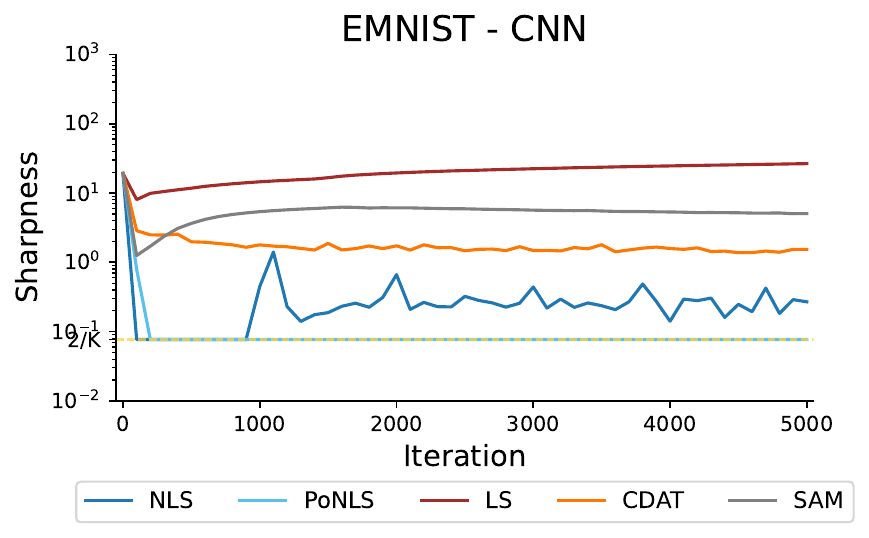}
		\renewcommand{\model}{SVHN/resnet34/4994/}
		\includegraphics[width=\imgS\linewidth]{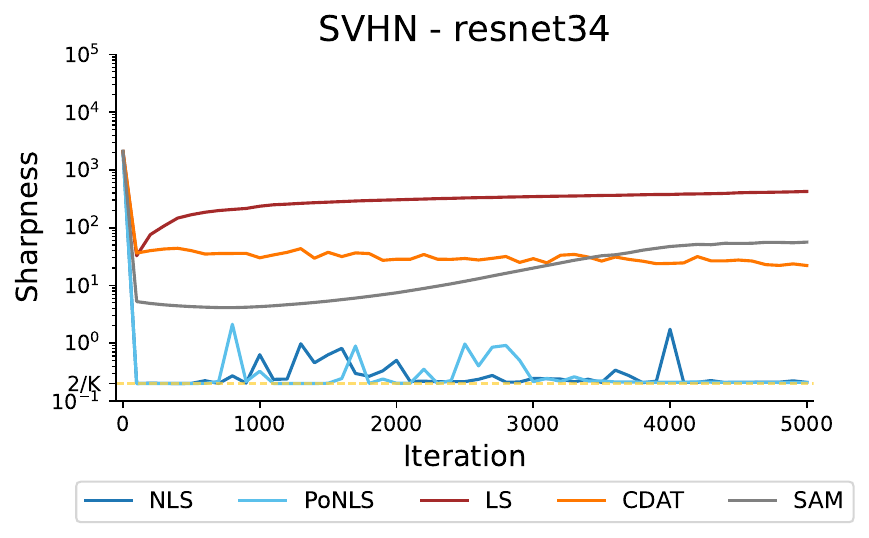}
		\renewcommand{\model}{CIFAR100/vgg11/\batchsize/}
		\includegraphics[width=\imgS\linewidth]{\dir/\model/Eigenvalue1_\cval_it_\dir}\\
		\renewcommand{\model}{EMNIST/CNN/4992/}
		\includegraphics[width=\imgS\linewidth]{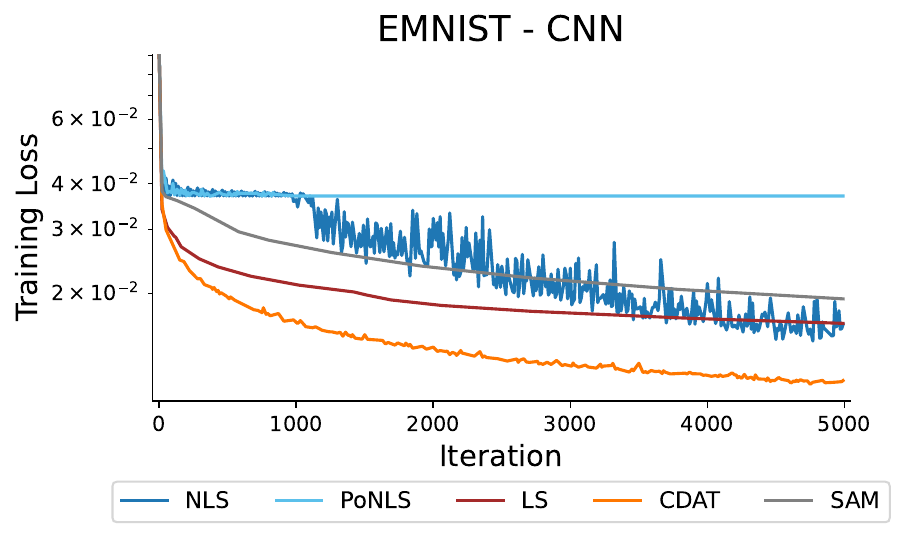}
		\renewcommand{\model}{SVHN/resnet34/4994/}
		\includegraphics[width=\imgS\linewidth]{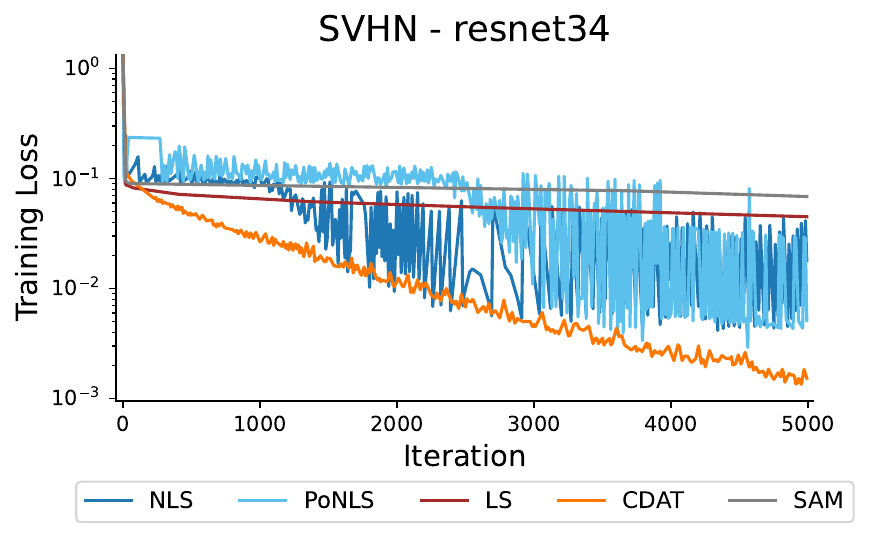}
		\renewcommand{\model}{CIFAR100/vgg11/\batchsize/}
		\includegraphics[width=\imgS\linewidth]{\dir/\model/TrainingLoss_\cval_it_\dir}
		\caption{We plot the sharpness * step size (1st Row), step size (2nd Row), sharpness (3rd Row), and training loss (4th Row) for five different methods. We compare gradient descent with the LS, NLS, and PoNLS line searches as well as gradient descent with the CDAT step size selection and SAM. This is repeated for 3 different models and 3 different datasets.
        }\label{fig:exp1_CIFAR10_c0001_1}
	\end{minipage}
\end{figure*}

\par In Figure \ref{fig:exp1_CIFAR10_c0001_1} we compare a monotone equality line search (LS) and a nonmonotone equality line search (NLS) with PoNLS (the deterministic version of the nonmonotone line search PoNoS \citep{galli23a}). We also compare these methods with SAM \citep{foret20a}, the quintessential sharpness-aware method, and Curvature Dynamics Aware Tuning (CDAT) \citep{roulet2024a}. 
In order from the top row to the bottom, Figure \ref{fig:exp1_CIFAR10_c0001_1} presents the product between the sharpness $\lambda_1(H(w_k))$ and the step size $\eta_k$, the step size, the sharpness, and the training loss. Notice that to verify \eqref{eq:eta_k_lb_mp}, we employ $\lambda_1(H(w_k))$ as an approximation for $l(w_k,\eta_{w_k})$, both because of our interest in EoS and because of the difficulties in computing $l(w_k,\eta_{w_k})$. We compare $\lambda_1(H(w_k))$ and a discretized version of $l(w_k,\eta_{w_k})$ in Appendix \ref{app:additional_segment_experiments} and in Figures \ref{fig:lipschitz_per_it_1_ext}-\ref{fig:lipschitz_per_it_2}.

When plotting $\lambda_1(H(w_k))\cdot\eta_k$, we also report two more lines, the EoS threshold of 2 and the predicted lower bound of $2\delta (1-c)$. In Figure \ref{fig:exp1_CIFAR10_c0001_1}, we fix the line search parameters $c$ to $10^{-4}$ and $\delta$ to $0.5$, as these are very commonly used values for these hyper-parameters when line searches are combined with full-batch GD \citep{nocedal06a}. When $\delta\not\approx1$, $(ii)$ of Definition \ref{def:large_step_sizes} is not ensured by \eqref{eq:eta_k_lb_mp}. Strictly speaking, with $c=10^{-4}$ and $\delta=0.5$, neither is $(i)$, which is why we check both numerically. Despite leaving to future works the runtime comparison between the methods, Appendix \ref{app:delta_ablation_experiments} presents some preliminary results showcasing that NLS, CDAT and SAM have very similar per-iteration cost when $\delta=0.5$, while this is not the case when $\delta=0.9$. 

\par We first focus on NLS and PoNLS. We observe that $\eta_k \cdot \lambda_1(H(w_k))$ is always greater than $1-c$, as proved in \eqref{eq:eta_k_lb_mp} and \eqref{eq:eta_lb_polyak_mp}, and very often greater than 2. Together with the frequent oscillations in the loss seen in the last row of Figure \ref{fig:exp1_CIFAR10_c0001_1}, this shows that the step sizes yielded by NLS are large in the sense of Definition \ref{def:large_step_sizes}, sharpness-aware, and allow GD to operate at the EoS right from the beginning of training. 

\par Moving onto LS, we see that the sharpness at each iteration continues increasing during training, while the corresponding step sizes are decreasing. We note that this is in agreement with the previous work by \citet{roulet2024a}. A detailed explanation of the behavior of LS and the reason why $\eta_k \cdot \lambda_1(H(w_k))$ is not always above $1-c$ despite Proposition \ref{lemma:lipschitz_aware} can be found in Appendix \ref{app:additional_method_comp_experiments}.

\par For CDAT, the stepping-on-the-edge parameter $\sigma$ is fixed to $2.06$, as suggested in \citet{roulet2024a}, while the step size for SAM is selected via a grid search (see Appendix \ref{app:experimental_details}). We notice that the product of sharpness and step size for CDAT is always above the EoS threshold of 2, while this product is not particularly meaningful for SAM as its EoS is evaluated differently \citep{long24a}. While these methods are sometimes more and sometimes less successful than NLS and PoNLS in minimizing the training loss, NLS and PoNLS outperform SAM and CDAT in minimizing the sharpness. In most cases both NLS and PoNLS bring the sharpness to the surprisingly small and constant value of $2/K$, where $K$ is the number of classes in the dataset. In the next section, we will further study this phenomenon and show that the attained value is indeed the global minimum of the sharpness. For now, notice that when the globally-flat region is encountered early in the training, the corresponding loss does not decrease below the value expected for a model that is randomly guessing an output (e.g., 0.09 with $K=10$, see Appendix \ref{sec:random_guessing} for the calculation). In the extreme case of PoNLS on EMNIST$\times$CNN, the method is stuck at the same training loss throughout all of training. 

Similar observations made for Figure \ref{fig:exp1_CIFAR10_c0001_1} also hold in Figures \ref{fig:exp1_CIFAR10_c0001_1_ext}-\ref{fig:exp1_EMNIST_c0001_2}. Beyond the setting proposed in the main paper, early globally-flat regions lead to poor training convergence in the case of stochastic GD (SGD) (Appendix \ref{app:stochastic_exp}), training vision transformers (Appendix \ref{app:vision_transformers}), for warmed up step sizes (Appendix \ref{app:warmup-exp}), and for the cross-entropy loss (Appendix \ref{app:cross_entropy_loss_experiments}). 

\subsection{Globally-Flat Points and their Characteristics}
\label{sec:diagnostic_experiments}

\renewcommand{\dir}{exp2}
\renewcommand{\cval}{0001}
\renewcommand{\batchsize}{5000}
\begin{figure*}[!ht] 
	\centering
	\begin{minipage}{1\textwidth}
    \renewcommand{\model}{EMNIST/CNN/4992/}
		\includegraphics[width=\imgS\linewidth]{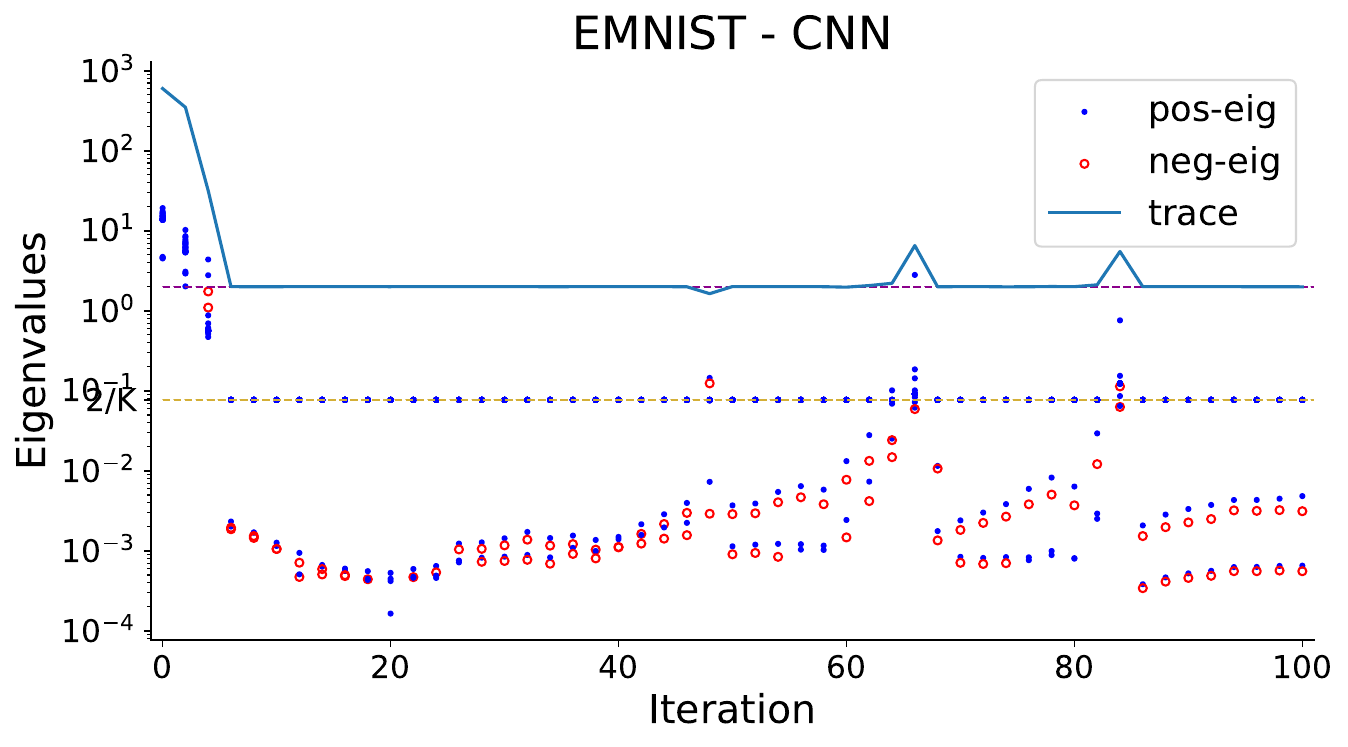}
		\renewcommand{\model}{SVHN/resnet34/4994/}
		\includegraphics[width=\imgS\linewidth]{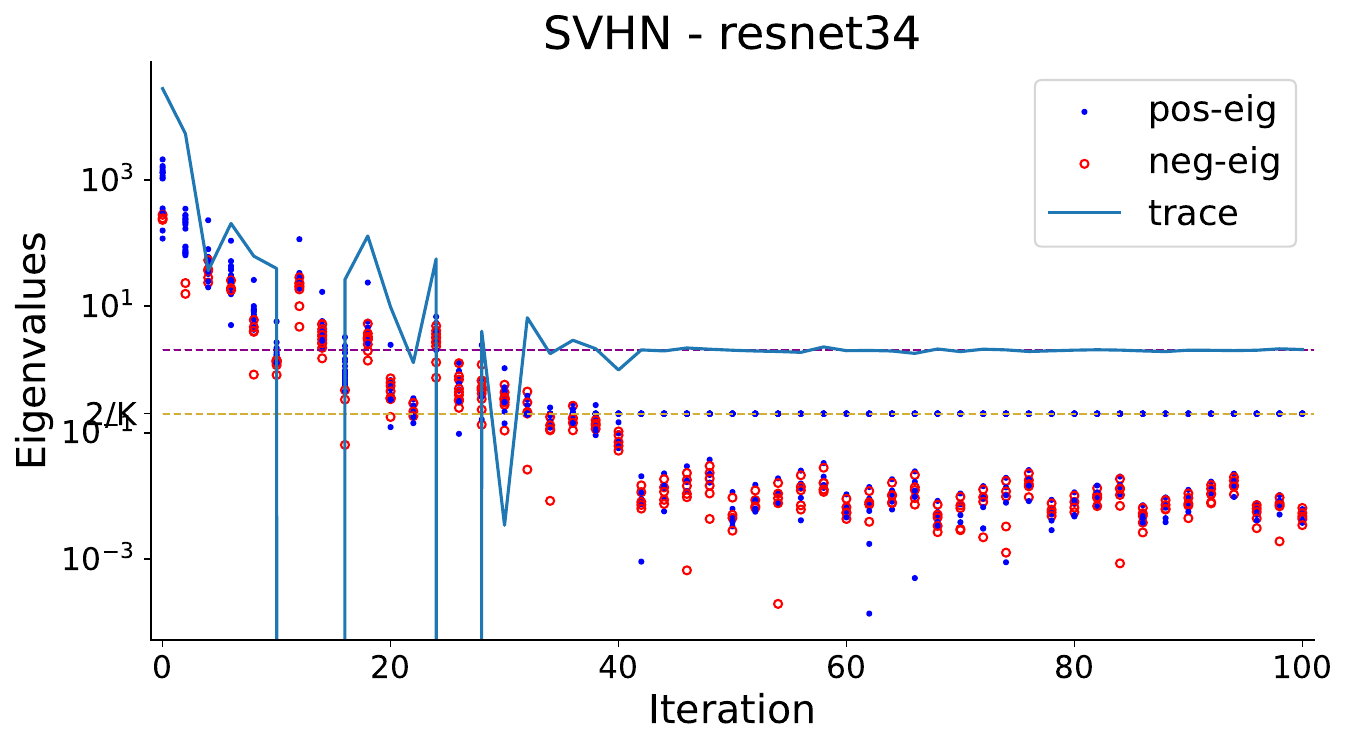}
		\renewcommand{\model}{CIFAR100/vgg11/\batchsize/}
		\includegraphics[width=\imgS\linewidth]{\dir_vgg11/\model/Eigenvalues_\cval}\\
		\renewcommand{\model}{EMNIST/CNN/4992/}
		\includegraphics[width=\imgS\linewidth]{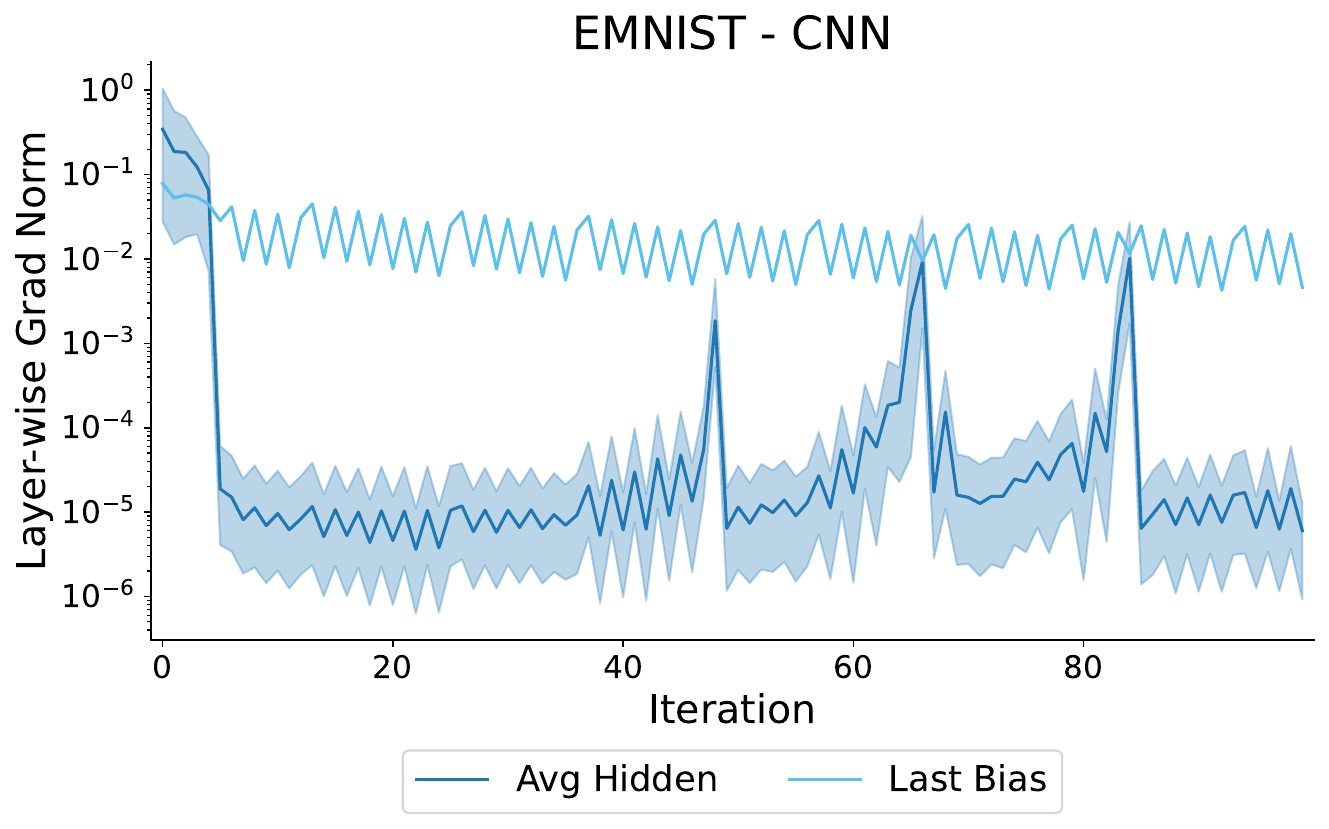}
		\renewcommand{\model}{SVHN/resnet34/4994/}
		\includegraphics[width=\imgS\linewidth]{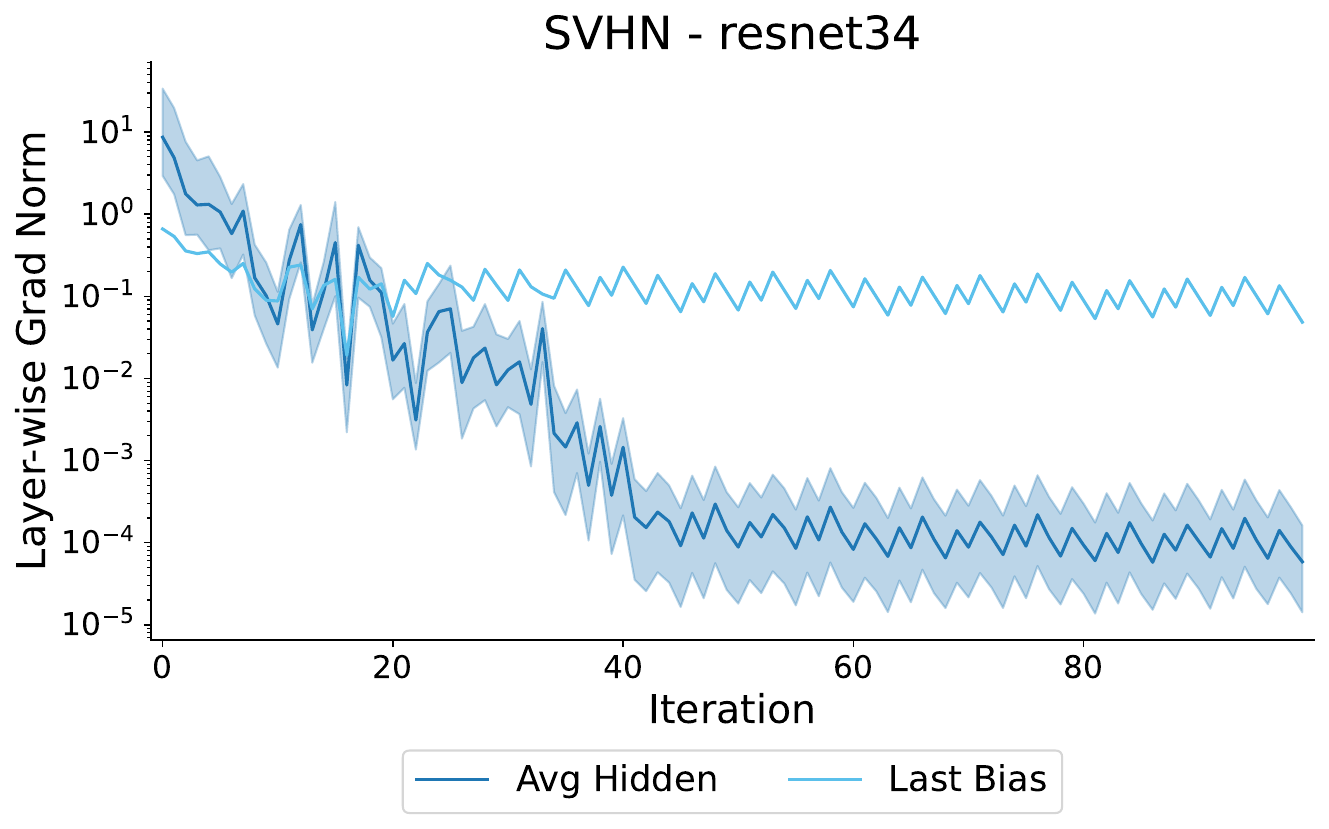}
		\renewcommand{\model}{CIFAR100/vgg11/\batchsize/}
		\includegraphics[width=\imgS\linewidth]{\dir_vgg11/\model/AvgHiddenGradNorm_\cval_it_\dir}\\
		\renewcommand{\model}{EMNIST/CNN/4992/}
		\includegraphics[width=\imgS\linewidth]{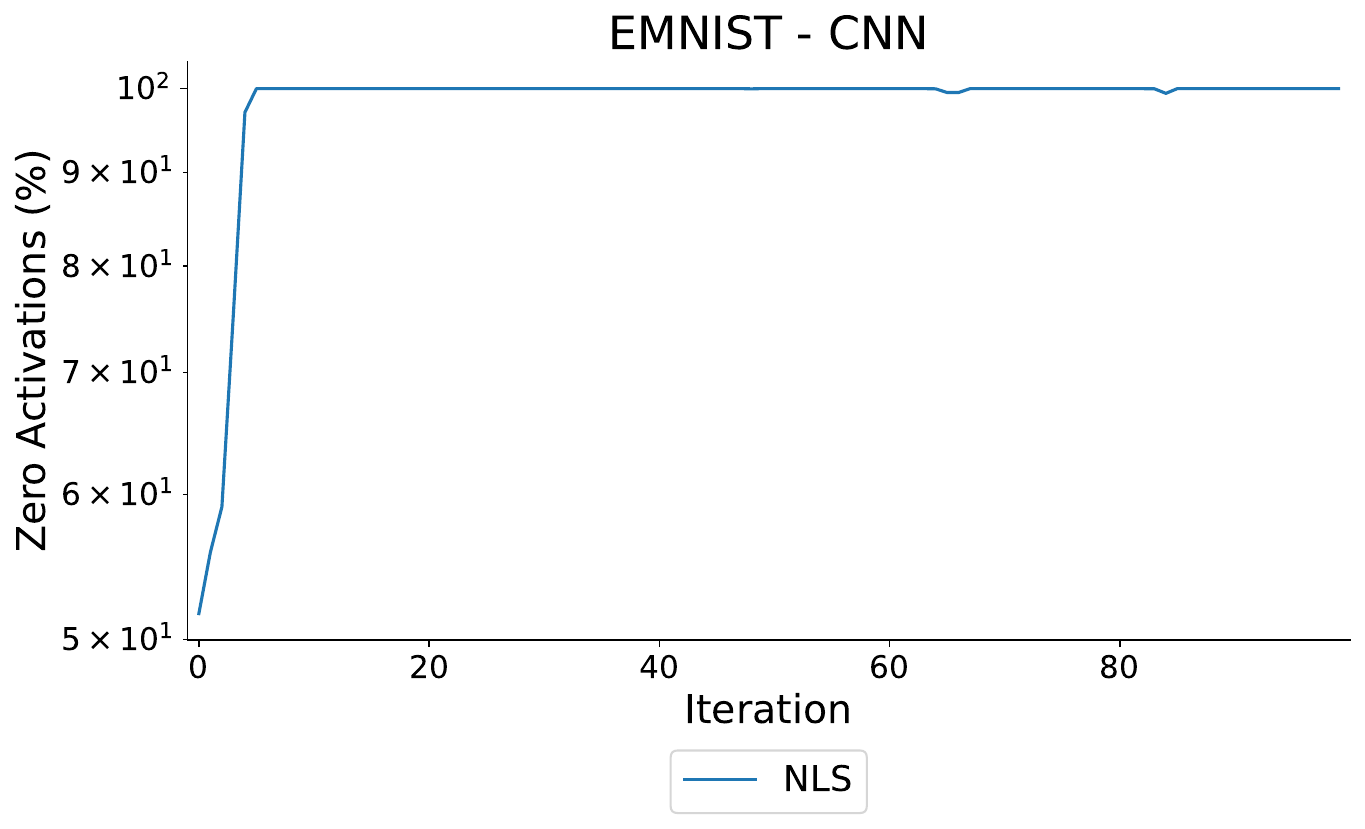}
		\renewcommand{\model}{SVHN/resnet34/4994/}
		\includegraphics[width=\imgS\linewidth]{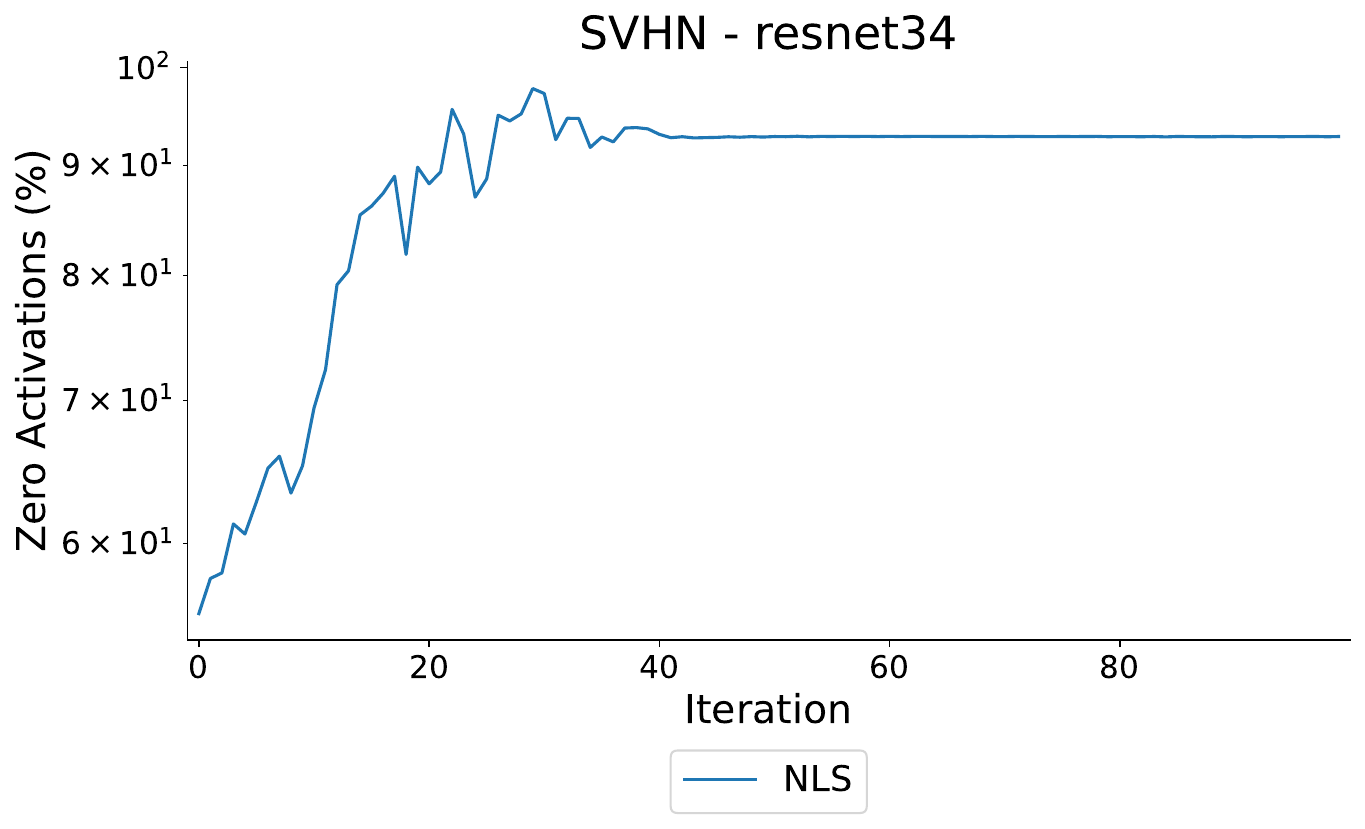}
		\renewcommand{\model}{CIFAR100/vgg11/\batchsize/}
		\includegraphics[width=\imgS\linewidth]{\dir_vgg11/\model/ZeroActivations_\cval_it_\dir}
		\caption{We plot the top 20 (30 for EMNIST) eigenvalues (Top Row), layer-wise gradient norm for the hidden layers (which we average across all hidden layers) and gradient norm of the bias parameters in the last layer (Middle Row), and the maximum per layer percentage of neural activations approximately equal to zero (Bottom Row) for the NLS method. This is repeated for 3 different models and 3 different datasets.}\label{fig:exp2_CIFAR10_c0001_1}
	\end{minipage}
\end{figure*}

In this section, we provide a finer analysis of the points which attain a sharpness of $2/K$. Given a $K$-class classification problem minimized using the MSE loss \citep{cohen20a}, the next lemma clarifies that the sharpness is indeed lower bounded by $2/K$, meaning that the points found by NLS and PoNLS are global minimizers of the sharpness. This result holds for most neural networks as the only requirement is that the last layer has a bias term (e.g., MLPs, CNNs, residual networks, transformers).
\begin{lemma}[Informal, see Lemma \ref{lemma:subhessian_full}]\label{lemma:subhessian}
     Let $f$ be the MSE-loss. We consider networks $n:\R^n\to \R^K$ with linear last layer, i.e. $n(W, b)=W x' + b$, where $x'$ is the output of the previous layer. Then the Hessian of f w.r.t. $W$ and $b$ are
    \begin{align*}
        \nabla_{W}^2 f(w) &= \frac{2}{K}1_k \otimes x'x'^T, \; \nabla_b^2 f(w) = \frac{2}{K} I_K.
    \end{align*}
\end{lemma}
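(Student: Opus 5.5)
Since the network output $n(W,b)=Wx'+b$ is affine in the last-layer parameters $(W,b)$, with $x'$ independent of them, the plan is to compute the Hessian of the MSE loss directly by the chain rule. We treat the loss for a single sample $(x,y)$, $y\in\R^K$, as
\[
f(w)=\frac{1}{K}\,\|Wx'+b-y\|^2 ,
\]
which is the per-sample mean squared error over the $K$ outputs. If $f$ is instead the empirical average over $N$ samples, the Hessian is the average of the per-sample Hessians. The $b$-block stays $\frac{2}{K}I_K$, because it does not depend on the sample, and the $W$-block becomes $\frac{2}{K}\,I_K\otimes \frac{1}{N}\sum_i x_i'x_i'^T$. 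We state this averaging explicitly in the formal version (Lemma \ref{lemma:subhessian_full}); in the informal statement, $x'x'^T$ should be read as this average and $1_k$ as $I_K$.

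First, the bias block. Write $r=Wx'+b-y$. Then $\nabla_b f = \frac{2}{K} r$, and since $\partial r/\partial b = I_K$, we get $\nabla_b^2 f=\frac{2}{K}I_K$. This holds for any values of the earlier-layer parameters, because $x'$ does not depend on $b$.

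Second, the weight block. Vectorize $W$ row-wise, so that $Wx' = (I_K\otimes x'^T)\,\mathrm{vec}(W)$. Then $r$ is affine in $\mathrm{vec}(W)$ with Jacobian $J=I_K\otimes x'^T$. The Hessian of $\frac1K\|r\|^2$ is $\frac2K J^TJ=\frac2K (I_K\otimes x')(I_K\otimes x'^T)=\frac2K I_K\otimes x'x'^T$. No second-order term in $r$ appears, since $r$ is affine in $(W,b)$. The same computation gives the mixed block $\frac2K (I_K\otimes x')$, which the full lemma may also record.

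Finally, we derive the claim that motivates the lemma: the sharpness satisfies $\lambda_1(H)\ge 2/K$. The sub-Hessian $\nabla_b^2 f$ is a principal submatrix of the full Hessian $H$. By Cauchy interlacing, or simply the Rayleigh quotient with test vectors supported on the $b$-coordinates, $\lambda_{\max}(H)\ge \frac2K$. The same argument shows that $H$ has at least $K$ eigenvalues that are at least $2/K$ in the interlacing sense.

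No step is technically hard. The main care point is bookkeeping of normalization conventions, namely the $\frac1K$ in the MSE and the averaging over samples, so that the constant comes out exactly $\frac2K$. Both the vectorization order behind the Kronecker product and these conventions will be fixed precisely in the formal statement.
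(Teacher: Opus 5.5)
Your proposal is correct and follows essentially the same route as the paper. Both arguments use that the output is affine in $(W,b)$, so the residual-curvature term vanishes and each block reduces to $\frac{2}{K}J^TJ$; the paper writes this as $\frac{2}{K}\sum_i \nabla n_{x,i}\nabla n_{x,i}^T$ with $\nabla_{w_j}n_{x,i}=\delta_{ij}x'$ and $\nabla_{b_j}n_{x,i}=\delta_{ij}$, then averages over samples (Corollary \ref{cor:subhessian_l2_loss_full}) and applies Cauchy interlacing to the $b$-block to get the $2/K$ lower bound, exactly as you do, including your reading of $1_k$ as $I_K$.
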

In Figure \ref{fig:exp2_CIFAR10_c0001_1}, we re-run NLS exactly as in Figure \ref{fig:exp1_CIFAR10_c0001_1}, but focus on the first window of 100 iterations in which the sharpness hits $2/K$. The first row of Figure \ref{fig:exp2_CIFAR10_c0001_1} shows the trace of the Hessian and the top-20 (30 for EMNIST) eigenvalues, where the positive eigenvalues are blue and negative ones are red. The second row shows the average of the gradient norms extracted per layer in blue and the norm of the gradient w.r.t. the last bias in cyan. Finally, for the last row we compute the percentage of the neurons with approximately zero output for each network layer separately, and plot the maximum percentage across the layers (see Appendix \ref{app:additional_diagnostic_experiments} for the details on these measures).

We first observe that when the sharpness hits $2/K$, the following $K\!\!-\!\!1$-largest eigenvalues also achieve exactly the same value. Although this occurs in the EMNIST$\times$CNN and SVHN$\times$resnet34 experiments, it does not in the CIFAR100$\times$vgg11 case where instead the largest eigenvalues hover slightly above $2/K$. Note that in our experiments (see also Figures \ref{fig:exp2_CIFAR10_c0001_1_ext}-\ref{fig:exp2_mixed}), the model training accuracies correspond to random guessing for iterations where the sharpness is stuck at $2/K$. For this reason, we label these training runs ``unsuccessfully-flat''. Instead, we call training runs ``successfully-flat'' when the sharpness hovers above $2/K$, rather than precisely hitting $2/K$. Another common characteristic shared between unsuccessfully flat experiments is that they all reach the value $2/K$ very early in the training (e.g., within 100 iterations), while successfully-flat ones only later (e.g., after 1200 iterations). Moreover, the second row of Figure \ref{fig:exp2_CIFAR10_c0001_1} suggests that the points encountered in an unsuccessful training are approximately stationary. In fact, the norms of the gradients w.r.t. the inner layers suddenly drop to very small values (e.g., around $10^{-4}$), while those w.r.t. the bias of the last layer keep the training ``barely alive''. That is, without a meaningful gradient updating the parameters of the hidden layers, the feature extraction process cannot take place, and the corresponding accuracy remains low. Moreover, preliminary results show that for unsuccessfully-flat runs if one removes the bias of the last layer, GD collapses to a 0-network, a trivial stationary point that outputs 0 regardless of the input (see Section \ref{app:0-network-exp}). In the case of successfully-flat trainings, the norms of all the gradients are closer together and not as small as in the previous case, while the corresponding losses and accuracies are also better. Finally, all the globally-flat points have Hessians with both positive and negative eigenvalues. Consequently, the stationary points encountered by the unsuccessfully-flat trainings are saddle points. 

Beyond verifying that $2/K$ is the global minimum of the sharpness, Lemma \ref{lemma:subhessian} states that the sub-Hessian w.r.t. the bias of the last layer $\nabla_b^2f$ is diagonal with $K$ values at $2/K$. This guarantees that the largest $K$ eigenvalues \textit{can} all originate from $\nabla_b^2f$, as observed in Figure \ref{fig:exp2_CIFAR10_c0001_1}, but it does not clarify \textit{how}. Given $\nabla_W^2f$ from Lemma \ref{lemma:subhessian}, we expect that most of the entries of the Hessian are close to 0. In fact, from the third row of Figure \ref{fig:exp2_CIFAR10_c0001_1} we observe that the percentage of 0-outputs for at least one of the layers in the network is beyond 95\% (sometimes close to 99\%). Interestingly, even for the successfully-flat training runs this percentage is very high. While we leave to future work the investigation of what causes GD to enter globally-flat regions, some preliminary results suggest that this behavior may be connected to the neural collapse phenomenon \citep{papyan20a}.

We conclude this section with one further observation whose origin will be also addressed in future work. For globally-flat points, the eigenvalues just below the top $K$ are remarkably smaller than $2/K$, but not negligible. In particular, they pair-wise sum to 0 and the resulting trace is $\approx2=K\cdot2/K$.

\subsection{Avoiding the Globally-Flat Points}

\renewcommand{\dir}{exp6}
\renewcommand{\dataset}{CIFAR10}
\renewcommand{\cval}{0001}
\renewcommand{\batchsize}{5000}
\begin{figure*}[!ht] 
	\centering
	\begin{minipage}{1\textwidth}
		\renewcommand{\model}{SVHN/resnet34/4994/}
		\includegraphics[width=\imgS\linewidth]{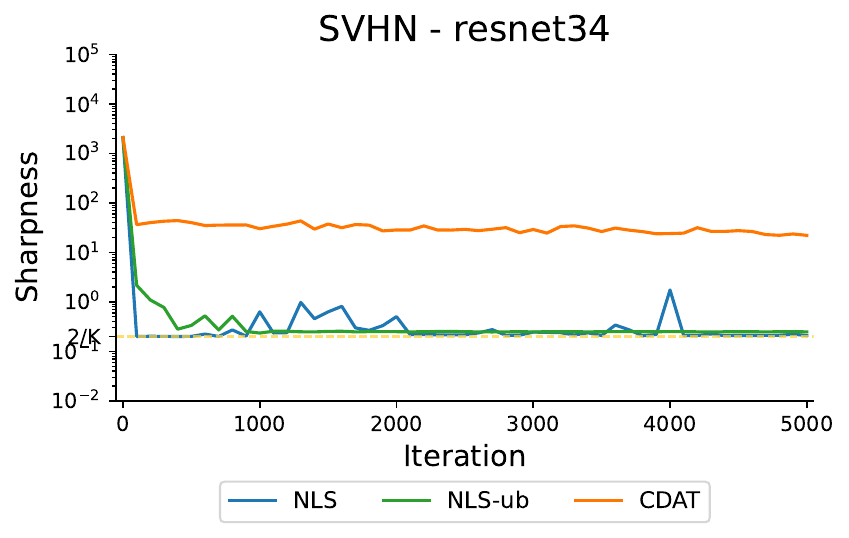}
		\renewcommand{\model}{SVHN/resnet34/4994/}
		\includegraphics[width=\imgS\linewidth]{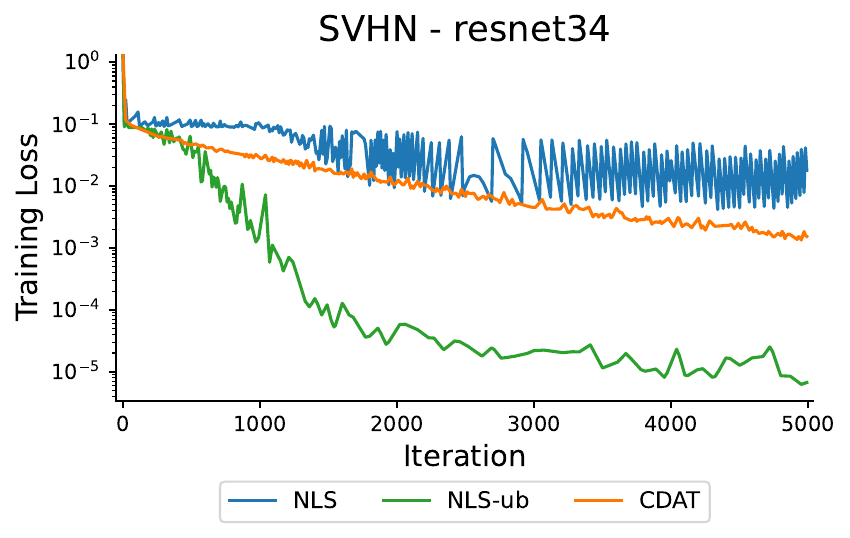}
		\renewcommand{\model}{SVHN/resnet34/4994/}
		\includegraphics[width=\imgS\linewidth]{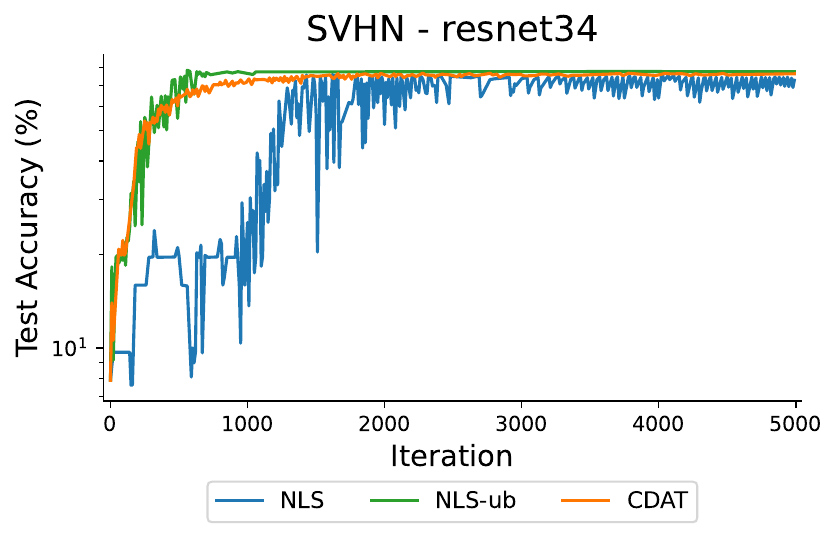}
    \caption{We plot the sharpness (Left), training loss (Middle), and test accuracy (Right) for three different methods. We compare gradient descent with the NLS and NLS-ub line searches as well as gradient descent with the CDAT step size selection. This experiment is performed for the resnet34 model on the SVHN dataset.}\label{fig:exp6_CIFAR10_c0001_1}
	\end{minipage}
\end{figure*}

In order to avoid the globally-flat points, we propose a new variation of NLS called NLS-ub that maintains the step size below the number of classes $K$. This choice is suggested by the semi-formal argument of \citet{damian22a}, which shows that the sharpness would be upper bounded by $2/\eta_k$. By this argument, if $\eta_k=K$, the sharpness is forced to hit its minimum of $2/K$, while if we maintain $\eta_k<K$, we allow GD to enter slightly-sharper valleys. We showcase the results of NLS-ub in Figure \ref{fig:exp6_CIFAR10_c0001_1} (and Figures \ref{fig:exp6_CIFAR10_c0001_1_ext}-\ref{fig:exp6_EMNIST_c0001_2} in the Appendix). Similar to NLS, we see from the first plot of Figure \ref{fig:exp6_CIFAR10_c0001_1} that NLS-ub also achieves low sharpness. However, NLS-ub never hits precisely $2/K$. Although NLS performs worse than CDAT, the training performance and the test accuracy of NLS-ub are often better than that of CDAT (see also Figures \ref{fig:exp6_CIFAR10_c0001_1_ext}-\ref{fig:exp6_EMNIST_c0001_2}). Notice that given the small number of training examples (5000) in our experiments, larger experiments would be needed to confirm these conclusions. Nonetheless, we report the test accuracy of the methods to advocate for the following thesis: ``flatter'' is not always ``better'', especially if one compares points that do not achieve the same training loss (e.g., NLS vs NLS-ub).

\subsection{On the Validity of Local Segment Smoothness}
\label{sec:segment_smoothness}
\renewcommand{\dir}{exp3}
\renewcommand{\dataset}{CIFAR10}
\renewcommand{\batchsize}{5000}
\newcommand{\metric}{lipschitz_per_it_0001}
\begin{figure*}[!ht] 
	\centering
	\begin{minipage}{1\textwidth}
		\renewcommand{\model}{EMNIST/CNN/4992}
		\includegraphics[width=\imgS\linewidth]{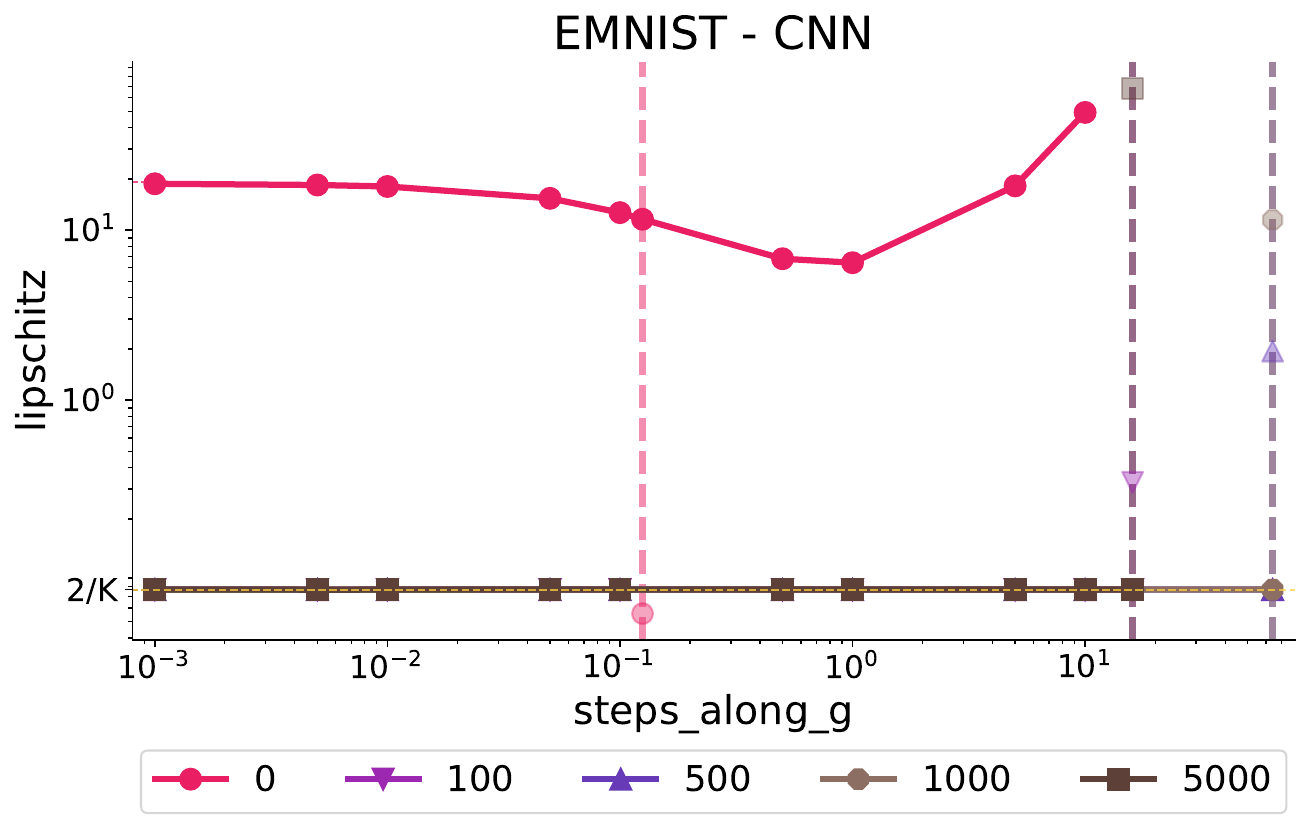}
		\renewcommand{\model}{SVHN/resnet34/4994/}
		\includegraphics[width=\imgS\linewidth]{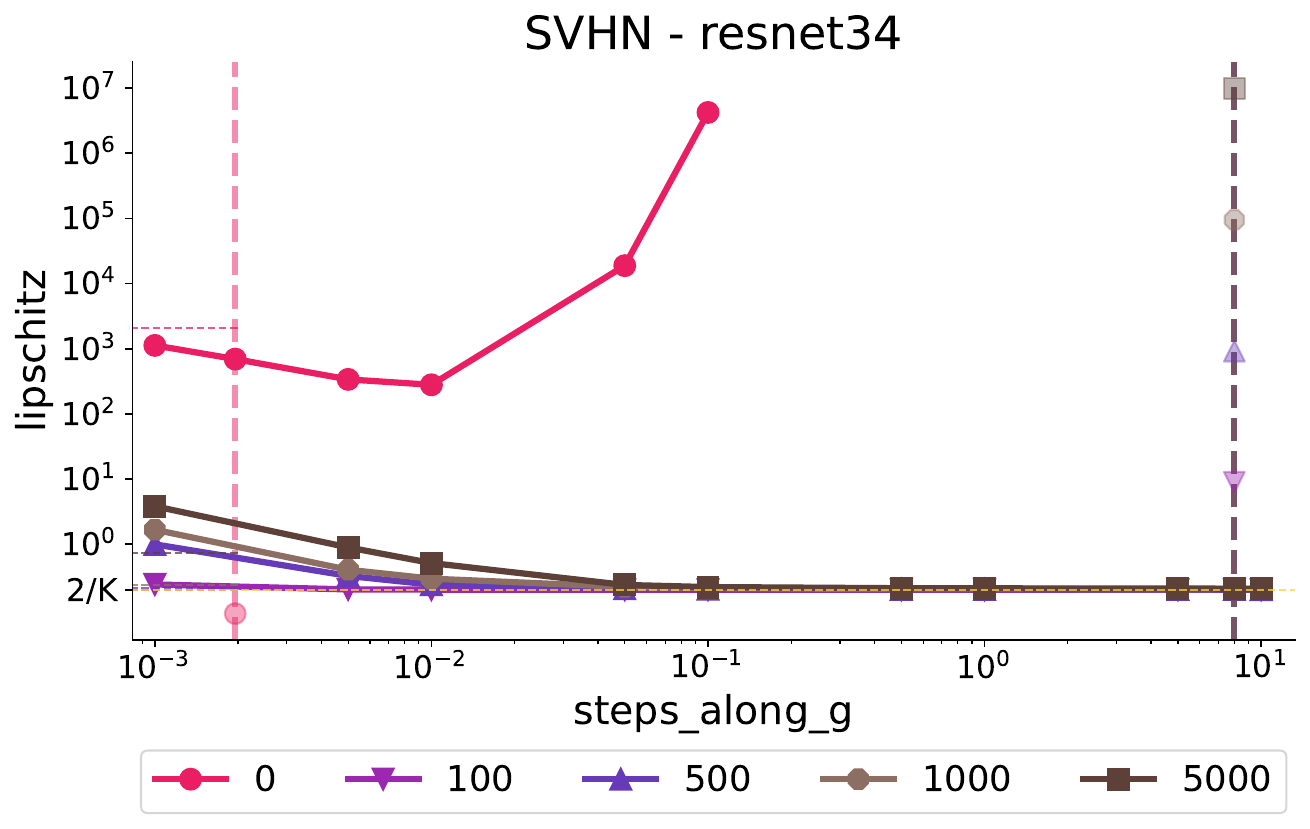}
		\renewcommand{\model}{CIFAR100/vgg11/\batchsize/}
		\includegraphics[width=\imgS\linewidth]{\dir/\model/\metric}\\
		\caption{We plot the segment smoothness (\ref{def:segment_smoothness}) for different steps along the gradient direction at varying training iterations for the NLS line search method. In addition, the vertical dashed lines correspond to the selected step size at each iteration. Finally, the horizontal dashed lines correspond to the sharpness at each iteration. This is repeated for 3 different models and 3 different datasets.}\label{fig:lipschitz_per_it_1}
	\end{minipage}
\end{figure*}
Assumption \ref{asmt:segment_smoothness} holds for all twice continuously differentiable functions, but it may not hold everywhere for neural networks with ReLU activation functions. We assume that $f$ is differentiable almost everywhere, and check numerically whether the points encountered by GD satisfy local segment smoothness, and if so, for which values of $\eta_w$. In Figure \ref{fig:lipschitz_per_it_1}, we plot the Lipschitz constant $l'(w_k, \eta):=\frac{\|\grad(w_k)-\grad(w_\eta)\|}{\|\eta\grad(w_k)\|}$ along the gradient line at various iterations (0, 100, 500, 1000, and 5000) achieved by NLS. These lower estimates of $l(w_k, \eta_{w_k})$ are calculated at various values of $\eta$ (from $10^{-3}$ to $10^1$) and at $\eta_k$, which is the step size yielded by NLS at the corresponding iteration $k$. The vertical dashed lines represent $\eta_k$ and are of the same color as the iteration $k$ at which they are selected. 
While the full analysis of Figures \ref{fig:lipschitz_per_it_1} is provided in Appendix \ref{app:additional_segment_experiments}, the main conclusion is that Assumption \ref{asmt:segment_smoothness} holds on the points we tested it for values of $\eta_{w}$ up to 0.1. For values beyond that, $l'(w_k, \eta)$ may be numerically unbounded. In other words, $f$ is not globally directionally smooth \citep{mishkin24a} for these neural networks. We provide additional experiments in Figures \ref{fig:lipschitz_per_it_1_ext} and \ref{fig:lipschitz_per_it_2} in Appendix \ref{app:additional_segment_experiments}.
 
\section{Related Works}

For locally Lipschitz non-convex functions outside of deep learning applications, nonmonotone line searches have been used to solve the step size selection problem in \citet{kanzow22a} and \citet{demarchi23a}. To the best of our knowledge, our work is the first aimed at finding the largest converging first-order steps through the use of extrapolation passes. In particular, this allows us to push to the limit the concept of \textit{large} gradient steps and consequently find the maximum step size for gradient descent that still leads to convergence on classification problems.

While the idea of a smoothness assumption along the segment between $w_k$ and $w_{k+1}$ is not new to optimization, we are only aware of the work by \citet{scheinberg14a} providing a similar point-wise definition of segment smoothness as in Assumption \ref{asmt:segment_smoothness}. However, their work only looks at the case of convex functions. To the best of our knowledge, local segment smoothness was never used in the context of deep learning before our work.

There are some lower bounds for the step size in the literature that are conceptually similar to Proposition \ref{lemma:lipschitz_aware}. However, they are derived for convex \citep{scheinberg14a} or strongly convex functions \citep{vaswani22a}, and only for monotone line searches. Moreover, these papers do not specify how to select the largest step size such that \eqref{eq:line_search} holds. From our analysis, Proposition \ref{lemma:lipschitz_aware} (or Proposition \ref{lemma:polyak}) seems achievable only if one makes use of extrapolation passes (or a Polyak initial step size). Proposition \ref{lemma:hoelder_aware} is built upon the convex universal gradient method (UGM) by \citet{nesterov15a} and the previously unnoticed connection to nonmonotone line searches. This association allows the removal of the $\epsilon$ hyper-parameter in UGM, which is difficult to tune \citep{parameterfree}. In fact, with nonmonotone line searches this hyperparameter is automatically selected via the adaptive $\epsilon_k=R_k-f(w_k)$.

\section{Conclusion}

In this work, we propose a formal definition of ``large" step sizes and two first-order algorithms to find them. We show both in theory and in practice that these step sizes operate at the edge of stability. Contrary to expectations, we find that when employing step sizes that are too large, GD may encounter globally flat regions that can severely degrade performance. We further characterize these points as being saddle, which notably slow down convergence. We thus allow GD to take smaller step sizes, enter slightly sharper valleys, and successfully avoid globally-flat saddle points. 

To conclude, we discuss some of the limitations of this work and the consequent research directions we plan to explore in future work. Despite the fact that Propositions \ref{lemma:lipschitz_aware} and \ref{lemma:hoelder_aware} as well as Corollary \ref{lemma:polyak} hold when replacing $f$ with a stochastic approximation in Assumption \ref{asmt:segment_smoothness}, it is unclear whether a stochastic version of Assumption \ref{asmt:segment_smoothness} is sufficient to prove the convergence of SGD. Although we conducted preliminary experiments with other activation functions (e.g., leakyReLU, GeLU), we are not yet able to determine what causes GD with large step sizes to find globally-flat saddle points so efficiently. Starting with the work by \citet{grippo86a}, the evidence collected in the nonmonotone line search literature shows a clear numerical advantage of using these methods over their monotone counterparts. However, an improved rigorous theoretical bound showing that nonmonotone line searches are faster is still missing. We aim to close this gap by exploiting our oscillating equality line searches starting with the simplified setting of logistic regression problems with separable data \cite{wu24a}. In Figures \ref{fig:exp1_CIFAR10_c0001_1_ext}-\ref{fig:exp1_EMNIST_c0001_2}, there are some cases in which the step sizes selected by NLS or PoNLS are significantly smaller than $2/K$. To achieve larger step sizes in these settings, one would need to increase the amount of nonmonotonicity beyond that of the nonmonotone line search proposed in \citet{zhang04a}, but not as much as the nonmonotone term proposed in \citet{grippo86a}, as this method requires the global Lipschitz continuity of $f$.

\section*{Acknowledgements}

The work was partially supported by the Canada CIFAR AI Chair Program and NSERC Discovery Grant RGPIN-2022-036669. The first author is grateful to his soon-to-be mother of his first son and wife for suggesting the use of ``Flatland'' in the title. In truth, the decision was only final after the fourth author independently suggested the same title, proving once more that she is always right. 



\section*{Impact Statement}
This paper presents work whose goal is to advance the field of Machine
Learning. There are many potential consequences of our work, none
which we feel must be specifically highlighted here.


\bibliography{paper}

\begin{thebibliography}{47}
\providecommand{\natexlab}[1]{#1}
\providecommand{\url}[1]{\texttt{#1}}
\expandafter\ifx\csname urlstyle\endcsname\relax
  \providecommand{\doi}[1]{doi: #1}\else
  \providecommand{\doi}{doi: \begingroup \urlstyle{rm}\Url}\fi

\bibitem[Andriushchenko et~al.(2023)Andriushchenko, Croce, M{\"u}ller, Hein, and Flammarion]{andriushchenko23b}
Andriushchenko, M., Croce, F., M{\"u}ller, M., Hein, M., and Flammarion, N.
\newblock A {{Modern Look}} at the {{Relationship}} between {{Sharpness}} and {{Generalization}}.
\newblock In \emph{International {{Conference}} on {{Machine Learning}}}, 2023.

\bibitem[Armijo(1966)]{armijo66a}
Armijo, L.
\newblock Minimization of functions having lipschitz continuous first partial derivatives.
\newblock \emph{Pacific Journal of Mathematics}, 16\penalty0 (1):\penalty0 1--3, 1966.

\bibitem[Barsbey et~al.(2025)Barsbey, Prieto, Zafeiriou, and Birdal]{barsbey25a}
Barsbey, M., Prieto, L., Zafeiriou, S., and Birdal, T.
\newblock Large learning rates simultaneously achieve robustness to spurious correlations and compressibility.
\newblock In \emph{Proceedings of the IEEE/CVF International Conference on Computer Vision}, 2025.

\bibitem[Cavalcanti et~al.(2024)Cavalcanti, Lessard, and Wilson]{cavalcanti24a}
Cavalcanti, J.~V., Lessard, L., and Wilson, A.~C.
\newblock Adaptive backtracking line search.
\newblock In \emph{The {{Thirteenth International Conference}} on {{Learning Representations}}}, 2024.

\bibitem[Cohen et~al.(2017)Cohen, Afshar, Tapson, and van Schaik]{CohenATS17}
Cohen, G., Afshar, S., Tapson, J., and van Schaik, A.
\newblock {EMNIST:} an extension of {MNIST} to handwritten letters.
\newblock \emph{CoRR}, abs/1702.05373, 2017.
\newblock URL \url{http://arxiv.org/abs/1702.05373}.

\bibitem[Cohen et~al.(2021)Cohen, Kaur, Li, Kolter, and Talwalkar]{cohen20a}
Cohen, J., Kaur, S., Li, Y., Kolter, J.~Z., and Talwalkar, A.
\newblock Gradient descent on neural networks typically occurs at the edge of stability.
\newblock In \emph{International Conference on Learning Representations}, 2021.

\bibitem[Cohen et~al.(2025)Cohen, Damian, Talwalkar, Kolter, and Lee]{cohen25a}
Cohen, J., Damian, A., Talwalkar, A., Kolter, J.~Z., and Lee, J.~D.
\newblock Understanding optimization in deep learning with central flows.
\newblock In \emph{The Thirteenth International Conference on Learning Representations}, 2025.

\bibitem[Damian et~al.(2022)Damian, Nichani, and Lee]{damian22a}
Damian, A., Nichani, E., and Lee, J.~D.
\newblock Self-{{Stabilization}}: {{The Implicit Bias}} of {{Gradient Descent}} at the {{Edge}} of {{Stability}}.
\newblock In \emph{International {{Conference}} on {{Learning Representations}}}, 2022.

\bibitem[De~Marchi(2023)]{demarchi23a}
De~Marchi, A.
\newblock Proximal gradient methods beyond monotony.
\newblock \emph{Journal of Nonsmooth Analysis and Optimization}, 4, 2023.

\bibitem[Dinh et~al.(2017)Dinh, Pascanu, Bengio, and Bengio]{dinh17a}
Dinh, L., Pascanu, R., Bengio, S., and Bengio, Y.
\newblock Sharp minima can generalize for deep nets.
\newblock In \emph{International Conference on Machine Learning}, 2017.

\bibitem[Fan et~al.(2023)Fan, Chon{\'{e}}{-}Ducasse, Schmidt, and Thrampoulidis]{fan23}
Fan, C., Chon{\'{e}}{-}Ducasse, G., Schmidt, M., and Thrampoulidis, C.
\newblock Bisls/sps: Auto-tune step sizes for stable bi-level optimization.
\newblock In \emph{Advances in Neural Information Processing Systems}, 2023.

\bibitem[Foret et~al.(2020)Foret, Kleiner, Mobahi, and Neyshabur]{foret20a}
Foret, P., Kleiner, A., Mobahi, H., and Neyshabur, B.
\newblock Sharpness-aware minimization for efficiently improving generalization.
\newblock In \emph{International Conference on Learning Representations}, 2020.

\bibitem[Galli et~al.(2023)Galli, Rauhut, and Schmidt]{galli23a}
Galli, L., Rauhut, H., and Schmidt, M.
\newblock Don't be so monotone: Relaxing stochastic line search in over-parameterized models.
\newblock In \emph{Advances in Neural Information Processing Systems}, 2023.

\bibitem[Goodfellow et~al.(2016)Goodfellow, Bengio, and Courville]{goodfellow16a}
Goodfellow, I., Bengio, Y., and Courville, A.
\newblock \emph{Deep Learning}.
\newblock Adaptive Computation and Machine Learning. {The MIT Press}, {Cambridge, Massachusetts}, 2016.
\newblock ISBN 978-0-262-03561-3.

\bibitem[Grippo \& Sciandrone(2023)Grippo and Sciandrone]{grippo23a}
Grippo, L. and Sciandrone, M.
\newblock \emph{Introduction to {{Methods}} for {{Nonlinear Optimization}}}, volume 152.
\newblock Springer, 2023.

\bibitem[Grippo et~al.(1986)Grippo, Lampariello, and Lucidi]{grippo86a}
Grippo, L., Lampariello, F., and Lucidi, S.
\newblock A nonmonotone line search technique for {N}ewton's method.
\newblock \emph{SIAM Journal on Numerical Analysis}, 23\penalty0 (4):\penalty0 707--716, 1986.

\bibitem[Hochreiter \& Schmidhuber(1997)Hochreiter and Schmidhuber]{hochreiter97a}
Hochreiter, S. and Schmidhuber, J.
\newblock Flat minima.
\newblock \emph{Neural computation}, 9\penalty0 (1):\penalty0 1--42, 1997.

\bibitem[Horn \& Johnson(2012)Horn and Johnson]{horn2012matrix}
Horn, R.~A. and Johnson, C.~R.
\newblock \emph{Matrix analysis}.
\newblock Cambridge university press, 2012.

\bibitem[Hui \& Belkin(2021)Hui and Belkin]{hui21a}
Hui, L. and Belkin, M.
\newblock Evaluation of neural architectures trained with square loss vs cross-entropy in classification tasks.
\newblock In \emph{International Conference on Learning Representations}, 2021.

\bibitem[Kanzow \& Lehmann(2025)Kanzow and Lehmann]{kanzow25a}
Kanzow, C. and Lehmann, L.
\newblock Convergence of nonmonotone proximal gradient methods under the kurdyka-{\l}ojasiewicz property without a global lipschitz assumption: C. kanzow, l. lehmann.
\newblock \emph{Journal of Optimization Theory and Applications}, 207\penalty0 (1):\penalty0 4, 2025.

\bibitem[Kanzow \& Mehlitz(2022)Kanzow and Mehlitz]{kanzow22a}
Kanzow, C. and Mehlitz, P.
\newblock Convergence {{Properties}} of {{Monotone}} and {{Nonmonotone Proximal Gradient Methods Revisited}}.
\newblock \emph{Journal of Optimization Theory and Applications}, 195\penalty0 (2):\penalty0 624--646, 2022.

\bibitem[Keskar et~al.(2017)Keskar, Mudigere, Nocedal, Smelyanskiy, and Tang]{keskar17a}
Keskar, N.~S., Mudigere, D., Nocedal, J., Smelyanskiy, M., and Tang, P. T.~P.
\newblock On large-batch training for deep learning: Generalization gap and sharp minima.
\newblock \emph{International Conference on Learning Representations}, 2017.

\bibitem[Krizhevsky et~al.(2009)Krizhevsky, Hinton, et~al.]{krizhevsky2009}
Krizhevsky, A., Hinton, G., et~al.
\newblock Learning multiple layers of features from tiny images.
\newblock 2009.

\bibitem[Li et~al.(2023)Li, Qian, Tian, Rakhlin, and Jadbabaie]{li23a}
Li, H., Qian, J., Tian, Y., Rakhlin, A., and Jadbabaie, A.
\newblock Convex and non-convex optimization under generalized smoothness.
\newblock \emph{Advances in Neural Information Processing Systems}, 36:\penalty0 40238--40271, 2023.

\bibitem[Long \& Bartlett(2024)Long and Bartlett]{long24a}
Long, P.~M. and Bartlett, P.~L.
\newblock Sharpness-{{Aware Minimization}} and the {{Edge}} of {{Stability}}.
\newblock \emph{Journal of Machine Learning Research}, 25\penalty0 (179):\penalty0 1--20, 2024.

\bibitem[Lyle et~al.(2025)Lyle, Sokar, Pascanu, and Gyorgy]{lyle25a}
Lyle, C., Sokar, G., Pascanu, R., and Gyorgy, A.
\newblock What can grokking teach us about learning under nonstationarity?
\newblock \emph{arXiv preprint arXiv:2507.20057}, 2025.

\bibitem[Malitsky \& Mishchenko(2020)Malitsky and Mishchenko]{malitsky20a}
Malitsky, Y. and Mishchenko, K.
\newblock Adaptive {{Gradient Descent}} without {{Descent}}.
\newblock In \emph{Proceedings of the 37th {{International Conference}} on {{Machine Learning}}}, pp.\  6702--6712. PMLR, 2020.

\bibitem[Mishkin et~al.(2024)Mishkin, Khaled, Wang, Defazio, and Gower]{mishkin24a}
Mishkin, A., Khaled, A., Wang, Y., Defazio, A., and Gower, R.
\newblock Directional smoothness and gradient methods: Convergence and adaptivity.
\newblock \emph{Advances in Neural Information Processing Systems}, 37, 2024.

\bibitem[Mohtashami et~al.(2023)Mohtashami, Jaggi, and Stich]{mohtashami23a}
Mohtashami, A., Jaggi, M., and Stich, S.~U.
\newblock Special properties of gradient descent with large learning rates.
\newblock In \emph{International conference on machine learning}, 2023.

\bibitem[Nesterov(2015)]{nesterov15a}
Nesterov, Y.
\newblock Universal gradient methods for convex optimization problems.
\newblock \emph{Mathematical Programming}, 152\penalty0 (1-2):\penalty0 381--404, 2015.
\newblock ISSN 0025-5610, 1436-4646.

\bibitem[Netzer et~al.(2011)Netzer, Wang, Coates, Bissacco, Wu, and Ng]{netzer11a}
Netzer, Y., Wang, T., Coates, A., Bissacco, A., Wu, B., and Ng, A.~Y.
\newblock Reading digits in natural images with unsupervised feature learning.
\newblock In \emph{NIPS Workshop on Deep Learning and Unsupervised Feature Learning}, 2011.

\bibitem[Nocedal \& Wright(2006)Nocedal and Wright]{nocedal06a}
Nocedal, J. and Wright, S.
\newblock \emph{Numerical optimization}.
\newblock Springer Science \& Business Media, 2006.

\bibitem[Orabona(2023)]{parameterfree}
Orabona, F.
\newblock Is nesterov’s universal algorithm really universal?
\newblock https://parameterfree.com/2023/11/30/is-nesterovs-universal-algorithm-really-universal/, 2023.
\newblock Accessed: 2026-01-18.

\bibitem[Papyan et~al.(2020)Papyan, Han, and Donoho]{papyan20a}
Papyan, V., Han, X., and Donoho, D.~L.
\newblock Prevalence of neural collapse during the terminal phase of deep learning training.
\newblock \emph{Proceedings of the National Academy of Sciences}, 117\penalty0 (40):\penalty0 24652--24663, 2020.

\bibitem[Petzka et~al.(2021)Petzka, Kamp, Adilova, Sminchisescu, and Boley]{petzka21a}
Petzka, H., Kamp, M., Adilova, L., Sminchisescu, C., and Boley, M.
\newblock Relative flatness and generalization.
\newblock \emph{Advances in neural information processing systems}, 34:\penalty0 18420--18432, 2021.

\bibitem[Polyak(1987)]{polyak1987introduction}
Polyak, B.~T.
\newblock Introduction to optimization, 1987.

\bibitem[Ren et~al.(2024)Ren, Ma, and Ying]{ren24a}
Ren, Y., Ma, C., and Ying, L.
\newblock Understanding the {{Generalization Benefits}} of {{Late Learning Rate Decay}}.
\newblock In \emph{International Conference on Artificial Intelligence and Statistics}, 2024.

\bibitem[Roulet et~al.(2024)Roulet, Agarwala, Grill, Swirszcz, Blondel, and Pedregosa]{roulet2024a}
Roulet, V., Agarwala, A., Grill, J.-B., Swirszcz, G.~M., Blondel, M., and Pedregosa, F.
\newblock Stepping on the {{Edge}}: {{Curvature Aware Learning Rate Tuners}}.
\newblock In \emph{The {{Thirty-eighth Annual Conference}} on {{Neural Information Processing Systems}}}, 2024.

\bibitem[Scheinberg et~al.(2014)Scheinberg, Goldfarb, and Bai]{scheinberg14a}
Scheinberg, K., Goldfarb, D., and Bai, X.
\newblock Fast first-order methods for composite convex optimization with backtracking.
\newblock \emph{Foundations of computational mathematics}, 14\penalty0 (3):\penalty0 389--417, 2014.

\bibitem[Shugart \& Altschuler(2025)Shugart and Altschuler]{shugart25a}
Shugart, H. and Altschuler, J.~M.
\newblock Negative stepsizes make gradient-descent-ascent converge.
\newblock \emph{arXiv preprint arXiv:2505.01423}, 2025.

\bibitem[Vaswani et~al.(2019)Vaswani, Mishkin, Laradji, Schmidt, Gidel, and Lacoste-Julien]{vaswani19a}
Vaswani, S., Mishkin, A., Laradji, I., Schmidt, M., Gidel, G., and Lacoste-Julien, S.
\newblock Painless stochastic gradient: Interpolation, line-search, and convergence rates.
\newblock In \emph{Advances in Neural Information Processing Systems}, 2019.

\bibitem[Vaswani et~al.(2022)Vaswani, Dubois-Taine, and Babanezhad]{vaswani22a}
Vaswani, S., Dubois-Taine, B., and Babanezhad, R.
\newblock Towards noise-adaptive, problem-adaptive (accelerated) stochastic gradient descent.
\newblock In \emph{International conference on machine learning}, 2022.

\bibitem[Wang et~al.(2025)Wang, Xu, Zhao, and Tao]{wang25a}
Wang, Y., Xu, Z., Zhao, T., and Tao, M.
\newblock Good regularity creates large learning rate implicit biases: edge of stability, balancing, and catapult.
\newblock \emph{Journal of Machine Learning Research}, 26\penalty0 (273):\penalty0 1--68, 2025.

\bibitem[Wu et~al.(2024)Wu, Bartlett, Telgarsky, and Yu]{wu24a}
Wu, J., Bartlett, P.~L., Telgarsky, M., and Yu, B.
\newblock Large stepsize gradient descent for logistic loss: Non-monotonicity of the loss improves optimization efficiency.
\newblock In \emph{The Thirty Seventh Annual Conference on Learning Theory}, 2024.

\bibitem[Yao et~al.(2020)Yao, Gholami, Keutzer, and Mahoney]{yao2020pyhessian}
Yao, Z., Gholami, A., Keutzer, K., and Mahoney, M.~W.
\newblock Pyhessian: Neural networks through the lens of the hessian.
\newblock In \emph{2020 IEEE international conference on big data (Big data)}, pp.\  581--590. IEEE, 2020.

\bibitem[Zhang \& Hager(2004)Zhang and Hager]{zhang04a}
Zhang, H. and Hager, W.~W.
\newblock A nonmonotone line search technique and its application to unconstrained optimization.
\newblock \emph{SIAM Journal on Optimization}, 14\penalty0 (4):\penalty0 1043--1056, 2004.

\bibitem[Zhang et~al.(2020)Zhang, He, Sra, and Jadbabaie]{zhang20a}
Zhang, J., He, T., Sra, S., and Jadbabaie, A.
\newblock Why gradient clipping accelerates training: {{A}} theoretical justification for adaptivity.
\newblock In \emph{International {{Conference}} on {{Learning Representations}}}, 2020.

\end{thebibliography}
\bibliographystyle{icml2026}

\newpage
\appendix
\onecolumn
\section{Algorithms}

\begin{algorithm}[H]
\caption{Backtracking Line Search}
\label{alg:backtrack}
    \begin{algorithmic}
    \STATE{\bfseries Input:} $w_k,d_k \in \R^n, c \in (0,1), \delta \in (0,1), \eta_{k,0} > 0, R_k$ a monotone or nonmonotone reference value 
    \STATE $l_k = 0$
    \STATE $\eta_k = \eta_{k,0}$
    \WHILE{$f(w_k + \eta_k d_k) \leq  R_k +c \eta_k \grad(w_k)^Td_k$ is False}  
        \STATE $l_k = l_k +1$
        \STATE $\eta_k = \delta^{l_k} \eta_{k,0} $
    \ENDWHILE
    \STATE{\bfseries Return:} $\eta_k$
    \end{algorithmic}
\end{algorithm}

\begin{algorithm}[H]
\caption{Equality Line Search}
\label{alg:lip_aware_line_search}
    \begin{algorithmic}
    \STATE{\bfseries Input:} $w_k, d_k \in \R^n, c \in (0,1), \delta \in (0,1), \eta_{k,0} > 0, R_k$ a monotone or nonmonotone reference value
    \STATE $l_k = 1$
    \STATE $is\_equality = False$
    
    \WHILE{$is\_equality == False$} 
    \STATE $l_k = l_k - 1$
    \STATE $\eta_k = \delta^{l_k} \eta_{k,0}$
        \WHILE{$f(w_k + \eta_k d_k) \leq R_k +c \eta_k \grad(w_k)^Td_k$ is False}
            \STATE $l_k = l_k +1$
            \STATE $\eta_k = \delta^{l_k} \eta_{k,0} $  
            \STATE $is\_equality = True$
    \ENDWHILE
    \ENDWHILE
    \STATE{\bfseries Return:} $\eta_k$
    \end{algorithmic}
\end{algorithm}

\begin{algorithm}[H]
\caption{Line Search with a Polyak initial step size}
\label{alg:polyak_line_search}
\begin{algorithmic}
    \STATE{\bfseries Input:} $w_k, d_k \in \R^n, c \in (0,1), \delta \in (0,1), c_p>0, R_k$ a monotone or nonmonotone reference value and $f^*$
    \STATE $l_k = 0$ 
    \STATE $\eta_{k,0} = \frac{f(w_k)-f^*}{c_p||\grad(w_k)||^2}$
    \STATE $\eta_k = \eta_{k,0}$
    \WHILE{$f(w_k + \eta_k d_k) \leq  R_k +c \eta_k \grad(w_k)^Td_k$ is False}    
        \STATE $l_k = l_k +1$
        \STATE $\eta_k = \delta^{l_k} \eta_{k,0} $ 
    \ENDWHILE
    \STATE{\bfseries Return:} $\eta_k$
\end{algorithmic}
\end{algorithm}

\begin{algorithm}[H]
\caption{GD with a Equality Line Search}
\label{alg:GD}
\begin{algorithmic}
\STATE{\bfseries Input:} $w_0 \in \R^n, \epsilon>0, c \in (0,1), \delta \in (0,1), \eta_{0,0}>0,  \eta_{\text{max}}>0$
\STATE $k=0$
\WHILE{$\|\grad(w_k)\|>\epsilon$}
    \STATE $\eta_k \leftarrow$ Algorithm \ref{alg:lip_aware_line_search} with initial step size $\eta_{k,0}$
    \STATE $w_{k+1} = w_k -\eta_k \grad(w_k)$
    \STATE $\eta_{k+1,0} = \eta_k$
    \STATE $k=k+1$
\ENDWHILE
\end{algorithmic}
\end{algorithm}

\begin{algorithm}[H]
\caption{GD with a Line Search with a Polyak initial step size}
\label{alg:GD_polyak}
\begin{algorithmic}
\STATE{\bfseries Input:} $w_0 \in \R^n, \epsilon>0, c \in (0,1), \delta \in (0,1), f^*, c_p>0, \eta_{\text{max}}>0$
\STATE $k=0$
\WHILE{$\|\grad(w_k)\|>\epsilon$}
    \STATE $\eta_k \leftarrow$ Algorithm \ref{alg:polyak_line_search}
    \STATE $w_{k+1} = w_k -\eta_k \grad(w_k)$
    \STATE $k=k+1$
\ENDWHILE
\end{algorithmic}
\end{algorithm}

\section{Mathematical Details and Proofs}
In this chapter, we give the proof of the results in the main paper. The appendix is organized as follows:
\begin{itemize}
    \item \ref{subsec:termination} gives a proof of the termination of the equality line search Algorithm \ref{alg:lip_aware_line_search}. The convergence of Algorithm \ref{alg:GD} is then given in \ref{subsec:Convergence}. 
    \item \ref{subsec:Lipschitz aware} and \ref{subsec:Hoelder Aware} show the Lipschitz and H\"older Awareness of the Algorithms \ref{alg:backtrack} and \ref{alg:lip_aware_line_search}. This is followed by some discussion on the benefits of Algorithm \ref{alg:lip_aware_line_search}.
    \item \ref{subsec:Connection to Sharpness} draws connections to sharpness, i.e. proving Corollary \ref{cor:at_convergence}. This is extended in \ref{subsec:Sharpness Polyak} for Algorithm \ref{alg:polyak_line_search}.
    \item \ref{subsec:Flatness of Minima} then gives a formal version of Lemma \ref{lemma:subhessian} along with some extra results and discussion. 
\end{itemize}
\label{Mathematical Details}
\subsection{Equality Line Searches and Their Finite Convergence}\label{subsec:termination}
We first remind the reader of the original definition of the 
\citet{zhang04a} line search: 
\begin{equation}
\label{eq:zhang}
\begin{split}
    &f(w_k - \eta_k \nabla f(w_k)) \leq R_k -c \eta_k \| \nabla f(w_k) \|^2,\quad \text{with } c\in(0,1),\\
    R_k&= \frac{\xi Q_k R_{k-1} + f(w_k)}{Q_{k+1}}, 
    \; Q_{k+1} = \xi Q_k + 1, \quad \xi \in (0,1).
\end{split}
\end{equation}
We start with two useful bounds on $Q_{k+1}, R_k$ from the above definition observed by \cite{zhang04a}, which will enable us to prove the convergence of the equality line search.  
\begin{lemma}\label{lemma:qk_useful}
From the definition of $Q_k$ in \eqref{eq:zhang}, it follows
\begin{equation*}\label{eq:qk}
	1\leq \xi Q_k + 1\leq \frac{1}{1-\xi},
\end{equation*}
\begin{equation}\label{eq:qk2}
	 Q_{k+1} \leq k+2,
\end{equation}
and
\begin{equation*}\label{eq:frac_qk_ub1}
	\frac{\xi Q_k}{\xi Q_k +1} \leq  \xi.
\end{equation*}
\end{lemma}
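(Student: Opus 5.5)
The plan is to argue directly from the recursion $Q_{k+1}=\xi Q_k+1$ with $\xi\in(0,1)$. The initialization is not spelled out in \eqref{eq:zhang}; following \citet{zhang04a}, I take $Q_0=1$ (equivalently $Q_1=1$ with the convention $\xi Q_0 R_{-1}=0$). All three inequalities then follow from one fact: the closed form, or equivalently a uniform bound on $Q_k$ proved by induction.

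\textbf{Step 1 (two-sided bound).} I will show by induction that $1\le Q_k\le \frac{1}{1-\xi}$ for all $k$. The base case $Q_0=1$ satisfies both bounds. For the inductive step:
\begin{itemize}
\item Since $Q_k\ge 0$, we get $\xi Q_k+1\ge 1$.
\item From $Q_k\le \frac1{1-\xi}$ we get $\xi Q_k+1\le \frac{\xi}{1-\xi}+1=\frac{1}{1-\xi}$.
\end{itemize}
This gives the first displayed chain, applied to $\xi Q_k+1=Q_{k+1}$. Alternatively, unrolling the recursion gives $Q_{k}=\sum_{j=0}^{k}\xi^j$, which is at least $1$ and at most $\sum_{j\ge0}\xi^j=\frac{1}{1-\xi}$.

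\textbf{Step 2 (bound \eqref{eq:qk2}).} Again by induction: $Q_0=1\le 1$, and if $Q_k\le k+1$ then $Q_{k+1}=\xi Q_k+1< Q_k+1\le k+2$, using $\xi<1$. From the closed form this is simply the observation that $Q_{k+1}$ is a sum of $k+2$ terms $\xi^j\le 1$.

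\textbf{Step 3 (ratio bound).} Rewrite
\[
\frac{\xi Q_k}{\xi Q_k+1}\le \xi \iff Q_k\le \xi Q_k+1 \iff (1-\xi)Q_k\le 1,
\]
where the first equivalence multiplies through by the positive quantity $\xi Q_k+1$ and divides by $\xi>0$. The last inequality is exactly the upper bound from Step 1.

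The only delicate point is the initialization convention for $Q_0$ (resp.\ $Q_1$), since the index shift affects whether the bound reads $k+2$ or $k+1$. I would fix $Q_0=1$ as in \citet{zhang04a}, which makes every bound hold, if anything with room to spare. No step is a genuine obstacle; the proof is elementary bookkeeping.
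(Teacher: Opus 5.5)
Your proof is correct and follows the paper's route. The paper unrolls the recursion to $Q_{k+1}=1+\sum_{j=0}^{k}\xi^{j+1}$, implicitly taking $Q_0=1$ as you do, and bounds it by the geometric series and by counting its $k+2$ terms; it then gets the ratio bound from $\frac{\xi Q_k}{\xi Q_k+1}=1-\frac{1}{\xi Q_k+1}\le 1-(1-\xi)$, which is your Step~3 with different algebra.

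One small correction: your parenthetical convention $Q_1=1$ is not equivalent to $Q_0=1$, since it means $Q_0=0$, and then the base case $Q_0\ge 1$ of your Step~1 induction fails. All three bounds still hold under that convention, though, because the lower bound only needs $Q_k\ge 0$.
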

\begin{proof}
From the definition of $Q_{k+1}$ we have 
\begin{equation*}
1\leq \xi Q_k + 1=:Q_{k+1} = 1+ \sum_{j=0}^{k}\xi^{j+1} \leq \sum_{j=0}^{\infty} \xi^j = \frac{1}{1-\xi},
\end{equation*}
but from $\xi\in(0,1)$ also 
\begin{equation*}
Q_{k+1} = 1 + \sum_{j=0}^{k}\xi^{j+1} \leq k+2.
\end{equation*}
Thus we have 
\begin{equation*}
\frac{\xi Q_k}{\xi Q_k +1} = \frac{\xi Q_k + 1 -1}{\xi Q_k +1} = 1 - \frac{1}{\xi Q_k +1}\leq 1- \frac{1}{\frac{1}{1-\xi}} = \xi.
\end{equation*}
which implies \eqref{eq:frac_qk_ub1} and concludes the proof.
\end{proof}

\begin{lemma}\label{lemma:zhang}
Let $A_k:=\frac{1}{k+1}\sum_{i=0}^k f(w_i)$. From the definition of $R_k$ and $Q_k$ in \eqref{eq:zhang}, it follows that $f(w_k) \leq R_k\leq A_k$.
\end{lemma}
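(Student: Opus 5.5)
The proof is by induction on $k$, using only that $R_k$ is a convex combination of $R_{k-1}$ and $f(w_k)$, together with the line search inequality \eqref{eq:zhang} satisfied at every accepted step. I adopt the initialization of \citet{zhang04a}, namely $Q_0=0$, so that $Q_1=1$ and $R_0=f(w_0)=A_0$. This also gives the explicit form $Q_k=\sum_{j=0}^{k-1}\xi^j$, hence $\xi Q_k\le Q_k\le k$, which is the index-shifted version of \eqref{eq:qk2}. With this, the base case $k=0$ holds with equality on both sides.

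\emph{Lower bound.} Assume the statement holds up to $k-1$. The step $w_k=w_{k-1}-\eta_{k-1}\nabla f(w_{k-1})$ was accepted by the line search, so \eqref{eq:zhang} gives
\[
f(w_k)\le R_{k-1}-c\,\eta_{k-1}\|\nabla f(w_{k-1})\|^2\le R_{k-1}.
\]
Since $R_k=\frac{\xi Q_k R_{k-1}+f(w_k)}{\xi Q_k+1}$ is a convex combination of $R_{k-1}$ and $f(w_k)$, and $R_{k-1}\ge f(w_k)$, it follows that $R_k\ge f(w_k)$. This argument also yields $R_k\le R_{k-1}$, although that is not needed here.

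\emph{Upper bound.} For $t\ge0$ define $D_k(t):=\frac{tR_{k-1}+f(w_k)}{t+1}$, so that $R_k=D_k(\xi Q_k)$. Differentiating gives $D_k'(t)=\frac{R_{k-1}-f(w_k)}{(t+1)^2}\ge0$, where the sign comes from the inequality $R_{k-1}\ge f(w_k)$ established above. Hence $D_k$ is nondecreasing. Since $\xi Q_k\le k$, we obtain
\[
R_k\le D_k(k)=\frac{kR_{k-1}+f(w_k)}{k+1}\le\frac{kA_{k-1}+f(w_k)}{k+1}=A_k,
\]
where the second inequality uses the induction hypothesis $R_{k-1}\le A_{k-1}$.

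The main obstacle is the monotonicity of $D_k$. It depends on $R_{k-1}\ge f(w_k)$, which is exactly where the acceptance condition of the line search enters. This is why the two bounds must be proved jointly in a single induction, with the lower-bound step carried out before the upper-bound step at each $k$. Beyond that, the only point needing care is the index convention on $Q_k$, so that $\xi Q_k\le k$ holds with the right shift.
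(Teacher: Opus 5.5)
Your proof is correct and follows the paper's argument essentially verbatim. It uses the same auxiliary function $D_k(t)=\frac{tR_{k-1}+f(w_k)}{t+1}$, made nondecreasing by the acceptance condition $f(w_k)\le R_{k-1}$, to get the lower bound from $D_k(0)\le D_k(\xi Q_k)$ and the upper bound by induction through $D_k(k)$; your choice $Q_0=0$ (so $R_0=f(w_0)$ and $\xi Q_k\le k$) is in fact cleaner than the paper's proof, which writes $R_k=D_k(\xi Q_{k-1})$, i.e.\ the original Zhang--Hager indexing rather than the one displayed in \eqref{eq:zhang}.
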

\begin{proof}
Defining $D_k : \R \to \R$ by
\begin{equation*}
D_k(t) = \frac{tR_{k-1}+ f(w_k)}{t+1},
\end{equation*}
we have
\begin{equation*}
D_k'(t) = \frac{R_{k-1}- f(w_k)}{(t+1)^2}.
\end{equation*}
It follows from \eqref{eq:zhang} that $f(w_k) \leq R_{k-1}$, which implies that $D_k'(t) \geq 0$ for all $t \geq 0$. Hence, $D_k$ is nondecreasing, and $f(w_k) = D_k(0) \leq D_k(t)$ for all $t \geq 0$. In particular, taking $t = \xi Q_{k-1}$ gives
\begin{equation*}
f(w_k) = D_k(0) \leq D_k(\xi Q_{k-1}) = R_k.
\end{equation*}
The upper bound $R_k \leq A_k$ is proved by induction. For $k=0$, this holds by the initialization $R_0 = f(w_0)$. Now assume that $R_j \leq A_j$ for all $0 \leq j < k$.\\
Since $D_k$ is monotone nondecreasing, \eqref{eq:qk2} implies that
\begin{equation*}
R_k = D_k(\xi Q_{k-1}) = D_k(Q_k - 1) \leq D_k(k).
\end{equation*}
By the induction step,
\begin{equation*}
D_k(k) = \frac{kR_{k-1} + f(w_k)}{k+1} \leq \frac{kA_{k-1} + f(w_k)}{k+1} = A_k,
\end{equation*}
which concludes the induction argument.
\end{proof}

\begin{lemma} \label{lemma:finite_termination}
	Let $f\in C^1(\R^n)$ be lower bounded by $f^*$. Let $w_k\in \Rn$ and $d_k\in \Rn$ a corresponding descent direction. Then the equality line search Algorithm \ref{alg:lip_aware_line_search} with reference value $R_k=f(w_k)$ or $R_k$ as defined in \eqref{eq:zhang} terminates in a finite number of steps with step size $\eta_k>0$ satisfying both 
	\begin{itemize}
		\item[(a)] $f(w_k + \eta_k d_k) \leq R_k + c \eta_k \grad(w_k)^Td_k$;
		\item[(b)] $f(w_k +\frac{\eta_k}{\delta} d_k) > R_k + c \frac{\eta_k}{\delta} \grad(w_k)^Td_k$.
	\end{itemize}
\end{lemma}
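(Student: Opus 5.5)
The argument analyzes the two nested loops of Algorithm~\ref{alg:lip_aware_line_search} separately. Throughout, write $\phi(\eta):=f(w_k+\eta d_k)$ and $\psi(\eta):=R_k+c\eta\grad(w_k)^Td_k$. Since $d_k$ is a descent direction, $\grad(w_k)^Td_k<0$, so $\psi$ is strictly decreasing and tends to $-\infty$ as $\eta\to\infty$. In both cases $R_k\geq f(w_k)$: for the monotone choice this is trivial, and for \eqref{eq:zhang} it is Lemma~\ref{lemma:zhang}.

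\textbf{Step 1 (inner backtracking loop terminates).} The inner loop multiplies $\eta_k$ by $\delta$ until (a) holds. A first-order Taylor expansion gives
\[
\phi(\eta)=f(w_k)+\eta\grad(w_k)^Td_k+o(\eta)\leq R_k+c\eta\grad(w_k)^Td_k+\bigl((1-c)\eta\grad(w_k)^Td_k+o(\eta)\bigr).
\]
The bracket is negative for all sufficiently small $\eta>0$, because $(1-c)\grad(w_k)^Td_k<0$. Hence there is $\bar\eta>0$ such that (a) holds on $(0,\bar\eta]$, and the inner loop stops after finitely many reductions, since $\delta^{l}\eta_{k,0}\to0$. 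This is the standard Armijo argument, and it needs only $f\in C^1$.

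\textbf{Step 2 (outer extrapolation loop terminates).} The outer loop decreases $l_k$, i.e.\ enlarges the step to $\delta^{l_k}\eta_{k,0}$ with $l_k\to-\infty$, as long as (a) keeps holding at the trial step. I have to show that (a) fails at some finite negative $l_k$. This is where lower boundedness is used, and it is the main obstacle. If (a) held for every $\eta=\delta^{-m}\eta_{k,0}$, then
\[
f^*\leq\phi(\eta)\leq R_k+c\eta\grad(w_k)^Td_k\to-\infty,
\]
which is a contradiction. So there is a first trial step $\eta$ at which (a) fails. At that point the inner loop runs, sets $is\_equality=True$, and terminates by Step 1, so the outer loop exits.

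\textbf{Step 3 (properties (a) and (b)).} Property (a) holds because the returned $\eta_k$ is the output of an inner loop that stopped only when (a) was true. For (b), I split into two cases.
\begin{itemize}
\item If the inner loop performed at least one reduction, the last rejected trial step was exactly $\eta_k/\delta$. It was rejected precisely because (a) failed there, which is (b).
\item If the inner loop performed no reduction on its final pass, then the outer loop could not have stopped, since $is\_equality$ is set only inside the inner loop. So the final pass must include at least one backtrack, and the first case applies.
\end{itemize}
The bookkeeping of $l_k$ needs some care. The trial sequence is $\delta^{0},\delta^{-1},\dots,\delta^{-m}$, and once (a) fails at $\delta^{-m}$ the loop returns to $\delta^{-m+1}$. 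When $m\geq1$, that step was already accepted, so the inner loop stops immediately. In every case consecutive trial steps differ by exactly a factor $\delta$. Checking this indexing against the pseudocode is routine but is the step I would verify most carefully.

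Finally, $\eta_k>0$ because every trial step has the form $\delta^{l}\eta_{k,0}>0$.
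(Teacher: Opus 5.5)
Your proof is correct and follows the paper's route. The inner backtracking loop terminates by the Armijo/difference-quotient argument, using $\grad(w_k)^Td_k<0$, $c<1$ and $R_k\geq f(w_k)$; the outer extrapolation loop terminates because $R_k+c\eta\grad(w_k)^Td_k\to-\infty$ would contradict $f\geq f^*$; and (a)--(b) hold because the returned step was accepted while the trial step $\eta_k/\delta$ before it was rejected. Your bookkeeping of $l_k$ for (b) is more careful than the paper's one-line justification, and you rightly observe that for fixed $k$ the constant $R_k$ is enough in Step 2, so the paper's use of $R_k\leq f(w_0)$ is unnecessary.
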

\begin{proof}
There are two reasons the Algorithm could not converge, that is either the internal while loop does not terminate, or the external while loop does not terminate. 

Let us first consider the case of the internal while loop not terminating, this implies $l\to \infty$. And for every $l>0$ the stopping criterion \eqref{eq:nonmonotone} is met, i.e.
\begin{align*}
    \frac{f(w_+ + \eta_{k,0}\delta^ld_k) - f(w_k)}{\eta_{k,0} \delta^l} \geq c \nabla f(w_k)^T d_k + \frac{\varepsilon_k}{\eta_{k,0}\delta^l}\geq c \nabla f(w_k)^T d_k .
\end{align*}

As $\delta<0$, with $l\to \infty$ the left hand side converges to the directional derivative of f along $d_k$. In particular, we get
\begin{equation*}
	\grad(w_k)^Td_k \geq c\grad(w_k)^Td_k,
\end{equation*}
which is a contradiction, as $\grad(w_k)^Td_k<0$ and $c <1$. 

Let us now consider the case where the external while of Algorithm \ref{alg:lip_aware_line_search} does not converge. This implies the stopping criterion    \eqref{eq:line_search} is always satisfied for $l\to-\infty$, that is
$$f(w_k +\eta_{k,0} \delta^l  d_k) \leq R_k + c \eta_{k,0} \delta^l \grad(w_k)^Td_k\leq f(w_0)+ c \eta_{k,0} \delta^l \grad(w_k)^Td_k,$$
where the second inequality follows from the fact that in both cases (i.e., $R_k=f(w_k)$ and $R_k$ defined as in \eqref{eq:zhang}) $R_k$ is upper bounded by $f(w_0)$ (by Lemma \ref{lemma:zhang} in the case of $R_k$.) Given $\delta<1$ we have that $\lim_{l\to -\infty} \delta^l =+\infty$ and $\lim_{l\to -\infty} \eta_{k,0} \delta^l \grad(w_k)^Td_k=-\infty$, which contradicts the inequality above as $f$ is lower bounded by $f^*$.\\
Finally, because of the steps of Algorithm \ref{alg:lip_aware_line_search}, the stopping criterion \eqref{eq:line_search} has to be false at least once before it terminates, which implies that both $(a)$ and $(b)$ are true for $\eta_k$.
\end{proof}
\subsection{Proof of Proposition \ref{lemma:lipschitz_aware}}\label{subsec:Lipschitz aware}
This subsection is dedicated to the proof of Proposition \ref{lemma:lipschitz_aware}. To this end, we first derive a version of the classical Descent Lemma and use this to derive a version of Proposition \ref{lemma:lipschitz_aware} for the Backtracking Line Search, Algorithm \ref{alg:backtrack}. Subsequently, we use this to derive the final result. 

Before proceeding, we present a small lemma showing that, without loss of generality, one can assume $\eta_w \geq \frac{2}{l(w, \eta_w)}$ in Assumption \ref{asmt:segment_smoothness}. 
\begin{lemma}\label{lemma:WLOG assumption}
    Suppose $f$ satisfies  Assumption \ref{asmt:segment_smoothness}, then we can additionally assume $\eta_w \geq \frac{2}{l(w, \eta_w)}$. 
\end{lemma}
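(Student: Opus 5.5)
The plan is to use the monotonicity of $l(w,\cdot)$ in its second argument and then push $\eta_w$ to the right until the inequality $\eta_w \geq 2/l(w,\eta_w)$ holds. For fixed $w$, the map $\eta \mapsto l(w,\eta)$ is nondecreasing, because it is a supremum over the nested intervals $(0,\eta]$. Consequently, enlarging $\eta_w$ can only increase $l$. The concern is whether $l$ stays bounded.

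I would fix $w$ together with the $\eta_w$ supplied by Assumption \ref{asmt:segment_smoothness}, and split into cases.

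\emph{Case 1: $\eta_w \geq 2/l(w,\eta_w)$.} Here there is nothing to prove.

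\emph{Case 2: the inequality fails.} This means $l(w,\eta_w) < 2/\eta_w$. Note that $l=0$ is allowed, with $2/0=\infty$. I would then consider
$$\eta'_w := \sup\{\eta \geq \eta_w : l(w,\eta) \leq 2/\eta\}.$$
If $\eta'_w$ is finite, then $l(w,\eta'_w)$ is bounded. To see this, use that $f\in C^1$, so $\nabla f$ is continuous on the compact segment $[w, w_{\eta'_w}]$. The difference quotient in Definition \ref{def:segment_smoothness} is continuous in $\hat\eta$ on $(0,\eta'_w]$. Near $\hat\eta \to 0$ it is controlled by the boundedness already known on $(0,\eta_w]$. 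Hence the supremum is finite.

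It remains to check that $\eta'_w \geq 2/l(w,\eta'_w)$. I would argue this by continuity of $\eta \mapsto l(w,\eta)$ for $\eta>0$: it is the running supremum of a function continuous on $[\eta_w,\infty)$, hence continuous. Then $g(\eta) := \eta\, l(w,\eta) - 2$ is continuous and nondecreasing. Since $g(\eta_w)<0$, either $g$ reaches $0$, in which case the point $\eta'_w$ where $g(\eta'_w)=0$ is the desired value, or $g<0$ on all of $[\eta_w,\infty)$.

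\emph{The main obstacle} is this last alternative: $\eta\, l(w,\eta) < 2$ for every $\eta$. In particular it includes the case $l \equiv 0$, which happens when the gradient is constant along the ray, e.g.\ $\nabla f(w)=0$, or $f$ affine along the ray. I expect to handle it as follows.

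If $\nabla f(w)=0$, then $w_{\hat\eta}=w$. The convention $0/0=0$ gives $l=0$, and with the convention $2/0 = \infty$ the statement is vacuous or interpreted trivially at stationary points. I would note this degenerate case separately.

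Otherwise, the bound $\eta\, l(w,\eta)<2$ for all $\eta$ means $\nabla f$ varies very little along the ray. The only place this condition would be used later is the descent lemma with threshold $2/l$. I would then argue that one may replace $l$ by the upper bound $\tilde l := 2/\eta_w$, which is still a valid (larger) bound on the difference quotients on $(0,\eta_w]$, so that equality holds. Equivalently, the assumption is stated with any bounded majorant of $l$. This replacement is where I anticipate the proof needs care about what "without loss of generality" formally means.
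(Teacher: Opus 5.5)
Your argument is correct whenever $l(w,\cdot)$ is not identically zero. It rests on the same fact as the paper's proof, the monotonicity of $l(w,\cdot)$, but the paper needs neither continuity of $l(w,\cdot)$ nor the intermediate value theorem. In your Case~2 it simply takes $\tilde\eta_w := 2/l(w,\eta_w)>\eta_w$. Monotonicity then gives $l(w,\tilde\eta_w)\geq l(w,\eta_w)$, hence $2/l(w,\tilde\eta_w)\leq\tilde\eta_w$, in one line. Your heavier route does buy two things the paper leaves implicit. First, you check that $l$ is still finite at the enlarged step, via continuity of $\nabla f$ on the compact segment; this is needed for the new $\eta_w$ to satisfy the segment smoothness assumption at all. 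Second, you work with $g(\eta)=\eta\,l(w,\eta)-2$ rather than dividing by $l$. So you also cover the case $l(w,\eta_w)=0<l(w,\eta_1)$ for some $\eta_1>\eta_w$, where the paper's formula for $\tilde\eta_w$ is undefined.

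Your ``main obstacle'' is narrower than you suggest. Because $l(w,\cdot)$ is nondecreasing, if $l(w,\eta_1)>0$ for some $\eta_1$ then $\eta\,l(w,\eta)\geq\eta\,l(w,\eta_1)\to\infty$. So $\eta\,l(w,\eta)<2$ for all $\eta$ forces $l(w,\cdot)\equiv 0$, and there is no regime where $\nabla f$ merely ``varies very little''. If in addition $\nabla f(w)\neq 0$, then $\nabla f$ is constant along the whole ray. Hence $f(w-\eta\nabla f(w))=f(w)-\eta\|\nabla f(w)\|^2\to-\infty$, which is impossible as soon as $f$ is bounded below (the standing hypothesis under which the equality line search terminates).

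What is left are stationary points and functions that are affine and unbounded below along the ray (e.g.\ $f$ linear). These are genuine counterexamples to the literal statement: with $2/0=+\infty$ the required inequality is false for every finite $\eta_w$, not vacuous. They have to be excluded by hypothesis; replacing $l$ by the majorant $2/\eta_w$ changes the statement rather than proving it. The paper's proof does not treat this case either, since it divides by $l(w,\eta_w)$ without checking that it is positive.
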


\begin{proof}
    We argue by contradiction. Suppose there exists $w\in \R^n$ and $\eta_w>0$ such that $l(w, \eta_w)$ is bounded, yet $\eta_w \leq \frac{2}{l(w, \eta_w)}$. 

    We define the continuous function $h:\R_{>0} \to \R, \,h(\eta)= \eta - \frac{2}{l(w, \eta)}$. By assumption
    $h(\eta_w) = \eta_w - \frac{2}{l(w, \eta_w)}<0$. On the other hand, $$h\left(\frac{2}{l(w,\eta_w)}\right) = \frac{2}{l(w,\eta_w)} - \frac{2}{l\left(w, \frac{2}{l(w,\eta_w)}\right)} \leq \frac{2}{l(w,\eta_w)} - \frac{2}{l(w,\eta_w)} = 0,$$ using $l(w, \eta)$ is non-decreasing in $\eta$.
    
    Therefore, we can instead choose an $\Tilde{\eta}_w\geq \frac{2}{l(w, \tilde{\eta}_w)}$ (for instance $\Tilde{\eta}_w = \frac{2}{l(w, \eta_w)}$), to satisfy Assumption \ref{asmt:segment_smoothness}.
\end{proof}
\begin{lemma}[Descent Lemma on the segment]\label{lemma:LC1_bound}
		Let $f\in C^1(\R^n)$, and let Assumption \ref{asmt:segment_smoothness} hold at $w_k$ with a local Lipschitz constant $l(w_k, \eta_{w_k})$. Set $w_{k,\eta}:= w_k - \eta\grad(w_k)$ for $\eta\in (0, \eta_{w_k}]$, then  
		\begin{equation*}\label{eq:}
			f(w_{k,\eta}) \leq f(w_k) + \grad(w_k)^T (w_{k,\eta}-w_k) +\frac{l(w_k, \eta_{\eta_k})}{2} \| w_{k,\eta}-w_k\|^2.
		\end{equation*}
	\end{lemma}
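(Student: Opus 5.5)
The plan is the standard fundamental-theorem-of-calculus proof of the descent lemma, restricted to the gradient ray. Fix $\eta\in(0,\eta_{w_k}]$ and write $g:=\grad(w_k)$. Consider the scalar function $\phi(t):=f(w_k - t\eta g)$ for $t\in[0,1]$. Since $f\in C^1(\R^n)$, $\phi$ is continuously differentiable with $\phi'(t) = -\eta\,\grad(w_k - t\eta g)^T g$. Integrating gives
\begin{equation*}
f(w_{k,\eta}) - f(w_k) - \grad(w_k)^T(w_{k,\eta}-w_k) = \int_0^1 \bigl(\grad(w_{k,t\eta}) - \grad(w_k)\bigr)^T (w_{k,\eta}-w_k)\,dt .
\end{equation*}

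The key observation is that every intermediate point $w_{k,t\eta}$, $t\in(0,1]$, lies on the segment $\{w_k-\hat\eta g:\hat\eta\in(0,\eta_{w_k}]\}$ appearing in Definition \ref{def:segment_smoothness}, since $t\eta\le\eta\le\eta_{w_k}$. Hence, by the definition of $l(w_k,\eta_{w_k})$ as a supremum over this segment,
\begin{equation*}
\|\grad(w_{k,t\eta})-\grad(w_k)\| \le l(w_k,\eta_{w_k})\,\|w_k-w_{k,t\eta}\| = l(w_k,\eta_{w_k})\,t\,\|w_{k,\eta}-w_k\|.
\end{equation*}
When $g=0$ both sides vanish, which is consistent with the $0/0=0$ convention. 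Applying Cauchy--Schwarz inside the integral bounds the integrand by $l(w_k,\eta_{w_k})\,t\,\|w_{k,\eta}-w_k\|^2$. Then $\int_0^1 t\,dt = 1/2$ yields the claimed inequality; I read the constant $l(w_k,\eta_{\eta_k})$ in the statement as the typo $l(w_k,\eta_{w_k})$. Finiteness of $l(w_k,\eta_{w_k})$ is exactly Assumption \ref{asmt:segment_smoothness}.

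There is no real obstacle. The only point needing care is that the smoothness constant controls gradient differences only relative to the base point $w_k$, not between two arbitrary points of the segment. That is precisely what the integral representation requires, because each term compares $\grad(w_{k,t\eta})$ with $\grad(w_k)$. Continuity of $\grad$ ensures the integrand is integrable.
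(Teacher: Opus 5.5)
Your proposal is correct and follows the paper's proof essentially line by line: the fundamental theorem of calculus along the segment, then Cauchy--Schwarz, then the segment-smoothness bound relative to the base point $w_k$ at $\hat\eta = t\eta$, then $\int_0^1 t\,dt = 1/2$. Your reading of $l(w_k,\eta_{\eta_k})$ as $l(w_k,\eta_{w_k})$ matches the constant the paper's proof actually uses.
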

\begin{proof}
    The fundamental theorem of calculus implies
    \begin{equation*}
        \begin{split}
            f (w_{k,\eta}) & = f(w_k) + \int_{0}^{1} \grad((1-t)w_k + t w_{k,\eta})^T\! (w_{k,\eta}-w_k) \; dt\\
            & = f(w_k) + \int_{0}^{1} \grad((1-t)w_k + t w_{k,\eta})^T\! (w_{k,\eta}-w_k) - \grad(w_k)^T(w_{k,\eta}-w_k) \; dt\\
            &\quad + \grad(w_k)^T(w_{k,\eta}-w_k)\\ 
            &\leq f(w_k) + \int_{0}^{1} \| \grad((1-t)w_k + t w_{k,\eta}) - \grad(w_k)\|\! \cdot\! \|w_{k,\eta}-w_k\| \; dt\\
            &\quad + \grad(w_k)^T(w_{k,\eta}-w_k)\\  
            & \leq f(w_k) + \int_{0}^{1} l(w_k, \eta_{w_k}) \| t(w_{k,\eta}-w_k) \|\! \cdot\! \|w_{k,\eta}-w_k\| \; dt + \grad(w_k)^T(w_{k,\eta}-w_k)\\
            & = f(w_k) +l(w_k, \eta_{w_k}) \|w_{k,\eta}-w_k\|^2 \cdot \frac{t^2}{2} \Big|_0^1 + \grad(w_k)^T(w_{k,\eta}-w_k)\\
            & = f(w_k) + \grad(w_k)^T(w_{k,\eta}-w_k) + \frac{l(w_k, \eta_{w_k})}{2} \|w_{k,\eta}-w_k\|^2,
        \end{split}
    \end{equation*}
    where the second inequality follows from Assumption \ref{asmt:segment_smoothness}, which implies
    \begin{equation*}
        \|\nabla f(w) - \nabla f(w_\xi)\| \leq l(w, \eta) \|w - w_\xi\| \qquad \forall \xi \in [0, \eta].
    \end{equation*}

\end{proof}
\begin{proposition} \label{prop:lipschitz_aware_back}
Let $f\in C^1(\R^n)$, and let Assumption \ref{asmt:segment_smoothness} hold at $w_k$ with a local Lipschitz constant $l(w_k, \eta_{w_k})$. Let $\eta_{k}$ be the result of a backtracking procedure employed to satisfy (\ref{eq:nonmonotone}), then
	\begin{equation}\label{eq:eta_k_lb}
		\begin{cases}
		\eta_k = \eta_{k,0} \qquad &\text{if } l_k=0,\\
		\eta_k \geq  \frac{2\delta(1-c)}{l(w_k, \eta_{w_k})} \qquad &\text{if } l_k>0.
		\end{cases}
	\end{equation}
\end{proposition}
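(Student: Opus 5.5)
The case $l_k=0$ is immediate from Algorithm \ref{alg:backtrack}: if the while condition holds at the initial trial, the loop body is never executed and the returned step is $\eta_{k,0}$. The substance is the case $l_k>0$. There the trial step $\eta_k/\delta$ was rejected, so \eqref{eq:nonmonotone} fails at $\hat\eta:=\eta_k/\delta$. Throughout, $d_k=-\grad(w_k)$. The plan is to combine this failure with the descent lemma on the segment, Lemma \ref{lemma:LC1_bound}, to get a lower bound on $\hat\eta$.

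Two subcases arise, according to whether $\hat\eta\le \eta_{w_k}$. If $\hat\eta>\eta_{w_k}$, then $\eta_k>\delta\eta_{w_k}$. By Assumption \ref{asmt:segment_smoothness} (with the normalization of Lemma \ref{lemma:WLOG assumption}) we have $\eta_{w_k}\ge 2/l(w_k,\eta_{w_k})$. Hence
\[
\eta_k\ \ge\ \frac{2\delta}{l(w_k,\eta_{w_k})}\ \ge\ \frac{2\delta(1-c)}{l(w_k,\eta_{w_k})},
\]
and this subcase is done.

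If $\hat\eta\le\eta_{w_k}$, Lemma \ref{lemma:LC1_bound} applies at $w_{k,\hat\eta}$ and gives
\[
f(w_k-\hat\eta\grad(w_k))\le f(w_k)-\hat\eta\|\grad(w_k)\|^2+\tfrac{l(w_k,\eta_{w_k})}{2}\hat\eta^2\|\grad(w_k)\|^2 .
\]
Failure of \eqref{eq:nonmonotone} at $\hat\eta$ gives
\[
f(w_k-\hat\eta\grad(w_k))> f(w_k)-c\hat\eta\|\grad(w_k)\|^2+\epsilon_k\ \ge\ f(w_k)-c\hat\eta\|\grad(w_k)\|^2,
\]
where the last step uses $\epsilon_k\ge 0$. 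Chaining the two inequalities and dividing by $\hat\eta\|\grad(w_k)\|^2>0$ yields $(1-c)<\tfrac{l}{2}\hat\eta$, where $l=l(w_k,\eta_{w_k})$. The division is legitimate because the gradient is nonzero, otherwise the method would already have stopped. It follows that $\hat\eta>2(1-c)/l$, and multiplying by $\delta$ gives the claimed bound.

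The main obstacle is the applicability of the descent lemma. Segment smoothness is only controlled on $(0,\eta_{w_k}]$, and the rejected trial step may lie outside this range. This is exactly why the case split on $\hat\eta$ versus $\eta_{w_k}$ and the WLOG normalization $\eta_{w_k}\ge 2/l$ are needed. A minor point is the convention $0/0=0$ when $l=0$; in that case the bound is read as vacuous or infinite and is handled separately.
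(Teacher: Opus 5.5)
Your proposal is correct and follows the paper's proof essentially step for step. The $l_k=0$ case is read off the algorithm. For $l_k>0$ you split on whether the rejected trial $\eta_k/\delta$ lies in $(0,\eta_{w_k}]$: in one subcase you combine the segment descent lemma with the failed test, dropping $\epsilon_k\ge 0$, and in the other you use the normalization $\eta_{w_k}\ge 2/l(w_k,\eta_{w_k})$ together with $1>1-c$.
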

\begin{proof}
We first distinguish between the following two cases:\\
\textbf{1st case:} There has not been a cut, hence $l_k=0$, then by the algorithm $\eta_k = \eta_{k,0}$. \\
\textbf{2nd case:} We have cut at least once, so $l_k>0$.
This $\frac{\eta_k}{\delta}$ violates the termination rule \eqref{eq:nonmonotone}.
 We denote the reference value as $R_k := f(w_k) + \epsilon_k$, with $\epsilon_k\geq 0$, 
 and $g_k:= \grad(w_k)$. As a first step, we apply the Descent Lemma \ref{lemma:LC1_bound} on $w_k -\eta g_k$, with $ \eta \in (0,\eta_{w_k}]$:
\begin{equation}\label{eq:descent}
	\begin{split}
		f (w_k -\eta g_k ) & \leq f(w_k) + g_k^T(w_k -\eta g_k - w_k) + \frac{\eta^2l(w_k, \eta_{w_k})}{2} \| g_k\|^2\\
		& = f(w_k) - \left(\eta - \frac{\eta^2 l(w_k, \eta_{w_k})}{2} \right) \|g_k\|^2.\\
	\end{split}
\end{equation}
Now we need to distinguish between the two scenarios:
$\frac{\eta_k}{\delta} \in (0, \eta_{w_k}]$ or $\frac{\eta_k}{\delta}>\eta_{w_k}$. \\
If $\frac{\eta_k}{\delta} \in (0, \eta_{w_k}]$, 
we combine \eqref{eq:descent} with the termination rule \eqref{eq:nonmonotone},  i.e., 
$f\left(w_k - \frac{\eta_k}{\delta}g_k\right)> R_k - c \frac{\eta_k}{\delta}\|g_k\|^2$,
to establish

\begin{align} \label{ineq: p,q_k}
     f(w_k) - \left(\frac{\eta_k}{\delta} - \frac{\eta_k^2 l(w_k, \eta_{w_k})}{2\delta^2} \right) \|g_k\|^2 &\geq R_k  -c \frac{\eta_k}{\delta} \| g_k \|^2 \geq f(w_k)  -c \frac{\eta_k}{\delta} \| g_k \|^2  ,
\end{align}
which leads to (\ref{eq:eta_k_lb}).\\
If $\frac{\eta_k}{\delta} > \eta_{w_k}$, then by Assumption \ref{asmt:segment_smoothness} $\eta_{w_k}\geq \frac{2}{l(w, \eta_{w_k})}$. Combining this with $1>1-c$, yields the desired inequality $\eta_k > \frac{2 \delta (1-c)}{l(w, \eta_{w_k})}$.
\end{proof}

In light of this result on the backtracking line search, Proposition \ref{lemma:lipschitz_aware}, which we restate here for the reader’s convenience, follows directly.

\begin{proposition}[Lipschitz-Awareness]
Let $f$ be segment smooth and let $\eta_{k}$ be the result of an equality line search (Algorithm \ref{alg:lip_aware_line_search}) employed to satisfy (\ref{eq:nonmonotone}) with $\epsilon_k\geq 0$, then 
	\begin{equation*}
		\eta_k \geq  \frac{2\delta(1-c)}{l(w_k, \eta_{w_k})}.
	\end{equation*}
\end{proposition}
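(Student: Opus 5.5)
The plan is to reduce the statement to Proposition \ref{prop:lipschitz_aware_back}. That result already gives the bound in every case except the one where no cut occurs. The key input is Lemma \ref{lemma:finite_termination}, part (b). It guarantees that the step $\eta_k$ returned by Algorithm \ref{alg:lip_aware_line_search} always has a rejected companion step $\eta_k/\delta$, namely
\[
f\!\left(w_k - \tfrac{\eta_k}{\delta} g_k\right) > R_k - c\,\tfrac{\eta_k}{\delta}\|g_k\|^2, \qquad g_k:=\grad(w_k).
\]
This holds because the inner loop must fire at least once before the algorithm stops. This is precisely the configuration of the ``$l_k>0$'' branch in the backtracking proof. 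So the extrapolation passes remove the problematic case $\eta_k=\eta_{k,0}$, which is the only case where the backtracking result gives no lower bound.

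Concretely, I would repeat the second case of the proof of Proposition \ref{prop:lipschitz_aware_back}, splitting on whether $\eta_k/\delta \le \eta_{w_k}$.

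In the first subcase, apply the Descent Lemma \ref{lemma:LC1_bound} at the step $\eta=\eta_k/\delta$. Combine it with (b) and with $R_k = f(w_k)+\epsilon_k \ge f(w_k)$, which uses $\epsilon_k\ge 0$. This gives
\[
-\Bigl(\tfrac{\eta_k}{\delta} - \tfrac{\eta_k^2 l(w_k,\eta_{w_k})}{2\delta^2}\Bigr)\|g_k\|^2 \ge -c\,\tfrac{\eta_k}{\delta}\|g_k\|^2 .
\]
If $g_k\neq 0$, divide by $\tfrac{\eta_k}{\delta}\|g_k\|^2>0$ and rearrange to obtain $\eta_k \ge 2\delta(1-c)/l(w_k,\eta_{w_k})$. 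If $g_k=0$, the algorithm has already stopped, or the search direction is not a descent direction, so this case is excluded.

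In the second subcase, $\eta_k/\delta > \eta_{w_k}$. By the WLOG normalization in Assumption \ref{asmt:segment_smoothness}, justified by Lemma \ref{lemma:WLOG assumption}, we have $\eta_{w_k} \ge 2/l(w_k,\eta_{w_k})$. Then $\eta_k > \delta\eta_{w_k} \ge 2\delta/l \ge 2\delta(1-c)/l$.

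The main point requiring care is justifying that the returned $\eta_k$ has a rejected companion $\eta_k/\delta$ in every branch of Algorithm \ref{alg:lip_aware_line_search}. The algorithm does extrapolation by decrementing $l_k$ while the test passes, and stops only after the inner backtracking loop has executed at least once, setting \texttt{is\_equality} to True. The returned step is therefore the first accepted step after a rejection, and its predecessor is exactly $\eta_k/\delta$. Termination of this process in finitely many steps is given by Lemma \ref{lemma:finite_termination}, which uses lower boundedness of $f$ and $R_k\le f(w_0)$.

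A minor edge case is $l(w_k,\eta_{w_k})=0$. In that case the bound should be read as vacuous or infinite. Alternatively, the descent inequality shows that the test can never fail at steps $\eta\le\eta_{w_k}$, which pushes us back into the second subcase.
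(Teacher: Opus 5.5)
Your proposal is correct and follows essentially the paper's route. The paper's proof is just the observation that the equality line search always makes at least one cut, so the $l_k>0$ branch of Proposition \ref{prop:lipschitz_aware_back} always applies: the same split on $\eta_k/\delta \le \eta_{w_k}$, the Descent Lemma in the first subcase, and the normalization $\eta_{w_k}\ge 2/l(w_k,\eta_{w_k})$ in the second; your appeal to Lemma \ref{lemma:finite_termination}(b) for the rejected step $\eta_k/\delta$ only makes explicit what the paper leaves implicit.
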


\begin{proof}
    The proof is analogue to that of Proposition \ref{prop:lipschitz_aware_back}. 

    For the equality line search, we always make one cut. Therefore, we do not need the distinction between $l_k>0$ and $l_k=0$. 

\end{proof}
\begin{corollary}\label{corr:internal_iterations}
    Given $\eta_{k,0}>0, c,\delta\in(0,1), w_k\in\R^n$, the equality line search, Algorithm \ref{alg:lip_aware_line_search}, terminates in at most 
    $$\log_{\frac{1}{\delta}}\left(\frac{\eta_{k,0}l(w_k, \eta_{w_k})}{2\delta(1-c)}\right) \text{ iterations}.$$  
\end{corollary}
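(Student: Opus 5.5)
The plan is to count the internal iterations of Algorithm \ref{alg:lip_aware_line_search} by comparing the final step size $\eta_k=\delta^{l_k}\eta_{k,0}$ with the initial one, using the lower bound of Proposition \ref{lemma:lipschitz_aware}. Every internal iteration either multiplies the trial step by $\delta$ (a backtracking cut) or divides it by $\delta$ (an extrapolation pass). The count is therefore governed by the net exponent $l_k$, and the argument splits according to the sign of $l_k$ at termination.

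\textbf{Case $l_k\ge 0$, net backtracking.} By Lemma \ref{lemma:finite_termination} the algorithm terminates. Proposition \ref{lemma:lipschitz_aware} then gives
\[
\delta^{l_k}\eta_{k,0}=\eta_k\ \ge\ \frac{2\delta(1-c)}{l(w_k,\eta_{w_k})}.
\]
Taking $\log_{1/\delta}$ of both sides, which reverses the inequality because $\delta\in(0,1)$, yields
\[
l_k\ \le\ \log_{\frac1\delta}\!\left(\frac{\eta_{k,0}\,l(w_k,\eta_{w_k})}{2\delta(1-c)}\right).
\]
I would observe that once the inner loop has started cutting, the trial step only decreases until (a) of Lemma \ref{lemma:finite_termination} holds. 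So the number of cuts equals $l_k$ plus at most one initial decrement, and this gives the bound as stated.

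\textbf{Case $l_k<0$, extrapolation.} Here the number of doublings is not controlled by $l(w_k,\eta_{w_k})$ alone; it is controlled by $f^*$, as in the second part of the proof of Lemma \ref{lemma:finite_termination}. In this regime, however, the logarithm in the statement is already negative or small, and the statement is intended for the backtracking count. So I would interpret the corollary as bounding the number of backtracking cuts. I expect this bookkeeping to be the main obstacle: making precise what counts as an iteration given the initial decrement $l_k=1\to 0$, and deciding whether the extrapolation passes are included. If they must be included, I would add the bound coming from lower-boundedness, namely $\eta\le (f(w_0)-f^*)/(c\|g_k\|^2)$ for any accepted trial step.
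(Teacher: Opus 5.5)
Your case $l_k\ge 0$ is exactly the paper's argument. The paper cites Lemma~\ref{lemma:finite_termination} for termination, applies Proposition~\ref{lemma:lipschitz_aware} to $\eta_k=\delta^{l_k}\eta_{k,0}$, takes $\log_{1/\delta}$, and identifies the number of iterations with the exponent $l_k$. So to get the bound exactly as stated, count only the $l_k$ cuts and not the initial decrement $l_k=1\to 0$.

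Your caution about extrapolation is justified. The paper's proof never addresses that regime, and there the bound genuinely fails as a count of passes. For example, if $\eta_{k,0}\,l(w_k,\eta_{w_k})<2\delta(1-c)$, then $\eta_{k,0}$ is accepted and at least one extrapolation occurs, while the logarithm is negative. Reading the corollary as a bound on the net number of backtracking cuts is therefore the right interpretation. Your estimate $\eta\le (f(w_0)-f^*)/(c\|\nabla f(w_k)\|^2)$ is what one would use to bound the extrapolation passes separately.
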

\begin{proof}
    From Lemma \ref{lemma:finite_termination} the line search algorithm terminates. 
    From Proposition \ref{lemma:lipschitz_aware}, we have
    $$ \eta_{k,0}\delta^{l_k}=\eta_k \geq  \frac{2\delta(1-c)}{l(w_k, \eta_{w_k})} \Leftrightarrow l_k \leq \log_{\frac{1}{\delta}}\left(\frac{\eta_{k,0}l(w_k, \eta_{w_k})}{2\delta(1-c)}\right).$$
\end{proof}

Finally, we give a precise statement of how large step-sizes arise in non-monotone line searches. 
\begin{lemma}\label{lemma:b9}
    Let $f \in C^2(\R^n)$ satisfy Assumption \ref{asmt:segment_smoothness}. Then for any $w_k$ there exist $c<1, \delta >0$ such that the equality line search of Algorithm \ref{alg:lip_aware_line_search}, using  a non-monotone reference value $R_k$, returns a step-size $\eta_k$ satisfying Definition \ref{def:large_step_sizes} with $P=0$. 
\end{lemma}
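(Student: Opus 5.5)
We need to show that for $P=0$ both parts of Definition \ref{def:large_step_sizes} hold at iteration $k$: $(i)$ $\eta_k\ge 1/l(w_k,\eta_{w_k})$ and $(ii)$ $\eta_k>2/l(w_k,\eta_{w_k})$. Since $(ii)$ implies $(i)$, it suffices to prove the strict inequality $(ii)$. The plan is to combine Proposition \ref{lemma:lipschitz_aware} with a strict-improvement argument coming from the nonmonotone slack $\epsilon_k>0$ of the \citet{zhang04a} reference value.

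\textbf{Step 1.} Fix $w_k$ with $\grad(w_k)\neq 0$ and write $g_k=\grad(w_k)$, $L=l(w_k,\eta_{w_k})$, $\epsilon_k=R_k-f(w_k)$.

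\textbf{Step 2 (case $\frac{\eta_k}{\delta}>\eta_{w_k}$).} By Lemma \ref{lemma:finite_termination}(b), the step $\eta_k/\delta$ violates \eqref{eq:nonmonotone}. In this case Assumption \ref{asmt:segment_smoothness} gives $\eta_k>\delta\eta_{w_k}\ge 2\delta/L$. Here I would exploit the freedom in the WLOG lemma (Lemma \ref{lemma:WLOG assumption}) to enlarge $\eta_{w_k}$ so that $\eta_{w_k}>2/L$ holds strictly. Then $\delta$ close enough to $1$ yields $\eta_k>2/L$.

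\textbf{Step 3 (case $\frac{\eta_k}{\delta}\le\eta_{w_k}$).} Redo inequality \eqref{ineq: p,q_k} without discarding $\epsilon_k$:
\[
f(w_k)-\Bigl(\tfrac{\eta_k}{\delta}-\tfrac{\eta_k^2L}{2\delta^2}\Bigr)\|g_k\|^2> f(w_k)+\epsilon_k-c\tfrac{\eta_k}{\delta}\|g_k\|^2 .
\]
Set $t=\eta_k/\delta$. This gives
\[
\tfrac{L}{2}t^2-(1-c)t-\frac{\epsilon_k}{\|g_k\|^2}>0,
\]
so $t$ exceeds the positive root
\[
t^+=\frac{(1-c)+\sqrt{(1-c)^2+2L\epsilon_k/\|g_k\|^2}}{L}.
\]
When $\epsilon_k>0$ this root is strictly larger than $2(1-c)/L$. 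Hence $\eta_k>\delta t^+$.

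\textbf{Step 4.} Choose $c$ and $\delta$, depending on $w_k$ since the lemma says ``for any $w_k$ there exist $c,\delta$'', so that $\delta t^+>2/L$. Taking $c\to0$ and $\delta\to1$ makes $\delta t^+\to\bigl(1+\sqrt{1+2L\epsilon_k/\|g_k\|^2}\bigr)/L>2/L$, so some $c<1$ and $\delta<1$ work. This gives $(ii)$ with $p=0$, and $(i)$ follows.

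\textbf{Main obstacle.} The crux is guaranteeing strictness. If $\epsilon_k=0$, the bound collapses to $2\delta(1-c)/L<2/L$, so Step 3 fails. That happens at $k=0$ (where $R_0=f(w_0)$) or whenever $R_k=f(w_k)$. I would handle this using Lemma \ref{lemma:zhang}: $R_k>f(w_k)$ as soon as $f(w_k)<R_{k-1}$, which holds whenever the previous line-search inequality is strict. That is automatic since $c\eta_{k-1}\|g_{k-1}\|^2>0$.

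A secondary subtlety is the dependence on $\eta_{w_k}$ in Step 2. There I rely on monotonicity of $l(w,\cdot)$ to pick $\eta_{w_k}$ strictly beyond $2/L$, which is the same mechanism as in Lemma \ref{lemma:WLOG assumption}.
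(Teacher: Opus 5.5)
Your proposal is correct and follows the paper's proof. Both keep $\epsilon_k$ in the inequality obtained from the violated condition at $\eta_k/\delta$ together with the segment descent lemma, then send $c\to0^+$, $\delta\to1^-$. Solving the quadratic for its positive root $t^+$ is a cleaner version of the paper's contradiction argument, which formally takes a limit of $\eta_k$ even though $\eta_k$ itself depends on $(c,\delta)$.

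You are also more careful than the paper in two places. First, the paper's proof only covers the branch $\eta_k/\delta\le\eta_{w_k}$ and silently skips the other one. You close that branch correctly by making the WLOG choice $\eta_{w_k}>2/l(w_k,\eta_{w_k})$ strict; this is legitimate because $\eta_{w_k}$ is a free parameter, though enlarging it also changes $l(w_k,\eta_{w_k})$. Second, the claim genuinely needs $\epsilon_k>0$. At $k=0$, where $R_0=f(w_0)$, it is false: for $f=\frac{L}{2}\|w\|^2$ every accepted step is at most $2(1-c)/L$. That case therefore has to be excluded by hypothesis, as the paper does, rather than handled via Lemma~\ref{lemma:zhang}.
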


\begin{proof}
    Since $R_k$ is a non-monotone reference value, $R_k = f(w_k) + \varepsilon_k$ with $\varepsilon_k>0$, see also \eqref{eq:nonmonotone} where this concept is introduced.

    Starting from \eqref{ineq: p,q_k}
    yields
    \begin{align}\label{ineq: eta_k large step-size}
        \eta_k \geq \frac{2(1-c)\delta}{l(w_k, \eta_{w_k})} + \varepsilon_k \frac{2 \delta^2}{\eta_kl(w_k, \eta_{w_k})\|g_k\|_2^2}.
    \end{align}
    Suppose for all $c, \delta \in(0,1)$ $\eta_k\leq \frac{2}{l(w_k, \eta_{w_k})}$. Then $\frac{1}{\eta_k l(w_k, \eta_{w_k})}\geq \frac{1}{2}$, so \eqref{ineq: eta_k large step-size} implies
    \begin{align*}
        \eta_k \geq \frac{2(1-c) \delta}{l(w_k, \eta_{w_k})} + \varepsilon_k \frac{\delta^2}{\|g_k\|_2^2}. \quad \text{Taking the limit} \lim_{\delta \to 1^-, c \to 0^+} \eta_k \geq \frac{2}{l(w_k, \eta_{w_k})} + \epsilon_k \|g_k\|_2^{-2} > \frac{2}{l(w_k, \eta_{w_k})}.
    \end{align*}
    Which implies the existence $c>0$, $\delta<1$ such that $\eta_k$ is bounded below by $\frac{2}{l(w_k, \eta_{w_K})}$.
\end{proof}

\subsection{Proof of Proposition \ref{lemma:hoelder_aware}} \label{subsec:Hoelder Aware}

Fist we argue why we can assume wlog that $\eta_w \geq \frac{2}{dM_\nu(w, \eta_w)^{\frac{2}{1+\nu}}}$ in Assumption \ref{asmt:hoelder}. 
\begin{lemma}\label{WLOG:Hoelder}
    Suppose f is H\"older segment smooth. Then for some fixed $d>0$ and all $w\in \R^n$ we can assume $\eta_w \geq \frac{2}{cM_\nu(w, \eta_w)^{\frac{2}{1+\nu}}}$.
\end{lemma}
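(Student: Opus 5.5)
We follow the template of Lemma \ref{lemma:WLOG assumption}. The statement has a typo: the constant written as $c$ should be the fixed $d>0$ from Assumption \ref{asmt:hoelder}, so the goal is to produce, for every $w$, an $\eta_w$ with $\eta_w \geq \frac{2}{d M_\nu(w,\eta_w)^{\frac{2}{1+\nu}}}$ and $M_\nu(w,\eta_w)$ finite.

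Fix $w$ and a radius $\eta_w^0>0$ with $M_\nu(w,\eta_w^0)<\infty$, as given by Hölder segment smoothness. The key structural fact is that $\eta\mapsto M_\nu(w,\eta)$ is nondecreasing, since it is a supremum over the growing interval $(0,\eta]$. Hence $\eta\mapsto \frac{2}{dM_\nu(w,\eta)^{\frac{2}{1+\nu}}}$ is nonincreasing (taking the value $+\infty$ if $M_\nu(w,\eta)=0$).

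If $\eta_w^0 \geq \frac{2}{dM_\nu(w,\eta_w^0)^{\frac{2}{1+\nu}}}$, keep $\eta_w^0$. Otherwise set
\[
\tilde\eta_w := \frac{2}{dM_\nu(w,\eta_w^0)^{\frac{2}{1+\nu}}} > \eta_w^0,
\]
which is finite as long as $M_\nu(w,\eta_w^0)>0$.

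Two things must then be checked for $\tilde\eta_w$.

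\begin{itemize}
\item \emph{The inequality.} We need $\tilde\eta_w \geq \frac{2}{dM_\nu(w,\tilde\eta_w)^{\frac{2}{1+\nu}}}$. This follows from monotonicity: $M_\nu(w,\tilde\eta_w)\geq M_\nu(w,\eta_w^0)$ because $\tilde\eta_w>\eta_w^0$, and the right-hand side is decreasing in $M_\nu$.
\item \emph{Finiteness of $M_\nu(w,\tilde\eta_w)$.} This is the main obstacle, exactly as in Lemma \ref{lemma:WLOG assumption}. Enlarging the segment could in principle make $M_\nu$ unbounded.
\end{itemize}

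I would handle finiteness by splitting the supremum defining $M_\nu(w,\tilde\eta_w)$ into two ranges.

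\begin{itemize}
\item On $(0,\eta_w^0]$ the supremum is bounded by assumption.
\item On $[\eta_w^0,\tilde\eta_w]$, the denominator $\|w-w_{\hat\eta}\|^\nu=(\hat\eta\|\nabla f(w)\|)^\nu$ is bounded below by $(\eta_w^0\|\nabla f(w)\|)^\nu>0$ whenever $\nabla f(w)\neq0$. The numerator $\|\nabla f(w)-\nabla f(w_{\hat\eta})\|$ is continuous in $\hat\eta$ since $f\in C^1$, so it is bounded on this compact interval. The quotient is therefore bounded.
\end{itemize}

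This compactness argument is the step that deserves care and is presumably the implicit justification behind the earlier lemma as well.

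Two degenerate cases remain.

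\begin{itemize}
\item If $\nabla f(w)=0$, the segment collapses to the point $w$. By the $0/0=0$ convention $M_\nu\equiv0$, so the bound holds trivially with the value $+\infty$ convention, or the assumption is simply vacuous at stationary points.
\item If $M_\nu(w,\eta_w^0)=0$ with $\nabla f(w)\neq0$, then $\nabla f$ is constant on the initial segment. In that case I would either take $\tilde\eta_w$ to be any larger radius at which $M_\nu$ becomes positive, or note that the inequality is interpreted with the $+\infty$ convention.
\end{itemize}

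Since $d>0$ is arbitrary and fixed throughout, the argument is uniform in $d$, which gives the lemma.
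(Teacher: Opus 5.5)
Your argument is correct and matches the paper's: keep $\eta_w$ if the inequality already holds, otherwise replace it by $\frac{2}{dM_\nu(w,\eta_w)^{\frac{2}{1+\nu}}}$ and use that $M_\nu(w,\cdot)$ is nondecreasing (the paper's proof names the new radius as $M_\nu(w,\eta_w)^{\frac{2}{1+\nu}}$, a typo you implicitly fix along with the $c$/$d$ slip in the statement). Keep your compactness check that $M_\nu(w,\tilde\eta_w)$ stays finite, since the paper omits this step, but fix your degenerate-case remark: when $M_\nu=0$ the required bound reads $\eta_w\geq+\infty$, which \emph{fails} rather than ``holds trivially,'' so those points must be excluded. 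This exclusion is harmless, since the line search never runs at stationary points, and for $f$ bounded below $\nabla f$ cannot equal $\nabla f(w)\neq 0$ along the whole ray $w-t\nabla f(w)$, so your ``larger radius'' fix always applies.
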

\begin{proof}
    Define $h:\R_+ \to \R,$ as $h(\eta)=\eta - \frac{2}{dM_\nu(w,\eta)^{\frac{2}{1+\nu}}}$.\\
    In the first case $h(\eta_w)\geq0$ we are done. \\
    If, however, $h(\eta_w)<0$, then as $M_\nu(w, \eta)$ is monotonically increasing in $\eta$, we have $M_\nu\left(w, \frac{2}{dM_\nu(w, \eta_w)^{\frac{2}{1+\nu}}}\right)\geq M_\nu(w, \eta_w)$ and therefore $h\left(M_\nu(w, \eta_w)^{\frac{2}{1+\nu}}\right)\geq 0.$ So we can choose $M_\nu(w, \eta_w)^{\frac{2}{1+\nu}}$ as $\eta_w$ instead.
\end{proof}
Now we give a Descent Lemma for the H\"older segment smoothness. 
\begin{lemma}[H\"older Descent Lemma on the segment]\label{lemma:hoelder}
Let $f$ satisfy Assumption \ref{asmt:hoelder}. For all  $w \in \R^n$ and $\eta \in [0,\eta_w]$ 
    \begin{align*}
    f(w_{\eta}) - f(w) &\leq \frac{M_\nu(w, \eta_w)}{\nu+1} \norm{w-w_{\eta}}^{\nu+1} + \grad(w)^T(w_{\eta}- w) \\
    &= \frac{M_\nu(w, \eta_w)}{\nu+1}  \eta^{\nu+1} \norm{\grad(w)}^{\nu+1} - \eta \norm{\grad(w)}^2
    \end{align*}
\end{lemma}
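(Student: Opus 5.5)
The plan is to mirror the proof of Lemma \ref{lemma:LC1_bound}, replacing the Lipschitz estimate on the segment by the Hölder one. Fix $w$ and $\eta\in(0,\eta_w]$; the case $\eta=0$ is trivial. Write $w_\eta = w-\eta\grad(w)$. By the fundamental theorem of calculus along the segment $t\mapsto (1-t)w + t w_\eta = w_{t\eta}$, $t\in[0,1]$, we have
\begin{equation*}
f(w_\eta) - f(w) - \grad(w)^T(w_\eta - w) = \int_0^1 \big(\grad(w_{t\eta}) - \grad(w)\big)^T (w_\eta - w)\,dt .
\end{equation*}
The key observation is that every point of the segment is itself of the form $w_{\hat\eta}$ with $\hat\eta = t\eta\in(0,\eta_w]$. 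Definition \ref{def:hoelder_smooth} therefore applies directly. Since $M_\nu(w,\cdot)$ is a supremum over $(0,\eta]$, it is nondecreasing in its second argument, so
\begin{equation*}
\|\grad(w_{t\eta})-\grad(w)\| \le M_\nu(w,\eta_w)\,\|w - w_{t\eta}\|^\nu = M_\nu(w,\eta_w)\, t^\nu \|w-w_\eta\|^\nu .
\end{equation*}
When $\grad(w)=0$ the $0/0$ convention makes everything vanish and the claim holds trivially.

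Next, bound the integrand by Cauchy--Schwarz to get $M_\nu(w,\eta_w)\, t^\nu \|w-w_\eta\|^{\nu+1}$. Integrating and using $\int_0^1 t^\nu\,dt = \frac{1}{\nu+1}$, which is valid for all $\nu\in[0,1]$, gives the first inequality. The second line is just the substitution $w_\eta - w = -\eta\grad(w)$. This yields $\|w-w_\eta\|^{\nu+1} = \eta^{\nu+1}\|\grad(w)\|^{\nu+1}$ and $\grad(w)^T(w_\eta-w) = -\eta\|\grad(w)\|^2$.

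There is no real obstacle. The only point needing care is the monotonicity of $M_\nu(w,\cdot)$ in $\eta$, which lets us use the single constant $M_\nu(w,\eta_w)$ for all intermediate points. A second, milder point is noting that intermediate segment points coincide with gradient-step points $w_{\hat\eta}$, so that the segment-restricted (rather than global) Hölder condition suffices.
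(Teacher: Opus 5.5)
Your proposal is correct and matches the paper's proof step for step: the fundamental theorem of calculus along $t\mapsto (1-t)w+tw_\eta$, subtracting the linear term, Cauchy--Schwarz, the segment H\"older bound $\|\nabla f(w_{t\eta})-\nabla f(w)\|\le M_\nu(w,\eta_w)\,t^\nu\|w-w_\eta\|^\nu$, and $\int_0^1 t^\nu\,dt=\frac{1}{\nu+1}$. You also state two things the paper uses only implicitly: that points on the segment are themselves of the form $w_{t\eta}$, and that $M_\nu(w,\cdot)$ is nondecreasing.
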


\begin{proof}
Works in the same manner as the proof of Lemma \ref{lemma:LC1_bound}, by replacing the Lipschitz smoothness with the H\"older smoothness
	\begin{align*}
		f(w_{\eta}) - f(w) &= \int_{0}^1 \grad((1-t)w + tw_{\eta})^T\left(w_{\eta}-w\right) dt \\
		&= \int_{0}^1  \left( \grad((1-t)w + tw_{\eta})- \grad(w)\right)^T\left(w_{\eta}-w \right)dt + \grad(w)^T\left(w_{\eta}-w\right) \\
		&\leq \int_{0}^1 \norm{ \grad((1-t)w + tw_{\eta})- \grad(w)} \norm{w - w_{\eta}} dt - \eta \norm{\grad(w)}^2 \\
		&\leq \int_{0}^1 M_{\nu}(w, \eta_w) \|t(w_{\eta}-w)\|^\nu  \norm{w - w_{\eta}} dt - \eta \norm{\grad(w)}^2 \\
		&= M_{\nu}(w, \eta_w)\norm{w - w_{\eta}}^{\nu+1} \int_{0}^1 t^\nu dt  - \eta \norm{\grad(w)}^2 \\
		&= \frac{M_{\nu}(w, \eta_w)}{\nu+1} \norm{w - w_{\eta}}^{\nu+1} - \eta \norm{\grad(w)}^2,
	\end{align*}
    where we have used that the definition of the local Hölder smoothness yields
    \begin{equation*}
        \|\nabla f(w) - \nabla f(w_{k,\eta})\| \leq M_\nu(w, \eta_w) \|w-w_{k,\eta}\|^\nu \quad \forall \eta \in [0, \eta_w].
    \end{equation*}
\end{proof}
Next we translate this Descent Lemma into a Descent Lemma with a squared gradient norm to compare against the conditions of the line search algorithm.
\begin{lemma} \label{lemma:hoelder_pfct}
Let $f$ satisfy Assumption \ref{asmt:hoelder}. We define $\alpha(M_{\nu}(w, \eta_w),\varepsilon):=M_{\nu}(w, \eta_w)^{\frac{2}{1+\nu}} \cdot \left(\frac{1-\nu}{1+\nu}\cdot\frac{1}{\varepsilon}\right)^{\frac{1-\nu}{1+\nu}}$. Then for all $w\in \R^n$ and $0 \leq \nu \leq 1$:
\begin{equation}
	f(w_{\eta}) \leq f(w) + \frac{\alpha(M_{\nu}(w, \eta_w), \varepsilon)}{2} \eta^2 \norm{\grad(w)}^2 - \eta \norm{\grad(w)}^2 + \frac{\varepsilon}{2} \quad \forall \eta \in [0, \eta_w].
\end{equation}
\end{lemma}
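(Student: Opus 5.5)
Starting from the H\"older Descent Lemma \ref{lemma:hoelder}, we have for every $\eta\in[0,\eta_w]$
\begin{equation*}
f(w_\eta)\leq f(w) - \eta\norm{\grad(w)}^2 + \frac{M_\nu(w,\eta_w)}{1+\nu}\,\bigl(\eta\norm{\grad(w)}\bigr)^{1+\nu}.
\end{equation*}
The plan is to bound the last term by a quadratic term plus a constant, following Nesterov's universal gradient method. Set $r:=\eta\norm{\grad(w)}\geq0$ and $M:=M_\nu(w,\eta_w)$. It then suffices to show
\begin{equation*}
\frac{M}{1+\nu}\, r^{1+\nu}\leq \frac{\alpha(M,\varepsilon)}{2}\, r^2+\frac{\varepsilon}{2}\qquad \forall r\geq 0 .
\end{equation*}
Substituting back $r^2=\eta^2\norm{\grad(w)}^2$ gives the claim.

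For this scalar inequality, the natural tool is the weighted Young inequality $ab\leq \frac{a^p}{p}+\frac{b^q}{q}$ with conjugate exponents $p=\frac{2}{1+\nu}$ and $q=\frac{2}{1-\nu}$. Write $\frac{M}{1+\nu}r^{1+\nu}=(\tau r^{1+\nu})\cdot\bigl(\frac{M}{(1+\nu)\tau}\bigr)$ for a free parameter $\tau>0$. Young's inequality then yields
\begin{equation*}
\frac{1+\nu}{2}\tau^{\frac{2}{1+\nu}} r^2+\frac{1-\nu}{2}\Bigl(\frac{M}{(1+\nu)\tau}\Bigr)^{\frac{2}{1-\nu}}.
\end{equation*}
Choose $\tau$ so that the constant term equals $\varepsilon/2$, i.e.\ $\frac{M}{(1+\nu)\tau}=\bigl(\frac{\varepsilon}{1-\nu}\bigr)^{\frac{1-\nu}{2}}$. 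Then compute the resulting coefficient of $r^2$ and check that it equals $\alpha(M,\varepsilon)/2$, possibly up to a factor $\leq1$. In that case the inequality holds a fortiori, since a larger quadratic coefficient only weakens the bound.

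Alternatively, and as a cross-check of the constants, minimize $\phi(r)=\frac{\alpha}{2}r^2+\frac{\varepsilon}{2}-\frac{M}{1+\nu}r^{1+\nu}$ directly. The stationary point satisfies $\alpha r^{1-\nu}=M$. Plugging this in shows $\min\phi\geq0$ exactly when $\alpha\geq M^{\frac{2}{1+\nu}}\bigl(\frac{1-\nu}{1+\nu}\cdot\frac{1}{\varepsilon}\bigr)^{\frac{1-\nu}{1+\nu}}$, which matches the definition of $\alpha$.

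The edge cases need separate care. For $\nu=1$ we use the convention $0^0=1$, so $\alpha=M$ and the claim reduces to Lemma \ref{lemma:LC1_bound} with slack $\varepsilon/2$. For $\nu=0$ the inequality $Mr\leq\frac{M^2}{2\varepsilon}r^2+\frac{\varepsilon}{2}$ is just AM--GM. The main obstacle I expect is tracking the exponents so that the constant from Young's inequality comes out exactly as $\bigl(\frac{1-\nu}{1+\nu}\bigr)^{\frac{1-\nu}{1+\nu}}$ rather than a slightly different factor. The direct minimization above resolves this cleanly, so I would present that computation as the main argument.
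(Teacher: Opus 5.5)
Your proof is correct, and its structure is the paper's: apply the H\"older descent lemma, then prove the scalar bound $\frac{M}{1+\nu}r^{1+\nu}\le\frac{\alpha}{2}r^2+\frac{\varepsilon}{2}$ with $r=\eta\norm{\nabla f(w)}$.

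The paper proves the scalar bound by Young's inequality with $p=\frac{2}{1+\nu}$, $q=\frac{2}{1-\nu}$ and weight $b=\bigl(\frac{1+\nu}{1-\nu}\frac{\varepsilon}{M}\bigr)^{\frac{1-\nu}{1+\nu}}$. This weight is chosen so that $1/b=\alpha/M$ and $b^{q-1}=\frac{1+\nu}{1-\nu}\frac{\varepsilon}{M}$. That is your first sketch with the weight parametrized differently. Your choice of $\tau$ gives the quadratic coefficient $\frac{1+\nu}{2}\tau^{\frac{2}{1+\nu}}=\frac{\alpha}{2}$ exactly, because the powers of $1+\nu$ cancel, so the hedge ``up to a factor $\le 1$'' is unnecessary.

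Your preferred route, minimizing $\phi$ directly, is a different but equivalent verification. It gives slightly more: $\min_{r\ge 0}\phi=0$, so $\alpha$ is the smallest admissible coefficient for the given $\varepsilon$, which explains the factor $\bigl(\frac{1-\nu}{1+\nu}\bigr)^{\frac{1-\nu}{1+\nu}}$. The cost is that you must show the stationary point is the global minimizer on $[0,\infty)$. For $\nu\in(0,1)$, $\phi$ is concave near $0$ (since $\phi''(r)\to-\infty$ as $r\to 0^+$), so convexity is not available. Argue instead from the sign of $\phi'(r)=r^{\nu}(\alpha r^{1-\nu}-M)$, which is negative before the stationary point and positive after it. Young's inequality avoids this step.

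Your separate treatment of $\nu=1$ is a small improvement over the paper. The same goes for the trivial case $M=0$, where $\alpha=0$ and the bound is immediate. The paper's Young argument needs $q<\infty$ and divides by $M$, so it only covers $\nu\in[0,1)$ and $M>0$ without saying so.
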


\begin{proof}
    We want to point out that the proof strategy is heavily follows \cite{nesterov15a}. 
	This is done by Young's inequality, i.e. for all $a,b\geq 0$, $1<p,q<\infty$ $\frac{1}{p} + \frac{1}{q}=1$
	\begin{equation}
		\frac{a^p}{p}+ \frac{b^q}{q}\geq a\cdot b. 
	\end{equation}
	We now choose $p = \frac{2}{1+\nu}, q=\frac{2}{1-\nu}, a=\norm{w-w_{\eta}}^{\nu+1},  b= \left(\frac{1+\nu}{1-\nu} \frac{\varepsilon}{M_{\nu}(w, \eta_w)}\right)^{\frac{1-\nu}{1+\nu}} $. Then for all $t,s\geq 0$ 
	\begin{align}
	\norm{w-w_{\eta}}^{\nu+1} &\leq (\nu+1)\frac{\norm{w-w_{\eta}}^{2}}{2b} + (1-\nu)\frac{b^{q-1}}{2} \\
	& = (\nu+1)\frac{\norm{w-w_{\eta}}^{2}}{2} \frac{\alpha(M_\nu(w, \eta_w), \varepsilon)}{M_{\nu}(w, \eta_w)} + \frac{1+\nu}{2} \frac{\varepsilon}{M_{\nu}(w, \eta_w)}.
	\end{align}
	Substituting this into the result of Lemma \ref{lemma:hoelder} we get the claimed inequality. 
	\end{proof}
    
Now we give a version of Proposition \ref{lemma:hoelder_aware} for Algorithm \ref{alg:backtrack}.

\begin{proposition}\label{lemma:hoelder_backtrack}
Let $f$ satisfy Assumption \ref{asmt:hoelder} with $d=\min\{1,e^{-\frac{f(x_0)-f^*}{2e}} \}$ in $w_k$ and let $\eta_{k}$ be the result of a backtracking procedure employed to satisfy (\ref{eq:nonmonotone}) with $\epsilon_k>0$, then
    \begin{equation*}
        \begin{cases}
            \eta_k = \eta_{k,0} \qquad &\text{if } l_k=0,\\
            \eta_k \geq \frac{2\delta(1-c)}{\alpha(M_\nu(w_k, \eta_{k_w}), 2 \epsilon_k)}  \qquad &\text{if } l_k>0,
        \end{cases}
    \end{equation*}
with $$\alpha(M_{\nu}(w_k, \eta_{w_k}),\epsilon_k)\!:=\!M_{\nu}(w_k, \eta_{w_k})^{\frac{2}{1+\nu}} \cdot \left(\frac{1-\nu}{1+\nu}\!\cdot\!\frac{1}{\epsilon_k}\right)^{\frac{1-\nu}{1+\nu}}.$$
\end{proposition}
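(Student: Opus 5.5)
The plan mirrors the proof of Proposition \ref{prop:lipschitz_aware_back}, with Lemma \ref{lemma:hoelder_pfct} used in place of the Lipschitz descent lemma.

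\textbf{Trivial case.} If $l_k=0$, there is nothing to prove, since $\eta_k=\eta_{k,0}$ by construction.

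\textbf{Main case, $l_k>0$.} Here the trial step $\eta_k/\delta$ was rejected, so by the stopping rule \eqref{eq:nonmonotone}
\begin{equation*}
f\!\left(w_k-\tfrac{\eta_k}{\delta}g_k\right) > f(w_k)+\epsilon_k-c\tfrac{\eta_k}{\delta}\|g_k\|^2, \qquad g_k:=\grad(w_k).
\end{equation*}
I would split according to whether $\eta_k/\delta\le \eta_{w_k}$ or $\eta_k/\delta>\eta_{w_k}$.

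\emph{Sub-case $\eta_k/\delta\le\eta_{w_k}$.} Apply Lemma \ref{lemma:hoelder_pfct} at $w_k$ with $\eta=\eta_k/\delta$ and the choice $\varepsilon=2\epsilon_k$. Its additive error $\varepsilon/2=\epsilon_k$ then cancels exactly against the slack $\epsilon_k$ in the rejected inequality. This cancellation is why the statement involves $\alpha(\cdot,2\epsilon_k)$. Writing $\alpha:=\alpha(M_\nu(w_k,\eta_{w_k}),2\epsilon_k)$ and chaining the two inequalities gives
\begin{equation*}
f(w_k)+\tfrac{\alpha}{2}\tfrac{\eta_k^2}{\delta^2}\|g_k\|^2-\tfrac{\eta_k}{\delta}\|g_k\|^2+\epsilon_k > f(w_k)+\epsilon_k-c\tfrac{\eta_k}{\delta}\|g_k\|^2 .
\end{equation*}
Cancelling $f(w_k)+\epsilon_k$ and dividing by $\tfrac{\eta_k}{\delta}\|g_k\|^2>0$ (the gradient is nonzero, otherwise the algorithm has stopped) yields $\tfrac{\alpha\eta_k}{2\delta}>1-c$, i.e.\ the claimed bound $\eta_k\ge 2\delta(1-c)/\alpha$.

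\emph{Sub-case $\eta_k/\delta>\eta_{w_k}$.} Here I would use the WLOG bound $\eta_{w_k}\ge 2/\bigl(dM_\nu^{2/(1+\nu)}\bigr)$ from Assumption \ref{asmt:hoelder}, which gives $\eta_k>2\delta/\bigl(dM_\nu^{2/(1+\nu)}\bigr)$. This reduces the claim to showing
\begin{equation*}
dM_\nu^{2/(1+\nu)}\le \alpha = M_\nu^{2/(1+\nu)}\Bigl(\tfrac{1-\nu}{1+\nu}\cdot\tfrac{1}{2\epsilon_k}\Bigr)^{\frac{1-\nu}{1+\nu}},
\end{equation*}
that is, $d\le \bigl(\tfrac{1-\nu}{(1+\nu)2\epsilon_k}\bigr)^{\frac{1-\nu}{1+\nu}}$. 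The factor $(1-c)$ then only helps.

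\textbf{Main obstacle: justifying the choice of $d$.} Set $t:=\tfrac{1-\nu}{1+\nu}\in[0,1]$. I would bound $\epsilon_k=R_k-f(w_k)\le f(w_0)-f^*$ using Lemma \ref{lemma:zhang} ($R_k\le A_k\le f(w_0)$, by monotonicity of the reference sequence) together with $f(w_k)\ge f^*$. The target is then to show $\bigl(\tfrac{t}{2\epsilon_k}\bigr)^{t}\ge \min\{1,e^{-(f(w_0)-f^*)/(2e)}\}$ uniformly in $t\in[0,1]$, i.e.\ to minimize $t\mapsto t\log\bigl(t/(2\epsilon_k)\bigr)$ over $t\in[0,1]$. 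Its minimum is at $t=2\epsilon_k/e$, with value $-2\epsilon_k/e$. When that point lies outside $[0,1]$ the function is nonnegative on the interval and the bound $1$ applies.

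This gives a lower bound of the form $e^{-2\epsilon_k/e}\ge e^{-2(f(w_0)-f^*)/e}$. That matches the main-text constant $d=\min\{1,e^{-2(f(x_0)-f^*)/e}\}$, but not quite the $d$ written in Proposition \ref{lemma:hoelder_backtrack}. I would therefore reconcile the constants, most likely by using the main-text $d$, and check the edge case $\nu=1$ ($t=0$, where the factor equals $1$). This calculus step on $d$ is where I expect the argument to need the most care.
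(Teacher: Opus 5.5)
This is essentially the paper's proof: the same split into $\eta_k/\delta\le\eta_{w_k}$ versus $\eta_k/\delta>\eta_{w_k}$, the same choice $\varepsilon=2\epsilon_k$ in Lemma \ref{lemma:hoelder_pfct} so the additive error cancels the nonmonotone slack, and the same reduction of the second sub-case to $d\le (t/(2\epsilon_k))^{t}$ with $t=\frac{1-\nu}{1+\nu}$, handled by minimizing the convex $g(t)=t\ln(t/(2\epsilon_k))$. Your reconciliation of the constant is also right. The paper's own argument concludes with the main-text value $d=\min\{1,e^{-2(f(x_0)-f^*)/e}\}$, so the exponent $-\frac{f(x_0)-f^*}{2e}$ in this statement is not what the proof delivers.

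One sub-claim in your calculus step is false and needs repair. Since $2\epsilon_k/e>0$, the critical point can only leave $[0,1]$ on the right, i.e.\ when $2\epsilon_k>e$. Then $t/(2\epsilon_k)<1$ on $(0,1]$, so $g$ is \emph{negative} and decreasing there, not nonnegative. Its minimum is $g(1)=-\ln(2\epsilon_k)$, so the bound in that case is $1/(2\epsilon_k)<1$, and showing $d\le 1$ would not suffice.

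The fix is one line. The value $-2\epsilon_k/e$ is the global minimum of the convex $g$ on $(0,\infty)$, hence a lower bound on $[0,1]$ wherever the critical point lies. Equivalently, as the paper does, use $1/x\ge e^{-x/e}$ for $x>0$ to get $1/(2\epsilon_k)\ge e^{-2\epsilon_k/e}$. Either way, $(t/(2\epsilon_k))^{t}\ge e^{-2\epsilon_k/e}\ge e^{-2(f(x_0)-f^*)/e}\ge d$ for all $t\in[0,1]$, and the argument closes.
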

\begin{proof}
        This works similarly to the proof of Proposition \ref{lemma:lipschitz_aware}.
        If $l_k=0$ the result is immediate.

        So, suppose we have at least one cut $(l_k>0)$, then by the stopping criterion \eqref{eq:nonmonotone} 
        \begin{align}\label{ineq:holeder_q}
            f\left(w_k - \frac{\eta_k}{\delta} g_k\right) > f(w_k) - c \frac{\eta_k}{\delta}\|g_k\|^2 +\epsilon_k.
        \end{align}
        On the other hand Lemma \ref{lemma:hoelder_pfct} yields for all $\eta \leq \eta_{w_k}$ that
        \begin{align}\label{ineq:hoelder_p}
            f\left(w_k- \eta g_k\right) \leq f(w_k) - \eta\left(1-\frac{\alpha(m_nu(w_k, \eta_{k,0}, 2 \epsilon_k)}{2}\eta\right)\|g_k\|^2 + \epsilon_k.
        \end{align}
        
        Now we need to distinguish between the cases where $\frac{\eta_k}{\delta}\leq \eta_{w_k}$ is satisfied or not. In the first case combining \eqref{ineq:holeder_q} with \eqref{ineq:hoelder_p} yields
        \begin{align*}
     f(w_k) + \epsilon_k - c \frac{\eta_k}{\delta} \|g_k\|^2 \leq f(w_k) - \frac{\eta_k}{\delta}\left(1- \frac{\alpha(M_\nu(w_k, \eta_{w_k}), 2\epsilon_k)}{2} \frac{\eta_k}{\delta}\right)\|g_k\|^2 + \epsilon_k,
     \end{align*}
     which results in
        \begin{align*}
             \eta_k \geq \frac{2\delta(1-c)}{\alpha(M_\nu(w_k, \eta_{k,0}), 2 \epsilon_k)}.
        \end{align*}
        If $\frac{\eta_k}{\delta}> \eta_{w_k}$, then by assumption $\frac{\eta_k}{\delta}\geq \frac{2}{d M_\nu(w_k, \eta_{w_k})^{\frac{2}{1+\nu}}}$. \\
        \textbf{Claim:} $d M_\nu(w_k, \eta_{w_k})^{\frac{2}{1+\nu}}\leq \alpha(M_\nu(w_k, \eta_{k,0}), 2 \varepsilon_k)$. This will directly yield the hypothesis. \\
        Proof of the claim. From the definition $\alpha(M_{\nu}(w, \eta),2\varepsilon_k)=M_{\nu}(w, \eta)^{\frac{2}{1+\nu}} \cdot \left(\frac{1-\nu}{1+\nu}\cdot\frac{1}{2\varepsilon_k}\right)^{\frac{1-\nu}{1+\nu}}$, $\nu \in [0,1]$.
        So we just need to show $d\leq \left(\frac{1-\nu}{1+\nu}\cdot\frac{1}{2\varepsilon_k}\right)^{\frac{1-\nu}{1+\nu}}$. 
        To this end, define the function $g:[0,1]\to \R,$ $g(t)=\ln\left(\frac{t}{2\epsilon}\right)^t= t \ln\left(\frac{t}{2\varepsilon_k}\right)$ and see $ \left(\frac{1-\nu}{1+\nu}\cdot\frac{1}{2\varepsilon_k}\right)^{\frac{1-\nu}{1+\nu}} = \exp\left(g\left(\frac{1-\nu}{1+\nu}\right)\right)$.
     Note that since  $g$ is not properly defined on the left end of the interval $[0,1]$, we use the continuous extension $g(0)=\lim_{t\to 0^+}g(t)$ from here on. 
    
        Notice that the correspondence $t=\frac{1-\nu}{1+\nu}$ is one-to-one and hence minimizing $g$ w.r.t. $t$ is equivalent to minimizing the  expression $\left(\frac{1-\nu}{1+\nu}\cdot\frac{1}{2\varepsilon_k}\right)^{\frac{1-\nu}{1+\nu}} $ w.r.t. $\nu$. 
        We calculate the derivatives
        \begin{align*}
            g'(t)= \ln\left(\frac{t}{2\varepsilon_k}\right)+1, \quad g''(t)=\frac{1}{t},
        \end{align*}
        and notice that $g$ is convex, hence it will obtain its minimum either on the boundaries or at a critical point. 
        \begin{align*}
            g'(t)=0 \Leftrightarrow t = \frac{2\varepsilon_k}{e}.
        \end{align*}
        This leaves us with three possible candidates for the minimum of $\exp(g(t))$
        \begin{align*}
        \exp(g(0))=1, \quad \exp(g(1))=\frac{1}{2\varepsilon_k}, \quad \exp\left(g\left(\frac{2\varepsilon_k}{e}\right)\right)= e^{-\frac{2\varepsilon_k}{e}}.
        \end{align*}
        Now it is easy to see that $\frac{1}{x}\geq e^{-\frac{x}{e}}$ and $\varepsilon_k\leq f(x_0)-f^*$, hence $g(t)\geq \min\{1, e^{-2\frac{f(x_0)-f^*}{e}}\}=d$, which concludes the claim by observing $c(1-\delta)<1$
 \end{proof}
 Now we are ready to prove Proposition \ref{lemma:hoelder_aware}, which we restate here for convenience. 
 
\begin{proposition}
    Let $f$ be H\"older segment smooth with $d=\min\{1, e^{-2\frac{f(x_0)-f^*}{e}}\}$ and let $\eta_{k}$ be the result of an equality line search employed to satisfy (\ref{eq:nonmonotone}) with $\epsilon_k>0$, then
    \begin{equation}\label{eq:eta_lb_hoelder}
            \eta_k \geq \frac{2\delta(1-c)}{\alpha(M_{\nu}(w_k, \eta_{w_k}),2 \epsilon_k)}.
    \end{equation}
\end{proposition}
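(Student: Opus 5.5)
The plan is to reduce the statement to Proposition \ref{lemma:hoelder_backtrack}, exactly as the Lipschitz-awareness result was reduced to Proposition \ref{prop:lipschitz_aware_back}. The only place where the backtracking analysis distinguishes cases is $l_k=0$ versus $l_k>0$. In the second case the argument only uses that the trial step $\frac{\eta_k}{\delta}$ violated the acceptance rule \eqref{eq:nonmonotone}.

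First I invoke Lemma \ref{lemma:finite_termination}. It gives that Algorithm \ref{alg:lip_aware_line_search} terminates with $\eta_k$ satisfying both (a) and (b). Property (b) is precisely the statement that
\[
f\left(w_k - \tfrac{\eta_k}{\delta} g_k\right) > f(w_k) + \epsilon_k - c\tfrac{\eta_k}{\delta}\|g_k\|^2 .
\]
So the equality line search always supplies the ``rejected neighbour'' $\frac{\eta_k}{\delta}$, whether it arrived at $\eta_k$ by extrapolation or by backtracking. This removes the need for the $l_k=0$ branch.

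Next I split on whether $\frac{\eta_k}{\delta}\le \eta_{w_k}$.

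In that case, I apply Lemma \ref{lemma:hoelder_pfct} with $\varepsilon=2\epsilon_k$ at $\eta=\frac{\eta_k}{\delta}$. This yields the upper bound
\[
f(w_k)-\tfrac{\eta_k}{\delta}\Bigl(1-\tfrac{\alpha}{2}\tfrac{\eta_k}{\delta}\Bigr)\|g_k\|^2+\epsilon_k .
\]
Combining it with (b), the $\epsilon_k$ terms cancel. Dividing by $\frac{\eta_k}{\delta}\|g_k\|^2>0$ then gives $\eta_k\ge 2\delta(1-c)/\alpha(M_\nu(w_k,\eta_{w_k}),2\epsilon_k)$.

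In the other case, $\frac{\eta_k}{\delta}>\eta_{w_k}$. By Assumption \ref{asmt:hoelder},
\[
\tfrac{\eta_k}{\delta}\ge \frac{2}{d\,M_\nu(w_k,\eta_{w_k})^{2/(1+\nu)}} .
\]
It then suffices to use the claim $d\,M_\nu^{2/(1+\nu)}\le\alpha(M_\nu,2\epsilon_k)$, already established inside the proof of Proposition \ref{lemma:hoelder_backtrack}, together with $1-c<1$.

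The main obstacle is that claim: it requires $d\le \bigl(\frac{1-\nu}{1+\nu}\cdot\frac{1}{2\epsilon_k}\bigr)^{(1-\nu)/(1+\nu)}$ uniformly in $\nu$. The argument I intend to reuse minimizes the convex function $t\ln(t/(2\epsilon_k))$ over $t\in[0,1]$ and bounds the result using $\epsilon_k\le f(x_0)-f^*$. That bound holds because, by Lemma \ref{lemma:zhang}, $R_k\le A_k\le f(w_0)$ and $f(w_k)\ge f^*$.

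I also need to check that the chosen $d$ matches this minimum. The statement's $d=\min\{1,e^{-2(f(x_0)-f^*)/e}\}$ is the right one, since the interior critical value is $e^{-2\epsilon_k/e}$. The candidate values are $1$, $\frac{1}{2\epsilon_k}$ and $e^{-2\epsilon_k/e}$. The endpoint value $\frac{1}{2\epsilon_k}$ dominates $e^{-2\epsilon_k/e}$ whenever $2\epsilon_k<1$; otherwise I would need to verify directly that $\frac{1}{x}\ge e^{-x/e}$ with $x=2\epsilon_k$, which holds since $\ln x\le x/e$ for all $x>0$.

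With both cases handled, the bound \eqref{eq:eta_lb_hoelder} follows. Termination, and positivity of $\eta_k$, come from Lemma \ref{lemma:finite_termination}.
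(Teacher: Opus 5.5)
Your proposal is correct and follows the paper's own proof. The paper also reduces to Proposition \ref{lemma:hoelder_backtrack}, observing that the equality line search always produces a rejected trial step $\eta_k/\delta$ (so the $l_k=0$ branch never arises), and then uses the same split on whether $\eta_k/\delta\le\eta_{w_k}$ together with the convexity argument for $t\ln\bigl(t/(2\epsilon_k)\bigr)$. You also fill in two steps the paper only asserts: you derive $\epsilon_k\le f(x_0)-f^*$ from Lemma \ref{lemma:zhang}, and you check that the statement's $d=\min\{1,e^{-2(f(x_0)-f^*)/e}\}$ is exactly what the claim requires; the $d$ printed in Proposition \ref{lemma:hoelder_backtrack} differs and would not suffice for this argument.
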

\begin{proof}
    This follows directly from Proposition \ref{lemma:hoelder_backtrack}. As Algorithm \ref{alg:lip_aware_line_search} will always yield $l_k>0$.
\end{proof}

\noindent Propositions \ref{prop:lipschitz_aware_back} and \ref{lemma:hoelder_backtrack} clarify the importance of the initial step size $\eta_{k,0}$, a value that has often played a secondary role in the optimization literature \citep{nocedal06a,vaswani19a,fan23}. For line search methods such as Algorithm \ref{alg:backtrack}, this parameter directly controls the final step size, i.e., $\eta_k = \eta_{k,0} \delta^{l_k}$. In particular, when no backtracks are performed ($l_k=0$), the final and the initial step sizes coincide. Beyond that, if $\eta_{k,0}$ is too small (e.g., $\eta_{k,0}\ll \frac{1}{l(w_k, \eta_{w_k}}$), $\eta_{k,0}$ will not be reduced by the line search procedure and GD will ultimately behave exactly like a GD with a small constant step. In the context of edge of stability, this means that the iterates may never hit the nonmonotone phase \citep{cohen20a}.

To achieve large enough step sizes and ensure that $\eta_k$ is lower bounded by the reciprocal of the Lipschitz/H\"older constant for all $k$, our idea is to design a line search that forces a cut on each iteration. To obtain this, equality line searches combine backtrack and extrapolation passes. In particular, if no cuts are performed, the step size is doubled until the line search condition is false (see Algorithm \ref{alg:lip_aware_line_search} for details). With the Definition \ref{asmt:segment_smoothness} and lower boundedness of $f$, the procedure provably terminates (see Lemma \ref{lemma:finite_termination}) and yields a step size for which the corresponding line search condition (either (\ref{eq:line_search}) or (\ref{eq:nonmonotone})) is approximately satisfied with equality. Notice that thanks to the finite convergence of the new line searches, the corresponding full-batch GD inherits the result of convergence for continuously differentiable functions that aren't necessarily globally $L$-smooth.

 In Propositions \ref{prop:lipschitz_aware_back} and \ref{lemma:hoelder_backtrack}, we have shown that the classical line searches are \textit{often} (when $l_k>0$) Lipschitz-aware, and when nonmonotone, also H\"older-aware. Thanks to the Propositions \ref{lemma:lipschitz_aware} and \ref{lemma:hoelder_aware}, we instead show that the step size $\eta_k$ yielded by the new equality line searches is \textit{always} ($\forall k$) lower bounded by the reciprocal of the maximum sharpness along the segment $[w, w_{\eta_w}]$.

\subsection{Proof of Corollary \ref{cor:at_convergence}} \label{subsec:Connection to Sharpness}

\textbf{Notation:} Recall that for a symmetric matrix $A\in \R^{n \times n}$, we denote its eigenvalues by 
$$|\lambda_1(A)|\geq |\lambda_2(A)|\geq \dots \geq |\lambda_n(A)|,$$
 ordered by non-increasing absolute value.

The aim of this section is to prove Corollary \ref{cor:at_convergence} and therefore obtain a lower bound for the step size in terms of the sharpness at convergence, i.e. the maximal eigenvalue of the hessian at the limiting point $w^*$.

We recall a well known fact connecting the eigenvalues to the spectral norm.
\begin{lemma}\label{cor:spectral_norm_eigenvalues.}
For a real symmetric matrix $A\in \R^{n\times n}$ 
\begin{equation*}
    \|A\|_2 = |\lambda_1(A)|.
\end{equation*}
\end{lemma}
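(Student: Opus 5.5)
The statement to prove is the elementary fact that for a real symmetric $A\in\R^{n\times n}$ the spectral norm equals the largest eigenvalue in absolute value, $\|A\|_2=|\lambda_1(A)|$. I will prove it with the spectral theorem: since $A$ is real symmetric, there is an orthogonal $U$ with $A=U\Lambda U^T$, where $\Lambda=\mathrm{diag}(\lambda_1,\dots,\lambda_n)$ lists the eigenvalues in the ordering fixed in the Notation paragraph, $|\lambda_1|\geq\dots\geq|\lambda_n|$.

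\textbf{Upper bound.} Fix $x\in\R^n$ and set $y=U^Tx$. Orthogonal matrices preserve the Euclidean norm, so $\|y\|=\|x\|$ and
\begin{equation*}
\|Ax\|^2=\|U\Lambda y\|^2=\|\Lambda y\|^2=\sum_{i=1}^n\lambda_i^2y_i^2\leq|\lambda_1|^2\sum_{i=1}^n y_i^2=|\lambda_1|^2\|x\|^2 .
\end{equation*}
Taking the supremum over $\|x\|=1$ in the definition $\|A\|_2=\sup_{\|x\|=1}\|Ax\|$ gives $\|A\|_2\leq|\lambda_1(A)|$.

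\textbf{Lower bound.} Let $u_1$ be the first column of $U$, which is a unit eigenvector for $\lambda_1$. Then $\|Au_1\|=|\lambda_1|\,\|u_1\|=|\lambda_1|$, so the supremum is attained and $\|A\|_2\geq|\lambda_1(A)|$.

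Combining the two bounds proves the lemma. The only care needed is the bookkeeping that the ordering by absolute value matches the paper's convention. There is no genuine obstacle: the result rests entirely on orthogonal diagonalizability of symmetric matrices, which can be cited, e.g. from \citet{horn2012matrix}.
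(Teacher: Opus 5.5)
Your proof is correct and complete. The upper bound via orthogonal invariance of the Euclidean norm, together with attainment at a unit eigenvector for $\lambda_1$, is the standard argument. The paper gives no proof at all: it recalls the lemma as a well-known fact, and later uses the same fact as $|\lambda_k(A)|=\sigma_k(A)$ for symmetric $A$ when applying Weyl's inequality. Your argument therefore supplies exactly the justification the paper omits.
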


We start by connecting the segment smoothness (Definition \ref{def:segment_smoothness}) to the maximal (in absolute value) eigenvalue along the segment. 

\begin{lemma}\label{cor:sharpnessxstep}
Let $f$ be twice continuously differentiable and satisfy Assumption \ref{asmt:segment_smoothness}. Let $\eta_{k}$ be generated by Algorithm \ref{alg:lip_aware_line_search} and denote
$H_{\eta}(w_k):= \nabla^2 f(w_k-\eta\nabla f(w_k))$ with $\eta\in[0,\eta_w]$, then
$$\eta_k\geq\frac{2(1-c)\delta}{\displaystyle\max_{\eta \in [0,\eta_w]} |\lambda_1(H_{\eta}(w_k))|}.$$
\end{lemma}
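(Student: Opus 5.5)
The plan is to combine Proposition \ref{lemma:lipschitz_aware} with an upper bound on the segment smoothness constant $l(w_k,\eta_{w_k})$ by the largest absolute Hessian eigenvalue along the segment. Proposition \ref{lemma:lipschitz_aware} already gives $\eta_k \geq 2\delta(1-c)/l(w_k,\eta_{w_k})$. It therefore suffices to show
\begin{equation*}
l(w_k,\eta_{w_k}) \leq \max_{\eta\in[0,\eta_{w_k}]} |\lambda_1(H_\eta(w_k))|,
\end{equation*}
since a larger denominator only weakens the lower bound.

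To bound $l$, I fix $\hat\eta\in(0,\eta_{w_k}]$ and write $g_k=\nabla f(w_k)$. Since $f\in C^2$, the fundamental theorem of calculus applied to $\nabla f$ along the segment gives
\begin{equation*}
\nabla f(w_k)-\nabla f(w_k-\hat\eta g_k)=\int_0^1 \nabla^2 f(w_k-t\hat\eta g_k)\,\hat\eta g_k\,dt .
\end{equation*}
Taking norms yields
\begin{equation*}
\|\nabla f(w_k)-\nabla f(w_{k,\hat\eta})\| \leq \sup_{t\in[0,1]}\|H_{t\hat\eta}(w_k)\|_2\,\|\hat\eta g_k\|.
\end{equation*}
The points $t\hat\eta$ all lie in $[0,\eta_{w_k}]$. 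By Lemma \ref{cor:spectral_norm_eigenvalues.}, since the Hessian is symmetric, $\|H_{t\hat\eta}(w_k)\|_2=|\lambda_1(H_{t\hat\eta}(w_k))|$. Dividing by $\|w_k-w_{k,\hat\eta}\|=\hat\eta\|g_k\|$ and taking the supremum over $\hat\eta$ gives the claimed bound on $l$. If $g_k=0$, both sides vanish by the $0/0=0$ convention.

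The only delicate point is that the maximum over $[0,\eta_{w_k}]$ exists and is finite. I will argue this from continuity of $\eta\mapsto|\lambda_1(H_\eta(w_k))|=\|H_\eta(w_k)\|_2$: it is a composition of the continuous map $\eta\mapsto\nabla^2 f(w_k-\eta g_k)$ with a norm, and $[0,\eta_{w_k}]$ is compact. I will also note that the maximum cannot be zero unless $l=0$, so the quotient makes sense whenever the statement is nontrivial. Apart from this, the proof is a direct chaining of inequalities.
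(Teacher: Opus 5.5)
Your proposal is correct and follows the paper's own route. You bound $l(w_k,\eta_{w_k})$ by the maximum of $\|H_\eta(w_k)\|_2=|\lambda_1(H_\eta(w_k))|$ over the segment and then combine this with Proposition~\ref{lemma:lipschitz_aware}; your integral form of the fundamental theorem of calculus is a more careful version of the paper's appeal to a mean value theorem for the vector-valued gradient, and your compactness remark justifies that the maximum is attained, which the paper leaves implicit.
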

\begin{proof}
   As $f$ is twice continuously differentiable, we can apply the mean value theorem to the gradient to see
\begin{align*}
l(w, \eta_w) &= \sup_{\eta \in (0, \eta_w)} \frac{\| \grad(w_{k,\eta}) - \grad(w) \|}{||w_{k,\eta}- w||} \leq \sup_{\eta \in (0, \eta_w]} \frac{\| H_{\eta}(w_k)\grad(w) \|}{||\nabla f(w)||} \\
&\leq \sup_{\eta \in (0, \eta_w]} \|H_\eta(w_k)\|_2 \leq \max_{\eta \in [0, \eta_{w_k}]} |\lambda_1(H_\eta(w_k))|,
\end{align*}
where we have used that the Hessian is symmetric for the last inequality. 

Finally, combining this result with the findings of Propsition \ref{lemma:lipschitz_aware}, i.e.
\begin{align*}
    \eta_k \geq \frac{2\delta(1-c)}{l(w_k, \eta_{w_k})},
\end{align*}
yields the thesis. 
\end{proof}

Now we are ready to prove the main result, by exploiting the convergence and the continuity of the Hessian. We restate the result here for convenience.
\begin{corollary} 
    Let $f\in C^2(\R^n)$. Let $\{w_k\}$ be generated by GD where $\eta_k$ is generated by Algorithm \ref{alg:lip_aware_line_search}.  Let $f$ satisfy Assumption \ref{asmt:segment_smoothness} on $w_k \;\forall k > \bar{k}$. Assume that $\{w_k\}$ converges to $w^*$.
    Then $\forall \varepsilon>0, \exists \hat{k}>\bar{k}:$
    $$\eta_k\geq\frac{2(1-c)\delta}{\displaystyle |\lambda_1(H(w^*))|+ \varepsilon} \quad \forall k\geq \hat{k}.$$
\end{corollary}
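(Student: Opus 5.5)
The plan is to start from Lemma \ref{cor:sharpnessxstep}. For every $k>\bar k$ it gives
\[
\eta_k\geq \frac{2(1-c)\delta}{\max_{\eta\in[0,\eta_{w_k}]}|\lambda_1(H_\eta(w_k))|}.
\]
It therefore suffices to show that for every $\varepsilon>0$ there is $\hat k>\bar k$ such that
\[
\max_{\eta\in[0,\eta_{w_k}]}\|\nabla^2 f(w_k-\eta\nabla f(w_k))\|_2\leq |\lambda_1(H(w^*))|+\varepsilon\quad\text{for all }k\geq\hat k.
\]
Lemma \ref{cor:spectral_norm_eigenvalues.} lets us pass freely between $|\lambda_1(\cdot)|$ and the spectral norm.

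\textbf{Continuity of the Hessian.} Since $f\in C^2$, the map $w\mapsto\|\nabla^2 f(w)\|_2$ is continuous. Given $\varepsilon$, choose $r>0$ such that $\|\nabla^2 f(v)\|_2\leq\|\nabla^2 f(w^*)\|_2+\varepsilon$ for all $v$ with $\|v-w^*\|\leq r$.

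\textbf{Control of the segment.} Every point on the segment satisfies
\[
\|w_k-\eta\nabla f(w_k)-w^*\|\leq \|w_k-w^*\|+\eta_{w_k}\|\nabla f(w_k)\|.
\]
The first term tends to $0$ because $w_k\to w^*$. For the second, $w^*$ is stationary. One can see this from Theorem \ref{thm:convergence}, or directly from continuity of $\nabla f$ if the convergence result is assumed; in either case $\|\nabla f(w_k)\|\to 0$.

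\textbf{Main obstacle.} We need $\eta_{w_k}\|\nabla f(w_k)\|\to0$, and $\eta_{w_k}$ is only guaranteed to exist. It may be huge, since Assumption \ref{asmt:segment_smoothness} was even enlarged to force $\eta_w\geq 2/l(w,\eta_w)$. The fix is to use the freedom in choosing $\eta_{w_k}$. Because $l(w,\cdot)$ is nondecreasing, shrinking $\eta_{w_k}$ only decreases $l$, and Proposition \ref{lemma:lipschitz_aware} still applies as long as $\eta_{w_k}\geq 2/l(w_k,\eta_{w_k})$.

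The concrete choice is $\eta_{w_k}:=\min\{\eta_{w_k},\rho_k\}$ with $\rho_k:=r/(2\|\nabla f(w_k)\|)$. This choice keeps the whole segment inside the ball $B(w^*,r)$ once $\|w_k-w^*\|\leq r/2$. On that ball $l(w_k,\rho_k)\leq \|\nabla f(w^*)\|$-independent bound $L_r:=\max_{B(w^*,r)}\|\nabla^2 f\|_2$, by the mean value argument from Lemma \ref{cor:sharpnessxstep}. Since $\rho_k\to\infty$, we get $\rho_k\geq 2/L_r\geq 2/l$ for $k$ large, so the WLOG condition in Assumption \ref{asmt:segment_smoothness} still holds with the truncated radius.

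If this re-choice is deemed illegitimate, I would instead argue directly with the proof of Proposition \ref{prop:lipschitz_aware_back}. Case $\eta_k/\delta\le\rho_k$ gives the bound with the ball's Hessian bound. In case $\eta_k/\delta>\rho_k\to\infty$, the lower bound is trivial for large $k$.

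\textbf{Conclusion.} Take $\hat k>\bar k$ large enough that $\|w_k-w^*\|\leq r/2$ and $\rho_k\geq 2/L_r$. Then every segment point lies in $B(w^*,r)$, the maximum along the segment is at most $|\lambda_1(H(w^*))|+\varepsilon$, and plugging this into Lemma \ref{cor:sharpnessxstep} yields the claim for all $k\geq\hat k$.
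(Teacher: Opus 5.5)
Your fallback argument is a correct proof. The primary ``re-choice'' route, however, fails as justified.

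\textbf{The truncation step.} After truncating to $\rho_k$ you need the WLOG condition $\rho_k\, l(w_k,\rho_k)\ge 2$, because the case $\eta_k/\delta>\eta_{w_k}$ in the proof of Proposition~\ref{prop:lipschitz_aware_back} relies on it. From $l(w_k,\rho_k)\le L_r$ you only get $2/l(w_k,\rho_k)\ge 2/L_r$, so the second inequality in your chain $\rho_k\ge 2/L_r\ge 2/l$ is reversed. What you would need is a \emph{lower} bound on $l(w_k,\rho_k)$, and nothing provides one, since the curvature along $\nabla f(w_k)$ can be small. Shrinking $\eta_{w_k}$ actually makes the condition $\eta_w\ge 2/l(w,\eta_w)$ harder: the left side drops and the right side grows.

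\textbf{Why the fallback works.} The problematic case is exactly $\eta_k/\delta>\rho_k$, where $\eta_k$ is huge and the bound is trivial, so the fallback covers it. Lead with the fallback. Note that it uses neither Assumption~\ref{asmt:segment_smoothness} nor Lemma~\ref{cor:sharpnessxstep}. It only needs three facts: $\eta_k/\delta$ fails the test (Lemma~\ref{lemma:finite_termination}(b)), $R_k\ge f(w_k)$, and the Hessian bound on $B(w^*,r)$.

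\textbf{Dropping the gradient hypothesis.} You can also remove the hypothesis $\|\nabla f(w_k)\|\to0$. Continuity of $\nabla f$ alone does not make $w^*$ stationary, and Theorem~\ref{thm:convergence} needs hypotheses absent from the statement. Instead, split on whether $(\eta_k/\delta)\|\nabla f(w_k)\|\le r/2$. The other case forces $\|w_{k+1}-w_k\|>\delta r/2$, which is impossible for large $k$ because $w_k$ converges. As a byproduct this gives $\|\nabla f(w_k)\|\le\|w_{k+1}-w_k\|/\eta_k\to0$.

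\textbf{Comparison with the paper.} The paper also starts from Lemma~\ref{cor:sharpnessxstep}. It then asserts $w_k-\hat\eta_k\nabla f(w_k)\to w^*$ for the maximizing $\hat\eta_k\in[0,\eta_{w_k}]$, and concludes by Hessian continuity and Weyl's inequality. That assertion needs $\hat\eta_k\|\nabla f(w_k)\|\to0$. This does not follow from $\|\nabla f(w_k)\|\to0$, because $\eta_{w_k}$ is never controlled, and the WLOG lemma even enlarges it. This is precisely the obstacle you identified, which the paper's proof leaves unaddressed.

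Your uniform bound on a ball replaces the paper's pointwise-convergence-plus-Weyl step; both amount to the triangle inequality for $\|\cdot\|_2$. Your case split on the rejected trial step supplies the missing control. The cost is reopening the proof of Proposition~\ref{prop:lipschitz_aware_back} rather than citing Lemma~\ref{cor:sharpnessxstep} as a black box. The gain is a proof that is both complete and needs weaker hypotheses than stated.
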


\begin{proof}


    By assumption we know that $\lim_{k\to \infty} w_k = w^*$ and $\lim_{k \to \infty}\|\nabla f(w_k)\| = 0$, therefore $\lim_{k \to \infty} w_k - \hat{\eta}_k \nabla f(w_k) =w^*$, where $\hat{\eta}_k:=\text{argmax}_{\eta \in [0, \eta_{w_k}]}\lambda_1(H_\eta(w_k))$.
    This implies $H_{\hat{\eta}_k}(w_K)$ converges to $H(w^*)$ as $k\to \infty$.    
    Then, by continuity of the Hessian of $f$, for every $\varepsilon>0$ there exists an $\hat{k}>\bar{k}$ s.t. $\|H(w^*)- H_{\hat{\eta}_k}(w_k)\|_2 \leq \varepsilon.$

    Next we use Weyl's inequality \cite{horn2012matrix}, for singular values of matrices $A,B$, which tells us 
   $   \sigma_k(A) - \sigma_k(B) \leq \sigma_1(A-B)=\|A-B\|_2 $ for ordered singular values $\sigma_1 > \sigma_2 >\dots$. Note that in our notation the eigenvalues are ordered by their magnitude in absolute value, we have  $|\lambda_k(A)|=\sigma_k(A)$. Therefore, 
   \begin{align*}
       |\lambda_1(H_{\hat{\eta}_k}(w_k))|= \sigma_{1}(H_{\hat{\eta}_k}(w_k))\leq \sigma_{1}(H(w^*)) + \sigma_1(H_{\hat{\eta}_k}(w_k) - H(w^*))=  |\lambda_1(H(w^*))|+\| H_{\hat{\eta}_k}(w_k) - H(w^*)\|_2 \quad  \text{for all } k.
   \end{align*}

    Combining this with Corollary \ref{cor:sharpnessxstep} allows us to derive the thesis
    \begin{align*}
         \eta_k & \geq \frac{2(1-c)\delta}{\max_{\eta \in [0, \eta_{w_k}]}|\lambda_1(H_\eta(w_k))|} \\
         &\geq \frac{2(1-c)\delta}{|\lambda_1(H(w^*))|+ \left\| H_{\hat{\eta}_k}(w_k) - H(w^*)\right\|_2} \\
         & \geq \frac{2(1-c)\delta}{|\lambda_1(H(w^*))|+ \varepsilon}.
    \end{align*}
\end{proof}



\subsection{Proof of Theorem \ref{thm:convergence}}\label{subsec:Convergence}
The following proof is directly derived from \citet{grippo23a} and \citet{zhang04a}.
\begin{theorem}
    Let $f\in C^1(\R^n)$ be lower bounded by $f^*$, and let the level set $\mathcal{L}_0:=\{w\in \R^n: f(w)\leq f(w_0)\}$ be compact.  Let $\{w_k\}$ be generated by GD and $\{\eta_k\}$ be generated by a nonmonotone equality line search of the type given in \citet{zhang04a}(Algorithm \ref{alg:GD}), then
    $$\lim_{k \to \infty} ||\grad (w_k)|| = 0.$$
\end{theorem}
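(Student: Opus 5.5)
The plan follows the classical Zhang--Hager argument, adapted to step sizes that are not bounded above a priori but are bounded below through local smoothness on the compact level set.

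\textbf{Step 1: iterates stay in $\mathcal{L}_0$ and $R_k$ is nonincreasing.} By Lemma \ref{lemma:finite_termination}, each step is well defined and satisfies
\[
f(w_{k+1}) \leq R_k - c\eta_k\|\nabla f(w_k)\|^2 .
\]
Plugging this into the recursion \eqref{eq:zhang} gives
\[
R_{k+1} = \frac{\xi Q_{k+1}R_k + f(w_{k+1})}{Q_{k+2}} \leq R_k - \frac{c\,\eta_k\|\nabla f(w_k)\|^2}{Q_{k+2}} .
\]
Since $Q_{k+2}\leq \frac{1}{1-\xi}$ by Lemma \ref{lemma:qk_useful}, we obtain
\[
R_{k+1}\leq R_k - c(1-\xi)\,\eta_k\|\nabla f(w_k)\|^2 .
\]
Hence $R_k$ is nonincreasing and $R_k\le R_0=f(w_0)$. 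Lemma \ref{lemma:zhang} gives $f(w_k)\le R_k$, so every $w_k\in\mathcal{L}_0$. Because $R_k\geq f(w_k)\geq f^*$ is bounded below, telescoping yields
\[
\sum_k \eta_k\|\nabla f(w_k)\|^2<\infty .
\]

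\textbf{Step 2: a uniform lower bound on $\eta_k$.} This is the main obstacle, since $f$ is only $C^1$ and no global Lipschitz constant is assumed. I would use property (b) of Lemma \ref{lemma:finite_termination} rather than Assumption \ref{asmt:segment_smoothness}, and argue by contradiction. Suppose that along a subsequence $K$ we have $\eta_k\to0$ while $\|\nabla f(w_k)\|\geq\epsilon>0$. By compactness of $\mathcal{L}_0$ and continuity of $\nabla f$, the gradients are bounded, so the trial steps $\frac{\eta_k}{\delta}\nabla f(w_k)$ tend to zero; pass to a further subsequence with $w_k\to\bar w$. Property (b) together with $R_k\geq f(w_k)$ gives
\[
f\!\left(w_k-\tfrac{\eta_k}{\delta}\nabla f(w_k)\right)-f(w_k) > -c\,\tfrac{\eta_k}{\delta}\|\nabla f(w_k)\|^2 .
\]
Apply the mean value theorem to the left side, producing a point $u_k$ on the segment with
\[
-\nabla f(u_k)^T\nabla f(w_k) > -c\|\nabla f(w_k)\|^2 .
\]
Since $u_k\to\bar w$ and $w_k\to\bar w$, uniform continuity of $\nabla f$ near the compact set lets us take limits to get $\|\nabla f(\bar w)\|^2\leq c\|\nabla f(\bar w)\|^2$. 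As $c<1$, this forces $\nabla f(\bar w)=0$, contradicting $\|\nabla f(w_k)\|\geq\epsilon$. Therefore, on any subsequence where the gradient stays bounded away from zero, $\eta_k$ is bounded below by some $\underline\eta>0$.

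\textbf{Step 3: conclusion.} Suppose $\|\nabla f(w_k)\|\not\to0$. Then some subsequence has $\|\nabla f(w_k)\|\geq\epsilon$, and by Step 2 the corresponding $\eta_k\geq\underline\eta$ after possibly thinning. Each such term contributes at least $\underline\eta\,\epsilon^2$ to the series $\sum_k\eta_k\|\nabla f(w_k)\|^2$, which is then infinite and contradicts Step 1. This yields $\lim_k\|\nabla f(w_k)\|=0$ along the full sequence.

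For the limit argument in Step 2 the trial points must stay in a compact neighbourhood of $\mathcal{L}_0$ (they need not lie in $\mathcal{L}_0$ itself). This holds because the trial steps vanish along the subsequence, so the trial points lie in a bounded enlargement of $\mathcal{L}_0$ on which $\nabla f$ is uniformly continuous.
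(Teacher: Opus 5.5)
Your proof is correct and follows essentially the paper's argument. The first ingredient is the decrease of $R_k$ together with the bound on $Q_k$, which makes $\eta_k\|\nabla f(w_k)\|^2$ summable; the paper only uses that this quantity tends to zero. The second is the failed trial step $\eta_k/\delta$ combined with the mean value theorem, which rules out $\eta_k\to 0$ at non-stationary points. Your contradiction argument via a cluster point $\bar w$ is a slightly tidier version of the paper's three-way case split on the limits of $\eta_{k_i}$. It also needs only continuity of $\nabla f$ at $\bar w$, rather than uniform continuity on an enlargement of $\mathcal{L}_0$.
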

\begin{proof}
From the monotone decrease of the reference value $R_k$ we  can conclude that $w_{k+1}\in \mathcal{L}_0$, see  
\begin{equation}\label{eq:armijo_condition}
    R_{k} = \frac{\xi Q_k}{Q_{k+1}} R_{k-1} + \frac{1}{Q_{k+1}} f(w_{k}) \leq \frac{\xi Q_k}{Q_{k+1}} R_{k-1} + \frac{1}{Q_{k+1}} \left(R_{k-1} - \eta_k c\|\grad(w_{k})\|^2\right) = R_{k-1} - \frac{\eta_k c}{Q_{k+1}} \|\grad(w_{k})\|^2,
\end{equation}
which also implies that $\{R_k\}$ is a decreasing sequence which is bounded from below by $f^*$, and in particular it converges to a finite value $R^*.$ Using that $Q_k=0$ is bounded (Lemma \ref{lemma:qk_useful}), we conclude \begin{align}\label{eq:squared_gradient}
\lim_{k\to \infty} \eta_k \|\nabla f(w_k)\|^2=0.
\end{align} 

For the sequence $\eta_k$ we distinguish between three possible kinds of subsequences, that is \begin{equation*}
    \eta_{k_i} \to \infty, \quad \eta_{k_i} \to \eta^*\in \R_+ \, \text{or } \, \eta_{k_i} \to 0.
\end{equation*}
In the case that the subsequence converges to either infinity or a positive value the thesis $\lim_{i \to \infty} \|\nabla f(w_{k_i})\|=0$ for the corresponding subsequence $\{\nabla f(w_{k_i}\}$ follows directly.

So consider the subsequence $\eta_{k_i} \to 0$, which implies 
\begin{equation}\label{eq:convergence_part1}
    \lim_{i\to \infty} \eta_{k_i}\|\grad(w_{k_i})\| = 0.
\end{equation}
Suppose the corresponding gradients do not converge to zero, i.e. $\{\grad(w_{k_i})\}_i\not \to 0.$ W.l.o.g. also assume $\nabla f(w_{k_i})\neq 0$ for all elements for the subsequence.
Now, let $ \hat{\eta}_{k_i}:=\frac{\eta_{k_i}}{\delta}$, then also 
\begin{equation}\label{eq:hatt_limit}
 \lim_{i\to \infty}\hat{\eta}_{k_i}\grad(w_{k_i}) = 0.
\end{equation}
	Also $\hat{\eta}_{k_i}$ does not satisfy the stopping criterion \eqref{eq:nonmonotone}, so
    \begin{equation}\label{eq:failure}
		f(w_{k_i} -\hat{\eta}_{k_i} \grad(w_{k_i})) > f(w_{k_i}) - c\hat{\eta}_{k_i} || \grad(w_{k_i})||^2 \quad \forall i.
	\end{equation}
	Let $\hat{w}_{k_i}:=w_{k_i} -\hat{\eta}_{k_i} \grad(w_{k_i})$ and let us apply the Mean Value Theorem, to achieve that there exists $z_{k_i} \in [w_{k_i}, \hat{w}_{k_i}]$ such that 
	\begin{equation*}
    f(\hat{w}_{k_i}) = f(w_{k_i}) + \grad(z_{k_i})^T(\hat{w}_{k_i}-w_{k_i}).
	\end{equation*}
	Combining this result with \eqref{eq:failure} yields
	\begin{equation}\label{eq:contradiction}
		-\grad(z_k)^T \grad(w_{k_i}) >-c|| \grad(w_{k_i})||^2 \quad \forall i.
	\end{equation}
	which, using Cauchy-Schwarz, implies
    \begin{align*}
        (1-c)\| \nabla f(w_{k_i})\|^2 < \nabla f(w_{k_i})^T(\nabla f(w_{k_i}) - \nabla f(z_{k_i})) \leq \|\nabla f(w_{k_i})\| \|\nabla f(w_{k_i}) - \nabla f(z_{k_i})\|.
    \end{align*}
    Therefore, as $\nabla f(w_{k_i}) \neq 0$
    \begin{align}\label{inq:gradient_converges}
        (1-c) \|\nabla f(w_{k_i})\| \leq \|\nabla f(w_{k_i}) - \nabla f(z_{k_i})\|,
    \end{align}
    For some $\gamma>0$ denote by $L_0^\gamma$ the closed $\gamma$ neighborhood around $L_0$, i.e. $L_0^\gamma = \{x\in \R^n : \text{dist}\{x, L_0\}\leq \gamma\}$.
    By construction $\|z_{k_i} - w_{k_i}\|\to 0$, hence for a $\gamma>0$ there exists an $K>0$ s.t. for all $k_i >K$ $\|z_{k_i}-w_{k_i}\|\leq \gamma$, which implies $w_{k_i}, z_{k_i} \in L_0^\gamma $. 
    As $\nabla f$ is continuous, it is uniformly continuous on the compact set, hence 
    \begin{equation*}
    \lim_{i \to \infty}        \|\nabla f(z_{k_i}) - \nabla f(w_{k_i})\| = 0,
    \end{equation*}
    which by \eqref{inq:gradient_converges} gives the convergence to zero of the subsequence of the gradient.
\end{proof}
Notice that the compactness of the $\mathcal{L}_0$ level set is implied by the coercivity of the function $f$, i.e., $\displaystyle\lim_{||w||\to \infty} f(w)=\infty$. This property holds for MSE loss functions applied on neural networks with ReLU activation functions (or any other unbounded activation function), but it does not hold for unregularized cross-entropy losses. In this case, the compactness of the $\mathcal{L}_0$ level set can be replaced with the Kurdyka-\L ojasiewicz property \citep{kanzow25a} or by assuming that $\eta_k\in [\eta_{\min}, \eta_{\max}]$. Both the lower ($10^{-6}$) and the upper ($10^6$) safety thresholds are employed in the implementation, but never met in practice by NLS or PoNLS. 

\begin{corollary}
    Let $f\in C^1(\R^n)$ be lower bounded by $f^*$. Let $\{w_k\}$ be generated by GD with nonmonotone equality line search of the type given in \cite{zhang04a} Algorithm \ref{alg:GD} with step-sizes $\{\eta_k\}$. Denote by $\hat{\eta}_k=\frac{1}{k+1}\sum_{s=0}^k \eta_s$ the arithmetic mean of the step sizes. Then,
    \begin{align*}
        \min_{s=0,\dots, k}\{\|\nabla f(w_s)\|^2\} \leq \frac{1}{(1-\xi)(k+1)} \hat{\eta}_k^{-1}(f(w_0)-f^*).
    \end{align*}
    If we also assume $f$ to be segment smooth (Assumption \ref{asmt:segment_smoothness}), and denote by $\hat{l}_k := \frac{k+1}{\sum_{s=0}^k \frac{1}{l(w_s, \eta_{w_s})}}$ the harmonic mean of the segment constant, we can conclude:
    \begin{equation*}
        \min_{s=0,\dots, k} \|\nabla f(w_s)\|^2 \leq \frac{1}{}
    \end{equation*}
\end{corollary}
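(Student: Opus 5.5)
The plan is to telescope the reference-value decrease already derived in the proof of Theorem \ref{thm:convergence}, namely \eqref{eq:armijo_condition}:
\begin{equation*}
R_k \leq R_{k-1} - \frac{c\,\eta_k}{Q_{k+1}}\|\nabla f(w_k)\|^2 .
\end{equation*}

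\textbf{Step 1 (weights).} By Lemma \ref{lemma:qk_useful}, $Q_{k+1}\leq \frac{1}{1-\xi}$, so $\frac{1}{Q_{k+1}}\geq 1-\xi$. Hence each step decreases $R$ by at least $c(1-\xi)\eta_s\|\nabla f(w_s)\|^2$. One should double-check the indexing convention in \eqref{eq:zhang}: the line search at iteration $s$ compares $f(w_{s+1})$ with $R_s$. If needed, I would shift indices so that the decrease attached to $\eta_s$ is $R_{s}-R_{s+1}$.

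\textbf{Step 2 (telescope).} Summing over $s=0,\dots,k$ and using $R_0=f(w_0)$ and $R_{k+1}\geq f(w_{k+1})\geq f^*$ (Lemma \ref{lemma:zhang}) gives
\begin{equation*}
c(1-\xi)\sum_{s=0}^k \eta_s\|\nabla f(w_s)\|^2\leq f(w_0)-f^*.
\end{equation*}

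\textbf{Step 3 (min and mean).} Bound each $\|\nabla f(w_s)\|^2$ below by $\min_s\|\nabla f(w_s)\|^2$. Then
\begin{equation*}
\sum_s\eta_s=(k+1)\hat\eta_k ,
\end{equation*}
which yields the first claim with an extra factor $1/c$. The stated bound has no $c$. I would therefore either absorb it (the statement may intend $c$ inside the constant) or flag it as a typo. The honest version is
\begin{equation*}
\min_s\|\nabla f(w_s)\|^2\leq \frac{f(w_0)-f^*}{c(1-\xi)(k+1)\hat\eta_k}.
\end{equation*}

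\textbf{Step 4 (segment smoothness).} For the second claim, Proposition \ref{lemma:lipschitz_aware} gives $\eta_s\geq 2\delta(1-c)/l(w_s,\eta_{w_s})$. Therefore
\begin{equation*}
(k+1)\hat\eta_k=\sum_s\eta_s\geq 2\delta(1-c)\sum_s\frac{1}{l(w_s,\eta_{w_s})}=\frac{2\delta(1-c)(k+1)}{\hat l_k}.
\end{equation*}
Substituting this into Step 3 gives
\begin{equation*}
\min_{s}\|\nabla f(w_s)\|^2\leq \frac{\hat l_k\,(f(w_0)-f^*)}{2\delta c(1-c)(1-\xi)(k+1)}.
\end{equation*}

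\textbf{Main obstacle.} The only delicate point is the index bookkeeping between $R_{k-1}$, $R_k$ and $Q_{k+1}$ in \eqref{eq:armijo_condition}. The telescoping must start exactly at $R_0=f(w_0)$ and must not lose the first or last term. Everything else is a direct combination of earlier lemmas.
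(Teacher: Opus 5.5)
Your proof is correct and is essentially the paper's own argument: telescope \eqref{eq:armijo_condition} from $R_0=f(w_0)$ down to $R_{k+1}\ge f^*$, use $Q_s\le 1/(1-\xi)$, pass to the minimum and the mean of the $\eta_s$, and for the second claim lower-bound each $\eta_s$ by Proposition~\ref{lemma:lipschitz_aware}. Your index shift, $R_{s+1}\le R_s-c\,\eta_s\|\nabla f(w_s)\|^2/Q_{s+2}$, is the correct reading of that inequality, and it costs nothing because all $Q$'s are uniformly bounded.

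The paper's proof silently drops the factor $c$ (and in its last display also the factor $k+1$). You are right not to absorb it, because without $c$ the first bound is false. For example, take $f(w)=w^2/2$, $w_0=1$, $\eta_{0,0}=1$, $\delta=\tfrac12$, $c\le\tfrac12$ and $\xi<\tfrac12$. Then $\eta_0=1$, and at $k=0$ you get $\|\nabla f(w_0)\|^2=1>\frac{1}{2(1-\xi)}$. So your versions with $c$, including $\min_s\|\nabla f(w_s)\|^2\le \hat l_k\,(f(w_0)-f^*)/\bigl(2\delta c(1-c)(1-\xi)(k+1)\bigr)$, are the statements the argument actually proves.
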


\begin{proof}
    We unroll \eqref{eq:armijo_condition} to gather
    \begin{equation*}
        f^* \leq R_{k+1}\leq R_0 - c\sum_{s=0}^k \frac{\eta_s}{Q_{s+1}}\|\grad(w_s)\|^2,
    \end{equation*}
    which using $R_0 = f(w_0)$ implies
    \begin{align*}
        c \sum_{s=0}^k \frac{\eta_s}{Q_{s+1}}\|\grad(w_s)\|^2 \leq f(w_0) - f^*,
    \end{align*}
    using $Q_{s}\leq \frac{1}{1-\xi}$ (Lemma \ref{lemma:qk_useful}), this allows us to conclude
    \begin{align*}
        (1-\xi)\left(\frac{1}{k+1}\sum_{s=0}^k \eta_s \right)(k+1)\min{\|\grad(w_s)\|^2} \leq f(w_0) - f^*,
    \end{align*}
    which leads to the thesis.
    
    For $f$ satisfying Assumption \ref{asmt:segment_smoothness} we use Proposition \ref{prop:lipschitz_aware_back} to lower bound each $\eta_s$ and conclude
    \begin{align*}
    (1-\xi)2\delta(1-c)\left(\frac{1}{k+1}\sum_{s=0}^k \frac{1}{l(w_s, \eta_{w_s})}\right)\min\|\grad(w_s)\|^2 \leq f(w_0) -f^* .
    \end{align*}
\end{proof}

\subsection{Lipschitz and Sharpness awareness of the Polyak step size, Proof of Proposition \ref{lemma:polyak}} \label{subsec:Sharpness Polyak}
In this section, we show similar results as Proposition  \ref{prop:lipschitz_aware_back} and Proposition \ref{cor:sharpnessxstep} for Algorithm \ref{alg:polyak_line_search}. 
That is we start by lower bounding the step-size in terms of the local smoothness constant. 
\begin{lemma}\label{prop:polyak_lipschitz}Let $f\in C^1$ satisfy Assumption \ref{asmt:segment_smoothness}.
     Let $\eta_{k}$ be generated by Algorithm \ref{alg:polyak_line_search}, then
\begin{equation*}
    \eta_k\geq\frac{\min \{2(1-c)\delta, 1/(2c_p)\}}{l(w_k, \eta_{w_k})}.
\end{equation*}
\end{lemma}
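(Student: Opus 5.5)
Following Proposition \ref{prop:lipschitz_aware_back}, we split according to whether Algorithm \ref{alg:polyak_line_search} performs a cut ($l_k>0$) or not ($l_k=0$). The two cases produce the two entries of the minimum in the numerator.

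\textbf{Case $l_k>0$.} The rejected step $\eta_k/\delta$ violates \eqref{eq:nonmonotone}. We distinguish $\eta_k/\delta\le\eta_{w_k}$ from $\eta_k/\delta>\eta_{w_k}$. In the first subcase, Lemma \ref{lemma:LC1_bound} applies at $\eta_k/\delta$. Combining it with the violated line search condition and $R_k\ge f(w_k)$ gives $\eta_k\ge 2\delta(1-c)/l(w_k,\eta_{w_k})$, exactly as in \eqref{ineq: p,q_k}. In the second subcase, Assumption \ref{asmt:segment_smoothness} gives $\eta_k/\delta>\eta_{w_k}\ge 2/l(w_k,\eta_{w_k})$, which is again at least the claimed bound. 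So this case can be quoted essentially verbatim from Proposition \ref{prop:lipschitz_aware_back}.

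\textbf{Case $l_k=0$.} Here $\eta_k=\eta_{k,0}=\frac{f(w_k)-f^*}{c_p\|\nabla f(w_k)\|^2}$, so it suffices to show
\[
f(w_k)-f^*\ge \frac{\|\nabla f(w_k)\|^2}{2\,l(w_k,\eta_{w_k})}.
\]
This is the standard ``smoothness implies gradient domination'' inequality, obtained by taking a gradient step and using $f\ge f^*$. We apply Lemma \ref{lemma:LC1_bound} at $\eta=1/l(w_k,\eta_{w_k})$, which is admissible because $\eta_{w_k}\ge 2/l(w_k,\eta_{w_k})\ge 1/l(w_k,\eta_{w_k})$ by the WLOG part of Assumption \ref{asmt:segment_smoothness}. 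This yields
\[
f^*\le f(w_k-\eta\nabla f(w_k))\le f(w_k)-\frac{1}{2l(w_k,\eta_{w_k})}\|\nabla f(w_k)\|^2 .
\]
Hence $\eta_k\ge \frac{1}{2c_p\,l(w_k,\eta_{w_k})}$. Taking the minimum over the two cases gives the claim.

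The main obstacle is the $l_k=0$ case, because the descent lemma holds only on the segment $[0,\eta_{w_k}]$. The step $\eta=1/l$ must lie in that segment, and this is exactly what the WLOG assumption (Lemma \ref{lemma:WLOG assumption}) guarantees. We must also handle the degenerate case $\nabla f(w_k)=0$, where the algorithm has stopped and the claim is vacuous. If needed, Proposition \ref{lemma:polyak} then follows by combining this lemma with the eigenvalue bound $l(w_k,\eta_{w_k})\le\max_{\eta}|\lambda_1(H_\eta(w_k))|$ from Lemma \ref{cor:sharpnessxstep}.
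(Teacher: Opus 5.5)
Your proof is correct and follows the paper's argument. The cut case is taken from Proposition \ref{prop:lipschitz_aware_back}, and the no-cut case applies Lemma \ref{lemma:LC1_bound} at $\eta = 1/l(w_k,\eta_{w_k})$ with $f^*\le f$ to lower-bound the Polyak initial step; you also make explicit that $1/l(w_k,\eta_{w_k})\le \eta_{w_k}$ follows from the WLOG part of Assumption \ref{asmt:segment_smoothness}, which the paper uses only implicitly.
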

\begin{proof}
    We use the result from Proposition \ref{prop:lipschitz_aware_back} and just have to deal with the case $l_k=0$, i.e. not having any cuts, that is $\eta_k = \frac{f(w_k)-f^*}{c_p \|\nabla f(w_k)\|_2^2}$,
    by definition. 

    Let $f^*$ be the minimum value obtained by the function then, we apply the Descent Lemma \ref{lemma:LC1_bound} with $w_{k+1}= w_k-\frac{1}{l(w_k, \eta_{w_k})}\grad(w_k)$
\begin{align*}
    f^* \leq f\left(w_k-\frac{1}{ l(w_k, \eta_{w_k})}\grad(w_k)\right) 
    &\leq f(w_k) - \frac{1}{2 l(w_k, \eta_{w_k})}\|\grad(w_k)\|_2^2 \\
    \Leftrightarrow \frac{f(w_k)-f^*}{\|\grad(w_k)\|_2^2} &\geq \frac{1}{2 l(w_k, \eta_{w_k})},
\end{align*} 
which gives a lower bound on the initial step size. So, all in all, we arrive at the claimed inequality. 
\end{proof}
Now we also relate this result to the local sharpness along the line. 
\begin{corollary}
Let $f$ be twice continuously differentiable and let $f$ satisfy Assumption \ref{asmt:segment_smoothness}. Let $\eta_{k}$ be generated by Algorithm \ref{alg:polyak_line_search} and 
$H_{\eta}(w_k):= \nabla^2 f(w_k-\eta\nabla f(w_k))$, then
$$\eta_k\geq\frac{\min \{2(1-c)\delta, 1/(2c_p)\}}{\displaystyle\max_{\eta \in [0,\eta_{w_k}]} |\lambda_1(H_{\eta}(w_k))|}.$$
\end{corollary}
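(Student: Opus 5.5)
The plan is to combine the step-size bound of Lemma \ref{prop:polyak_lipschitz} with a Hessian bound on the segment smoothness constant, exactly as in the proof of Lemma \ref{cor:sharpnessxstep}. First, using $f\in C^2(\R^n)$, we bound $l(w_k,\eta_{w_k})$ by the largest absolute Hessian eigenvalue along the segment. Then, since the numerator $\min\{2(1-c)\delta, 1/(2c_p)\}$ is positive and the map $x\mapsto a/x$ is decreasing, substituting this larger denominator into the bound of Lemma \ref{prop:polyak_lipschitz} gives the claim.

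\textbf{Step 1 (bounding $l$ by the Hessian).} Fix $\hat\eta\in(0,\eta_{w_k}]$ and set $g_k:=\nabla f(w_k)$. By the fundamental theorem of calculus applied to $\nabla f$, which is valid since $f\in C^2$,
\begin{equation*}
\nabla f(w_k)-\nabla f(w_k-\hat\eta g_k)=\hat\eta\int_0^1 H_{t\hat\eta}(w_k)\,g_k\,dt .
\end{equation*}
Taking norms and dividing by $\|w_k-w_{k,\hat\eta}\|=\hat\eta\|g_k\|$ (if $g_k=0$ the ratio is $0$ by the convention $0/0=0$) yields
\begin{equation*}
\frac{\|\nabla f(w_k)-\nabla f(w_{k,\hat\eta})\|}{\|w_k-w_{k,\hat\eta}\|}\le \sup_{t\in[0,1]}\|H_{t\hat\eta}(w_k)\|_2 .
\end{equation*}
By Lemma \ref{cor:spectral_norm_eigenvalues.} and the symmetry of the Hessian, $\|H_{t\hat\eta}(w_k)\|_2=|\lambda_1(H_{t\hat\eta}(w_k))|$. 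Taking the supremum over $\hat\eta\in(0,\eta_{w_k}]$ then gives $l(w_k,\eta_{w_k})\le \max_{\eta\in[0,\eta_{w_k}]}|\lambda_1(H_\eta(w_k))|$.

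\textbf{Step 2 (combining).} Combining this with Lemma \ref{prop:polyak_lipschitz} gives
\begin{equation*}
\eta_k\ge \frac{\min\{2(1-c)\delta,1/(2c_p)\}}{l(w_k,\eta_{w_k})}\ge \frac{\min\{2(1-c)\delta,1/(2c_p)\}}{\max_{\eta\in[0,\eta_{w_k}]}|\lambda_1(H_\eta(w_k))|}.
\end{equation*}

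\textbf{Main obstacle.} The only delicate point is whether the maximum in the denominator is attained and finite. The segment $\{w_k-\eta g_k:\eta\in[0,\eta_{w_k}]\}$ is compact, and $\eta\mapsto|\lambda_1(H_\eta(w_k))|=\|H_\eta(w_k)\|_2$ is continuous because the Hessian is continuous and the spectral norm is continuous. Hence the maximum exists and is finite. If it equals $0$, then $l(w_k,\eta_{w_k})=0$ as well, and the bound holds trivially with the usual convention $a/0=+\infty$.
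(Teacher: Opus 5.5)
Your proof is correct and follows the paper's argument: you combine Lemma~\ref{prop:polyak_lipschitz} with the Hessian bound $l(w_k,\eta_{w_k})\le\max_{\eta\in[0,\eta_{w_k}]}|\lambda_1(H_\eta(w_k))|$ from the proof of Lemma~\ref{cor:sharpnessxstep}, and your integral form of the fundamental theorem of calculus is a cleaner justification of the step the paper attributes to the mean value theorem. One slip in your degenerate case: the convention $a/0=+\infty$ would make the right-hand side $+\infty$, so the bound would fail rather than hold trivially; that case cannot arise anyway, because the normalization $\eta_{w_k}\ge 2/l(w_k,\eta_{w_k})$ in Assumption~\ref{asmt:segment_smoothness} forces $l(w_k,\eta_{w_k})>0$, and your Step 1 then makes the maximum positive.
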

\begin{proof}
Based on the previous Lemma \ref{prop:polyak_lipschitz} and following the steps of the proof of Corollary \ref{cor:sharpnessxstep}, we reach the conclusion. 
\end{proof}

\subsection{The flatness of minima, proof of Lemma \ref{lemma:subhessian}} \label{subsec:Flatness of Minima}
In this subsection, we shift our perspective. For a fixed data point $x$ let $n_x:\R^N\to \R^K$ denote the function that describes the output of the network w.r.t. the weights  ($K$ is the number of classes), $N$ the number of weights. Let $y:\R^K\to \R^K$ be a function applied after the network, usually the identity or softmax. 

Moreover, assume that last layer in $n_x$ is fully connected and linear. That is $n_x(w)=Wx'+b$, where $x':=n_x^{L-2}(w')\in \R^{N_{L-1}}$ is the output of the previous layer, $w'$ the set of weights without $(W,b)$, $W\in \R^{K\times N_{L-1}}, b\in \R^K$ a matrix and vector. 


We consider the following two loss functions:
\begin{definition}[Loss functions]
    Let $(x_i, y_i)_{i=1}^m\in \R^m \times \R^K$ be a fixed sample for a classification task with $K$ classes and $n:\R^n \times \R^m \to \R^K$ some classifier.
    We define the following loss functions:
    \begin{enumerate} 
        \item The mean squared error (MSE)-loss: $l:\R^n \to \R$, $l(w)=\frac{1}{Km}\sum_{i=1}^m \|n(w, x_i)-y_i\|_2^2.$
        \item The cross-entropy loss: $l:\R^n \to \R$, $l(w)=\frac{1}{m}\sum_{i=1}^m\sum_{k=1}^C - y_{i,k} \log(n(w, x_i)_k)$.
    \end{enumerate}
\end{definition}
\begin{lemma} \label{lemma:subhessian_full}
    Let $(x, \hat{y})$ be a data point, $f$ the MSE-loss for the output of the network $n_x:\R^n\to \R^K$ with identity afterwards. Assume that  the last layer is linear, i.e. $n_x(w)=W x'(w') + b$, where $x'(w')$ is the output of the previous layer and denote by $w':=w\backslash\{W,b\}$ all parameters but the ones in the last layer. Then the following are true
    \begin{enumerate}
        \item The subhessian of w.r.t. the weights of the last layer has the form
    \begin{equation*}
        \nabla_{W}^2 f(w) = \frac{2}{K}I_K \otimes x' x'^T, \; \nabla_b^2 f(w) = \frac{2}{K} I_K, \; \nabla_{W, b}^2 f(w) = \frac{2}{K} I_K \otimes x'
    \end{equation*}
    \item The trace of the subhessian is given by $\text{trace}(\nabla^2_{(W,b)}f(w)) = 2(\|x'\|^2 + 1)$. 
    \item The biggest eigenvalue of the hessian is at least $\frac{2}{K}$, i.e. 
    $\lambda_1(\nabla_w^2 f(w))\geq \frac{2}{K}.
       $
    \end{enumerate}
\end{lemma}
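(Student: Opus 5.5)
Write the single-sample MSE loss as $f(w)=\frac{1}{K}\|r\|^2$ with residual $r:=Wx'(w')+b-\hat y\in\R^K$; the $m$-sample loss is an average of such terms, so it suffices to treat one sample and average at the end. The key structural fact is that $r$ is \emph{affine} in $(W,b)$ once $w'$ is fixed, and $x'$ does not depend on $(W,b)$. Hence the Hessian with respect to $(W,b)$ is exactly the Gauss--Newton term: no second derivative of the network appears in this block.

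\textbf{Part 1.} Vectorize $W$ row by row, so that $\mathrm{vec}(W)=(W_{1,:},\dots,W_{K,:})$. Then $r=(I_K\otimes x'^T)\,\mathrm{vec}(W)+b-\hat y$, and the Jacobians are $J_W=I_K\otimes x'^T$ and $J_b=I_K$. Since the Hessian of $\frac1K\|r\|^2$ in $r$ is $\frac{2}{K}I_K$ and $r$ is affine, each block is $\frac{2}{K}J^TJ$. This gives
\[
\nabla_W^2 f=\frac{2}{K}(I_K\otimes x')(I_K\otimes x'^T)=\frac{2}{K}I_K\otimes x'x'^T,\qquad \nabla_b^2 f=\frac{2}{K}I_K,
\]
and the cross block $\frac{2}{K}J_W^TJ_b=\frac{2}{K}I_K\otimes x'$. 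For several samples, replace $x'x'^T$ and $x'$ by their sample averages; in particular the $b$-block is still exactly $\frac{2}{K}I_K$.

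\textbf{Part 2.} Use $\mathrm{tr}(A\otimes B)=\mathrm{tr}A\,\mathrm{tr}B$. Then
\[
\mathrm{tr}\big(\nabla^2_{(W,b)}f\big)=\tfrac{2}{K}\big(K\|x'\|^2+K\big)=2(\|x'\|^2+1).
\]

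\textbf{Part 3.} This is the only step needing care, because the full Hessian $\nabla^2_w f$ also contains the blocks involving $w'$ and may be indefinite. I would use the Rayleigh quotient. Take a unit vector $v$ supported only on the $b$ coordinates, say $v=(0,\dots,0,e_1)$. Then
\[
v^T\nabla^2_w f\,v=e_1^T\nabla_b^2 f\,e_1=\frac{2}{K}.
\]
For a symmetric matrix, $\lambda_{\max}\ge v^THv$ for every unit vector $v$, so $\lambda_{\max}(\nabla_w^2f)\ge\frac{2}{K}$. With the paper's ordering by absolute value this gives $|\lambda_1|\ge\lambda_{\max}\ge\frac{2}{K}$, which proves the claim; equivalently, it is Cauchy interlacing for the principal submatrix $\frac{2}{K}I_K$. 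The only subtleties are (i) noting that $x'$ is independent of $(W,b)$, so the affine structure holds exactly, and (ii) stating the ordering convention for eigenvalues consistently.
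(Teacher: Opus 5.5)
Your proposal is correct and follows essentially the same route as the paper: the $(W,b)$-block reduces to the Gauss--Newton term because the output is affine in $(W,b)$, the trace is read off blockwise, and your Rayleigh-quotient test vector on the bias coordinates is the top-eigenvalue case of the Cauchy interlacing argument the paper uses. Your caveat on the ordering convention is well placed, since both arguments actually establish $\lambda_{\max}(\nabla_w^2 f)\ge \frac{2}{K}$, and hence only $|\lambda_1|\ge\frac{2}{K}$ under the paper's absolute-value ordering.
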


\begin{proof}
We start with some general derivations for our loss functional:
    \begin{align*}
    f(w)&=\frac{1}{K}\|n_x(w)-\hat{y}\|_2^2, 
    \quad \nabla_w f(w)= \frac{2}{K} \sum_{i=1}^K (n_{x,i}(w) - \hat{y}_i) \nabla n_{x,i}(w)\\
    \nabla^2 f(w)&= \frac{2}{K}\sum_{i=1}^K \nabla_w n_{x,i}(w) \nabla_w n_{x,i}(w)^T + (n_{x,i}(w) - \hat{y}_i)\nabla^2_w n_{x,i}(w).
\end{align*}
\begin{enumerate}
    \item 
As we are interested in the last layer of $n_x$, which is given by $Wx'+b$. Let $w_i$ denote the rows of $W$, then
\begin{align*}
     \nabla_{w_j} n_{x,i}(w)&=\delta_{i,j} x',\quad  \nabla_{b_j} n_{x,i}(w)=\delta_{i,j}, \quad  \nabla^2_W n_{x,i}(w)=\nabla^2_{b} n_{x,i}(w)=\nabla^2_{W,b} n_{x,i}=0, \\
     \Rightarrow \nabla_{W}^2 f(w)&= \frac{2}{K}\begin{pmatrix}
         x'x'^T & 0 & \dots & 0 \\
         0 & x'x'^T & \dots & 0 \\
         \dots & \dots & \dots & \dots \\
         0 & 0 & \dots & x'x'^T
     \end{pmatrix}, \quad \nabla_b^2 f(w) = \frac{2}{K} \sum_{i=1}^K  e_ie_i^T, \quad \nabla_{W, b} f(w) = \frac{2}{K} \begin{pmatrix}
         x' & 0 & \dots & 0 \\
         0 & x' & \dots & 0 \\
         \dots & \dots & \dots & \dots \\
         0 & 0 & \dots & x'
     \end{pmatrix}.
\end{align*}
\item With this at hand, it is straightforward to calculate the trace
\begin{align*}
    \text{trace}(\nabla^2_{(W,b)} f(w))&= \frac{2}{K} \sum_{i=1}^K \|\nabla_{(W,b)} n_{x,i}(w)\|^2 = \frac{2}{K} \sum_{i=1}^K \left(\left\|
    x\right\|^2 + 1\right) = 2(\|x\|^2 + 1).
\end{align*}

\item This can be seen by Cauchy's interlacing theorem \cite{horn2012matrix}, which implies for a symmetric matrix
\begin{align*}
    A = \begin{pmatrix}
        B & C \\
        C^T & D,
    \end{pmatrix}, \qquad \lambda_1(A)\geq \lambda_1(D).
\end{align*}
Which we use with the sub-block $\nabla^2_b f(w)$ which has eigenvalues $\frac{2}{K}$, as calculated previously. 
\end{enumerate}
\end{proof}
As the previous Lemma was just for a single data point by the linearity of the loss function we can easily derive a result for the whole sample set. 
\begin{corollary}\label{cor:subhessian_l2_loss_full}
    Let $\{x_i, y_i\}_{i=1}^m$ be a set of labeled training data, $n_{x_i}:\R^n \to \R^K$ the output of the network for the corresponding sample point. As in Lemma \ref{lemma:subhessian} we assume that the network has a linear last layer with bias and denote by $x'_i(w')$ the output of the second to last layer for the data point $x_i$. 
    Let $f$ be the MSE loss over the whole training set, i.e. $f(w)=\frac{1}{Km} \sum_{i=1}^m \|Wx_i'(w') +b - y_i\|_2^2$.
    The subhessian is given by
    \begin{equation*}
        \nabla_{W}^2 f(w) = \frac{2}{K}I_K \otimes \frac{1}{m}\sum_{i=1}^m x_i' x_i'^T, \; \nabla_b^2 f(w) = \frac{2}{K} I_K, \; \nabla_{W, b} f(w) = \frac{2}{K} I_K \otimes \frac{1}{m}\sum_{i=1}^m x'.
    \end{equation*}
    Therefore, again $\text{trace}\left(\nabla^2_{(W,b)}f(w)\right)=2\left(\frac{1}{m}\sum_{i=1}^m\|x'_i\|_2^2+1\right)$ and the biggest eigenvalue of the Hessian w.r.t. the whole data set is lower bounded by $\frac{2}{K}$. 
\end{corollary}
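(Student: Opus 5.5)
We extend Lemma \ref{lemma:subhessian_full} from one data point to the whole sample by linearity of differentiation. The full MSE loss is an average of per-sample losses,
\[
f(w)=\frac{1}{m}\sum_{i=1}^m f_i(w),\qquad f_i(w)=\frac{1}{K}\|W x_i'(w')+b-y_i\|_2^2 ,
\]
so every second-derivative block of $f$ is the average of the corresponding blocks of the $f_i$. Each $f_i$ is exactly the loss treated in Lemma \ref{lemma:subhessian_full}, with data point $(x_i,y_i)$ and penultimate output $x_i'$.

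First, we apply item 1 of Lemma \ref{lemma:subhessian_full} to each $f_i$. This gives $\nabla_W^2 f_i=\frac{2}{K}I_K\otimes x_i'x_i'^T$, $\nabla_b^2 f_i=\frac{2}{K}I_K$, and $\nabla_{W,b}^2 f_i=\frac{2}{K}I_K\otimes x_i'$. The residual term $(n_{x_i,j}-y_{i,j})\nabla^2 n_{x_i,j}$ does not contribute, because the network output is affine in $(W,b)$.

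Next, we average over $i$. The Kronecker product is bilinear, so $\frac1m\sum_i I_K\otimes x_i'x_i'^T=I_K\otimes\frac1m\sum_i x_i'x_i'^T$, and the mixed block is handled the same way, which yields $I_K\otimes\frac1m\sum_i x_i'$. The bias block $\frac{2}{K}I_K$ does not depend on $i$, so its average is unchanged.

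For the trace, we use linearity of the trace together with item 2 of Lemma \ref{lemma:subhessian_full} for each $i$. Averaging $2(\|x_i'\|^2+1)$ gives $2\bigl(\frac1m\sum_i\|x_i'\|_2^2+1\bigr)$.

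For the eigenvalue bound, we order the parameters so that $b$ forms a principal block $D=\nabla_b^2 f=\frac{2}{K}I_K$ of the full Hessian $\nabla_w^2 f$. Cauchy interlacing (equivalently, the Rayleigh quotient evaluated at $(0,\dots,0,e_1)$) then gives $\lambda_{\max}(\nabla^2 f)\ge\lambda_{\max}(D)=\frac{2}{K}$. Since $|\lambda_1|\ge\lambda_{\max}$, the bound $|\lambda_1(\nabla^2 f)|\ge \frac{2}{K}$ also holds in the absolute-value ordering. Moreover, because $\lambda_{\max}$ is itself positive here, the top eigenvalue coincides with the sharpness.

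There is no real obstacle; the argument is mostly bookkeeping. The one point needing care is that the curvature term in $\nabla^2 f$ vanishes only in the last-layer block, not in $w'$. The interlacing argument needs only a principal submatrix, so the unknown blocks involving $w'$ never enter. We should also state that $x_i'$ depends on $w'$ alone, not on $(W,b)$.
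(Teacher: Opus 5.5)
Your proposal is correct and follows the same route as the paper, which simply invokes linearity of the Hessian and of the trace over the per-sample decomposition together with Lemma~\ref{lemma:subhessian_full}; you spell out the bookkeeping that the paper's one-line proof leaves implicit. You also handle one detail properly that the paper glosses over: the eigenvalue bound must come from applying interlacing to the averaged Hessian, whose bias block is still $\frac{2}{K}I_K$, rather than from averaging the per-sample bounds of item~3, because $\lambda_{\max}$ is convex and averaging would give the inequality in the wrong direction.
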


\begin{proof}
    This follows from the structure of the loss, the linearity of the Hessian operator and the linearity of the trace aswell as the previous Lemma. 
\end{proof}

Corollary \ref{cor:subhessian_l2_loss_full} provides us with some insight on why a trace of exactly 2 sometimes emerges  in training.  In this setting the output of the previous layer $x'$ seems to be almost zero (see Figures \ref{fig:exp2_CIFAR10_c0001_1_ext}-\ref{fig:exp2_mixed}). This then only leaves the sub Hessian w.r.t. the bias in the last layer, yielding $K$ eigenvalues of exactly $\frac{2}{K}$.

So far we have considered the MSE loss with the identity function applied to the output of the network. We will now shift our focus to the cross entropy loss with the softmax function applied to the output. 

\begin{lemma}\label{lemma:hessian_CE}
      Let $(x, \hat{y})$ be a data point with one-hot encoding, $f$ the cross-entropy loss for the output of the network $n_x:\R^n\to \R^K$ with softmax afterwards. Assume that  the last layer is linear, i.e. $n_x(w)=W x'(w') + b$, where $x'(w')$ is the output of the previous layer and denote by $w':=w\backslash\{W,b\}$ all parameters except those in the last layer. 
      Furthermore, we denote by $p_j=\frac{e^{n_x(w)_j}}{\sum_{k=1}^K e^{n_x(w)_k}}, j=1,\dots, K$ the output of the softmax function. 
    Then the following are true:
    \begin{enumerate}
        \item The subhessian of w.r.t. the weights of the last layer has the form
    \begin{equation*}
        \nabla_{W}^2 f(w) = \left(\text{diag}(p)-pp^T\right)\otimes x' x'^T, \; \nabla_b^2 f(w) = \left(\text{diag}(p)-pp^T\right), \; \nabla_{W, b}^2 f(w) = \left(\text{diag}(p)-pp^T\right) \otimes x'
    \end{equation*}
    \item The trace of the subhessian is given by $\text{trace}(\nabla^2_{(W,b)}f(w)) = \left(1-\|p\|_2^2\right)(1+\|x'\|_2^2)$. 
   
    \end{enumerate}
\end{lemma}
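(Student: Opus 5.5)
I will mirror the proof of Lemma \ref{lemma:subhessian_full}, replacing the squared loss by the softmax cross-entropy. Write $z:=n_x(w)=Wx'+b\in\R^K$ and $f(w)=-\sum_{j}\hat y_j\log p_j(z)$. The first step is the standard pair of identities for the softmax composed with the log-loss. Using $\sum_j\hat y_j=1$ (one-hot), we get $\nabla_z f=p-\hat y$ and $\nabla_z^2 f=\operatorname{diag}(p)-pp^T=:S$. Note that $S$ does not depend on $\hat y$.

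Next I apply the chain rule through the last layer. As in the MSE case, the output is affine in $(W,b)$: $\partial z_i/\partial w_j=\delta_{ij}x'$, $\partial z_i/\partial b_j=\delta_{ij}$, and all second derivatives of $z$ with respect to $(W,b)$ vanish. Hence the Gauss--Newton term is the only contribution:
\[
\nabla^2_{(W,b)}f=J^T S J,
\]
where $J=\partial z/\partial(W,b)$. Ordering the parameters row by row, the $W$-block of $J$ is $I_K\otimes x'^T$ and the $b$-block is $I_K$. The blocks then follow from the Kronecker product rule $(A\otimes B)(C\otimes D)=AC\otimes BD$:
\begin{itemize}
\item $\nabla_b^2 f=S$;
\item $\nabla_W^2 f=(I_K\otimes x')S(I_K\otimes x'^T)=S\otimes x'x'^T$;
\item $\nabla^2_{W,b}f=(I_K\otimes x')S=S\otimes x'$.
\end{itemize}
This proves item 1.

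For item 2, I use $\operatorname{tr}(A\otimes B)=\operatorname{tr}A\operatorname{tr}B$. This gives $\operatorname{tr}(S\otimes x'x'^T)=\operatorname{tr}(S)\,\|x'\|^2$, so the trace of the $(W,b)$-subhessian is $\operatorname{tr}(S)(1+\|x'\|^2)$. Finally, $\operatorname{tr}(S)=\sum_j p_j-\sum_j p_j^2=1-\|p\|_2^2$, since the softmax outputs sum to one.

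The only delicate point is bookkeeping rather than analysis: the vectorization convention for $W$ must be fixed consistently with Lemma \ref{lemma:subhessian_full}, so that the Kronecker factors appear in the stated order ($S\otimes x'x'^T$ rather than $x'x'^T\otimes S$). I would fix row-major stacking of the $w_i$, exactly as in that lemma, before carrying out the products.
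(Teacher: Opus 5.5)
Your proposal is correct and is essentially the paper's argument. The paper writes $f=-n_x(w)_k+\log\sum_j e^{n_x(w)_j}$, differentiates twice, drops the second derivatives of the affine last layer, and substitutes $\nabla_W n_{x,l}=e_l\otimes x'$ to get $\sum_l p_l (e_l\otimes x')(e_l\otimes x')^T-\bigl(\sum_l p_l e_l\otimes x'\bigr)\bigl(\sum_l p_l e_l\otimes x'\bigr)^T=(\text{diag}(p)-pp^T)\otimes x'x'^T$, which is your $J^TSJ$ computation written as a sum over classes; your trace step also correctly gives the stated $(1-\|p\|_2^2)(1+\|x'\|_2^2)$, whereas the paper's last displayed line drops the $1+$ factor, which is a typo.
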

\begin{proof}
    As $\hat{y}$ is a one-hot encoding $\hat{y}=e_k$ for one $k\in \{1, \dots, K\}$, the loss function of the network and its gradient can thus be written as
    \begin{align*}
        f(w) &= -n_x(w)_k + \log\left(\sum_{j=1}^K e^{n(w)_j}\right), \\ \nabla_w f(w) &= - \nabla_w n_{x,k}(w) + \sum_{j=1}^K p_j \nabla_w n_{x,j}(w), \\
        \nabla^2_w f(w) &= - \nabla^2_w n_{x,k}(w) + \sum_{j=1}^K p_l\left(\nabla_w n_{x,l}(w)\nabla_w n_{x,l}(w)^T+\nabla_w^2 n_{x,l}(w)\right) - \left(\sum_{l=1}^K p_l \nabla_w n_{x,l}(w)\right)\left(\sum_{l=1}^K p_l \nabla_w n_{x,l}(w)\right)^T.
    \end{align*}
    The computations of $\nabla_{(W,b)} n_x(w')$ are as in Lemma \ref{lemma:subhessian_full} and thus we arrive at 
    \begin{align*}
        \nabla_W^2 f(W,b)&= \sum_{l=1}^K p_l e_l \otimes x' (e_l \otimes x')^T - \left(\sum_{l=1}^K p_l  e_l\otimes x'\right) \left(\sum_{l=1}^K p_l  e_l \otimes x'\right)^T =  \left(\text{diag}(p) - pp^T\right)\otimes x' x'^T ,\\
        \nabla_b^2 f(W,b)&= \sum_{l=1}^K p_l e_l  e_l^T - \left(\sum_{l=1}^K p_l  e_l\right) \left(\sum_{l=1}^K p_l  e_l \right)^T 
        =  \left(\text{diag}(p) - pp^T\right), \\
        \nabla^2_{W,b} f(W,b) & = \sum_{l=1}^K p_l e_l e_l^T \otimes x' - \left(\sum_{l=1}^K p_l  e_l \otimes x' \right)^T = \left(\text{diag}(p) - pp^T\right) \otimes x'. 
    \end{align*}
    The calculation of the trace is then straightforward, by 
    \begin{align*}
        \text{trace}(\nabla^2_{(W,b)}f(W,b)) = \text{trace}(\nabla^2_W f(W,b)) + \text{trace}(\nabla^2_b f(W,b)) = \text{trace}(\text{diag}(p) - pp^T)(1+\text{trace}(x'x'^T)) = \left(1-\|p\|_2^2\right)\|x'\|_2^2,
    \end{align*}
    where we have used that the vector $p$ sums to one. 
    
\end{proof}

\newpage
\section{Additional Experimental Details}
\label{app:experimental_details}
The code to reproduce our experiments is given at \href{https://github.com/curtfox/flatland}{https://github.com/curtfox/flatland}.
\subsection{Datasets}
For all deterministic experiments, we use stratified sampling to ensure that there are an equal number of images with each class label when sub-sampling the full dataset. For the stochastic experiments, we use the full dataset for training.
\begin{itemize}
    \item CIFAR10 and CIFAR100 \citep{krizhevsky2009}: For the deterministic experiments, we subsample 5000 of the 50,000 training images given in the full dataset. 
    \item SVHN \citep{netzer11a}: For the deterministic experiments, we subsample 4994 of the 73,257 training images given in the full dataset. 
    \item EMNIST Letters \cite{CohenATS17}: For the deterministic experiments, we subsample 4992 of the 145,600 training images given in the full dataset. 
\end{itemize}

\subsection{Models}
All models use the PyTorch default for the initialization of model parameters. We include bias parameters in all of our models. 

The CNN model is a convolutional neural network with the following structure:
\begin{itemize}
    \item convolutional layer with 3 input channels and 32 output channels
    \item ReLU activation
    \item average pooling layer
    \item convolutional layer with 32 input channels and 32 output channels
    \item ReLU activation
    \item average pooling layer
    \item linear layer with input dimension 2048 and output dimension corresponding to the number of classes
\end{itemize}

The MLP is a multi-layer (three) perceptron model with the following structure:
\begin{itemize}
    \item linear layer with input dimension depending on the dataset and output dimension 100
    \item ReLU activation
    \item linear layer with input dimension 100 and output dimension 100
    \item ReLU activation
    \item linear layer with input dimension 100 and output dimension equal to the number of classes in the dataset
\end{itemize}

For the resnet34, vgg11, densenet121, and wide$\_$resnet50$\_$2 experiments, we use the Torchvision implementations of these models. Similarly to \citet{roulet2024a}, we remove all batch normalization layers in the resnet34 experiments, and do not use any dropout in the vgg11 experiments. For consistency, we also remove all batch normalization layers in the wide$\_$resnet50$\_$2 experiments, and do not use any dropout in the densenet121 experiments.

For our vision transformer model (tinyVIT), we use the SimpleViT model from the vit-pytorch package. See the following \href{https://github.com/lucidrains/vit-pytorch}{link} for their code. We set the parameters as follows: \textit{patch$\_$size} $= 8$, \textit{dim} $= 256$, \textit{depth} $= 4$, \textit{heads} $= 8$, and \textit{mlp$\_$dim} $= 512$. Finally, the \textit{image$\_$size} parameter is set equal to the height of the input images and the \textit{num$\_$classes} parameter is set equal to the number of classes in the dataset.

\subsection{Optimization}
Unless otherwise specified, experiments in this paper are performed in full batch mode. For all line search methods, we set $c = 0.0001$ and $\delta = 0.5$ (except for Section \ref{app:delta_ablation_experiments} in which $\delta = 0.9$). For NLS and LS, the initial step size $\eta_{k,0}=\eta_{k-1}$ with $\eta_{0,0}=1$, while for PoNLS the initial step size is set using the Polyak step size on each iteration with $f^*=0$ (as in the work by \citet{galli23a}). Note that for the nonmonotone line searches NLS and PoNLS, $R_k$ as given in \eqref{eq:line_search} is computed using the method in \citet{zhang04a} (see (\ref{eq:zhang}) for more details). For the monotone LS method, the standard Armijo method where $R_k$ is set to $f(w_k)$ \citep{armijo66a} is used. For SAM, we select the step size with the best training loss over the grid \{0.0001, 0.001, 0.01, 0.1\}. Moreover, we do not use any regularization or momentum in our experiments except for SAM, where we use a weight decay factor of 0.0001 and a momentum factor of 0.9. It is worth noting that SAM has various hyper-parameters (e.g., the neighborhood size of the perturbation) that were set to the default values in \citet{foret20a} and not fine-tuned. We expect the results of SAM to improve when further hyper-parameter optimization is performed over these choices.

\subsection{Calculation of Sharpness}
For the experiments where the sharpness is computed, we compute the sharpness using the \textit{eigenvalues} function in the PyHessian package \citep{yao2020pyhessian}, with the \textit{maxIter} parameter set to 100, \textit{tol} parameter set to $10^{-3}$, and the \textit{top$\_$n} parameter set according to the number of the top eigenvalues we want to compute. Note that the sharpness corresponds to the top eigenvalue. 

For the full batch experiments, we compute the sharpness every 100 iterations, with some exceptions. For the warmup experiment in \ref{app:warmup-exp}, we compute the sharpness every iteration.  For the experiments zooming in on fewer iterations in Section \ref{sec:diagnostic_experiments} and Appendix \ref{app:additional_diagnostic_experiments}, we compute the sharpness every 2 iterations. 

For the stochastic experiments in Appendix \ref{app:stochastic_exp}, we compute the sharpness every 2 epochs.

\subsection{Calculation of the Random Guessing Value}\label{sec:random_guessing}
Given a MSE loss for classification with a normalizing factor of $1/K$, a model that guesses randomly always outputs $1/K$ for any data point $x$. The corresponding loss value for a random guess with $K$ classes will be 

$$f(w_K^r)=\frac{1}{K}\left(\left(1 - \frac{1}{K}\right)^2 + (K - 1) \left(\frac{1}{K}\right)^2\right).$$
Thus, $f(w_{10}^r)) \approx 0.09$, $f(w_{100}^r)) \approx 0.009$, $f(w_{26}^r)) \approx0.0369$.

\subsection{Compute}

We run our experiments on a mix of NVIDIA A100, H100, L40S, and GeForce RTX 4090 GPU's. The total computation time of our experiments is around 1400 hours.

\renewcommand{\dir}{plot}

\newpage
\section{Warmup Experiment}
\label{app:warmup-exp}

In this section, we show that GD with a warmed up step size can also hit the globally flat region. The GD-warmup run increases the step size to 125 (which is beyond the number of classes 100), and then decreases the step size to 90 to prevent divergence. This step size is used for the rest of training. Based on the sharpness plots (second row), we observe that this method hits the globally flat region within the first 50 iterations of training, without any line search. We compare this to the GD-warmup-ub, which increases the step size to 90 and uses this step size for the rest of training. Notably, the step size for this method is kept below 100 for all iterations. Just like in the line search setting, we see that upper bounding the step size so that it stays below the number of classes in the dataset prevents the method from reaching the globally flat region. Additionally, this also leads to improved convergence results, improving both the training loss and test accuracy.


\renewcommand{\dir}{exp7}
\renewcommand{\dataset}{CIFAR100}
\renewcommand{\cval}{0001}
\renewcommand{\batchsize}{5000}
\begin{figure}[H] 
	\centering
	\begin{minipage}{1\textwidth}
		\renewcommand{\model}{\dataset/resnet34/\batchsize/}
		\includegraphics[width=0.49\linewidth]{\dir/\model/Sharpnessxstep_\cval_it_\dir}
		\renewcommand{\model}{\dataset/resnet34/\batchsize/}
		\includegraphics[width=0.49\linewidth]{\dir/\model/FinalStepSize_\cval_it_\dir}\\
		\renewcommand{\model}{\dataset/resnet34/\batchsize/}
		\includegraphics[width=0.49\linewidth]{\dir/\model/Eigenvalue1_\cval_it_\dir}
        \renewcommand{\model}{\dataset/resnet34/\batchsize/}
		\includegraphics[width=0.49\linewidth]{\dir/\model/Eigenvalue1_200_\cval_it_\dir}\\
		\renewcommand{\model}{\dataset/resnet34/\batchsize/}
		\includegraphics[width=0.49\linewidth]{\dir/\model/TrainingLoss_\cval_it_\dir}
        \centering
        \renewcommand{\model}{\dataset/resnet34/\batchsize/}
		\includegraphics[width=0.49\linewidth]{\dir/\model/TestAccuracy_\cval_it_\dir}\\
		\caption{We plot the sharpness * step size (Top Left), step size (Top Right), sharpness (Middle Left), sharpness for first 200 iterations (Middle Right), training loss (Bottom Left), and test accuracy (Bottom Right) for two different runs of gradient descent using different variants of warmup. This experiment is performed for the resnet34 model on the CIFAR100 dataset.}\label{fig:exp1_CIFAR10_c0001}
	\end{minipage}
\end{figure}

\newpage
\section{Stochastic Experiments}
\label{app:stochastic_exp}

\par Similar to the deterministic case, for CDAT we set the stepping-on-the-edge parameter $\sigma$ to $2.06$, as suggested in \citet{roulet2024a}. We also select the step size for SAM via a grid search (see Appendix \ref{app:experimental_details}). For the stochastic case, we compute the sharpness differently than in the full batch case. In particular, we compute the sharpness for each batch separately, and then calculate either the minimum or the average across all the batches at the end of each epoch. We present these values in the last two rows of Figure \ref{fig:stochastic_plot_1} and Figure \ref{fig:stochastic_plot_2}.

Similar to the full batch case and looking at the average batch sharpness, minimum batch sharpness, and step size plots in Figures \ref{fig:stochastic_plot_1} and \ref{fig:stochastic_plot_2}, we see that NLS uses large step sizes that bring the average and minimum batch sharpness to small and constant values. Now first focusing on the resnet34 experiments, we see that in the extreme case of NLS on SVHN$\times$resnet34, the method is stuck at the same training loss throughout almost the entire training procedure. In addition, both the average and minimum batch sharpness remain at the $2/K$ threshold for this entire period. In a less extreme case, NLS reduces both the average and minimum sharpness values to the $2/K$ threshold for close to half of the training time on CIFAR100$\times$resnet34. Therefore, even SGD may get ``stuck'' in globally-flat regions, and this issue is not exclusive to the full batch regime. However, similar to the deterministic case, NLS-ub is able to avoid globally flat regions. Additionally, we observe that NLS-ub either gives similar results to or significantly improves on NLS both in terms of training loss and test accuracy. Now instead focusing on the vgg11 experiments, we note that both the average and minimum sharpness values remain above the $2/K$ threshold. However, in the SVHN$\times$vgg11 case, the minimum batch sharpness of NLS remains small and just above this threshold for a significant portion of training. Interestingly, even for the vgg11 runs where NLS avoids the globally flat region, employing the NLS-ub algorithm leads to improved convergence results while still maintaining low average and minimum batch sharpness. We leave further exploration of the observations made in the stochastic case to future work.

Moving onto the LS method, we see that when employing small batches, it does not increase the sharpness as much as the full batch case, and overall seems to have more comparable performance to the nonmonotone algorithms than in the full batch case. Interestingly, CDAT sometimes diverges entirely, which we see when training the vgg11 model on the CIFAR100 and SVHN datasets. This is in line with the work by \citet{roulet2024a}, which suggests that different values of $\sigma$ smaller than 2.06 may be necessary for convergence in the stochastic case.


\renewcommand{\dir}{exp1}
\renewcommand{\cval}{0001}
\renewcommand{\batchsize}{256}
\begin{figure}[H] 
	\centering
	\begin{minipage}{1\textwidth}

        \renewcommand{\dataset}{CIFAR10}
		\renewcommand{\model}{\dataset/resnet34/\batchsize/}
		\includegraphics[width=\imgS\linewidth]{\dir/\model/TrainingLoss_\cval_it_\dir}
        \renewcommand{\dataset}{CIFAR100}
		\renewcommand{\model}{\dataset/resnet34/\batchsize/}
		\includegraphics[width=\imgS\linewidth]{\dir/\model/TrainingLoss_\cval_it_\dir}
        \renewcommand{\dataset}{SVHN}
		\renewcommand{\model}{\dataset/resnet34/\batchsize/}
		\includegraphics[width=\imgS\linewidth]{\dir/\model/TrainingLoss_\cval_it_\dir}\\

        \renewcommand{\dataset}{CIFAR10}
		\renewcommand{\model}{\dataset/resnet34/\batchsize/}
		\includegraphics[width=\imgS\linewidth]{\dir/\model/TestAccuracy_\cval_it_\dir}
        \renewcommand{\dataset}{CIFAR100}
		\renewcommand{\model}{\dataset/resnet34/\batchsize/}
		\includegraphics[width=\imgS\linewidth]{\dir/\model/TestAccuracy_\cval_it_\dir}
        \renewcommand{\dataset}{SVHN}
		\renewcommand{\model}{\dataset/resnet34/\batchsize/}
		\includegraphics[width=\imgS\linewidth]{\dir/\model/TestAccuracy_\cval_it_\dir}

        \renewcommand{\dataset}{CIFAR10}
		\renewcommand{\model}{\dataset/resnet34/\batchsize/}
		\includegraphics[width=\imgS\linewidth]{\dir/\model/FinalStepSize_\cval_it_\dir}
        \renewcommand{\dataset}{CIFAR100}
		\renewcommand{\model}{\dataset/resnet34/\batchsize/}
		\includegraphics[width=\imgS\linewidth]{\dir/\model/FinalStepSize_\cval_it_\dir}
        \renewcommand{\dataset}{SVHN}
		\renewcommand{\model}{\dataset/resnet34/\batchsize/}
		\includegraphics[width=\imgS\linewidth]{\dir/\model/FinalStepSize_\cval_it_\dir}\\

        \renewcommand{\dataset}{CIFAR10}
		\renewcommand{\model}{\dataset/resnet34/\batchsize/}
		\includegraphics[width=\imgS\linewidth]{\dir/\model/MinBatchEigenvalue1_\cval_it_\dir}
        \renewcommand{\dataset}{CIFAR100}
		\renewcommand{\model}{\dataset/resnet34/\batchsize/}
		\includegraphics[width=\imgS\linewidth]{\dir/\model/MinBatchEigenvalue1_\cval_it_\dir}
        \renewcommand{\dataset}{SVHN}
		\renewcommand{\model}{\dataset/resnet34/\batchsize/}
		\includegraphics[width=\imgS\linewidth]{\dir/\model/MinBatchEigenvalue1_\cval_it_\dir}\\

        \renewcommand{\dataset}{CIFAR10}
		\renewcommand{\model}{\dataset/resnet34/\batchsize/}
		\includegraphics[width=\imgS\linewidth]{\dir/\model/AvgBatchEigenvalue1_\cval_it_\dir}
        \renewcommand{\dataset}{CIFAR100}
		\renewcommand{\model}{\dataset/resnet34/\batchsize/}
		\includegraphics[width=\imgS\linewidth]{\dir/\model/AvgBatchEigenvalue1_\cval_it_\dir}
        \renewcommand{\dataset}{SVHN}
		\renewcommand{\model}{\dataset/resnet34/\batchsize/}
		\includegraphics[width=\imgS\linewidth]{\dir/\model/AvgBatchEigenvalue1_\cval_it_\dir}\\
        
		\caption{We plot the training loss (1st Row), test accuracy (2nd Row), step size (3rd Row), minimum sharpness across all batches in each epoch (4th Row), and average sharpness across all batches in each epoch (5th row) for five different methods. We compare gradient descent with the LS, NLS, and NLS-ub line searches as well as gradient descent with the CDAT step size selection and SAM. This is repeated on the CIFAR10, CIFAR100, and SVHN datasets for the resnet34 model using batches of size 256.}\label{fig:stochastic_plot_1}
	\end{minipage}
\end{figure}


\renewcommand{\dir}{exp1}
\renewcommand{\cval}{0001}
\renewcommand{\batchsize}{256}
\begin{figure}[H] 
	\centering
	\begin{minipage}{1\textwidth}

        \renewcommand{\dataset}{CIFAR10}
		\renewcommand{\model}{\dataset/vgg11/\batchsize/}
		\includegraphics[width=\imgS\linewidth]{\dir/\model/TrainingLoss_\cval_it_\dir}
        \renewcommand{\dataset}{CIFAR100}
		\renewcommand{\model}{\dataset/vgg11/\batchsize/}
		\includegraphics[width=\imgS\linewidth]{\dir/\model/TrainingLoss_\cval_it_\dir}
        \renewcommand{\dataset}{SVHN}
		\renewcommand{\model}{\dataset/vgg11/\batchsize/}
		\includegraphics[width=\imgS\linewidth]{\dir/\model/TrainingLoss_\cval_it_\dir}\\

        \renewcommand{\dataset}{CIFAR10}
		\renewcommand{\model}{\dataset/vgg11/\batchsize/}
		\includegraphics[width=\imgS\linewidth]{\dir/\model/TestAccuracy_\cval_it_\dir}
        \renewcommand{\dataset}{CIFAR100}
		\renewcommand{\model}{\dataset/vgg11/\batchsize/}
		\includegraphics[width=\imgS\linewidth]{\dir/\model/TestAccuracy_\cval_it_\dir}
        \renewcommand{\dataset}{SVHN}
		\renewcommand{\model}{\dataset/vgg11/\batchsize/}
		\includegraphics[width=\imgS\linewidth]{\dir/\model/TestAccuracy_\cval_it_\dir}

        \renewcommand{\dataset}{CIFAR10}
		\renewcommand{\model}{\dataset/vgg11/\batchsize/}
		\includegraphics[width=\imgS\linewidth]{\dir/\model/FinalStepSize_\cval_it_\dir}
        \renewcommand{\dataset}{CIFAR100}
		\renewcommand{\model}{\dataset/vgg11/\batchsize/}
		\includegraphics[width=\imgS\linewidth]{\dir/\model/FinalStepSize_\cval_it_\dir}
        \renewcommand{\dataset}{SVHN}
		\renewcommand{\model}{\dataset/vgg11/\batchsize/}
		\includegraphics[width=\imgS\linewidth]{\dir/\model/FinalStepSize_\cval_it_\dir}\\

        \renewcommand{\dataset}{CIFAR10}
		\renewcommand{\model}{\dataset/vgg11/\batchsize/}
		\includegraphics[width=\imgS\linewidth]{\dir/\model/MinBatchEigenvalue1_\cval_it_\dir}
        \renewcommand{\dataset}{CIFAR100}
		\renewcommand{\model}{\dataset/vgg11/\batchsize/}
		\includegraphics[width=\imgS\linewidth]{\dir/\model/MinBatchEigenvalue1_\cval_it_\dir}
        \renewcommand{\dataset}{SVHN}
		\renewcommand{\model}{\dataset/vgg11/\batchsize/}
		\includegraphics[width=\imgS\linewidth]{\dir/\model/MinBatchEigenvalue1_\cval_it_\dir}\\

        \renewcommand{\dataset}{CIFAR10}
		\renewcommand{\model}{\dataset/vgg11/\batchsize/}
		\includegraphics[width=\imgS\linewidth]{\dir/\model/AvgBatchEigenvalue1_\cval_it_\dir}
        \renewcommand{\dataset}{CIFAR100}
		\renewcommand{\model}{\dataset/vgg11/\batchsize/}
		\includegraphics[width=\imgS\linewidth]{\dir/\model/AvgBatchEigenvalue1_\cval_it_\dir}
        \renewcommand{\dataset}{SVHN}
		\renewcommand{\model}{\dataset/vgg11/\batchsize/}
		\includegraphics[width=\imgS\linewidth]{\dir/\model/AvgBatchEigenvalue1_\cval_it_\dir}\\
        
		\caption{We plot the training loss (1st Row), test accuracy (2nd Row), step size (3rd Row), minimum sharpness across all batches in each epoch (4th Row), and average sharpness across all batches in each epoch (5th row) for five different methods. We compare gradient descent with the LS, NLS, and NLS-ub line searches as well as gradient descent with the CDAT step size selection and SAM. This is repeated on the CIFAR10, CIFAR100, and SVHN datasets for the vgg11 model using batches of size 256.}\label{fig:stochastic_plot_2}
	\end{minipage}
\end{figure}

\newpage
\section{Vision Transformer Experiments}
\label{app:vision_transformers}
In this section, we discuss the experimental results of our vision transformer experiments in Figure \ref{fig:exp1_vision}. Let us first notice that in the CIFAR10 and SVHN experiments, both NLS or PoNLS are far from hitting the globally-flat value of $2/K$. For the CIFAR100 dataset, although the sharpness does not exactly hit the $2/K$ threshold as in some earlier experiments with other models given in the paper, it hovers above $2/K$ for the NLS method. At the same time, we observe that the step sizes oscillate around $K$ and that the training loss does not improve beyond randomly guessing outputs (i.e., with $K=100$, this corresponds to a loss of roughly $10^{-2}$) for many iterations. This behavior is similar to what we observed when the globally-flat regions are found on other architectures. It is also worth noticing that PoNLS encounters very small (and in fact negative values) sharpness values on some iterations when training the tinyVIT model on CIFAR100. Interestingly, we do not see this with the other architectures tested in this paper, that is, the largest eigenvalue is always positive. These differences between transformers and convolutional architectures will be addressed in future work. 

\renewcommand{\dir}{exp1}
\renewcommand{\cval}{0001}
\begin{figure}[H] 
	\centering
	\begin{minipage}{1\textwidth}
		\renewcommand{\model}{CIFAR10/tinyVIT/5000/}
		  \includegraphics[width=\imgS\linewidth]{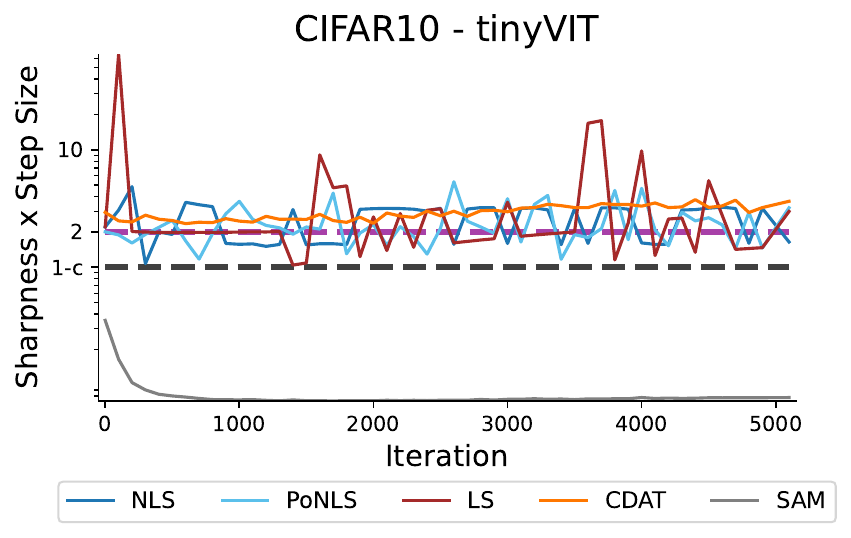}
		  \renewcommand{\model}{CIFAR100/tinyVIT/5000/}
		  \includegraphics[width=\imgS\linewidth]{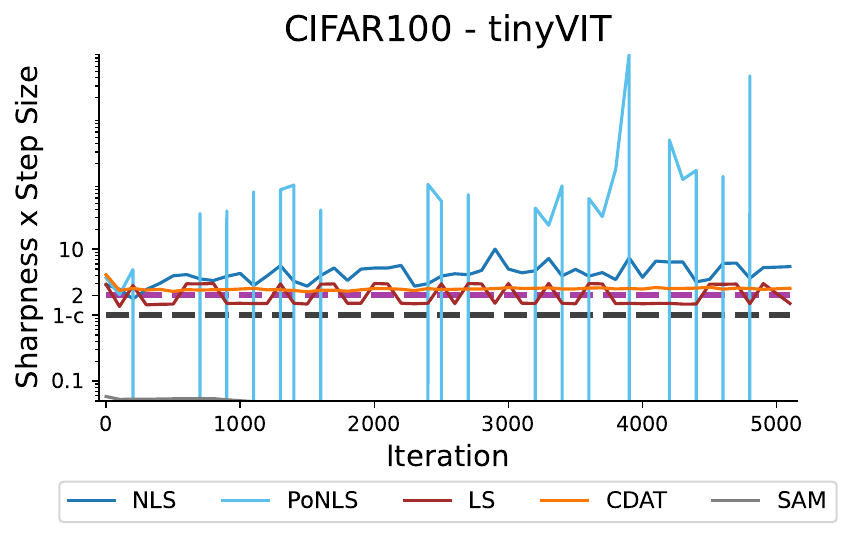}
		  \renewcommand{\model}{SVHN/tinyVIT/4994/}
		  \includegraphics[width=\imgS\linewidth]{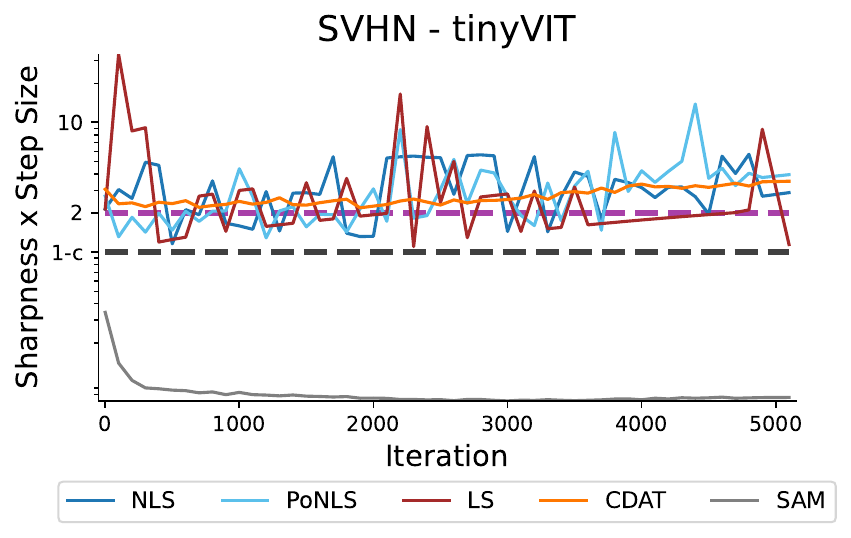}\\
		  \renewcommand{\model}{CIFAR10/tinyVIT/5000/}
		  \includegraphics[width=\imgS\linewidth]{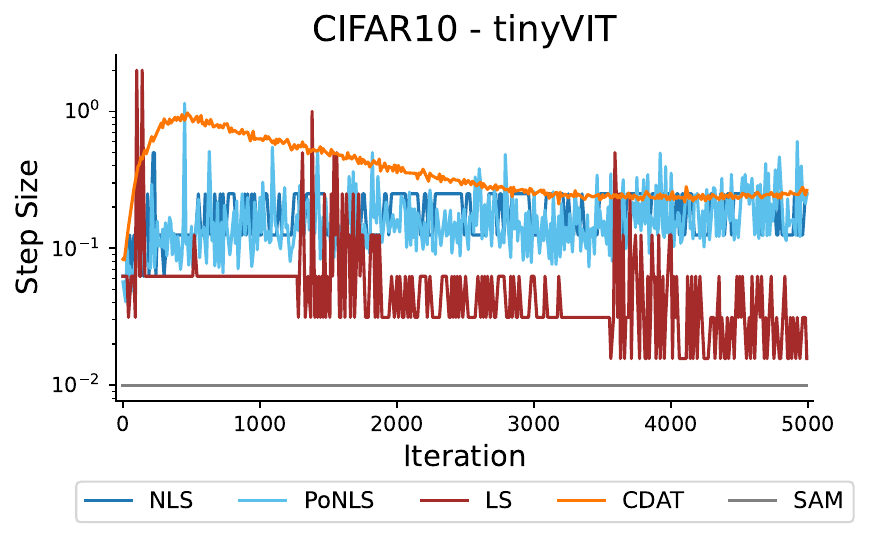}
		  \renewcommand{\model}{CIFAR100/tinyVIT/5000/}
		  \includegraphics[width=\imgS\linewidth]{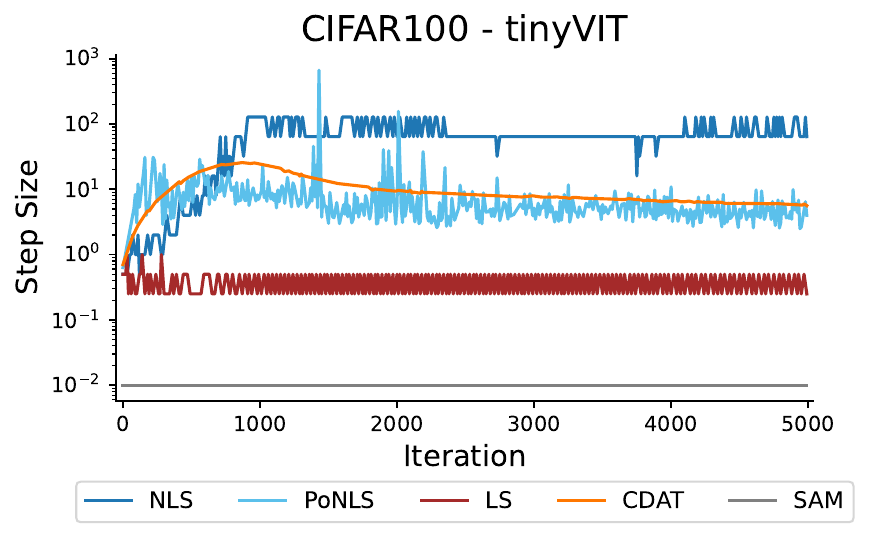}
		  \renewcommand{\model}{SVHN/tinyVIT/4994/}
		  \includegraphics[width=\imgS\linewidth]{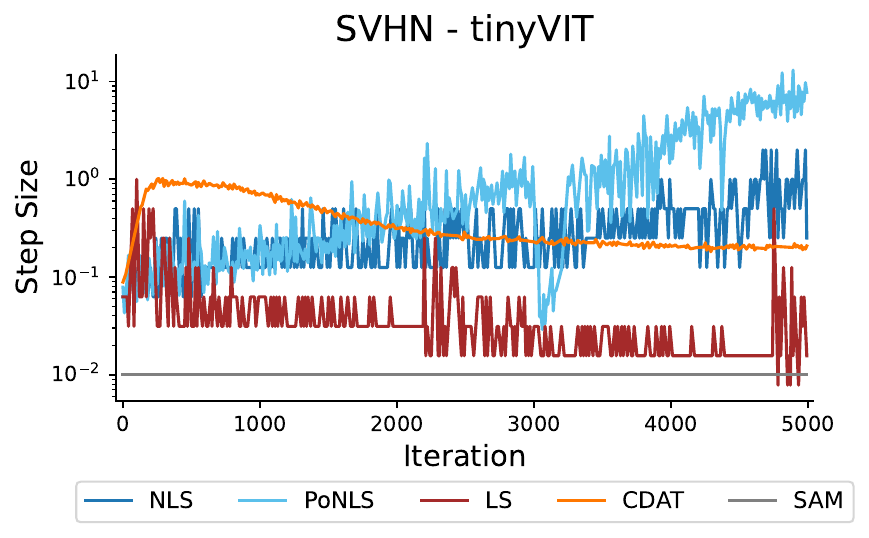}\\
		  \renewcommand{\model}{CIFAR10/tinyVIT/5000/}
		  \includegraphics[width=\imgS\linewidth]{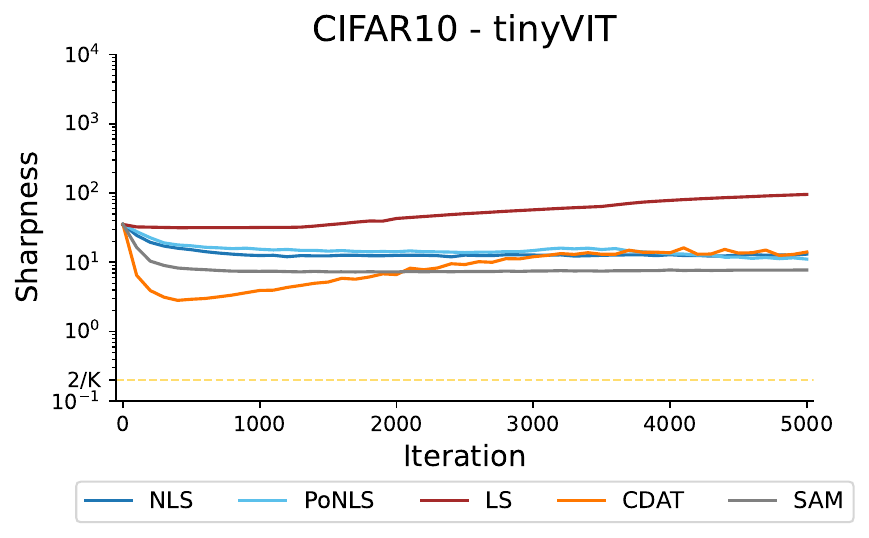}
		  \renewcommand{\model}{CIFAR100/tinyVIT/5000/}
		  \includegraphics[width=\imgS\linewidth]{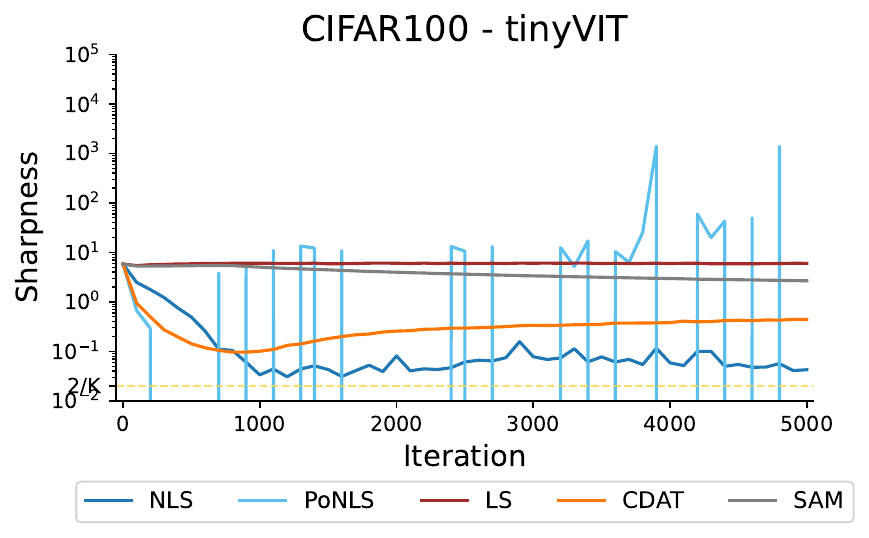}
		  \renewcommand{\model}{SVHN/tinyVIT/4994/}
		  \includegraphics[width=\imgS\linewidth]{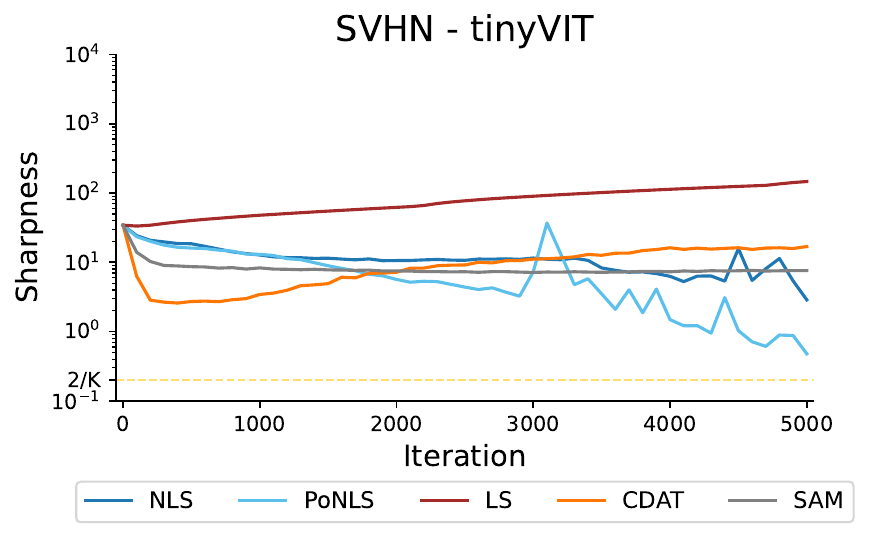}\\
		  \renewcommand{\model}{CIFAR10/tinyVIT/5000/}
		  \includegraphics[width=\imgS\linewidth]{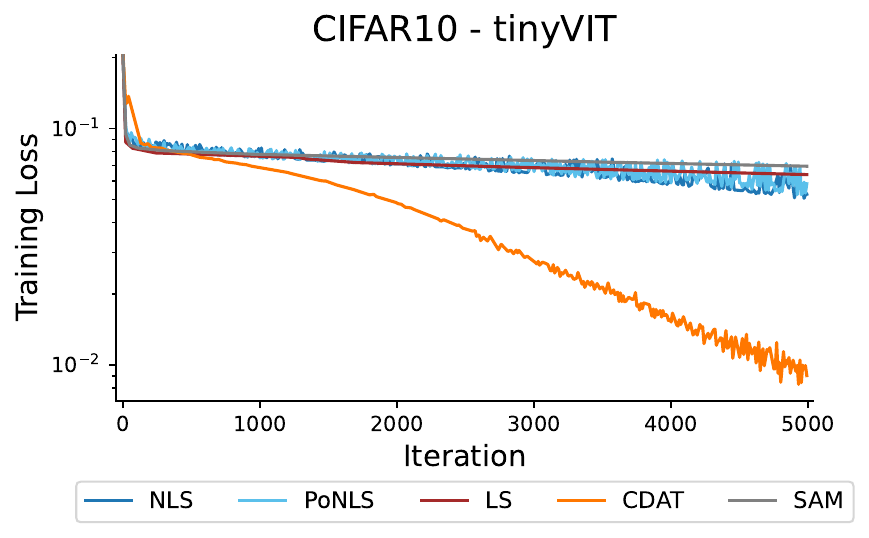}
		  \renewcommand{\model}{CIFAR100/tinyVIT/5000/}
		  \includegraphics[width=\imgS\linewidth]{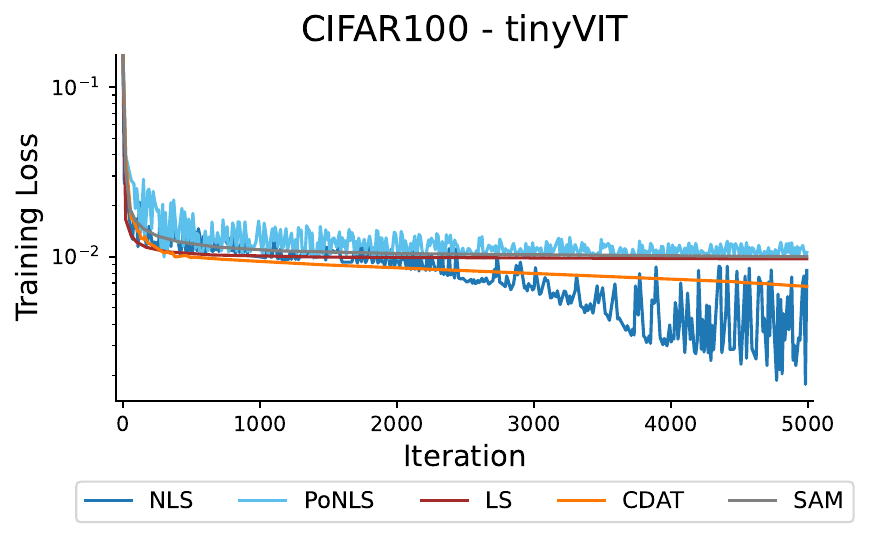}
		  \renewcommand{\model}{SVHN/tinyVIT/4994/}
		  \includegraphics[width=\imgS\linewidth]{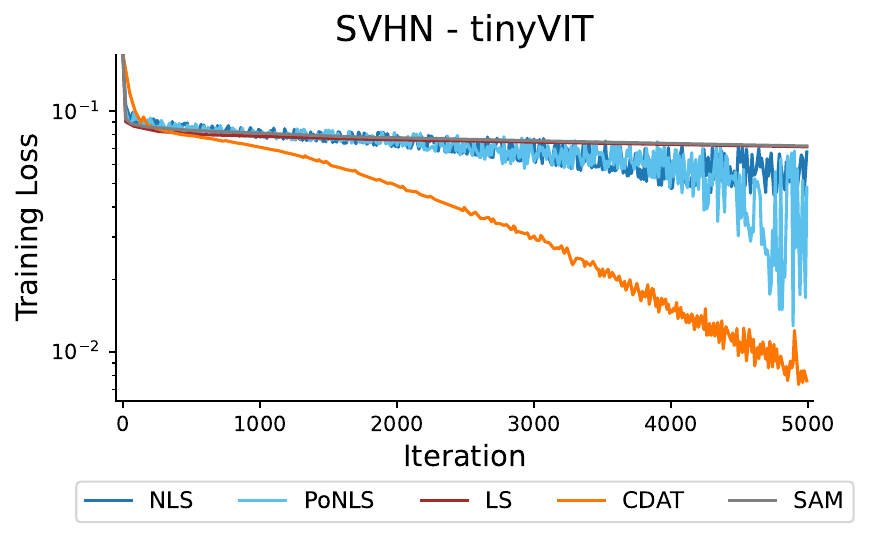}\\
		  \renewcommand{\model}{CIFAR10/tinyVIT/5000/}
		  \includegraphics[width=\imgS\linewidth]{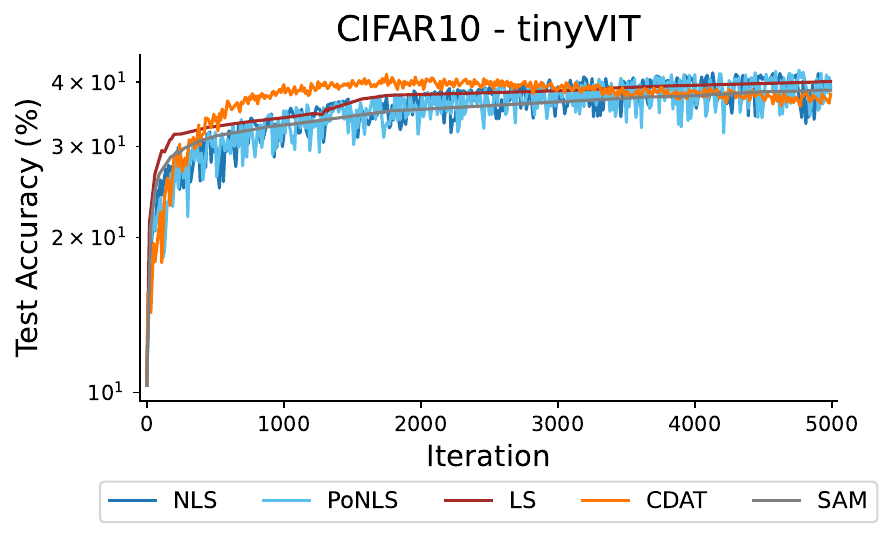}
		  \renewcommand{\model}{CIFAR100/tinyVIT/5000/}
		  \includegraphics[width=\imgS\linewidth]{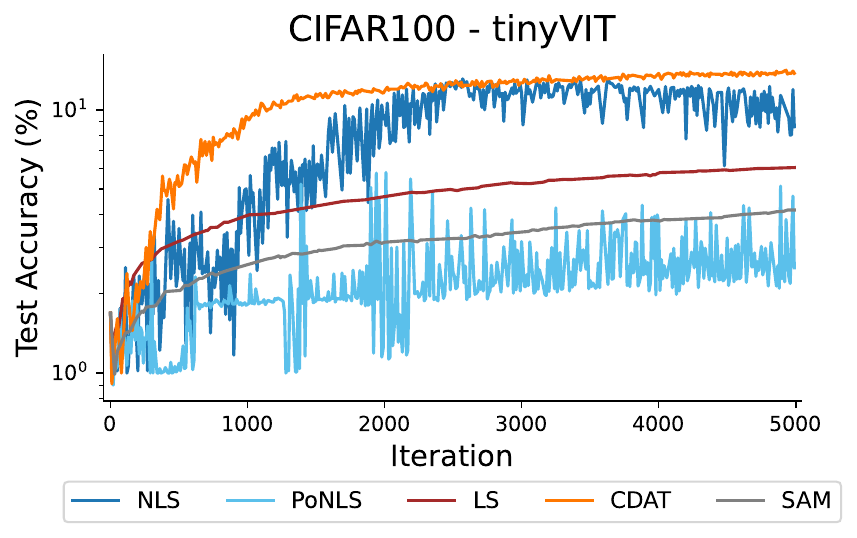}
		  \renewcommand{\model}{SVHN/tinyVIT/4994/}
		  \includegraphics[width=\imgS\linewidth]{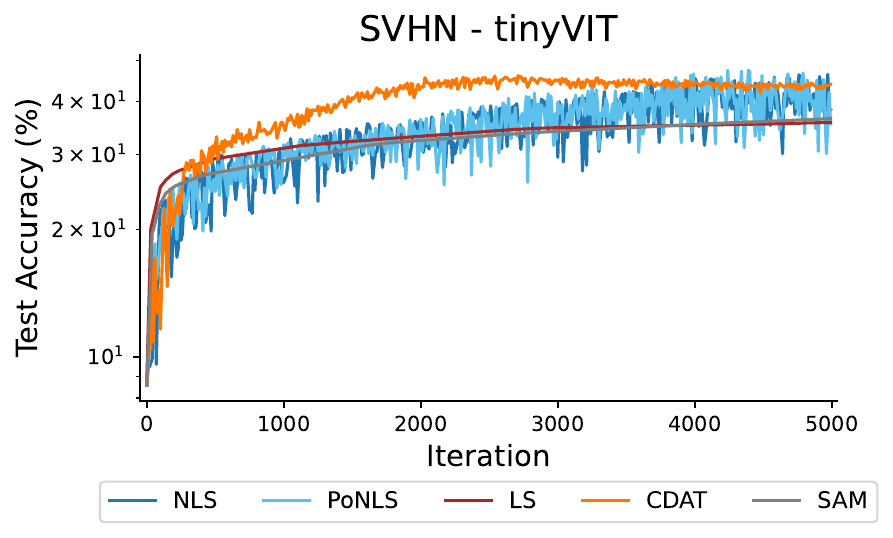}
		\caption{We plot the sharpness * step size (1st Row), step size (2nd Row), sharpness (3rd Row), training loss (4th Row), and test accuracy (5th row) for five different methods. We compare gradient descent with the LS, NLS, and PoNLS line searches as well as gradient descent with the CDAT step size selection and SAM. This is repeated for a vision transformer (tinyVIT) model on the CIFAR10, CIFAR100, and SVHN datasets.}\label{fig:exp1_vision}
	\end{minipage}
\end{figure}

\newpage
\section{Delta Ablation Experiments}
\label{app:delta_ablation_experiments}
Recall that $\eta_k^*$ is the step size for which \eqref{eq:zhang} is satisfied as an equality. From Lemma \ref{lemma:finite_termination} we know that by choosing $\delta$ arbitrarily close to $1$, we have that $\eta_k$ will be consequently close to $\eta_k^*$. In particular, it will satisfy $\eta_k^*\in (\eta_k,\frac{\eta_k}{\delta}).$ However, choosing a larger $\delta$ will increase the computational cost of finding $\eta_k$. Note that from Corollary \ref{corr:internal_iterations}, we see that choosing $\delta=0.5$ instead of $\delta=0.9$ reduces the maximum number of function evaluations of the NLS line search by a factor of $\approx7$, as this value is logarithmic with base $1/\delta$. Beyond the upper bound provided in Corollary \ref{corr:internal_iterations}, we examine the experimental differences between using NLS with $\delta=0.5$ and $\delta=0.9$ in Figure \ref{fig:exp8_CIFAR10_c0001} and Figure \ref{fig:exp8_SVHN_c0001}. In the last row of these figures, we show the number of function evaluations of the two methods, averaged over windows of size 25 to make the results more readable. We see that for both CNN experiments, using $\delta = 0.9$ generally leads to more function evaluations than using $\delta = 0.5$. For the resnet34 experiments, using $\delta = 0.9$ leads to significantly more function evaluations early in training compared to using $\delta = 0.5$, but both methods require very little evaluations afterwards. Finally, for the vgg11 experiments, again using $\delta = 0.9$ leads to significantly more function evaluations early in training. However, in later iterations the number of function evaluations for both is comparable. 

Additionally, based on Figure \ref{fig:exp8_CIFAR10_c0001} and Figure \ref{fig:exp8_SVHN_c0001}, using $\delta=0.9$ in the line search seems to push NLS to operate at the EoS since the sharpness * step size is almost always at or above the threshold of 2. This is in line with the results given in Proposition \ref{lemma:lipschitz_aware}. Another interesting observation is that when using $\delta=0.9$ instead of $\delta=0.5$, the step size is approximately $K$ and the sharpness $2/K$. In addition, the training loss stays mostly flat and the convergence is much worse than when using $\delta=0.5$. These points suggest that using $\delta=0.9$ forces NLS to get stuck in the globally flat region more often.

We conclude this section by comparing the computational costs between CDAT, SAM, and NLS. When $\delta=0.5$, the number of function evaluations per iteration is about 4 for NLS. This, together with the cost of computing one gradient (roughly twice the cost of a forward pass) sums up to 6 forward passes per iterations. This value is the same as that of SAM and CDAT. In fact, beyond computing the GD step (1 forward pass + 1 backward pass), the cost of computing the denominator of CDAT's step size in \eqref{eq:CDAT} is 3 function evaluations \citep{roulet2024a}. Instead, SAM computes 2 GD passes at each iterations \citep{foret20a}, i.e., 2 forward and 2 backward passes. We also point out that the memory requirement of computing the denominator of CDAT is approximately three times that of computing the objective function. It is worth mentioning that prior work showed that a nonmonotone line search with $\delta=0.5$ \citep{galli23a} may be tweaked to employ only one additional function evaluation (on average), bringing the total to 5 per iteration. Finally, it would be interesting to combine NLS and PoNLS with an adaptive choice of $\delta$, as suggested in \citet{cavalcanti24a}.

\renewcommand{\dir}{exp8}
\renewcommand{\dataset}{CIFAR10}
\renewcommand{\cval}{0001}
\renewcommand{\batchsize}{5000}
\begin{figure}[H] 
	\centering
	\begin{minipage}{1\textwidth}
		\renewcommand{\model}{\dataset/CNN/\batchsize/}
		\includegraphics[width=\imgS\linewidth]{\dir/\model/Sharpnessxstep_\cval_it_\dir}
		\renewcommand{\model}{\dataset/resnet34/\batchsize/}
		\includegraphics[width=\imgS\linewidth]{\dir/\model/Sharpnessxstep_\cval_it_\dir}
		\renewcommand{\model}{\dataset/vgg11/\batchsize/}
		\includegraphics[width=\imgS\linewidth]{\dir/\model/Sharpnessxstep_\cval_it_\dir}\\
		\renewcommand{\model}{\dataset/CNN/\batchsize/}
		\includegraphics[width=\imgS\linewidth]{\dir/\model/FinalStepSize_\cval_it_\dir}
		\renewcommand{\model}{\dataset/resnet34/\batchsize/}
		\includegraphics[width=\imgS\linewidth]{\dir/\model/FinalStepSize_\cval_it_\dir}
		\renewcommand{\model}{\dataset/vgg11/\batchsize/}
		\includegraphics[width=\imgS\linewidth]{\dir/\model/FinalStepSize_\cval_it_\dir}\\
		\renewcommand{\model}{\dataset/CNN/\batchsize/}
		\includegraphics[width=\imgS\linewidth]{\dir/\model/Eigenvalue1_\cval_it_\dir}
		\renewcommand{\model}{\dataset/resnet34/\batchsize/}
		\includegraphics[width=\imgS\linewidth]{\dir/\model/Eigenvalue1_\cval_it_\dir}
		\renewcommand{\model}{\dataset/vgg11/\batchsize/}
		\includegraphics[width=\imgS\linewidth]{\dir/\model/Eigenvalue1_\cval_it_\dir}\\
		\renewcommand{\model}{\dataset/CNN/\batchsize/}
		\includegraphics[width=\imgS\linewidth]{\dir/\model/TrainingLoss_\cval_it_\dir}
		\renewcommand{\model}{\dataset/resnet34/\batchsize/}
		\includegraphics[width=\imgS\linewidth]{\dir/\model/TrainingLoss_\cval_it_\dir}
		\renewcommand{\model}{\dataset/vgg11/\batchsize/}
		\includegraphics[width=\imgS\linewidth]{\dir/\model/TrainingLoss_\cval_it_\dir}\\
		\renewcommand{\model}{\dataset/CNN/\batchsize/}
		\includegraphics[width=\imgS\linewidth]{\dir/\model/FunctionEvaluations_\cval_it_\dir}
		\renewcommand{\model}{\dataset/resnet34/\batchsize/}
		\includegraphics[width=\imgS\linewidth]{\dir/\model/FunctionEvaluations_\cval_it_\dir}
		\renewcommand{\model}{\dataset/vgg11/\batchsize/}
		\includegraphics[width=\imgS\linewidth]{\dir/\model/FunctionEvaluations_\cval_it_\dir}
		\caption{We plot the sharpness * step size (1st Row), step size (2nd Row), sharpness (3rd Row), training loss (4th Row), and function evaluations (5th row) for five different methods. We compare the NLS method with $\delta = 0.5$ and $\delta = 0.9$. This is repeated for the CNN, resnet34, and vgg11 models on the CIFAR10 dataset.}\label{fig:exp8_CIFAR10_c0001}
	\end{minipage}
\end{figure}

\renewcommand{\dataset}{SVHN}
\renewcommand{\batchsize}{4994}
\begin{figure}[H] 
	\centering
	\begin{minipage}{1\textwidth}
		\renewcommand{\model}{\dataset/CNN/\batchsize/}
		\includegraphics[width=\imgS\linewidth]{\dir/\model/Sharpnessxstep_\cval_it_\dir}
		\renewcommand{\model}{\dataset/resnet34/\batchsize/}
		\includegraphics[width=\imgS\linewidth]{\dir/\model/Sharpnessxstep_\cval_it_\dir}
		\renewcommand{\model}{\dataset/vgg11/\batchsize/}
		\includegraphics[width=\imgS\linewidth]{\dir/\model/Sharpnessxstep_\cval_it_\dir}\\
		\renewcommand{\model}{\dataset/CNN/\batchsize/}
		\includegraphics[width=\imgS\linewidth]{\dir/\model/FinalStepSize_\cval_it_\dir}
		\renewcommand{\model}{\dataset/resnet34/\batchsize/}
		\includegraphics[width=\imgS\linewidth]{\dir/\model/FinalStepSize_\cval_it_\dir}
		\renewcommand{\model}{\dataset/vgg11/\batchsize/}
		\includegraphics[width=\imgS\linewidth]{\dir/\model/FinalStepSize_\cval_it_\dir}\\
		\renewcommand{\model}{\dataset/CNN/\batchsize/}
		\includegraphics[width=\imgS\linewidth]{\dir/\model/Eigenvalue1_\cval_it_\dir}
		\renewcommand{\model}{\dataset/resnet34/\batchsize/}
		\includegraphics[width=\imgS\linewidth]{\dir/\model/Eigenvalue1_\cval_it_\dir}
		\renewcommand{\model}{\dataset/vgg11/\batchsize/}
		\includegraphics[width=\imgS\linewidth]{\dir/\model/Eigenvalue1_\cval_it_\dir}\\
		\renewcommand{\model}{\dataset/CNN/\batchsize/}
		\includegraphics[width=\imgS\linewidth]{\dir/\model/TrainingLoss_\cval_it_\dir}
		\renewcommand{\model}{\dataset/resnet34/\batchsize/}
		\includegraphics[width=\imgS\linewidth]{\dir/\model/TrainingLoss_\cval_it_\dir}
		\renewcommand{\model}{\dataset/vgg11/\batchsize/}
		\includegraphics[width=\imgS\linewidth]{\dir/\model/TrainingLoss_\cval_it_\dir}\\
		\renewcommand{\model}{\dataset/CNN/\batchsize/}
		\includegraphics[width=\imgS\linewidth]{\dir/\model/FunctionEvaluations_\cval_it_\dir}
		\renewcommand{\model}{\dataset/resnet34/\batchsize/}
		\includegraphics[width=\imgS\linewidth]{\dir/\model/FunctionEvaluations_\cval_it_\dir}
		\renewcommand{\model}{\dataset/vgg11/\batchsize/}
		\includegraphics[width=\imgS\linewidth]{\dir/\model/FunctionEvaluations_\cval_it_\dir}
		\caption{We plot the sharpness * step size (1st Row), step size (2nd Row), sharpness (3rd Row), training loss (4th Row), and function evaluations (5th row) for five different methods. We compare the NLS method with $\delta = 0.5$ and $\delta = 0.9$. This is repeated for the CNN, resnet34, and vgg11 models on the SVHN dataset.}\label{fig:exp8_SVHN_c0001}
	\end{minipage}
\end{figure}

\newpage
\section{Cross Entropy Loss Experiments}
\label{app:cross_entropy_loss_experiments}

In this section, we compare the different methods discussed in Section \ref{Results} using the cross entropy loss, rather than the MSE loss. In addition to the metrics previously plotted, we also plot the gradient norm at each iteration. 

Concerning the globally-flat phenomenon, the main difference between MSE loss and cross-entropy loss is the shape of the sub-Hessian w.r.t. the bias of the last layer. As calculated in Lemma \ref{lemma:hessian_CE}, given the output $n_x(w)$ of the last layer of the network and the softmax renormalization $p_j=\frac{e^{n_x(w)_j}}{\sum_{k=1}^K e^{n_x(w)_k}}, j=1,\dots, K$  applied to this output, the Hessian w.r.t. the last bias is $\nabla_b^2 f(w) = \left(\text{diag}(p)-pp^T\right)$, where $p$ is the $K$-dimensional vector with entries $p_j$. In particular, the diagonal entries of $\nabla_b^2 f(w)$ are not lower bounded by the constant value $2/K$ unlike for the MSE loss, but they may all become 0. For instance, if $p=e_i$, i.e., the canonical basis vector with 1 in the entry $i$ and 0 everywhere else, all the entries of this sub-Hessian are 0. In other words, the lower bound of the sharpness of the cross-entropy loss of this sub-Hessian is 0, and not $2/K$. In fact, from the third row of Figures \ref{fig:exp1_CIFAR10_c0001_ce_1}-\ref{fig:exp1_EMNIST_c0001_ce_2} we observe that NLS and PoNLS sometimes suddenly bring the sharpness to very small values (e.g., $10^{-4}$ for CIFAR10$\times$vgg11 for PoNLS), and slowly decrease it. On the other hand, in some of the other experiments the sharpness is not as small but is approximately constant for a portion of or most of training (e.g., $10^{-2}$ on CIFAR10$\times$resnet34 for PoNLS and NLS). While in the first case the corresponding value of the test accuracy is usually higher (around $40\%$ to $50\%$), in the second case the model is randomly guessing outputs and the test accuracy is low (around $10\%$) as long as the sharpness remains close to constant. Note that in the latter case, the expected sharpness for a uniform probability $p_j=\frac{1}{K}$ is indeed $\frac{1}{K}$. 
Overall, our discussion and experiments show that in the cross entropy setting, the NLS and PoNLS runs can also be distinguished into the same two classes as the MSE setting, successfully- and unsuccessfully-flat trainings, with a global minimum value of the sharpness that is now 0 and not $2/K$.

Interestingly enough, the points where the sharpness reaches values that are remarkably below $\frac{1}{K}$ are the same points in which both the training loss and the gradient norm reach very small values. In other words, these points are globally-flat global minimizers of the training loss\footnote{To be more precise, the global minimizers of the cross entropy losses are not attained.}. 
Additionally, for all these points with a very small gradient norm, the segment smoothness function $l(w_k,\eta_{w_k})$ is no longer meaningful. In fact, given $w^*$ such that $\grad(w^*)=0$, the inequality $\|\grad(w^*)-\grad(w^*-\eta\grad(w^*))\|\leq l\|w^* -w^* + \eta \grad(w^*)\|$ holds trivially ($0\leq 0$). In these cases, it is reasonable to observe from the first row of Figures \ref{fig:exp1_CIFAR10_c0001_ce_1}-\ref{fig:exp1_EMNIST_c0001_ce_2} that the product $\eta_k\lambda_1(H(w_k))$ is below $1-c,$ despite Proposition \ref{lemma:lipschitz_aware}. 

In the comparison between using cross entropy or MSE loss for classification tasks, our results agree with those of \citet{hui21a} which suggests that there is very little numerical evidence for the advantages of using the former over the latter. In our benchmark, for the experiments involving vgg11 models, the MSE loss achieves higher test accuracy than using the cross-entropy loss, while in the case of resnet34 and densenet121 it is the cross-entropy loss that works better than using the MSE loss.

To conclude, let us focus on the CIFAR10$\times$CNN experiment. Between the test accuracy of CDAT and that of NLS (or PoNLS), there are roughly 5 to 10 percentage points. On the other hand, NLS and PoNLS globally minimize both sharpness and training loss, while CDAT does not achieve either the lowest training loss nor the lowest sharpness. In cases like the one just described, it seems obvious that the joint use of training loss and sharpness is not a good proxy for the test accuracy. However, considering classical approximation theory, it is possible that using a data subset of 5000 data points is not large enough to accurately approximate the underlying distribution. In other words, the responsibility of solving this aspect of the risk minimization problem does not lie in the hands of the optimization method.

\renewcommand{\dir}{exp1}
\renewcommand{\cval}{0001}
\renewcommand{\dataset}{CIFAR10}
\renewcommand{\batchsize}{5000}
\begin{figure}[H] 
	\centering
	\begin{minipage}{1\textwidth}
		\renewcommand{\model}{\dataset/CNN/\batchsize/}
		\includegraphics[width=\imgS\linewidth]{\dir_ce/\model/Sharpnessxstep_\cval_ce_it_\dir}
		\renewcommand{\model}{\dataset/resnet34/\batchsize/}
		\includegraphics[width=\imgS\linewidth]{\dir_ce/\model/Sharpnessxstep_\cval_ce_it_\dir}
		\renewcommand{\model}{\dataset/vgg11/\batchsize/}
		\includegraphics[width=\imgS\linewidth]{\dir_ce/\model/Sharpnessxstep_\cval_ce_it_\dir}\\
		\renewcommand{\model}{\dataset/CNN/\batchsize/}
		\includegraphics[width=\imgS\linewidth]{\dir_ce/\model/FinalStepSize_\cval_ce_it_\dir}
		\renewcommand{\model}{\dataset/resnet34/\batchsize/}
		\includegraphics[width=\imgS\linewidth]{\dir_ce/\model/FinalStepSize_\cval_ce_it_\dir}
		\renewcommand{\model}{\dataset/vgg11/\batchsize/}
		\includegraphics[width=\imgS\linewidth]{\dir_ce/\model/FinalStepSize_\cval_ce_it_\dir}\\
		\renewcommand{\model}{\dataset/CNN/\batchsize/}
		\includegraphics[width=\imgS\linewidth]{\dir_ce/\model/Eigenvalue1_\cval_ce_it_\dir}
		\renewcommand{\model}{\dataset/resnet34/\batchsize/}
		\includegraphics[width=\imgS\linewidth]{\dir_ce/\model/Eigenvalue1_\cval_ce_it_\dir}
		\renewcommand{\model}{\dataset/vgg11/\batchsize/}
		\includegraphics[width=\imgS\linewidth]{\dir_ce/\model/Eigenvalue1_\cval_ce_it_\dir}\\
		\renewcommand{\model}{\dataset/CNN/\batchsize/}
		\includegraphics[width=\imgS\linewidth]{\dir_ce/\model/TrainingLoss_\cval_ce_it_\dir}
		\renewcommand{\model}{\dataset/resnet34/\batchsize/}
		\includegraphics[width=\imgS\linewidth]{\dir_ce/\model/TrainingLoss_\cval_ce_it_\dir}
		\renewcommand{\model}{\dataset/vgg11/\batchsize/}
		\includegraphics[width=\imgS\linewidth]{\dir_ce/\model/TrainingLoss_\cval_ce_it_\dir}\\
		\renewcommand{\model}{\dataset/CNN/\batchsize/}
		\includegraphics[width=\imgS\linewidth]{\dir_ce/\model/TestAccuracy_\cval_ce_it_\dir}
		\renewcommand{\model}{\dataset/resnet34/\batchsize/}
		\includegraphics[width=\imgS\linewidth]{\dir_ce/\model/TestAccuracy_\cval_ce_it_\dir}
		\renewcommand{\model}{\dataset/vgg11/\batchsize/}
		\includegraphics[width=\imgS\linewidth]{\dir_ce/\model/TestAccuracy_\cval_ce_it_\dir}
        \renewcommand{\model}{\dataset/CNN/\batchsize/}
		\includegraphics[width=\imgS\linewidth]{\dir_ce/\model/GradientNorm_\cval_ce_it_\dir}
		\renewcommand{\model}{\dataset/resnet34/\batchsize/}
		\includegraphics[width=\imgS\linewidth]{\dir_ce/\model/GradientNorm_\cval_ce_it_\dir}
		\renewcommand{\model}{\dataset/vgg11/\batchsize/}
		\includegraphics[width=\imgS\linewidth]{\dir_ce/\model/GradientNorm_\cval_ce_it_\dir}
		\caption{We plot the sharpness * step size (1st Row), step size (2nd Row), sharpness (3rd Row), training loss (4th Row), test accuracy (5th Row), and gradient norm (6th Row) for five different methods. We compare gradient descent with the LS, NLS, and PoNLS line searches as well as gradient descent with the CDAT step size selection and SAM. This is repeated for the CNN, resnet34, and vgg11 models on the CIFAR10 dataset using the cross entropy loss.}\label{fig:exp1_CIFAR10_c0001_ce_1}
	\end{minipage}
\end{figure}

\begin{figure}[H] 
	\centering
	\begin{minipage}{1\textwidth}
		\renewcommand{\model}{\dataset/MLP/\batchsize/}
		\includegraphics[width=\imgS\linewidth]{\dir_ce/\model/Sharpnessxstep_\cval_ce_it_\dir}
		\renewcommand{\model}{\dataset/densenet121/\batchsize/}
		\includegraphics[width=\imgS\linewidth]{\dir_ce/\model/Sharpnessxstep_\cval_ce_it_\dir}
		\renewcommand{\model}{\dataset/wide_resnet50_2/\batchsize/}
		\includegraphics[width=\imgS\linewidth]{\dir_ce/\model/Sharpnessxstep_\cval_ce_it_\dir}\\
		\renewcommand{\model}{\dataset/MLP/\batchsize/}
		\includegraphics[width=\imgS\linewidth]{\dir_ce/\model/FinalStepSize_\cval_ce_it_\dir}
		\renewcommand{\model}{\dataset/densenet121/\batchsize/}
		\includegraphics[width=\imgS\linewidth]{\dir_ce/\model/FinalStepSize_\cval_ce_it_\dir}
		\renewcommand{\model}{\dataset/wide_resnet50_2/\batchsize/}
		\includegraphics[width=\imgS\linewidth]{\dir_ce/\model/FinalStepSize_\cval_ce_it_\dir}\\
		\renewcommand{\model}{\dataset/MLP/\batchsize/}
		\includegraphics[width=\imgS\linewidth]{\dir_ce/\model/Eigenvalue1_\cval_ce_it_\dir}
		\renewcommand{\model}{\dataset/densenet121/\batchsize/}
		\includegraphics[width=\imgS\linewidth]{\dir_ce/\model/Eigenvalue1_\cval_ce_it_\dir}
		\renewcommand{\model}{\dataset/wide_resnet50_2/\batchsize/}
		\includegraphics[width=\imgS\linewidth]{\dir_ce/\model/Eigenvalue1_\cval_ce_it_\dir}\\
		\renewcommand{\model}{\dataset/MLP/\batchsize/}
		\includegraphics[width=\imgS\linewidth]{\dir_ce/\model/TrainingLoss_\cval_ce_it_\dir}
		\renewcommand{\model}{\dataset/densenet121/\batchsize/}
		\includegraphics[width=\imgS\linewidth]{\dir_ce/\model/TrainingLoss_\cval_ce_it_\dir}
		\renewcommand{\model}{\dataset/wide_resnet50_2/\batchsize/}
		\includegraphics[width=\imgS\linewidth]{\dir_ce/\model/TrainingLoss_\cval_ce_it_\dir}\\
		\renewcommand{\model}{\dataset/MLP/\batchsize/}
		\includegraphics[width=\imgS\linewidth]{\dir_ce/\model/TestAccuracy_\cval_ce_it_\dir}
		\renewcommand{\model}{\dataset/densenet121/\batchsize/}
		\includegraphics[width=\imgS\linewidth]{\dir_ce/\model/TestAccuracy_\cval_ce_it_\dir}
		\renewcommand{\model}{\dataset/wide_resnet50_2/\batchsize/}
		\includegraphics[width=\imgS\linewidth]{\dir_ce/\model/TestAccuracy_\cval_ce_it_\dir}\\
		\renewcommand{\model}{\dataset/MLP/\batchsize/}
		\includegraphics[width=\imgS\linewidth]{\dir_ce/\model/GradientNorm_\cval_ce_it_\dir}
		\renewcommand{\model}{\dataset/densenet121/\batchsize/}
		\includegraphics[width=\imgS\linewidth]{\dir_ce/\model/GradientNorm_\cval_ce_it_\dir}
		\renewcommand{\model}{\dataset/wide_resnet50_2/\batchsize/}
		\includegraphics[width=\imgS\linewidth]{\dir_ce/\model/GradientNorm_\cval_ce_it_\dir}
		\caption{We plot the sharpness * step size (1st Row), step size (2nd Row), sharpness (3rd Row), training loss (4th Row), test accuracy (5th Row), and gradient norm (6th Row) for five different methods. We compare gradient descent with the LS, NLS, and PoNLS line searches as well as gradient descent with the CDAT step size selection and SAM. This is repeated for the MLP, densenet121, and wide$\_$resnet50$\_$2 models on the CIFAR10 dataset using the cross entropy loss.}\label{fig:exp1_CIFAR10_c0001_ce_2}
	\end{minipage}
\end{figure}

\renewcommand{\dataset}{CIFAR100}
\renewcommand{\batchsize}{5000}
\begin{figure}[H] 
	\centering
	\begin{minipage}{1\textwidth}
		\renewcommand{\model}{\dataset/CNN/\batchsize/}
		\includegraphics[width=\imgS\linewidth]{\dir_ce/\model/Sharpnessxstep_\cval_ce_it_\dir}
		\renewcommand{\model}{\dataset/resnet34/\batchsize/}
		\includegraphics[width=\imgS\linewidth]{\dir_ce/\model/Sharpnessxstep_\cval_ce_it_\dir}
		\renewcommand{\model}{\dataset/vgg11/\batchsize/}
		\includegraphics[width=\imgS\linewidth]{\dir_ce/\model/Sharpnessxstep_\cval_ce_it_\dir}\\
		\renewcommand{\model}{\dataset/CNN/\batchsize/}
		\includegraphics[width=\imgS\linewidth]{\dir_ce/\model/FinalStepSize_\cval_ce_it_\dir}
		\renewcommand{\model}{\dataset/resnet34/\batchsize/}
		\includegraphics[width=\imgS\linewidth]{\dir_ce/\model/FinalStepSize_\cval_ce_it_\dir}
		\renewcommand{\model}{\dataset/vgg11/\batchsize/}
		\includegraphics[width=\imgS\linewidth]{\dir_ce/\model/FinalStepSize_\cval_ce_it_\dir}\\
		\renewcommand{\model}{\dataset/CNN/\batchsize/}
		\includegraphics[width=\imgS\linewidth]{\dir_ce/\model/Eigenvalue1_\cval_ce_it_\dir}
		\renewcommand{\model}{\dataset/resnet34/\batchsize/}
		\includegraphics[width=\imgS\linewidth]{\dir_ce/\model/Eigenvalue1_\cval_ce_it_\dir}
		\renewcommand{\model}{\dataset/vgg11/\batchsize/}
		\includegraphics[width=\imgS\linewidth]{\dir_ce/\model/Eigenvalue1_\cval_ce_it_\dir}\\
		\renewcommand{\model}{\dataset/CNN/\batchsize/}
		\includegraphics[width=\imgS\linewidth]{\dir_ce/\model/TrainingLoss_\cval_ce_it_\dir}
		\renewcommand{\model}{\dataset/resnet34/\batchsize/}
		\includegraphics[width=\imgS\linewidth]{\dir_ce/\model/TrainingLoss_\cval_ce_it_\dir}
		\renewcommand{\model}{\dataset/vgg11/\batchsize/}
		\includegraphics[width=\imgS\linewidth]{\dir_ce/\model/TrainingLoss_\cval_ce_it_\dir}\\
		\renewcommand{\model}{\dataset/CNN/\batchsize/}
		\includegraphics[width=\imgS\linewidth]{\dir_ce/\model/TestAccuracy_\cval_ce_it_\dir}
		\renewcommand{\model}{\dataset/resnet34/\batchsize/}
		\includegraphics[width=\imgS\linewidth]{\dir_ce/\model/TestAccuracy_\cval_ce_it_\dir}
		\renewcommand{\model}{\dataset/vgg11/\batchsize/}
		\includegraphics[width=\imgS\linewidth]{\dir_ce/\model/TestAccuracy_\cval_ce_it_\dir}
        \renewcommand{\model}{\dataset/CNN/\batchsize/}
		\includegraphics[width=\imgS\linewidth]{\dir_ce/\model/GradientNorm_\cval_ce_it_\dir}
		\renewcommand{\model}{\dataset/resnet34/\batchsize/}
		\includegraphics[width=\imgS\linewidth]{\dir_ce/\model/GradientNorm_\cval_ce_it_\dir}
		\renewcommand{\model}{\dataset/vgg11/\batchsize/}
		\includegraphics[width=\imgS\linewidth]{\dir_ce/\model/GradientNorm_\cval_ce_it_\dir}
		\caption{We plot the sharpness * step size (1st Row), step size (2nd Row), sharpness (3rd Row), training loss (4th Row), test accuracy (5th Row), and gradient norm (6th Row) for five different methods. We compare gradient descent with the LS, NLS, and PoNLS line searches as well as gradient descent with the CDAT step size selection and SAM. This is repeated for the CNN, resnet34, and vgg11 models on the CIFAR100 dataset using the cross entropy loss.}\label{fig:exp1_CIFAR100_c0001_ce_1}
	\end{minipage}
\end{figure}

\begin{figure}[H] 
	\centering
	\begin{minipage}{1\textwidth}
		\renewcommand{\model}{\dataset/MLP/\batchsize/}
		\includegraphics[width=\imgS\linewidth]{\dir_ce/\model/Sharpnessxstep_\cval_ce_it_\dir}
		\renewcommand{\model}{\dataset/densenet121/\batchsize/}
		\includegraphics[width=\imgS\linewidth]{\dir_ce/\model/Sharpnessxstep_\cval_ce_it_\dir}
		\renewcommand{\model}{\dataset/wide_resnet50_2/\batchsize/}
		\includegraphics[width=\imgS\linewidth]{\dir_ce/\model/Sharpnessxstep_\cval_ce_it_\dir}\\
		\renewcommand{\model}{\dataset/MLP/\batchsize/}
		\includegraphics[width=\imgS\linewidth]{\dir_ce/\model/FinalStepSize_\cval_ce_it_\dir}
		\renewcommand{\model}{\dataset/densenet121/\batchsize/}
		\includegraphics[width=\imgS\linewidth]{\dir_ce/\model/FinalStepSize_\cval_ce_it_\dir}
		\renewcommand{\model}{\dataset/wide_resnet50_2/\batchsize/}
		\includegraphics[width=\imgS\linewidth]{\dir_ce/\model/FinalStepSize_\cval_ce_it_\dir}\\
		\renewcommand{\model}{\dataset/MLP/\batchsize/}
		\includegraphics[width=\imgS\linewidth]{\dir_ce/\model/Eigenvalue1_\cval_ce_it_\dir}
		\renewcommand{\model}{\dataset/densenet121/\batchsize/}
		\includegraphics[width=\imgS\linewidth]{\dir_ce/\model/Eigenvalue1_\cval_ce_it_\dir}
		\renewcommand{\model}{\dataset/wide_resnet50_2/\batchsize/}
		\includegraphics[width=\imgS\linewidth]{\dir_ce/\model/Eigenvalue1_\cval_ce_it_\dir}\\
		\renewcommand{\model}{\dataset/MLP/\batchsize/}
		\includegraphics[width=\imgS\linewidth]{\dir_ce/\model/TrainingLoss_\cval_ce_it_\dir}
		\renewcommand{\model}{\dataset/densenet121/\batchsize/}
		\includegraphics[width=\imgS\linewidth]{\dir_ce/\model/TrainingLoss_\cval_ce_it_\dir}
		\renewcommand{\model}{\dataset/wide_resnet50_2/\batchsize/}
		\includegraphics[width=\imgS\linewidth]{\dir_ce/\model/TrainingLoss_\cval_ce_it_\dir}\\
		\renewcommand{\model}{\dataset/MLP/\batchsize/}
		\includegraphics[width=\imgS\linewidth]{\dir_ce/\model/TestAccuracy_\cval_ce_it_\dir}
		\renewcommand{\model}{\dataset/densenet121/\batchsize/}
		\includegraphics[width=\imgS\linewidth]{\dir_ce/\model/TestAccuracy_\cval_ce_it_\dir}
		\renewcommand{\model}{\dataset/wide_resnet50_2/\batchsize/}
		\includegraphics[width=\imgS\linewidth]{\dir_ce/\model/TestAccuracy_\cval_ce_it_\dir}\\
		\renewcommand{\model}{\dataset/MLP/\batchsize/}
		\includegraphics[width=\imgS\linewidth]{\dir_ce/\model/GradientNorm_\cval_ce_it_\dir}
		\renewcommand{\model}{\dataset/densenet121/\batchsize/}
		\includegraphics[width=\imgS\linewidth]{\dir_ce/\model/GradientNorm_\cval_ce_it_\dir}
		\renewcommand{\model}{\dataset/wide_resnet50_2/\batchsize/}
		\includegraphics[width=\imgS\linewidth]{\dir_ce/\model/GradientNorm_\cval_ce_it_\dir}
		\caption{We plot the sharpness * step size (1st Row), step size (2nd Row), sharpness (3rd Row), training loss (4th Row), test accuracy (5th Row), and gradient norm (6th Row) for five different methods. We compare gradient descent with the LS, NLS, and PoNLS line searches as well as gradient descent with the CDAT step size selection and SAM. This is repeated for the MLP, densenet121, and wide$\_$resnet50$\_$2 models on the CIFAR100 dataset using the cross entropy loss.}\label{fig:exp1_CIFAR100_c0001_ce_2}
	\end{minipage}
\end{figure}

\renewcommand{\dataset}{SVHN}
\renewcommand{\batchsize}{4994}
\begin{figure}[H] 
	\centering
	\begin{minipage}{1\textwidth}
		\renewcommand{\model}{\dataset/CNN/\batchsize/}
		\includegraphics[width=\imgS\linewidth]{\dir_ce/\model/Sharpnessxstep_\cval_ce_it_\dir}
		\renewcommand{\model}{\dataset/resnet34/\batchsize/}
		\includegraphics[width=\imgS\linewidth]{\dir_ce/\model/Sharpnessxstep_\cval_ce_it_\dir}
		\renewcommand{\model}{\dataset/vgg11/\batchsize/}
		\includegraphics[width=\imgS\linewidth]{\dir_ce/\model/Sharpnessxstep_\cval_ce_it_\dir}\\
		\renewcommand{\model}{\dataset/CNN/\batchsize/}
		\includegraphics[width=\imgS\linewidth]{\dir_ce/\model/FinalStepSize_\cval_ce_it_\dir}
		\renewcommand{\model}{\dataset/resnet34/\batchsize/}
		\includegraphics[width=\imgS\linewidth]{\dir_ce/\model/FinalStepSize_\cval_ce_it_\dir}
		\renewcommand{\model}{\dataset/vgg11/\batchsize/}
		\includegraphics[width=\imgS\linewidth]{\dir_ce/\model/FinalStepSize_\cval_ce_it_\dir}\\
		\renewcommand{\model}{\dataset/CNN/\batchsize/}
		\includegraphics[width=\imgS\linewidth]{\dir_ce/\model/Eigenvalue1_\cval_ce_it_\dir}
		\renewcommand{\model}{\dataset/resnet34/\batchsize/}
		\includegraphics[width=\imgS\linewidth]{\dir_ce/\model/Eigenvalue1_\cval_ce_it_\dir}
		\renewcommand{\model}{\dataset/vgg11/\batchsize/}
		\includegraphics[width=\imgS\linewidth]{\dir_ce/\model/Eigenvalue1_\cval_ce_it_\dir}\\
		\renewcommand{\model}{\dataset/CNN/\batchsize/}
		\includegraphics[width=\imgS\linewidth]{\dir_ce/\model/TrainingLoss_\cval_ce_it_\dir}
		\renewcommand{\model}{\dataset/resnet34/\batchsize/}
		\includegraphics[width=\imgS\linewidth]{\dir_ce/\model/TrainingLoss_\cval_ce_it_\dir}
		\renewcommand{\model}{\dataset/vgg11/\batchsize/}
		\includegraphics[width=\imgS\linewidth]{\dir_ce/\model/TrainingLoss_\cval_ce_it_\dir}\\
		\renewcommand{\model}{\dataset/CNN/\batchsize/}
		\includegraphics[width=\imgS\linewidth]{\dir_ce/\model/TestAccuracy_\cval_ce_it_\dir}
		\renewcommand{\model}{\dataset/resnet34/\batchsize/}
		\includegraphics[width=\imgS\linewidth]{\dir_ce/\model/TestAccuracy_\cval_ce_it_\dir}
		\renewcommand{\model}{\dataset/vgg11/\batchsize/}
		\includegraphics[width=\imgS\linewidth]{\dir_ce/\model/TestAccuracy_\cval_ce_it_\dir}
        \renewcommand{\model}{\dataset/CNN/\batchsize/}
		\includegraphics[width=\imgS\linewidth]{\dir_ce/\model/GradientNorm_\cval_ce_it_\dir}
		\renewcommand{\model}{\dataset/resnet34/\batchsize/}
		\includegraphics[width=\imgS\linewidth]{\dir_ce/\model/GradientNorm_\cval_ce_it_\dir}
		\renewcommand{\model}{\dataset/vgg11/\batchsize/}
		\includegraphics[width=\imgS\linewidth]{\dir_ce/\model/GradientNorm_\cval_ce_it_\dir}
		\caption{We plot the sharpness * step size (1st Row), step size (2nd Row), sharpness (3rd Row), training loss (4th Row), test accuracy (5th Row), and gradient norm (6th Row) for five different methods.. We compare gradient descent with the LS, NLS, and PoNLS line searches as well as gradient descent with the CDAT step size selection and SAM. This is repeated for the CNN, resnet34, and vgg11 models on the SVHN dataset using the cross entropy loss.}\label{fig:exp1_SVHN_c0001_ce_1}
	\end{minipage}
\end{figure}

\begin{figure}[H] 
	\centering
	\begin{minipage}{1\textwidth}
		\renewcommand{\model}{\dataset/MLP/\batchsize/}
		\includegraphics[width=\imgS\linewidth]{\dir_ce/\model/Sharpnessxstep_\cval_ce_it_\dir}
		\renewcommand{\model}{\dataset/densenet121/\batchsize/}
		\includegraphics[width=\imgS\linewidth]{\dir_ce/\model/Sharpnessxstep_\cval_ce_it_\dir}
		\renewcommand{\model}{\dataset/wide_resnet50_2/\batchsize/}
		\includegraphics[width=\imgS\linewidth]{\dir_ce/\model/Sharpnessxstep_\cval_ce_it_\dir}\\
		\renewcommand{\model}{\dataset/MLP/\batchsize/}
		\includegraphics[width=\imgS\linewidth]{\dir_ce/\model/FinalStepSize_\cval_ce_it_\dir}
		\renewcommand{\model}{\dataset/densenet121/\batchsize/}
		\includegraphics[width=\imgS\linewidth]{\dir_ce/\model/FinalStepSize_\cval_ce_it_\dir}
		\renewcommand{\model}{\dataset/wide_resnet50_2/\batchsize/}
		\includegraphics[width=\imgS\linewidth]{\dir_ce/\model/FinalStepSize_\cval_ce_it_\dir}\\
		\renewcommand{\model}{\dataset/MLP/\batchsize/}
		\includegraphics[width=\imgS\linewidth]{\dir_ce/\model/Eigenvalue1_\cval_ce_it_\dir}
		\renewcommand{\model}{\dataset/densenet121/\batchsize/}
		\includegraphics[width=\imgS\linewidth]{\dir_ce/\model/Eigenvalue1_\cval_ce_it_\dir}
		\renewcommand{\model}{\dataset/wide_resnet50_2/\batchsize/}
		\includegraphics[width=\imgS\linewidth]{\dir_ce/\model/Eigenvalue1_\cval_ce_it_\dir}\\
		\renewcommand{\model}{\dataset/MLP/\batchsize/}
		\includegraphics[width=\imgS\linewidth]{\dir_ce/\model/TrainingLoss_\cval_ce_it_\dir}
		\renewcommand{\model}{\dataset/densenet121/\batchsize/}
		\includegraphics[width=\imgS\linewidth]{\dir_ce/\model/TrainingLoss_\cval_ce_it_\dir}
		\renewcommand{\model}{\dataset/wide_resnet50_2/\batchsize/}
		\includegraphics[width=\imgS\linewidth]{\dir_ce/\model/TrainingLoss_\cval_ce_it_\dir}\\
		\renewcommand{\model}{\dataset/MLP/\batchsize/}
		\includegraphics[width=\imgS\linewidth]{\dir_ce/\model/TestAccuracy_\cval_ce_it_\dir}
		\renewcommand{\model}{\dataset/densenet121/\batchsize/}
		\includegraphics[width=\imgS\linewidth]{\dir_ce/\model/TestAccuracy_\cval_ce_it_\dir}
		\renewcommand{\model}{\dataset/wide_resnet50_2/\batchsize/}
		\includegraphics[width=\imgS\linewidth]{\dir_ce/\model/TestAccuracy_\cval_ce_it_\dir}\\
		\renewcommand{\model}{\dataset/MLP/\batchsize/}
		\includegraphics[width=\imgS\linewidth]{\dir_ce/\model/GradientNorm_\cval_ce_it_\dir}
		\renewcommand{\model}{\dataset/densenet121/\batchsize/}
		\includegraphics[width=\imgS\linewidth]{\dir_ce/\model/GradientNorm_\cval_ce_it_\dir}
		\renewcommand{\model}{\dataset/wide_resnet50_2/\batchsize/}
		\includegraphics[width=\imgS\linewidth]{\dir_ce/\model/GradientNorm_\cval_ce_it_\dir}
		\caption{We plot the sharpness * step size (1st Row), step size (2nd Row), sharpness (3rd Row), training loss (4th Row), test accuracy (5th Row), and gradient norm (6th Row) for five different methods.. We compare gradient descent with the LS, NLS, and PoNLS line searches as well as gradient descent with the CDAT step size selection and SAM. This is repeated for the MLP, densenet121, and wide$\_$resnet50$\_$2 models on the SVHN dataset using the cross entropy loss.}\label{fig:exp1_SVHN_c0001_ce_2}
	\end{minipage}
\end{figure}

\renewcommand{\dataset}{EMNIST}
\renewcommand{\batchsize}{4992}
\begin{figure}[H] 
	\centering
	\begin{minipage}{1\textwidth}
		\renewcommand{\model}{\dataset/CNN/\batchsize/}
		\includegraphics[width=\imgS\linewidth]{\dir_ce/\model/Sharpnessxstep_\cval_ce_it_\dir}
		\renewcommand{\model}{\dataset/resnet34/\batchsize/}
		\includegraphics[width=\imgS\linewidth]{\dir_ce/\model/Sharpnessxstep_\cval_ce_it_\dir}
		\renewcommand{\model}{\dataset/vgg11/\batchsize/}
		\includegraphics[width=\imgS\linewidth]{\dir_ce/\model/Sharpnessxstep_\cval_ce_it_\dir}\\
		\renewcommand{\model}{\dataset/CNN/\batchsize/}
		\includegraphics[width=\imgS\linewidth]{\dir_ce/\model/FinalStepSize_\cval_ce_it_\dir}
		\renewcommand{\model}{\dataset/resnet34/\batchsize/}
		\includegraphics[width=\imgS\linewidth]{\dir_ce/\model/FinalStepSize_\cval_ce_it_\dir}
		\renewcommand{\model}{\dataset/vgg11/\batchsize/}
		\includegraphics[width=\imgS\linewidth]{\dir_ce/\model/FinalStepSize_\cval_ce_it_\dir}\\
		\renewcommand{\model}{\dataset/CNN/\batchsize/}
		\includegraphics[width=\imgS\linewidth]{\dir_ce/\model/Eigenvalue1_\cval_ce_it_\dir}
		\renewcommand{\model}{\dataset/resnet34/\batchsize/}
		\includegraphics[width=\imgS\linewidth]{\dir_ce/\model/Eigenvalue1_\cval_ce_it_\dir}
		\renewcommand{\model}{\dataset/vgg11/\batchsize/}
		\includegraphics[width=\imgS\linewidth]{\dir_ce/\model/Eigenvalue1_\cval_ce_it_\dir}\\
		\renewcommand{\model}{\dataset/CNN/\batchsize/}
		\includegraphics[width=\imgS\linewidth]{\dir_ce/\model/TrainingLoss_\cval_ce_it_\dir}
		\renewcommand{\model}{\dataset/resnet34/\batchsize/}
		\includegraphics[width=\imgS\linewidth]{\dir_ce/\model/TrainingLoss_\cval_ce_it_\dir}
		\renewcommand{\model}{\dataset/vgg11/\batchsize/}
		\includegraphics[width=\imgS\linewidth]{\dir_ce/\model/TrainingLoss_\cval_ce_it_\dir}\\
		\renewcommand{\model}{\dataset/CNN/\batchsize/}
		\includegraphics[width=\imgS\linewidth]{\dir_ce/\model/TestAccuracy_\cval_ce_it_\dir}
		\renewcommand{\model}{\dataset/resnet34/\batchsize/}
		\includegraphics[width=\imgS\linewidth]{\dir_ce/\model/TestAccuracy_\cval_ce_it_\dir}
		\renewcommand{\model}{\dataset/vgg11/\batchsize/}
		\includegraphics[width=\imgS\linewidth]{\dir_ce/\model/TestAccuracy_\cval_ce_it_\dir}\\
        \renewcommand{\model}{\dataset/CNN/\batchsize/}
		\includegraphics[width=\imgS\linewidth]{\dir_ce/\model/GradientNorm_\cval_ce_it_\dir}
		\renewcommand{\model}{\dataset/resnet34/\batchsize/}
		\includegraphics[width=\imgS\linewidth]{\dir_ce/\model/GradientNorm_\cval_ce_it_\dir}
		\renewcommand{\model}{\dataset/vgg11/\batchsize/}
		\includegraphics[width=\imgS\linewidth]{\dir_ce/\model/GradientNorm_\cval_ce_it_\dir}
		\caption{We plot the sharpness * step size (1st Row), step size (2nd Row), sharpness (3rd Row), training loss (4th Row), test accuracy (5th Row), and gradient norm (6th Row) for five different methods. We compare gradient descent with the LS, NLS, and PoNLS line searches as well as gradient descent with the CDAT step size selection and SAM. This is repeated for the CNN, resnet34, and vgg11 models on the EMNIST dataset using the cross entropy loss.}\label{fig:exp1_EMNIST_c0001_ce_1}
	\end{minipage}
\end{figure}

\begin{figure}[H] 
	\centering
	\begin{minipage}{1\textwidth}
		\renewcommand{\model}{\dataset/MLP/\batchsize/}
		\includegraphics[width=\imgS\linewidth]{\dir_ce/\model/Sharpnessxstep_\cval_ce_it_\dir}
		\renewcommand{\model}{\dataset/densenet121/\batchsize/}
		\includegraphics[width=\imgS\linewidth]{\dir_ce/\model/Sharpnessxstep_\cval_ce_it_\dir}
		\renewcommand{\model}{\dataset/wide_resnet50_2/\batchsize/}
		\includegraphics[width=\imgS\linewidth]{\dir_ce/\model/Sharpnessxstep_\cval_ce_it_\dir}\\
		\renewcommand{\model}{\dataset/MLP/\batchsize/}
		\includegraphics[width=\imgS\linewidth]{\dir_ce/\model/FinalStepSize_\cval_ce_it_\dir}
		\renewcommand{\model}{\dataset/densenet121/\batchsize/}
		\includegraphics[width=\imgS\linewidth]{\dir_ce/\model/FinalStepSize_\cval_ce_it_\dir}
		\renewcommand{\model}{\dataset/wide_resnet50_2/\batchsize/}
		\includegraphics[width=\imgS\linewidth]{\dir_ce/\model/FinalStepSize_\cval_ce_it_\dir}\\
		\renewcommand{\model}{\dataset/MLP/\batchsize/}
		\includegraphics[width=\imgS\linewidth]{\dir_ce/\model/Eigenvalue1_\cval_ce_it_\dir}
		\renewcommand{\model}{\dataset/densenet121/\batchsize/}
		\includegraphics[width=\imgS\linewidth]{\dir_ce/\model/Eigenvalue1_\cval_ce_it_\dir}
		\renewcommand{\model}{\dataset/wide_resnet50_2/\batchsize/}
		\includegraphics[width=\imgS\linewidth]{\dir_ce/\model/Eigenvalue1_\cval_ce_it_\dir}\\
		\renewcommand{\model}{\dataset/MLP/\batchsize/}
		\includegraphics[width=\imgS\linewidth]{\dir_ce/\model/TrainingLoss_\cval_ce_it_\dir}
		\renewcommand{\model}{\dataset/densenet121/\batchsize/}
		\includegraphics[width=\imgS\linewidth]{\dir_ce/\model/TrainingLoss_\cval_ce_it_\dir}
		\renewcommand{\model}{\dataset/wide_resnet50_2/\batchsize/}
		\includegraphics[width=\imgS\linewidth]{\dir_ce/\model/TrainingLoss_\cval_ce_it_\dir}\\
		\renewcommand{\model}{\dataset/MLP/\batchsize/}
		\includegraphics[width=\imgS\linewidth]{\dir_ce/\model/TestAccuracy_\cval_ce_it_\dir}
		\renewcommand{\model}{\dataset/densenet121/\batchsize/}
		\includegraphics[width=\imgS\linewidth]{\dir_ce/\model/TestAccuracy_\cval_ce_it_\dir}
		\renewcommand{\model}{\dataset/wide_resnet50_2/\batchsize/}
		\includegraphics[width=\imgS\linewidth]{\dir_ce/\model/TestAccuracy_\cval_ce_it_\dir}\\
		\renewcommand{\model}{\dataset/MLP/\batchsize/}
		\includegraphics[width=\imgS\linewidth]{\dir_ce/\model/GradientNorm_\cval_ce_it_\dir}
		\renewcommand{\model}{\dataset/densenet121/\batchsize/}
		\includegraphics[width=\imgS\linewidth]{\dir_ce/\model/GradientNorm_\cval_ce_it_\dir}
		\renewcommand{\model}{\dataset/wide_resnet50_2/\batchsize/}
		\includegraphics[width=\imgS\linewidth]{\dir_ce/\model/GradientNorm_\cval_ce_it_\dir}
		\caption{We plot the sharpness * step size (1st Row), step size (2nd Row), sharpness (3rd Row), training loss (4th Row), test accuracy (5th Row), and gradient norm (6th Row) for five different methods. We compare gradient descent with the LS, NLS, and PoNLS line searches as well as gradient descent with the CDAT step size selection and SAM. This is repeated for the MLP, densenet121, and wide$\_$resnet50$\_$2 models on the EMNIST dataset using the cross entropy loss.}\label{fig:exp1_EMNIST_c0001_ce_2}
	\end{minipage}
\end{figure}

\newpage
\section{Additional Experiments}
\subsection{Additional Sharpness Aware Method Experiments}
\label{app:additional_method_comp_experiments}
In this section, we provide additional experiments comparing line searches with CDAT and SAM on various metrics using the MSE loss. Continuing our discussion in Section \ref{Results}, we now give more details on the LS method. We see from the first row of Figure \ref{fig:exp1_CIFAR10_c0001_1_ext} that the step sizes yielded by a monotone line search method LS are not always above the threshold of $\frac{1-c}{\lambda_1(H(w_k))}$. Despite the fact that Proposition \ref{lemma:lipschitz_aware} also holds for monotone line searches, we find that LS encounters cancellation errors. This occurs when $f(w_{k+1})$ is numerically identical to $f(w_k)$, and the backtracking procedure continues to reduce the step size $\eta_k$ until it reaches the safeguard value of $10^{-6}$. Nonmonotone line searches of the type proposed in \citet{zhang04a} avoid this issue by default, as $\epsilon_k>0 \; \forall k$. Focusing on the sharpness values obtained by the monotone line search, we notice that they do not flatten out and continue increasing as previously noticed in the work by \citet{roulet2024a}. Thanks to Proposition \ref{lemma:lipschitz_aware}, more precisely the Corollary \ref{cor:sharpnessxstep} in the Appendix, and the reverse bound $\max_{\eta \in [0,\eta_{w_k}]} \lambda_1(H_{\eta}(w_k))\geq \frac{1-c}{\eta_k}$, we can now provide a full picture of the behavior of LS. While the sharpness increases, one can notice from the plots in the second row that the step size $\eta_k$ chosen by LS decreases slowly and nonmonotonically (when not affected by cancellation errors). In view of the reverse bound, this forces the maximum sharpness along the segment $[0,\eta_{w_k}]$ to increase. In other words, the monotone requirement of the line search forces the method to enter sharper and sharper valleys by reducing the step size. The nonmonotone line search instead takes larger steps and allows for increases in the training loss as one can see in Figure \ref{fig:exp1_CIFAR10_c0001_1_ext}, which prevents the method from entering sharper regions. In summary, NLS maintains the optimization trajectory on flatter grounds, with sharpness values that are various magnitudes lower than that of LS.

From the second row of Figures \ref{fig:exp1_CIFAR10_c0001_1_ext}-\ref{fig:exp1_EMNIST_c0001_2}, one can see that the step sizes yielded by NLS and PoNLS are upper bounded by $2^{i_K}$, where $i_K$ is the smallest integer $i$ such that $2^i\geq K.$ 
Loosely speaking, when $\min_{w\in\R^n} \lambda_1(H(w))=2/K$ and $\eta_k>K$, there do not exist flat enough valleys that can contain the oscillations of GD with step size $\eta_k$. In other words, GD with $\eta_k>K$ will always hit a ``wall'' of the valley and the line search will consequently reduce the step size. 



\renewcommand{\dataset}{CIFAR10}
\renewcommand{\batchsize}{5000}
\begin{figure}[H] 
	\centering
	\begin{minipage}{1\textwidth}
		\renewcommand{\model}{\dataset/CNN/\batchsize/}
		\includegraphics[width=\imgS\linewidth]{\dir/\model/Sharpnessxstep_\cval_it_\dir}
		\renewcommand{\model}{\dataset/resnet34/\batchsize/}
		\includegraphics[width=\imgS\linewidth]{\dir/\model/Sharpnessxstep_\cval_it_\dir}
		\renewcommand{\model}{\dataset/vgg11/\batchsize/}
		\includegraphics[width=\imgS\linewidth]{\dir/\model/Sharpnessxstep_\cval_it_\dir}\\
		\renewcommand{\model}{\dataset/CNN/\batchsize/}
		\includegraphics[width=\imgS\linewidth]{\dir/\model/FinalStepSize_\cval_it_\dir}
		\renewcommand{\model}{\dataset/resnet34/\batchsize/}
		\includegraphics[width=\imgS\linewidth]{\dir/\model/FinalStepSize_\cval_it_\dir}
		\renewcommand{\model}{\dataset/vgg11/\batchsize/}
		\includegraphics[width=\imgS\linewidth]{\dir/\model/FinalStepSize_\cval_it_\dir}\\
		\renewcommand{\model}{\dataset/CNN/\batchsize/}
		\includegraphics[width=\imgS\linewidth]{\dir/\model/Eigenvalue1_\cval_it_\dir}
		\renewcommand{\model}{\dataset/resnet34/\batchsize/}
		\includegraphics[width=\imgS\linewidth]{\dir/\model/Eigenvalue1_\cval_it_\dir}
		\renewcommand{\model}{\dataset/vgg11/\batchsize/}
		\includegraphics[width=\imgS\linewidth]{\dir/\model/Eigenvalue1_\cval_it_\dir}\\
		\renewcommand{\model}{\dataset/CNN/\batchsize/}
		\includegraphics[width=\imgS\linewidth]{\dir/\model/TrainingLoss_\cval_it_\dir}
		\renewcommand{\model}{\dataset/resnet34/\batchsize/}
		\includegraphics[width=\imgS\linewidth]{\dir/\model/TrainingLoss_\cval_it_\dir}
		\renewcommand{\model}{\dataset/vgg11/\batchsize/}
		\includegraphics[width=\imgS\linewidth]{\dir/\model/TrainingLoss_\cval_it_\dir}\\
		\renewcommand{\model}{\dataset/CNN/\batchsize/}
		\includegraphics[width=\imgS\linewidth]{\dir/\model/TestAccuracy_\cval_it_\dir}
		\renewcommand{\model}{\dataset/resnet34/\batchsize/}
		\includegraphics[width=\imgS\linewidth]{\dir/\model/TestAccuracy_\cval_it_\dir}
		\renewcommand{\model}{\dataset/vgg11/\batchsize/}
		\includegraphics[width=\imgS\linewidth]{\dir/\model/TestAccuracy_\cval_it_\dir}
		\caption{We plot the sharpness * step size (1st Row), step size (2nd Row), sharpness (3rd Row), training loss (4th Row), and test accuracy (5th Row) for five different methods. We compare gradient descent with the LS, NLS, and PoNLS line searches as well as gradient descent with the CDAT step size selection and SAM. This is repeated for the CNN, resnet34, and vgg11 models on the CIFAR10 dataset.}\label{fig:exp1_CIFAR10_c0001_1_ext}
	\end{minipage}
\end{figure}

\renewcommand{\dir}{exp1}
\renewcommand{\dataset}{CIFAR10}
\renewcommand{\cval}{0001}
\renewcommand{\batchsize}{5000}
\begin{figure}[H] 
	\centering
	\begin{minipage}{1\textwidth}
		\renewcommand{\model}{\dataset/MLP/\batchsize/}
		\includegraphics[width=\imgS\linewidth]{\dir/\model/Sharpnessxstep_\cval_it_\dir}
		\renewcommand{\model}{\dataset/densenet121/\batchsize/}
		\includegraphics[width=\imgS\linewidth]{\dir/\model/Sharpnessxstep_\cval_it_\dir}
		\renewcommand{\model}{\dataset/wide_resnet50_2/\batchsize/}
		\includegraphics[width=\imgS\linewidth]{\dir/\model/Sharpnessxstep_\cval_it_\dir}\\
		\renewcommand{\model}{\dataset/MLP/\batchsize/}
		\includegraphics[width=\imgS\linewidth]{\dir/\model/FinalStepSize_\cval_it_\dir}
		\renewcommand{\model}{\dataset/densenet121/\batchsize/}
		\includegraphics[width=\imgS\linewidth]{\dir/\model/FinalStepSize_\cval_it_\dir}
		\renewcommand{\model}{\dataset/wide_resnet50_2/\batchsize/}
		\includegraphics[width=\imgS\linewidth]{\dir/\model/FinalStepSize_\cval_it_\dir}\\
		\renewcommand{\model}{\dataset/MLP/\batchsize/}
		\includegraphics[width=\imgS\linewidth]{\dir/\model/Eigenvalue1_\cval_it_\dir}
		\renewcommand{\model}{\dataset/densenet121/\batchsize/}
		\includegraphics[width=\imgS\linewidth]{\dir/\model/Eigenvalue1_\cval_it_\dir}
		\renewcommand{\model}{\dataset/wide_resnet50_2/\batchsize/}
		\includegraphics[width=\imgS\linewidth]{\dir/\model/Eigenvalue1_\cval_it_\dir}\\
		\renewcommand{\model}{\dataset/MLP/\batchsize/}
		\includegraphics[width=\imgS\linewidth]{\dir/\model/TrainingLoss_\cval_it_\dir}
		\renewcommand{\model}{\dataset/densenet121/\batchsize/}
		\includegraphics[width=\imgS\linewidth]{\dir/\model/TrainingLoss_\cval_it_\dir}
		\renewcommand{\model}{\dataset/wide_resnet50_2/\batchsize/}
		\includegraphics[width=\imgS\linewidth]{\dir/\model/TrainingLoss_\cval_it_\dir}\\
		\renewcommand{\model}{\dataset/MLP/\batchsize/}
		\includegraphics[width=\imgS\linewidth]{\dir/\model/TestAccuracy_\cval_it_\dir}
		\renewcommand{\model}{\dataset/densenet121/\batchsize/}
		\includegraphics[width=\imgS\linewidth]{\dir/\model/TestAccuracy_\cval_it_\dir}
		\renewcommand{\model}{\dataset/wide_resnet50_2/\batchsize/}
		\includegraphics[width=\imgS\linewidth]{\dir/\model/TestAccuracy_\cval_it_\dir}
		\caption{We plot the sharpness * step size (1st Row), step size (2nd Row), sharpness (3rd Row), training loss (4th Row), and test accuracy (5th Row) for five different methods. We compare gradient descent with the LS, NLS, and PoNLS line searches as well as gradient descent with the CDAT step size selection and SAM. This is repeated for the MLP, densenet121, and wide$\_$resnet50$\_$2 models on the CIFAR10 dataset.}\label{fig:exp1_CIFAR10_c0001_2}
	\end{minipage}
\end{figure}


\renewcommand{\dataset}{CIFAR100}
\renewcommand{\batchsize}{5000}
\begin{figure}[H] 
	\centering
	\begin{minipage}{1\textwidth}
		\renewcommand{\model}{\dataset/CNN/\batchsize/}
		\includegraphics[width=\imgS\linewidth]{\dir/\model/Sharpnessxstep_\cval_it_\dir}
		\renewcommand{\model}{\dataset/resnet34/\batchsize/}
		\includegraphics[width=\imgS\linewidth]{\dir/\model/Sharpnessxstep_\cval_it_\dir}
		\renewcommand{\model}{\dataset/vgg11/\batchsize/}
		\includegraphics[width=\imgS\linewidth]{\dir/\model/Sharpnessxstep_\cval_it_\dir}\\
		\renewcommand{\model}{\dataset/CNN/\batchsize/}
		\includegraphics[width=\imgS\linewidth]{\dir/\model/FinalStepSize_\cval_it_\dir}
		\renewcommand{\model}{\dataset/resnet34/\batchsize/}
		\includegraphics[width=\imgS\linewidth]{\dir/\model/FinalStepSize_\cval_it_\dir}
		\renewcommand{\model}{\dataset/vgg11/\batchsize/}
		\includegraphics[width=\imgS\linewidth]{\dir/\model/FinalStepSize_\cval_it_\dir}\\
		\renewcommand{\model}{\dataset/CNN/\batchsize/}
		\includegraphics[width=\imgS\linewidth]{\dir/\model/Eigenvalue1_\cval_it_\dir}
		\renewcommand{\model}{\dataset/resnet34/\batchsize/}
		\includegraphics[width=\imgS\linewidth]{\dir/\model/Eigenvalue1_\cval_it_\dir}
		\renewcommand{\model}{\dataset/vgg11/\batchsize/}
		\includegraphics[width=\imgS\linewidth]{\dir/\model/Eigenvalue1_\cval_it_\dir}\\
		\renewcommand{\model}{\dataset/CNN/\batchsize/}
		\includegraphics[width=\imgS\linewidth]{\dir/\model/TrainingLoss_\cval_it_\dir}
		\renewcommand{\model}{\dataset/resnet34/\batchsize/}
		\includegraphics[width=\imgS\linewidth]{\dir/\model/TrainingLoss_\cval_it_\dir}
		\renewcommand{\model}{\dataset/vgg11/\batchsize/}
		\includegraphics[width=\imgS\linewidth]{\dir/\model/TrainingLoss_\cval_it_\dir}\\
		\renewcommand{\model}{\dataset/CNN/\batchsize/}
		\includegraphics[width=\imgS\linewidth]{\dir/\model/TestAccuracy_\cval_it_\dir}
		\renewcommand{\model}{\dataset/resnet34/\batchsize/}
		\includegraphics[width=\imgS\linewidth]{\dir/\model/TestAccuracy_\cval_it_\dir}
		\renewcommand{\model}{\dataset/vgg11/\batchsize/}
		\includegraphics[width=\imgS\linewidth]{\dir/\model/TestAccuracy_\cval_it_\dir}
		\caption{We plot the sharpness * step size (1st Row), step size (2nd Row), sharpness (3rd Row), training loss (4th Row), and test accuracy (5th Row) for five different methods. We compare gradient descent with the LS, NLS, and PoNLS line searches as well as gradient descent with the CDAT step size selection and SAM. This is repeated for the CNN, resnet34, and vgg11 models on the CIFAR100 dataset.}\label{fig:exp1_CIFAR100_c0001_1}
	\end{minipage}
\end{figure}

\renewcommand{\dataset}{CIFAR100}
\renewcommand{\batchsize}{5000}
\begin{figure}[H] 
	\centering
	\begin{minipage}{1\textwidth}
		\renewcommand{\model}{\dataset/MLP/\batchsize/}
		\includegraphics[width=\imgS\linewidth]{\dir/\model/Sharpnessxstep_\cval_it_\dir}
		\renewcommand{\model}{\dataset/densenet121/\batchsize/}
		\includegraphics[width=\imgS\linewidth]{\dir/\model/Sharpnessxstep_\cval_it_\dir}
		\renewcommand{\model}{\dataset/wide_resnet50_2/\batchsize/}
		\includegraphics[width=\imgS\linewidth]{\dir/\model/Sharpnessxstep_\cval_it_\dir}\\
		\renewcommand{\model}{\dataset/MLP/\batchsize/}
		\includegraphics[width=\imgS\linewidth]{\dir/\model/FinalStepSize_\cval_it_\dir}
		\renewcommand{\model}{\dataset/densenet121/\batchsize/}
		\includegraphics[width=\imgS\linewidth]{\dir/\model/FinalStepSize_\cval_it_\dir}
		\renewcommand{\model}{\dataset/wide_resnet50_2/\batchsize/}
		\includegraphics[width=\imgS\linewidth]{\dir/\model/FinalStepSize_\cval_it_\dir}\\
		\renewcommand{\model}{\dataset/MLP/\batchsize/}
		\includegraphics[width=\imgS\linewidth]{\dir/\model/Eigenvalue1_\cval_it_\dir}
		\renewcommand{\model}{\dataset/densenet121/\batchsize/}
		\includegraphics[width=\imgS\linewidth]{\dir/\model/Eigenvalue1_\cval_it_\dir}
		\renewcommand{\model}{\dataset/wide_resnet50_2/\batchsize/}
		\includegraphics[width=\imgS\linewidth]{\dir/\model/Eigenvalue1_\cval_it_\dir}\\
		\renewcommand{\model}{\dataset/MLP/\batchsize/}
		\includegraphics[width=\imgS\linewidth]{\dir/\model/TrainingLoss_\cval_it_\dir}
		\renewcommand{\model}{\dataset/densenet121/\batchsize/}
		\includegraphics[width=\imgS\linewidth]{\dir/\model/TrainingLoss_\cval_it_\dir}
		\renewcommand{\model}{\dataset/wide_resnet50_2/\batchsize/}
		\includegraphics[width=\imgS\linewidth]{\dir/\model/TrainingLoss_\cval_it_\dir}\\
		\renewcommand{\model}{\dataset/MLP/\batchsize/}
		\includegraphics[width=\imgS\linewidth]{\dir/\model/TestAccuracy_\cval_it_\dir}
		\renewcommand{\model}{\dataset/densenet121/\batchsize/}
		\includegraphics[width=\imgS\linewidth]{\dir/\model/TestAccuracy_\cval_it_\dir}
		\renewcommand{\model}{\dataset/wide_resnet50_2/\batchsize/}
		\includegraphics[width=\imgS\linewidth]{\dir/\model/TestAccuracy_\cval_it_\dir}
		\caption{We plot the sharpness * step size (1st Row), step size (2nd Row), sharpness (3rd Row), training loss (4th Row), and test accuracy (5th Row) for five different methods. We compare gradient descent with the LS, NLS, and PoNLS line searches as well as gradient descent with the CDAT step size selection and SAM. This is repeated for the MLP, densenet121, and wide$\_$resnet50$\_$2 models on the CIFAR100 dataset.}\label{fig:exp1_CIFAR100_c0001_2}
	\end{minipage}
\end{figure}


\renewcommand{\dataset}{SVHN}
\renewcommand{\batchsize}{4994}
\begin{figure}[H] 
	\centering
	\begin{minipage}{1\textwidth}
		\renewcommand{\model}{\dataset/CNN/\batchsize/}
		\includegraphics[width=\imgS\linewidth]{\dir/\model/Sharpnessxstep_\cval_it_\dir}
		\renewcommand{\model}{\dataset/resnet34/\batchsize/}
		\includegraphics[width=\imgS\linewidth]{\dir/\model/Sharpnessxstep_\cval_it_\dir}
		\renewcommand{\model}{\dataset/vgg11/\batchsize/}
		\includegraphics[width=\imgS\linewidth]{\dir/\model/Sharpnessxstep_\cval_it_\dir}\\
		\renewcommand{\model}{\dataset/CNN/\batchsize/}
		\includegraphics[width=\imgS\linewidth]{\dir/\model/FinalStepSize_\cval_it_\dir}
		\renewcommand{\model}{\dataset/resnet34/\batchsize/}
		\includegraphics[width=\imgS\linewidth]{\dir/\model/FinalStepSize_\cval_it_\dir}
		\renewcommand{\model}{\dataset/vgg11/\batchsize/}
		\includegraphics[width=\imgS\linewidth]{\dir/\model/FinalStepSize_\cval_it_\dir}\\
		\renewcommand{\model}{\dataset/CNN/\batchsize/}
		\includegraphics[width=\imgS\linewidth]{\dir/\model/Eigenvalue1_\cval_it_\dir}
		\renewcommand{\model}{\dataset/resnet34/\batchsize/}
		\includegraphics[width=\imgS\linewidth]{\dir/\model/Eigenvalue1_\cval_it_\dir}
		\renewcommand{\model}{\dataset/vgg11/\batchsize/}
		\includegraphics[width=\imgS\linewidth]{\dir/\model/Eigenvalue1_\cval_it_\dir}\\
		\renewcommand{\model}{\dataset/CNN/\batchsize/}
		\includegraphics[width=\imgS\linewidth]{\dir/\model/TrainingLoss_\cval_it_\dir}
		\renewcommand{\model}{\dataset/resnet34/\batchsize/}
		\includegraphics[width=\imgS\linewidth]{\dir/\model/TrainingLoss_\cval_it_\dir}
		\renewcommand{\model}{\dataset/vgg11/\batchsize/}
		\includegraphics[width=\imgS\linewidth]{\dir/\model/TrainingLoss_\cval_it_\dir}\\
		\renewcommand{\model}{\dataset/CNN/\batchsize/}
		\includegraphics[width=\imgS\linewidth]{\dir/\model/TestAccuracy_\cval_it_\dir}
		\renewcommand{\model}{\dataset/resnet34/\batchsize/}
		\includegraphics[width=\imgS\linewidth]{\dir/\model/TestAccuracy_\cval_it_\dir}
		\renewcommand{\model}{\dataset/vgg11/\batchsize/}
		\includegraphics[width=\imgS\linewidth]{\dir/\model/TestAccuracy_\cval_it_\dir}
		\caption{We plot the sharpness * step size (1st Row), step size (2nd Row), sharpness (3rd Row), training loss (4th Row), and test accuracy (5th Row) for five different methods. We compare gradient descent with the LS, NLS, and PoNLS line searches as well as gradient descent with the CDAT step size selection and SAM. This is repeated for the CNN, resnet34, and vgg11 models on the SVHN dataset.}\label{fig:exp1_SVHN_c0001_1}
	\end{minipage}
\end{figure}

\renewcommand{\dataset}{SVHN}
\renewcommand{\batchsize}{4994}
\begin{figure}[H] 
	\centering
	\begin{minipage}{1\textwidth}
		\renewcommand{\model}{\dataset/MLP/\batchsize/}
		\includegraphics[width=\imgS\linewidth]{\dir/\model/Sharpnessxstep_\cval_it_\dir}
		\renewcommand{\model}{\dataset/densenet121/\batchsize/}
		\includegraphics[width=\imgS\linewidth]{\dir/\model/Sharpnessxstep_\cval_it_\dir}
		\renewcommand{\model}{\dataset/wide_resnet50_2/\batchsize/}
		\includegraphics[width=\imgS\linewidth]{\dir/\model/Sharpnessxstep_\cval_it_\dir}\\
		\renewcommand{\model}{\dataset/MLP/\batchsize/}
		\includegraphics[width=\imgS\linewidth]{\dir/\model/FinalStepSize_\cval_it_\dir}
		\renewcommand{\model}{\dataset/densenet121/\batchsize/}
		\includegraphics[width=\imgS\linewidth]{\dir/\model/FinalStepSize_\cval_it_\dir}
		\renewcommand{\model}{\dataset/wide_resnet50_2/\batchsize/}
		\includegraphics[width=\imgS\linewidth]{\dir/\model/FinalStepSize_\cval_it_\dir}\\
		\renewcommand{\model}{\dataset/MLP/\batchsize/}
		\includegraphics[width=\imgS\linewidth]{\dir/\model/Eigenvalue1_\cval_it_\dir}
		\renewcommand{\model}{\dataset/densenet121/\batchsize/}
		\includegraphics[width=\imgS\linewidth]{\dir/\model/Eigenvalue1_\cval_it_\dir}
		\renewcommand{\model}{\dataset/wide_resnet50_2/\batchsize/}
		\includegraphics[width=\imgS\linewidth]{\dir/\model/Eigenvalue1_\cval_it_\dir}\\
		\renewcommand{\model}{\dataset/MLP/\batchsize/}
		\includegraphics[width=\imgS\linewidth]{\dir/\model/TrainingLoss_\cval_it_\dir}
		\renewcommand{\model}{\dataset/densenet121/\batchsize/}
		\includegraphics[width=\imgS\linewidth]{\dir/\model/TrainingLoss_\cval_it_\dir}
		\renewcommand{\model}{\dataset/wide_resnet50_2/\batchsize/}
		\includegraphics[width=\imgS\linewidth]{\dir/\model/TrainingLoss_\cval_it_\dir}\\
		\renewcommand{\model}{\dataset/MLP/\batchsize/}
		\includegraphics[width=\imgS\linewidth]{\dir/\model/TestAccuracy_\cval_it_\dir}
		\renewcommand{\model}{\dataset/densenet121/\batchsize/}
		\includegraphics[width=\imgS\linewidth]{\dir/\model/TestAccuracy_\cval_it_\dir}
		\renewcommand{\model}{\dataset/wide_resnet50_2/\batchsize/}
		\includegraphics[width=\imgS\linewidth]{\dir/\model/TestAccuracy_\cval_it_\dir}
		\caption{We plot the sharpness * step size (1st Row), step size (2nd Row), sharpness (3rd Row), training loss (4th Row), and test accuracy (5th Row) for five different methods. We compare gradient descent with the LS, NLS, and PoNLS line searches as well as gradient descent with the CDAT step size selection and SAM. This is repeated for the MLP, densenet121, and wide$\_$resnet50$\_$2 models on the SVHN dataset.}\label{fig:exp1_SVHN_c0001_2}
	\end{minipage}
\end{figure}


\renewcommand{\dataset}{EMNIST}
\renewcommand{\batchsize}{4992}
\begin{figure}[H] 
	\centering
	\begin{minipage}{1\textwidth}
		\renewcommand{\model}{\dataset/CNN/\batchsize/}
		\includegraphics[width=\imgS\linewidth]{\dir/\model/Sharpnessxstep_\cval_it_\dir}
		\renewcommand{\model}{\dataset/resnet34/\batchsize/}
		\includegraphics[width=\imgS\linewidth]{\dir/\model/Sharpnessxstep_\cval_it_\dir}
		\renewcommand{\model}{\dataset/vgg11/\batchsize/}
		\includegraphics[width=\imgS\linewidth]{\dir/\model/Sharpnessxstep_\cval_it_\dir}\\
		\renewcommand{\model}{\dataset/CNN/\batchsize/}
		\includegraphics[width=\imgS\linewidth]{\dir/\model/FinalStepSize_\cval_it_\dir}
		\renewcommand{\model}{\dataset/resnet34/\batchsize/}
		\includegraphics[width=\imgS\linewidth]{\dir/\model/FinalStepSize_\cval_it_\dir}
		\renewcommand{\model}{\dataset/vgg11/\batchsize/}
		\includegraphics[width=\imgS\linewidth]{\dir/\model/FinalStepSize_\cval_it_\dir}\\
		\renewcommand{\model}{\dataset/CNN/\batchsize/}
		\includegraphics[width=\imgS\linewidth]{\dir/\model/Eigenvalue1_\cval_it_\dir}
		\renewcommand{\model}{\dataset/resnet34/\batchsize/}
		\includegraphics[width=\imgS\linewidth]{\dir/\model/Eigenvalue1_\cval_it_\dir}
		\renewcommand{\model}{\dataset/vgg11/\batchsize/}
		\includegraphics[width=\imgS\linewidth]{\dir/\model/Eigenvalue1_\cval_it_\dir}\\
		\renewcommand{\model}{\dataset/CNN/\batchsize/}
		\includegraphics[width=\imgS\linewidth]{\dir/\model/TrainingLoss_\cval_it_\dir}
		\renewcommand{\model}{\dataset/resnet34/\batchsize/}
		\includegraphics[width=\imgS\linewidth]{\dir/\model/TrainingLoss_\cval_it_\dir}
		\renewcommand{\model}{\dataset/vgg11/\batchsize/}
		\includegraphics[width=\imgS\linewidth]{\dir/\model/TrainingLoss_\cval_it_\dir}\\
		\renewcommand{\model}{\dataset/CNN/\batchsize/}
		\includegraphics[width=\imgS\linewidth]{\dir/\model/TestAccuracy_\cval_it_\dir}
		\renewcommand{\model}{\dataset/resnet34/\batchsize/}
		\includegraphics[width=\imgS\linewidth]{\dir/\model/TestAccuracy_\cval_it_\dir}
		\renewcommand{\model}{\dataset/vgg11/\batchsize/}
		\includegraphics[width=\imgS\linewidth]{\dir/\model/TestAccuracy_\cval_it_\dir}
		\caption{We plot the sharpness * step size (1st Row), step size (2nd Row), sharpness (3rd Row), training loss (4th Row), and test accuracy (5th Row) for five different methods. We compare gradient descent with the LS, NLS, and PoNLS line searches as well as gradient descent with the CDAT step size selection and SAM. This is repeated for the CNN, resnet34, and vgg11 models on the EMNIST dataset.}\label{fig:exp1_EMNIST_c0001_1}
	\end{minipage}
\end{figure}

\renewcommand{\dataset}{EMNIST}
\renewcommand{\batchsize}{4992}
\begin{figure}[H] 
	\centering
	\begin{minipage}{1\textwidth}
		\renewcommand{\model}{\dataset/MLP/\batchsize/}
		\includegraphics[width=\imgS\linewidth]{\dir/\model/Sharpnessxstep_\cval_it_\dir}
		\renewcommand{\model}{\dataset/densenet121/\batchsize/}
		\includegraphics[width=\imgS\linewidth]{\dir/\model/Sharpnessxstep_\cval_it_\dir}
		\renewcommand{\model}{\dataset/wide_resnet50_2/\batchsize/}
		\includegraphics[width=\imgS\linewidth]{\dir/\model/Sharpnessxstep_\cval_it_\dir}\\
		\renewcommand{\model}{\dataset/MLP/\batchsize/}
		\includegraphics[width=\imgS\linewidth]{\dir/\model/FinalStepSize_\cval_it_\dir}
		\renewcommand{\model}{\dataset/densenet121/\batchsize/}
		\includegraphics[width=\imgS\linewidth]{\dir/\model/FinalStepSize_\cval_it_\dir}
		\renewcommand{\model}{\dataset/wide_resnet50_2/\batchsize/}
		\includegraphics[width=\imgS\linewidth]{\dir/\model/FinalStepSize_\cval_it_\dir}\\
		\renewcommand{\model}{\dataset/MLP/\batchsize/}
		\includegraphics[width=\imgS\linewidth]{\dir/\model/Eigenvalue1_\cval_it_\dir}
		\renewcommand{\model}{\dataset/densenet121/\batchsize/}
		\includegraphics[width=\imgS\linewidth]{\dir/\model/Eigenvalue1_\cval_it_\dir}
		\renewcommand{\model}{\dataset/wide_resnet50_2/\batchsize/}
		\includegraphics[width=\imgS\linewidth]{\dir/\model/Eigenvalue1_\cval_it_\dir}\\
		\renewcommand{\model}{\dataset/MLP/\batchsize/}
		\includegraphics[width=\imgS\linewidth]{\dir/\model/TrainingLoss_\cval_it_\dir}
		\renewcommand{\model}{\dataset/densenet121/\batchsize/}
		\includegraphics[width=\imgS\linewidth]{\dir/\model/TrainingLoss_\cval_it_\dir}
		\renewcommand{\model}{\dataset/wide_resnet50_2/\batchsize/}
		\includegraphics[width=\imgS\linewidth]{\dir/\model/TrainingLoss_\cval_it_\dir}\\
		\renewcommand{\model}{\dataset/MLP/\batchsize/}
		\includegraphics[width=\imgS\linewidth]{\dir/\model/TestAccuracy_\cval_it_\dir}
		\renewcommand{\model}{\dataset/densenet121/\batchsize/}
		\includegraphics[width=\imgS\linewidth]{\dir/\model/TestAccuracy_\cval_it_\dir}
		\renewcommand{\model}{\dataset/wide_resnet50_2/\batchsize/}
		\includegraphics[width=\imgS\linewidth]{\dir/\model/TestAccuracy_\cval_it_\dir}
		\caption{We plot the sharpness * step size (1st Row), step size (2nd Row), sharpness (3rd Row), training loss (4th Row), and test accuracy (5th Row) for five different methods. We compare gradient descent with the LS, NLS, and PoNLS line searches as well as gradient descent with the CDAT step size selection and SAM. This is repeated for the MLP, densenet121, and wide$\_$resnet50$\_$2 models on the EMNIST dataset.}\label{fig:exp1_EMNIST_c0001_2}
	\end{minipage}
\end{figure}

\subsection{Additional Experiments on Characterizing Globally-Flat Points}
\label{app:additional_diagnostic_experiments}

In Figures \ref{fig:exp2_CIFAR10_c0001_1_ext} through \ref{fig:exp2_mixed}, we provide additional experiments that focus on a small number of iterations to perform a finer analysis of the points where the sharpness reaches $2/K$, where $K$ is the number of classes in the dataset. 

Note that for all the experiments in this section (and Figure \ref{fig:exp2_CIFAR10_c0001_1} in the main paper), an activation is considered approximately zero if its value is less than $10^{-6}$. In particular, the percentage of approximately zero activations in row 3 is computed over all neurons of the layer and over all the points in the 5000 subset of the dataset. Additionally, in the layer-wise gradient norm plots, the upper bound of the shaded region corresponds to the maximum per layer gradient norm across all the hidden layers. The lower bound similarly corresponds to the minimum. From Figures \ref{fig:exp2_CIFAR10_c0001_1_ext}-\ref{fig:exp2_EMNIST_c0001_1_ext}, we can observe that while for resnet34 and wide\_resnet5\_2 the sharpness always precisely hits $2/K$, this value is slightly above it in the case of the vgg11. This suggests that the possibility of precisely finding (or not fidning) the globally-flat saddle points may be architecture-dependent. In fact, all the runs of NLS in our benchmark on resnet34 and wide\_resnet5\_2 are unsuccessfully-flat, while all of those on vgg11 are successfully-flat. Figure \ref{fig:exp2_mixed} shows that the globally-flat points can also be found for the CNN model, while they were not precisely found for the MLP architecture. Interestingly enough, from our experiments it appears that the higher the number of classes, the easier it is for NLS to find globally-flat points. 

One further characteristic that differentiates successfully- and unsuccessfully-flat experiments can be observed in the fourth row of Figures \ref{fig:exp2_CIFAR10_c0001_1_ext}-\ref{fig:exp2_mixed}. While not reported in the main paper, this plot shows that the percentage of zero entries of the gradient (this time over the whole gradient, and not per layer) is often higher for unsuccessfully-flat trainings than for successfully-flat ones. Notice that in this case, with zero entries we mean exactly 0 and not below a certain threshold. 

Finally, observe from the first row of these figures that the trace sometimes suddenly decreases to small values below the shown y-axis limit, such as for CIFAR10$\times$resnet34, CIFAR100$\times$resnet34, and SVHN$\times$vgg11 experiments. This occurs when the trace becomes negative on these iterations. Notice that is not a contradiction to Lemma \ref{lemma:subhessian_full}, as the sum of the eigenvalues may become negative for all points that are not globally-flat. 


\renewcommand{\dir}{exp2}
\renewcommand{\cval}{0001}
\renewcommand{\batchsize}{5000}
\renewcommand{\dataset}{CIFAR10}
\begin{figure}[H] 
	\centering
	\begin{minipage}{1\textwidth}
		\renewcommand{\model}{\dataset/resnet34/\batchsize/}
		\includegraphics[width=\imgS\linewidth]{\dir/\model/Eigenvalues_\cval}
		\renewcommand{\model}{\dataset/wide_resnet50_2/\batchsize/}
		\includegraphics[width=\imgS\linewidth]{\dir/\model/Eigenvalues_\cval}
		\renewcommand{\model}{\dataset/vgg11/\batchsize/}
		\includegraphics[width=\imgS\linewidth]{\dir_vgg11/\model/Eigenvalues_\cval}\\
		\renewcommand{\model}{\dataset/resnet34/\batchsize/}
		\includegraphics[width=\imgS\linewidth]{\dir/\model/AvgHiddenGradNorm_\cval_it_\dir}
		\renewcommand{\model}{\dataset/wide_resnet50_2/\batchsize/}
		\includegraphics[width=\imgS\linewidth]{\dir/\model/AvgHiddenGradNorm_\cval_it_\dir}
		\renewcommand{\model}{\dataset/vgg11/\batchsize/}
		\includegraphics[width=\imgS\linewidth]{\dir_vgg11/\model/AvgHiddenGradNorm_\cval_it_\dir}\\
		\renewcommand{\model}{\dataset/resnet34/\batchsize/}
		\includegraphics[width=\imgS\linewidth]{\dir/\model/ZeroActivations_\cval_it_\dir}
		\renewcommand{\model}{\dataset/wide_resnet50_2/\batchsize/}
		\includegraphics[width=\imgS\linewidth]{\dir/\model/ZeroActivations_\cval_it_\dir}
		\renewcommand{\model}{\dataset/vgg11/\batchsize/}
		\includegraphics[width=\imgS\linewidth]{\dir_vgg11/\model/ZeroActivations_\cval_it_\dir}\\
		\renewcommand{\model}{\dataset/resnet34/\batchsize/}
		\includegraphics[width=\imgS\linewidth]{\dir/\model/ZeroGradEntries_\cval_it_\dir}
		\renewcommand{\model}{\dataset/wide_resnet50_2/\batchsize/}
		\includegraphics[width=\imgS\linewidth]{\dir/\model/ZeroGradEntries_\cval_it_\dir}
		\renewcommand{\model}{\dataset/vgg11/\batchsize/}
		\includegraphics[width=\imgS\linewidth]{\dir_vgg11/\model/ZeroGradEntries_\cval_it_\dir}\\
		\renewcommand{\model}{\dataset/resnet34/\batchsize/}
		\includegraphics[width=\imgS\linewidth]{\dir/\model/TrainingAccuracy_\cval_it_\dir}
		\renewcommand{\model}{\dataset/wide_resnet50_2/\batchsize/}
		\includegraphics[width=\imgS\linewidth]{\dir/\model/TrainingAccuracy_\cval_it_\dir}
		\renewcommand{\model}{\dataset/vgg11/\batchsize/}
		\includegraphics[width=\imgS\linewidth]{\dir_vgg11/\model/TrainingAccuracy_\cval_it_\dir}\\
		\caption{We plot the top 20 eigenvalues (1st row), layer-wise gradient norm for the hidden layers (which we average across all hidden layers) and gradient norm of the bias parameters in the last layer (2nd Row), the maximum per layer percentage of neural activations approximately equal to zero (3rd Row), the percentage of the gradient entries equal to 0 (4th Row), and training accuracy (5th Row) for the NLS line search method. This is repeated for the resnet34 (first 100 iterations), wide\_resnet50\_2 (first 100 iterations), and vgg11 (iterations 3200-3300) models on the CIFAR10 dataset.}\label{fig:exp2_CIFAR10_c0001_1_ext}
	\end{minipage}
\end{figure}

\renewcommand{\dataset}{CIFAR100}
\begin{figure}[H] 
	\centering
	\begin{minipage}{1\textwidth}
		\renewcommand{\model}{\dataset/resnet34/\batchsize/}
		\includegraphics[width=\imgS\linewidth]{\dir/\model/Eigenvalues_\cval}
		\renewcommand{\model}{\dataset/wide_resnet50_2/\batchsize/}
		\includegraphics[width=\imgS\linewidth]{\dir/\model/Eigenvalues_\cval}
		\renewcommand{\model}{\dataset/vgg11/\batchsize/}
		\includegraphics[width=\imgS\linewidth]{\dir_vgg11/\model/Eigenvalues_\cval}\\
		\renewcommand{\model}{\dataset/resnet34/\batchsize/}
		\includegraphics[width=\imgS\linewidth]{\dir/\model/AvgHiddenGradNorm_\cval_it_\dir}
		\renewcommand{\model}{\dataset/wide_resnet50_2/\batchsize/}
		\includegraphics[width=\imgS\linewidth]{\dir/\model/AvgHiddenGradNorm_\cval_it_\dir}
		\renewcommand{\model}{\dataset/vgg11/\batchsize/}
		\includegraphics[width=\imgS\linewidth]{\dir_vgg11/\model/AvgHiddenGradNorm_\cval_it_\dir}\\
		\renewcommand{\model}{\dataset/resnet34/\batchsize/}
		\includegraphics[width=\imgS\linewidth]{\dir/\model/ZeroActivations_\cval_it_\dir}
		\renewcommand{\model}{\dataset/wide_resnet50_2/\batchsize/}
		\includegraphics[width=\imgS\linewidth]{\dir/\model/ZeroActivations_\cval_it_\dir}
		\renewcommand{\model}{\dataset/vgg11/\batchsize/}
		\includegraphics[width=\imgS\linewidth]{\dir_vgg11/\model/ZeroActivations_\cval_it_\dir}\\
		\renewcommand{\model}{\dataset/resnet34/\batchsize/}
		\includegraphics[width=\imgS\linewidth]{\dir/\model/ZeroGradEntries_\cval_it_\dir}
		\renewcommand{\model}{\dataset/wide_resnet50_2/\batchsize/}
		\includegraphics[width=\imgS\linewidth]{\dir/\model/ZeroGradEntries_\cval_it_\dir}
		\renewcommand{\model}{\dataset/vgg11/\batchsize/}
		\includegraphics[width=\imgS\linewidth]{\dir_vgg11/\model/ZeroGradEntries_\cval_it_\dir}\\
		\renewcommand{\model}{\dataset/resnet34/\batchsize/}
		\includegraphics[width=\imgS\linewidth]{\dir/\model/TrainingAccuracy_\cval_it_\dir}
		\renewcommand{\model}{\dataset/wide_resnet50_2/\batchsize/}
		\includegraphics[width=\imgS\linewidth]{\dir/\model/TrainingAccuracy_\cval_it_\dir}
		\renewcommand{\model}{\dataset/vgg11/\batchsize/}
		\includegraphics[width=\imgS\linewidth]{\dir_vgg11/\model/TrainingAccuracy_\cval_it_\dir}\\
		\caption{We plot the top 20 eigenvalues (1st row), layer-wise gradient norm for the hidden layers (which we average across all hidden layers) and gradient norm of the bias parameters in the last layer (2nd Row), the maximum per layer percentage of neural activations approximately equal to zero (3rd Row), the percentage of the gradient entries equal to 0 (4th Row), and training accuracy (5th Row) for the NLS line search method. This is repeated for the resnet34 (first 100 iterations), wide\_resnet50\_2 (first 200 iterations), and vgg11 (iterations 1200-1300) models on the CIFAR100 dataset.}\label{fig:exp2_CIFAR100_c0001_1_ext}
	\end{minipage}
\end{figure}

\renewcommand{\dataset}{SVHN}
\renewcommand{\batchsize}{4994}
\begin{figure}[H] 
	\centering
	\begin{minipage}{1\textwidth}
		\renewcommand{\model}{\dataset/resnet34/\batchsize/}
		\includegraphics[width=\imgS\linewidth]{\dir/\model/Eigenvalues_\cval}
		\renewcommand{\model}{\dataset/wide_resnet50_2/\batchsize/}
		\includegraphics[width=\imgS\linewidth]{\dir/\model/Eigenvalues_\cval}
		\renewcommand{\model}{\dataset/vgg11/\batchsize/}
		\includegraphics[width=\imgS\linewidth]{\dir_vgg11/\model/Eigenvalues_\cval}\\
		\renewcommand{\model}{\dataset/resnet34/\batchsize/}
		\includegraphics[width=\imgS\linewidth]{\dir/\model/AvgHiddenGradNorm_\cval_it_\dir}
		\renewcommand{\model}{\dataset/wide_resnet50_2/\batchsize/}
		\includegraphics[width=\imgS\linewidth]{\dir/\model/AvgHiddenGradNorm_\cval_it_\dir}
		\renewcommand{\model}{\dataset/vgg11/\batchsize/}
		\includegraphics[width=\imgS\linewidth]{\dir_vgg11/\model/AvgHiddenGradNorm_\cval_it_\dir}\\
		\renewcommand{\model}{\dataset/resnet34/\batchsize/}
		\includegraphics[width=\imgS\linewidth]{\dir/\model/ZeroActivations_\cval_it_\dir}
		\renewcommand{\model}{\dataset/wide_resnet50_2/\batchsize/}
		\includegraphics[width=\imgS\linewidth]{\dir/\model/ZeroActivations_\cval_it_\dir}
		\renewcommand{\model}{\dataset/vgg11/\batchsize/}
		\includegraphics[width=\imgS\linewidth]{\dir_vgg11/\model/ZeroActivations_\cval_it_\dir}\\
		\renewcommand{\model}{\dataset/resnet34/\batchsize/}
		\includegraphics[width=\imgS\linewidth]{\dir/\model/ZeroGradEntries_\cval_it_\dir}
		\renewcommand{\model}{\dataset/wide_resnet50_2/\batchsize/}
		\includegraphics[width=\imgS\linewidth]{\dir/\model/ZeroGradEntries_\cval_it_\dir}
		\renewcommand{\model}{\dataset/vgg11/\batchsize/}
		\includegraphics[width=\imgS\linewidth]{\dir_vgg11/\model/ZeroGradEntries_\cval_it_\dir}\\
		\renewcommand{\model}{\dataset/resnet34/\batchsize/}
		\includegraphics[width=\imgS\linewidth]{\dir/\model/TrainingAccuracy_\cval_it_\dir}
		\renewcommand{\model}{\dataset/wide_resnet50_2/\batchsize/}
		\includegraphics[width=\imgS\linewidth]{\dir/\model/TrainingAccuracy_\cval_it_\dir}
		\renewcommand{\model}{\dataset/vgg11/\batchsize/}
		\includegraphics[width=\imgS\linewidth]{\dir_vgg11/\model/TrainingAccuracy_\cval_it_\dir}\\
		\caption{We plot the top 20 eigenvalues (1st row), layer-wise gradient norm for the hidden layers (which we average across all hidden layers) and gradient norm of the bias parameters in the last layer (2nd Row), the maximum per layer percentage of neural activations approximately equal to zero (3rd Row), the percentage of the gradient entries equal to 0 (4th Row), and training accuracy (5th Row) for the NLS line search method. This is repeated for the resnet34 (first 100 iterations), wide\_resnet50\_2 (first 100 iterations), and vgg11 (iterations 800-900) models on the SVHN dataset.}\label{fig:exp2_SVHN_c0001_1_ext}
	\end{minipage}
\end{figure}

\renewcommand{\dataset}{EMNIST}
\renewcommand{\batchsize}{4992}
\begin{figure}[H] 
	\centering
	\begin{minipage}{1\textwidth}
		\renewcommand{\model}{\dataset/resnet34/\batchsize/}
		\includegraphics[width=\imgS\linewidth]{\dir/\model/Eigenvalues_\cval}
		\renewcommand{\model}{\dataset/wide_resnet50_2/\batchsize/}
		\includegraphics[width=\imgS\linewidth]{\dir/\model/Eigenvalues_\cval}
		\renewcommand{\model}{\dataset/vgg11/\batchsize/}
		\includegraphics[width=\imgS\linewidth]{\dir_vgg11/\model/Eigenvalues_\cval}\\
		\renewcommand{\model}{\dataset/resnet34/\batchsize/}
		\includegraphics[width=\imgS\linewidth]{\dir/\model/AvgHiddenGradNorm_\cval_it_\dir}
		\renewcommand{\model}{\dataset/wide_resnet50_2/\batchsize/}
		\includegraphics[width=\imgS\linewidth]{\dir/\model/AvgHiddenGradNorm_\cval_it_\dir}
		\renewcommand{\model}{\dataset/vgg11/\batchsize/}
		\includegraphics[width=\imgS\linewidth]{\dir_vgg11/\model/AvgHiddenGradNorm_\cval_it_\dir}\\
		\renewcommand{\model}{\dataset/resnet34/\batchsize/}
		\includegraphics[width=\imgS\linewidth]{\dir/\model/ZeroActivations_\cval_it_\dir}
		\renewcommand{\model}{\dataset/wide_resnet50_2/\batchsize/}
		\includegraphics[width=\imgS\linewidth]{\dir/\model/ZeroActivations_\cval_it_\dir}
		\renewcommand{\model}{\dataset/vgg11/\batchsize/}
		\includegraphics[width=\imgS\linewidth]{\dir_vgg11/\model/ZeroActivations_\cval_it_\dir}\\
		\renewcommand{\model}{\dataset/resnet34/\batchsize/}
		\includegraphics[width=\imgS\linewidth]{\dir/\model/ZeroGradEntries_\cval_it_\dir}
		\renewcommand{\model}{\dataset/wide_resnet50_2/\batchsize/}
		\includegraphics[width=\imgS\linewidth]{\dir/\model/ZeroGradEntries_\cval_it_\dir}
		\renewcommand{\model}{\dataset/vgg11/\batchsize/}
		\includegraphics[width=\imgS\linewidth]{\dir_vgg11/\model/ZeroGradEntries_\cval_it_\dir}\\
		\renewcommand{\model}{\dataset/resnet34/\batchsize/}
		\includegraphics[width=\imgS\linewidth]{\dir/\model/TrainingAccuracy_\cval_it_\dir}
		\renewcommand{\model}{\dataset/wide_resnet50_2/\batchsize/}
		\includegraphics[width=\imgS\linewidth]{\dir/\model/TrainingAccuracy_\cval_it_\dir}
		\renewcommand{\model}{\dataset/vgg11/\batchsize/}
		\includegraphics[width=\imgS\linewidth]{\dir_vgg11/\model/TrainingAccuracy_\cval_it_\dir}\\
		\caption{We plot the top 20 eigenvalues (1st row), layer-wise gradient norm for the hidden layers (which we average across all hidden layers) and gradient norm of the bias parameters in the last layer (2nd Row), the maximum per layer percentage of neural activations approximately equal to zero (3rd Row), the percentage of the gradient entries equal to 0 (4th Row), and training accuracy (5th Row) for the NLS line search method. This is repeated for the resnet34 (first 100 iterations), wide\_resnet50\_2 (first 100 iterations), and vgg11 (iterations 1100-1200) models on the EMNIST dataset.}\label{fig:exp2_EMNIST_c0001_1_ext}
	\end{minipage}
\end{figure}

\renewcommand{\batchsize}{5000}
\begin{figure}[H] 
	\centering
	\begin{minipage}{1\textwidth}
		\renewcommand{\model}{CIFAR100/CNN/\batchsize/}
		\includegraphics[width=\imgS\linewidth]{\dir/\model/Eigenvalues_\cval}
		\renewcommand{\model}{EMNIST/CNN/4992/}
		\includegraphics[width=\imgS\linewidth]{\dir/\model/Eigenvalues_\cval}
		\renewcommand{\model}{CIFAR100/MLP/\batchsize/}
		\includegraphics[width=\imgS\linewidth]{\dir/\model/Eigenvalues_\cval}\\
		\renewcommand{\model}{CIFAR100/CNN/\batchsize/}
		\includegraphics[width=\imgS\linewidth]{\dir/\model/AvgHiddenGradNorm_\cval_it_\dir}
		\renewcommand{\model}{EMNIST/CNN/4992/}
		\includegraphics[width=\imgS\linewidth]{\dir/\model/AvgHiddenGradNorm_\cval_it_\dir}
		\renewcommand{\model}{CIFAR100/MLP/\batchsize/}
		\includegraphics[width=\imgS\linewidth]{\dir/\model/AvgHiddenGradNorm_\cval_it_\dir}\\
		\renewcommand{\model}{CIFAR100/CNN/\batchsize/}
		\includegraphics[width=\imgS\linewidth]{\dir/\model/ZeroActivations_\cval_it_\dir}
		\renewcommand{\model}{EMNIST/CNN/4992/}
		\includegraphics[width=\imgS\linewidth]{\dir/\model/ZeroActivations_\cval_it_\dir}
		\renewcommand{\model}{CIFAR100/MLP/\batchsize/}
		\includegraphics[width=\imgS\linewidth]{\dir/\model/ZeroActivations_\cval_it_\dir}\\
		\renewcommand{\model}{CIFAR100/CNN/\batchsize/}
		\includegraphics[width=\imgS\linewidth]{\dir/\model/ZeroGradEntries_\cval_it_\dir}
		\renewcommand{\model}{EMNIST/CNN/4992/}
		\includegraphics[width=\imgS\linewidth]{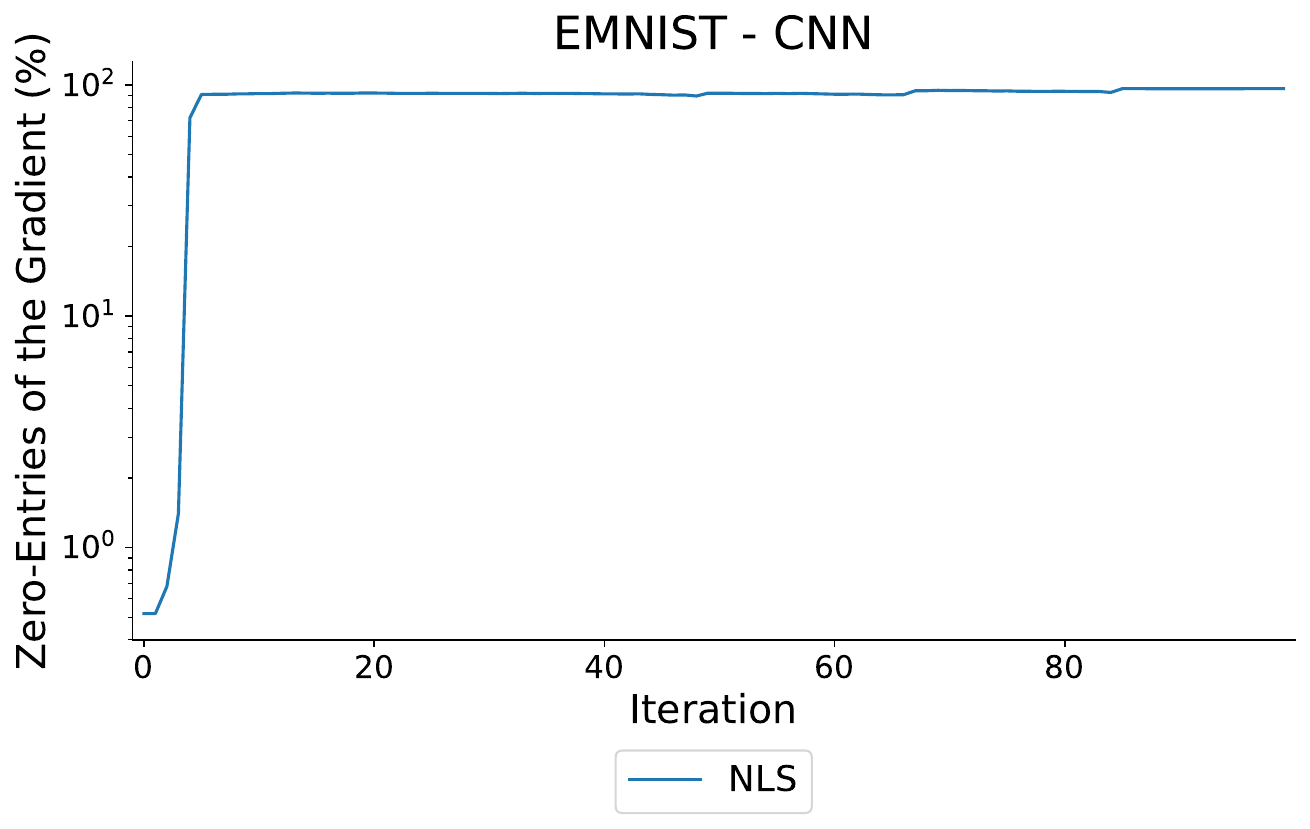}
		\renewcommand{\model}{CIFAR100/MLP/\batchsize/}
		\includegraphics[width=\imgS\linewidth]{\dir/\model/ZeroGradEntries_\cval_it_\dir}\\
		\renewcommand{\model}{CIFAR100/CNN/\batchsize/}
		\includegraphics[width=\imgS\linewidth]{\dir/\model/TrainingAccuracy_\cval_it_\dir}
		\renewcommand{\model}{EMNIST/CNN/4992/}
		\includegraphics[width=\imgS\linewidth]{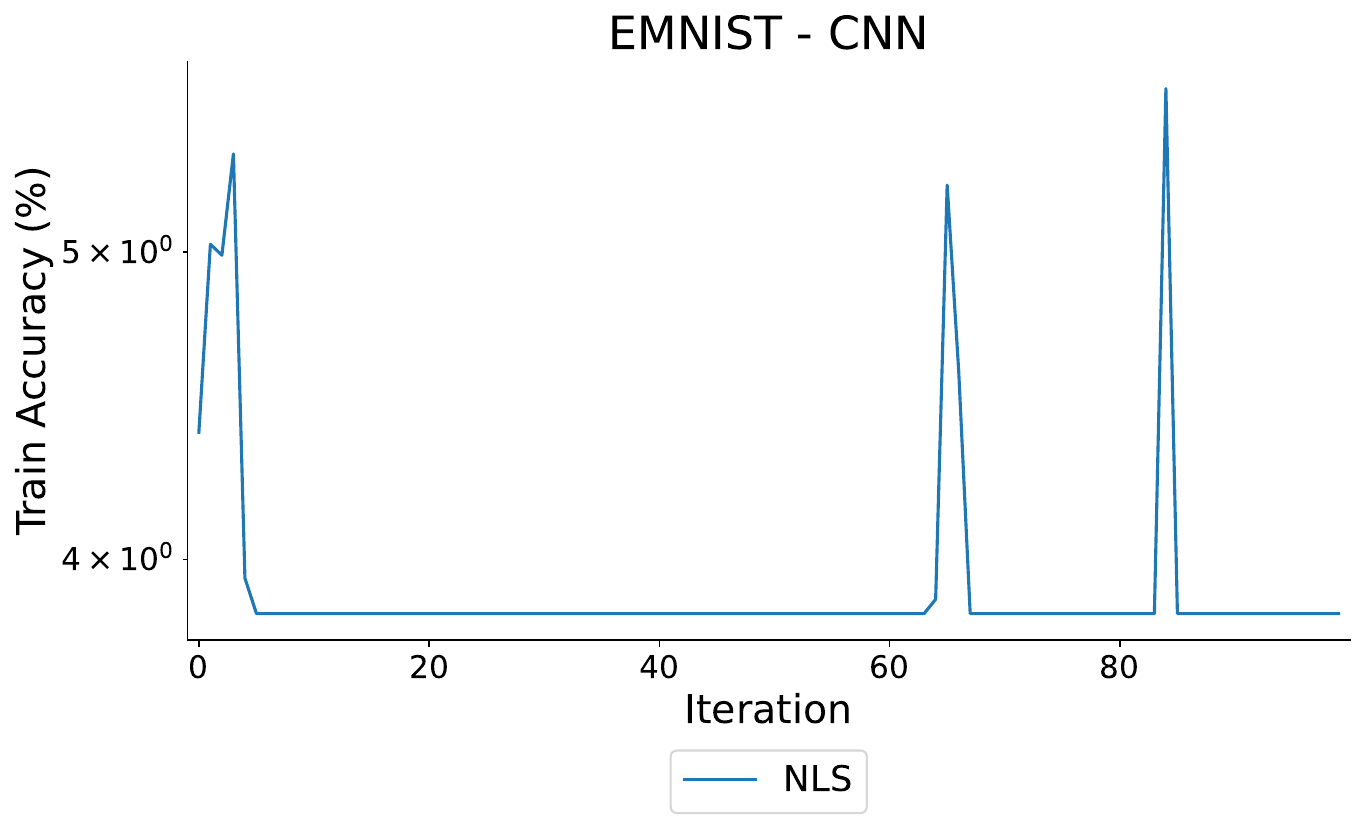}
		\renewcommand{\model}{CIFAR100/MLP/\batchsize/}
		\includegraphics[width=\imgS\linewidth]{\dir/\model/TrainingAccuracy_\cval_it_\dir}\\
		\caption{We plot the top 20 eigenvalues (1st row), layer-wise gradient norm for the hidden layers (which we average across all hidden layers) and gradient norm of the bias parameters in the last layer (2nd Row), the maximum per layer percentage of neural activations approximately equal to zero (3rd Row), the percentage of the gradient entries equal to 0 (4th Row), and training accuracy (5th Row) for the NLS line search method. This is repeated for the experiment CIFAR100$\times$CNN (first 100 iterations), EMNIST$\times$CNN (first 100 iterations), and CIFAR100$\times$MLP (first 200 iterations).}\label{fig:exp2_mixed} 
	\end{minipage}
\end{figure}

\subsection{Additional Experiments on Avoiding the Globally-Flat Points}
\label{app:performance_experiments}
In Figures \ref{fig:exp6_CIFAR10_c0001_1_ext} through \ref{fig:exp6_EMNIST_c0001_2}, we show further experiments highlighting the ability of NLS-ub to avoid globally flat points while improving upon the performance of NLS.

\renewcommand{\dir}{exp6}
\renewcommand{\dataset}{CIFAR10}
\renewcommand{\cval}{0001}
\renewcommand{\batchsize}{5000}
\begin{figure}[H] 
	\centering
	\begin{minipage}{1\textwidth}
		\renewcommand{\model}{\dataset/CNN/\batchsize/}
		\includegraphics[width=\imgS\linewidth]{\dir/\model/Sharpnessxstep_\cval_it_\dir}
		\renewcommand{\model}{\dataset/resnet34/\batchsize/}
		\includegraphics[width=\imgS\linewidth]{\dir/\model/Sharpnessxstep_\cval_it_\dir}
		\renewcommand{\model}{\dataset/vgg11/\batchsize/}
		\includegraphics[width=\imgS\linewidth]{\dir/\model/Sharpnessxstep_\cval_it_\dir}\\
		\renewcommand{\model}{\dataset/CNN/\batchsize/}
		\includegraphics[width=\imgS\linewidth]{\dir/\model/FinalStepSize_\cval_it_\dir}
		\renewcommand{\model}{\dataset/resnet34/\batchsize/}
		\includegraphics[width=\imgS\linewidth]{\dir/\model/FinalStepSize_\cval_it_\dir}
		\renewcommand{\model}{\dataset/vgg11/\batchsize/}
		\includegraphics[width=\imgS\linewidth]{\dir/\model/FinalStepSize_\cval_it_\dir}\\
		\renewcommand{\model}{\dataset/CNN/\batchsize/}
		\includegraphics[width=\imgS\linewidth]{\dir/\model/Eigenvalue1_\cval_it_\dir}
		\renewcommand{\model}{\dataset/resnet34/\batchsize/}
		\includegraphics[width=\imgS\linewidth]{\dir/\model/Eigenvalue1_\cval_it_\dir}
		\renewcommand{\model}{\dataset/vgg11/\batchsize/}
		\includegraphics[width=\imgS\linewidth]{\dir/\model/Eigenvalue1_\cval_it_\dir}\\
		\renewcommand{\model}{\dataset/CNN/\batchsize/}
		\includegraphics[width=\imgS\linewidth]{\dir/\model/TrainingLoss_\cval_it_\dir}
		\renewcommand{\model}{\dataset/resnet34/\batchsize/}
		\includegraphics[width=\imgS\linewidth]{\dir/\model/TrainingLoss_\cval_it_\dir}
		\renewcommand{\model}{\dataset/vgg11/\batchsize/}
		\includegraphics[width=\imgS\linewidth]{\dir/\model/TrainingLoss_\cval_it_\dir}\\
		\renewcommand{\model}{\dataset/CNN/\batchsize/}
		\includegraphics[width=\imgS\linewidth]{\dir/\model/TestAccuracy_\cval_it_\dir}
		\renewcommand{\model}{\dataset/resnet34/\batchsize/}
		\includegraphics[width=\imgS\linewidth]{\dir/\model/TestAccuracy_\cval_it_\dir}
		\renewcommand{\model}{\dataset/vgg11/\batchsize/}
		\includegraphics[width=\imgS\linewidth]{\dir/\model/TestAccuracy_\cval_it_\dir}
		\caption{We plot the sharpness * step size (1st Row), step size (2nd Row), sharpness (3rd Row), training loss (4th Row), and test accuracy (5th Row) for five different methods. We compare gradient descent with the NLS and NLS-ub line searches as well as gradient descent with the CDAT step size selection. This is repeated for the CNN, resnet34, and vgg11 models on the CIFAR10 dataset.}\label{fig:exp6_CIFAR10_c0001_1_ext}
	\end{minipage}
\end{figure}

\begin{figure}[H] 
	\centering
	\begin{minipage}{1\textwidth}
		\renewcommand{\model}{\dataset/MLP/\batchsize/}
		\includegraphics[width=\imgS\linewidth]{\dir/\model/Sharpnessxstep_\cval_it_\dir}
		\renewcommand{\model}{\dataset/densenet121/\batchsize/}
		\includegraphics[width=\imgS\linewidth]{\dir/\model/Sharpnessxstep_\cval_it_\dir}
		\renewcommand{\model}{\dataset/wide_resnet50_2/\batchsize/}
		\includegraphics[width=\imgS\linewidth]{\dir/\model/Sharpnessxstep_\cval_it_\dir}\\
		\renewcommand{\model}{\dataset/MLP/\batchsize/}
		\includegraphics[width=\imgS\linewidth]{\dir/\model/FinalStepSize_\cval_it_\dir}
		\renewcommand{\model}{\dataset/densenet121/\batchsize/}
		\includegraphics[width=\imgS\linewidth]{\dir/\model/FinalStepSize_\cval_it_\dir}
		\renewcommand{\model}{\dataset/wide_resnet50_2/\batchsize/}
		\includegraphics[width=\imgS\linewidth]{\dir/\model/FinalStepSize_\cval_it_\dir}\\
		\renewcommand{\model}{\dataset/MLP/\batchsize/}
		\includegraphics[width=\imgS\linewidth]{\dir/\model/Eigenvalue1_\cval_it_\dir}
		\renewcommand{\model}{\dataset/densenet121/\batchsize/}
		\includegraphics[width=\imgS\linewidth]{\dir/\model/Eigenvalue1_\cval_it_\dir}
		\renewcommand{\model}{\dataset/wide_resnet50_2/\batchsize/}
		\includegraphics[width=\imgS\linewidth]{\dir/\model/Eigenvalue1_\cval_it_\dir}\\
		\renewcommand{\model}{\dataset/MLP/\batchsize/}
		\includegraphics[width=\imgS\linewidth]{\dir/\model/TrainingLoss_\cval_it_\dir}
		\renewcommand{\model}{\dataset/densenet121/\batchsize/}
		\includegraphics[width=\imgS\linewidth]{\dir/\model/TrainingLoss_\cval_it_\dir}
		\renewcommand{\model}{\dataset/wide_resnet50_2/\batchsize/}
		\includegraphics[width=\imgS\linewidth]{\dir/\model/TrainingLoss_\cval_it_\dir}\\
		\renewcommand{\model}{\dataset/MLP/\batchsize/}
		\includegraphics[width=\imgS\linewidth]{\dir/\model/TestAccuracy_\cval_it_\dir}
		\renewcommand{\model}{\dataset/densenet121/\batchsize/}
		\includegraphics[width=\imgS\linewidth]{\dir/\model/TestAccuracy_\cval_it_\dir}
		\renewcommand{\model}{\dataset/wide_resnet50_2/\batchsize/}
		\includegraphics[width=\imgS\linewidth]{\dir/\model/TestAccuracy_\cval_it_\dir}
		\caption{We plot the sharpness * step size (1st Row), step size (2nd Row), sharpness (3rd Row), training loss (4th Row), and test accuracy (5th Row) for five different methods. We compare gradient descent with the NLS and NLS-ub line searches as well as gradient descent with the CDAT step size selection. This is repeated for the MLP, densenet121, and wide$\_$resnet50$\_$2 models on the CIFAR10 dataset.}\label{fig:exp6_CIFAR10_c0001_2}
	\end{minipage}
\end{figure}

\renewcommand{\dataset}{CIFAR100}
\renewcommand{\batchsize}{5000}
\begin{figure}[H] 
	\centering
	\begin{minipage}{1\textwidth}
		\renewcommand{\model}{\dataset/CNN/\batchsize/}
		\includegraphics[width=\imgS\linewidth]{\dir/\model/Sharpnessxstep_\cval_it_\dir}
		\renewcommand{\model}{\dataset/resnet34/\batchsize/}
		\includegraphics[width=\imgS\linewidth]{\dir/\model/Sharpnessxstep_\cval_it_\dir}
		\renewcommand{\model}{\dataset/vgg11/\batchsize/}
		\includegraphics[width=\imgS\linewidth]{\dir/\model/Sharpnessxstep_\cval_it_\dir}\\
		\renewcommand{\model}{\dataset/CNN/\batchsize/}
		\includegraphics[width=\imgS\linewidth]{\dir/\model/FinalStepSize_\cval_it_\dir}
		\renewcommand{\model}{\dataset/resnet34/\batchsize/}
		\includegraphics[width=\imgS\linewidth]{\dir/\model/FinalStepSize_\cval_it_\dir}
		\renewcommand{\model}{\dataset/vgg11/\batchsize/}
		\includegraphics[width=\imgS\linewidth]{\dir/\model/FinalStepSize_\cval_it_\dir}\\
		\renewcommand{\model}{\dataset/CNN/\batchsize/}
		\includegraphics[width=\imgS\linewidth]{\dir/\model/Eigenvalue1_\cval_it_\dir}
		\renewcommand{\model}{\dataset/resnet34/\batchsize/}
		\includegraphics[width=\imgS\linewidth]{\dir/\model/Eigenvalue1_\cval_it_\dir}
		\renewcommand{\model}{\dataset/vgg11/\batchsize/}
		\includegraphics[width=\imgS\linewidth]{\dir/\model/Eigenvalue1_\cval_it_\dir}\\
		\renewcommand{\model}{\dataset/CNN/\batchsize/}
		\includegraphics[width=\imgS\linewidth]{\dir/\model/TrainingLoss_\cval_it_\dir}
		\renewcommand{\model}{\dataset/resnet34/\batchsize/}
		\includegraphics[width=\imgS\linewidth]{\dir/\model/TrainingLoss_\cval_it_\dir}
		\renewcommand{\model}{\dataset/vgg11/\batchsize/}
		\includegraphics[width=\imgS\linewidth]{\dir/\model/TrainingLoss_\cval_it_\dir}\\
		\renewcommand{\model}{\dataset/CNN/\batchsize/}
		\includegraphics[width=\imgS\linewidth]{\dir/\model/TestAccuracy_\cval_it_\dir}
		\renewcommand{\model}{\dataset/resnet34/\batchsize/}
		\includegraphics[width=\imgS\linewidth]{\dir/\model/TestAccuracy_\cval_it_\dir}
		\renewcommand{\model}{\dataset/vgg11/\batchsize/}
		\includegraphics[width=\imgS\linewidth]{\dir/\model/TestAccuracy_\cval_it_\dir}
		\caption{We plot the sharpness * step size (1st Row), step size (2nd Row), sharpness (3rd Row), training loss (4th Row), and test accuracy (5th Row) for five different methods. We compare gradient descent with the NLS and NLS-ub line searches as well as gradient descent with the CDAT step size selection. This is repeated for the CNN, resnet34, and vgg11 models on the CIFAR100 dataset.}\label{fig:exp6_CIFAR100_c0001_1}
	\end{minipage}
\end{figure}

\begin{figure}[H] 
	\centering
	\begin{minipage}{1\textwidth}
		\renewcommand{\model}{\dataset/MLP/\batchsize/}
		\includegraphics[width=\imgS\linewidth]{\dir/\model/Sharpnessxstep_\cval_it_\dir}
		\renewcommand{\model}{\dataset/densenet121/\batchsize/}
		\includegraphics[width=\imgS\linewidth]{\dir/\model/Sharpnessxstep_\cval_it_\dir}
		\renewcommand{\model}{\dataset/wide_resnet50_2/\batchsize/}
		\includegraphics[width=\imgS\linewidth]{\dir/\model/Sharpnessxstep_\cval_it_\dir}\\
		\renewcommand{\model}{\dataset/MLP/\batchsize/}
		\includegraphics[width=\imgS\linewidth]{\dir/\model/FinalStepSize_\cval_it_\dir}
		\renewcommand{\model}{\dataset/densenet121/\batchsize/}
		\includegraphics[width=\imgS\linewidth]{\dir/\model/FinalStepSize_\cval_it_\dir}
		\renewcommand{\model}{\dataset/wide_resnet50_2/\batchsize/}
		\includegraphics[width=\imgS\linewidth]{\dir/\model/FinalStepSize_\cval_it_\dir}\\
		\renewcommand{\model}{\dataset/MLP/\batchsize/}
		\includegraphics[width=\imgS\linewidth]{\dir/\model/Eigenvalue1_\cval_it_\dir}
		\renewcommand{\model}{\dataset/densenet121/\batchsize/}
		\includegraphics[width=\imgS\linewidth]{\dir/\model/Eigenvalue1_\cval_it_\dir}
		\renewcommand{\model}{\dataset/wide_resnet50_2/\batchsize/}
		\includegraphics[width=\imgS\linewidth]{\dir/\model/Eigenvalue1_\cval_it_\dir}\\
		\renewcommand{\model}{\dataset/MLP/\batchsize/}
		\includegraphics[width=\imgS\linewidth]{\dir/\model/TrainingLoss_\cval_it_\dir}
		\renewcommand{\model}{\dataset/densenet121/\batchsize/}
		\includegraphics[width=\imgS\linewidth]{\dir/\model/TrainingLoss_\cval_it_\dir}
		\renewcommand{\model}{\dataset/wide_resnet50_2/\batchsize/}
		\includegraphics[width=\imgS\linewidth]{\dir/\model/TrainingLoss_\cval_it_\dir}\\
		\renewcommand{\model}{\dataset/MLP/\batchsize/}
		\includegraphics[width=\imgS\linewidth]{\dir/\model/TestAccuracy_\cval_it_\dir}
		\renewcommand{\model}{\dataset/densenet121/\batchsize/}
		\includegraphics[width=\imgS\linewidth]{\dir/\model/TestAccuracy_\cval_it_\dir}
		\renewcommand{\model}{\dataset/wide_resnet50_2/\batchsize/}
		\includegraphics[width=\imgS\linewidth]{\dir/\model/TestAccuracy_\cval_it_\dir}
		\caption{We plot the sharpness * step size (1st Row), step size (2nd Row), sharpness (3rd Row), training loss (4th Row), and test accuracy (5th Row) for five different methods. We compare gradient descent with the NLS and NLS-ub line searches as well as gradient descent with the CDAT step size selection. This is repeated for the MLP, densenet121, and wide$\_$resnet50$\_$2 models on the CIFAR100 dataset.}\label{fig:exp6_CIFAR100_c0001_2}
	\end{minipage}
\end{figure}

\renewcommand{\dataset}{SVHN}
\renewcommand{\batchsize}{4994}
\begin{figure}[H] 
	\centering
	\begin{minipage}{1\textwidth}
		\renewcommand{\model}{\dataset/CNN/\batchsize/}
		\includegraphics[width=\imgS\linewidth]{\dir/\model/Sharpnessxstep_\cval_it_\dir}
		\renewcommand{\model}{\dataset/resnet34/\batchsize/}
		\includegraphics[width=\imgS\linewidth]{\dir/\model/Sharpnessxstep_\cval_it_\dir}
		\renewcommand{\model}{\dataset/vgg11/\batchsize/}
		\includegraphics[width=\imgS\linewidth]{\dir/\model/Sharpnessxstep_\cval_it_\dir}\\
		\renewcommand{\model}{\dataset/CNN/\batchsize/}
		\includegraphics[width=\imgS\linewidth]{\dir/\model/FinalStepSize_\cval_it_\dir}
		\renewcommand{\model}{\dataset/resnet34/\batchsize/}
		\includegraphics[width=\imgS\linewidth]{\dir/\model/FinalStepSize_\cval_it_\dir}
		\renewcommand{\model}{\dataset/vgg11/\batchsize/}
		\includegraphics[width=\imgS\linewidth]{\dir/\model/FinalStepSize_\cval_it_\dir}\\
		\renewcommand{\model}{\dataset/CNN/\batchsize/}
		\includegraphics[width=\imgS\linewidth]{\dir/\model/Eigenvalue1_\cval_it_\dir}
		\renewcommand{\model}{\dataset/resnet34/\batchsize/}
		\includegraphics[width=\imgS\linewidth]{\dir/\model/Eigenvalue1_\cval_it_\dir}
		\renewcommand{\model}{\dataset/vgg11/\batchsize/}
		\includegraphics[width=\imgS\linewidth]{\dir/\model/Eigenvalue1_\cval_it_\dir}\\
		\renewcommand{\model}{\dataset/CNN/\batchsize/}
		\includegraphics[width=\imgS\linewidth]{\dir/\model/TrainingLoss_\cval_it_\dir}
		\renewcommand{\model}{\dataset/resnet34/\batchsize/}
		\includegraphics[width=\imgS\linewidth]{\dir/\model/TrainingLoss_\cval_it_\dir}
		\renewcommand{\model}{\dataset/vgg11/\batchsize/}
		\includegraphics[width=\imgS\linewidth]{\dir/\model/TrainingLoss_\cval_it_\dir}\\
		\renewcommand{\model}{\dataset/CNN/\batchsize/}
		\includegraphics[width=\imgS\linewidth]{\dir/\model/TestAccuracy_\cval_it_\dir}
		\renewcommand{\model}{\dataset/resnet34/\batchsize/}
		\includegraphics[width=\imgS\linewidth]{\dir/\model/TestAccuracy_\cval_it_\dir}
		\renewcommand{\model}{\dataset/vgg11/\batchsize/}
		\includegraphics[width=\imgS\linewidth]{\dir/\model/TestAccuracy_\cval_it_\dir}
		\caption{We plot the sharpness * step size (1st Row), step size (2nd Row), sharpness (3rd Row), training loss (4th Row), and test accuracy (5th Row) for five different methods. We compare gradient descent with the NLS and NLS-ub line searches as well as gradient descent with the CDAT step size selection. This is repeated for the CNN, resnet34, and vgg11 models on the SVHN dataset.}\label{fig:exp6_SVHN_c0001_1}
	\end{minipage}
\end{figure}

\begin{figure}[H] 
	\centering
	\begin{minipage}{1\textwidth}
		\renewcommand{\model}{\dataset/MLP/\batchsize/}
		\includegraphics[width=\imgS\linewidth]{\dir/\model/Sharpnessxstep_\cval_it_\dir}
		\renewcommand{\model}{\dataset/densenet121/\batchsize/}
		\includegraphics[width=\imgS\linewidth]{\dir/\model/Sharpnessxstep_\cval_it_\dir}
		\renewcommand{\model}{\dataset/wide_resnet50_2/\batchsize/}
		\includegraphics[width=\imgS\linewidth]{\dir/\model/Sharpnessxstep_\cval_it_\dir}\\
		\renewcommand{\model}{\dataset/MLP/\batchsize/}
		\includegraphics[width=\imgS\linewidth]{\dir/\model/FinalStepSize_\cval_it_\dir}
		\renewcommand{\model}{\dataset/densenet121/\batchsize/}
		\includegraphics[width=\imgS\linewidth]{\dir/\model/FinalStepSize_\cval_it_\dir}
		\renewcommand{\model}{\dataset/wide_resnet50_2/\batchsize/}
		\includegraphics[width=\imgS\linewidth]{\dir/\model/FinalStepSize_\cval_it_\dir}\\
		\renewcommand{\model}{\dataset/MLP/\batchsize/}
		\includegraphics[width=\imgS\linewidth]{\dir/\model/Eigenvalue1_\cval_it_\dir}
		\renewcommand{\model}{\dataset/densenet121/\batchsize/}
		\includegraphics[width=\imgS\linewidth]{\dir/\model/Eigenvalue1_\cval_it_\dir}
		\renewcommand{\model}{\dataset/wide_resnet50_2/\batchsize/}
		\includegraphics[width=\imgS\linewidth]{\dir/\model/Eigenvalue1_\cval_it_\dir}\\
		\renewcommand{\model}{\dataset/MLP/\batchsize/}
		\includegraphics[width=\imgS\linewidth]{\dir/\model/TrainingLoss_\cval_it_\dir}
		\renewcommand{\model}{\dataset/densenet121/\batchsize/}
		\includegraphics[width=\imgS\linewidth]{\dir/\model/TrainingLoss_\cval_it_\dir}
		\renewcommand{\model}{\dataset/wide_resnet50_2/\batchsize/}
		\includegraphics[width=\imgS\linewidth]{\dir/\model/TrainingLoss_\cval_it_\dir}\\
		\renewcommand{\model}{\dataset/MLP/\batchsize/}
		\includegraphics[width=\imgS\linewidth]{\dir/\model/TestAccuracy_\cval_it_\dir}
		\renewcommand{\model}{\dataset/densenet121/\batchsize/}
		\includegraphics[width=\imgS\linewidth]{\dir/\model/TestAccuracy_\cval_it_\dir}
		\renewcommand{\model}{\dataset/wide_resnet50_2/\batchsize/}
		\includegraphics[width=\imgS\linewidth]{\dir/\model/TestAccuracy_\cval_it_\dir}
		\caption{We plot the sharpness * step size (1st Row), step size (2nd Row), sharpness (3rd Row), training loss (4th Row), and test accuracy (5th Row) for five different methods. We compare gradient descent with the NLS and NLS-ub line searches as well as gradient descent with the CDAT step size selection. This is repeated for the MLP, densenet121, and wide$\_$resnet50$\_$2 models on the SVHN dataset.}\label{fig:exp6_SVHN_c0001_2}
	\end{minipage}
\end{figure}

\renewcommand{\dataset}{EMNIST}
\renewcommand{\batchsize}{4992}
\begin{figure}[H] 
	\centering
	\begin{minipage}{1\textwidth}
		\renewcommand{\model}{\dataset/CNN/\batchsize/}
		\includegraphics[width=\imgS\linewidth]{\dir/\model/Sharpnessxstep_\cval_it_\dir}
		\renewcommand{\model}{\dataset/resnet34/\batchsize/}
		\includegraphics[width=\imgS\linewidth]{\dir/\model/Sharpnessxstep_\cval_it_\dir}
		\renewcommand{\model}{\dataset/vgg11/\batchsize/}
		\includegraphics[width=\imgS\linewidth]{\dir/\model/Sharpnessxstep_\cval_it_\dir}\\
		\renewcommand{\model}{\dataset/CNN/\batchsize/}
		\includegraphics[width=\imgS\linewidth]{\dir/\model/FinalStepSize_\cval_it_\dir}
		\renewcommand{\model}{\dataset/resnet34/\batchsize/}
		\includegraphics[width=\imgS\linewidth]{\dir/\model/FinalStepSize_\cval_it_\dir}
		\renewcommand{\model}{\dataset/vgg11/\batchsize/}
		\includegraphics[width=\imgS\linewidth]{\dir/\model/FinalStepSize_\cval_it_\dir}\\
		\renewcommand{\model}{\dataset/CNN/\batchsize/}
		\includegraphics[width=\imgS\linewidth]{\dir/\model/Eigenvalue1_\cval_it_\dir}
		\renewcommand{\model}{\dataset/resnet34/\batchsize/}
		\includegraphics[width=\imgS\linewidth]{\dir/\model/Eigenvalue1_\cval_it_\dir}
		\renewcommand{\model}{\dataset/vgg11/\batchsize/}
		\includegraphics[width=\imgS\linewidth]{\dir/\model/Eigenvalue1_\cval_it_\dir}\\
		\renewcommand{\model}{\dataset/CNN/\batchsize/}
		\includegraphics[width=\imgS\linewidth]{\dir/\model/TrainingLoss_\cval_it_\dir}
		\renewcommand{\model}{\dataset/resnet34/\batchsize/}
		\includegraphics[width=\imgS\linewidth]{\dir/\model/TrainingLoss_\cval_it_\dir}
		\renewcommand{\model}{\dataset/vgg11/\batchsize/}
		\includegraphics[width=\imgS\linewidth]{\dir/\model/TrainingLoss_\cval_it_\dir}\\
		\renewcommand{\model}{\dataset/CNN/\batchsize/}
		\includegraphics[width=\imgS\linewidth]{\dir/\model/TestAccuracy_\cval_it_\dir}
		\renewcommand{\model}{\dataset/resnet34/\batchsize/}
		\includegraphics[width=\imgS\linewidth]{\dir/\model/TestAccuracy_\cval_it_\dir}
		\renewcommand{\model}{\dataset/vgg11/\batchsize/}
		\includegraphics[width=\imgS\linewidth]{\dir/\model/TestAccuracy_\cval_it_\dir}
		\caption{We plot the sharpness * step size (1st Row), step size (2nd Row), sharpness (3rd Row), training loss (4th Row), and test accuracy (5th Row) for five different methods. We compare gradient descent with the NLS and NLS-ub line searches as well as gradient descent with the CDAT step size selection. This is repeated for the CNN, resnet34, and vgg11 models on the EMNIST dataset.}\label{fig:exp6_EMNIST_c0001_1}
	\end{minipage}
\end{figure}

\begin{figure}[H] 
	\centering
	\begin{minipage}{1\textwidth}
		\renewcommand{\model}{\dataset/MLP/\batchsize/}
		\includegraphics[width=\imgS\linewidth]{\dir/\model/Sharpnessxstep_\cval_it_\dir}
		\renewcommand{\model}{\dataset/densenet121/\batchsize/}
		\includegraphics[width=\imgS\linewidth]{\dir/\model/Sharpnessxstep_\cval_it_\dir}
		\renewcommand{\model}{\dataset/wide_resnet50_2/\batchsize/}
		\includegraphics[width=\imgS\linewidth]{\dir/\model/Sharpnessxstep_\cval_it_\dir}\\
		\renewcommand{\model}{\dataset/MLP/\batchsize/}
		\includegraphics[width=\imgS\linewidth]{\dir/\model/FinalStepSize_\cval_it_\dir}
		\renewcommand{\model}{\dataset/densenet121/\batchsize/}
		\includegraphics[width=\imgS\linewidth]{\dir/\model/FinalStepSize_\cval_it_\dir}
		\renewcommand{\model}{\dataset/wide_resnet50_2/\batchsize/}
		\includegraphics[width=\imgS\linewidth]{\dir/\model/FinalStepSize_\cval_it_\dir}\\
		\renewcommand{\model}{\dataset/MLP/\batchsize/}
		\includegraphics[width=\imgS\linewidth]{\dir/\model/Eigenvalue1_\cval_it_\dir}
		\renewcommand{\model}{\dataset/densenet121/\batchsize/}
		\includegraphics[width=\imgS\linewidth]{\dir/\model/Eigenvalue1_\cval_it_\dir}
		\renewcommand{\model}{\dataset/wide_resnet50_2/\batchsize/}
		\includegraphics[width=\imgS\linewidth]{\dir/\model/Eigenvalue1_\cval_it_\dir}\\
		\renewcommand{\model}{\dataset/MLP/\batchsize/}
		\includegraphics[width=\imgS\linewidth]{\dir/\model/TrainingLoss_\cval_it_\dir}
		\renewcommand{\model}{\dataset/densenet121/\batchsize/}
		\includegraphics[width=\imgS\linewidth]{\dir/\model/TrainingLoss_\cval_it_\dir}
		\renewcommand{\model}{\dataset/wide_resnet50_2/\batchsize/}
		\includegraphics[width=\imgS\linewidth]{\dir/\model/TrainingLoss_\cval_it_\dir}\\
		\renewcommand{\model}{\dataset/MLP/\batchsize/}
		\includegraphics[width=\imgS\linewidth]{\dir/\model/TestAccuracy_\cval_it_\dir}
		\renewcommand{\model}{\dataset/densenet121/\batchsize/}
		\includegraphics[width=\imgS\linewidth]{\dir/\model/TestAccuracy_\cval_it_\dir}
		\renewcommand{\model}{\dataset/wide_resnet50_2/\batchsize/}
		\includegraphics[width=\imgS\linewidth]{\dir/\model/TestAccuracy_\cval_it_\dir}
		\caption{We plot the sharpness * step size (1st Row), step size (2nd Row), sharpness (3rd Row), training loss (4th Row), and test accuracy (5th Row) for five different methods. We compare gradient descent with the NLS and NLS-ub line searches as well as gradient descent with the CDAT step size selection. This is repeated for the MLP, densenet121, and wide$\_$resnet50$\_$2 models on the EMNIST dataset.}\label{fig:exp6_EMNIST_c0001_2}
	\end{minipage}
\end{figure}

\subsection{Additional Experiments on the validity of Local Segment Smoothness}
\label{app:additional_segment_experiments}

In Figures \ref{fig:lipschitz_per_it_1_ext} and \ref{fig:lipschitz_per_it_2}, we show further experiments that verify segment smoothness (Assumption \ref{asmt:segment_smoothness}) holds in a variety settings. Continuing our discussion in Section \ref{sec:segment_smoothness}, in Figures \ref{fig:lipschitz_per_it_1_ext} and \ref{fig:lipschitz_per_it_2} we plot the Lipschitz constant $l'(w_k, \eta):=\frac{\|\grad(w_k)-\grad(w_\eta)\|}{\|\eta\grad(w_k)\|}$ along the gradient line at various iterations (0, 100, 500, 1000, and 5000) achieved by NLS. These ``instantaneous'' values are lower bounds for $l(w_k, \eta_{w_k})$ of Definition \ref{def:segment_smoothness}. For a fixed $w_k$, if we assume the gradient to be continuous and Assumption \ref{asmt:segment_smoothness} to hold, the value $l(w_k, \eta_{w_k})$ is finite for any finite $\eta\geq\eta_{w_k}$. On the other hand, if we take $\eta\to\infty$, $l(w_k, \eta)$ can still go to infinity. This is what we observe for the magenta line, that is, $l(w_k, \eta)$ is numerically \texttt{np.inf} for values of $\eta$ that are larger than $0.01$ (e.g., EMNIST$\times$wide\_resnet50\_2). This proves that some of these neural networks (e.g., resnet34, wide\_resnet50\_2) do not have globally directionally $L$-smooth objective functions. Moreover, if we exclude what happens at initialization, we observe that $l'(w_k,\eta)$ is stable (or even decreasing) along the gradient line for all step sizes $\eta$ from $10^{-3}$ to $1$. Only when moving further away from $w_k$ (e.g., with a step size of 10) do we see that $l(w_k, \eta_{w_k})$ is no longer stable. 

Note that the small horizontal dashed lines represent the sharpness at the corresponding iterate, e.g., the magenta dashed line is $\lambda_1(H(w_0)).$ As described in the main text, it is difficult to precisely verify Proposition \ref{lemma:lipschitz_aware} because of the difficulty of calculating $l(w_k, \eta_{w_k})$. The instantaneous values $l'(w_k, \eta)$ and the horizontal dashed line can help us check this. In particular, every time the value $l'(w_k, \eta)$ is at least once above the dashed line, this implies that $l(w_k, \eta_{w_k})\geq \lambda_1(H(w_k))$. This, together with the fact that $\eta_k\cdot\lambda_1(H(w_k))$ is always greater than $2\delta(1-c)$ in Figures \ref{fig:exp1_CIFAR10_c0001_1_ext}-\ref{fig:exp1_EMNIST_c0001_2}, implies that Proposition \ref{lemma:lipschitz_aware} is verified. We note that this seems to happen in the large majority of our experiments. At the same time, when the values of $l'(w_k, \eta)$ are not similar to the corresponding dashed line, the value $l(w_k, \eta_{w_k})$ is not well approximated by $\lambda_1(H(w_k))$.

From the results achieved on EMNIST, one can observe that the self-stabilization semi-formal bound derived in \citet{damian22a}, i.e., $\lambda_1(\nabla^2 f(w_k))\leq \frac{2}{\eta}$, may not hold at the globally-flat points. For instance, the step sizes selected in EMNIST$\times$CNN experiment reach values of 64, but the corresponding sharpness and Lipschitz constant do not decrease below $2/26$.

Now, looking at the vertical dashed lines, we note that they represent $\eta_k$ and are of the same color as the iteration $k$ at which they are selected. From these lines, we observe that the large majority of the step sizes $\eta_k$ chosen by the nonmonotone line search are all within the regions for which the corresponding $l'(w_k,\eta)$ is stable. Despite the fact that $\eta_k\leq\eta_{w_k}$ is not required to derive Propositions \ref{lemma:lipschitz_aware} and \ref{lemma:hoelder_aware}, it is a good sign that nonmonotone line searches select step sizes in the region where $l'(w_k,\eta)$ is stable. In summary, Assumption \ref{asmt:segment_smoothness} is numerically satisfied on all the points we have tested it, and in these cases, the corresponding values of $\eta_{w_k}$ are far from infinitesimal.

Finally, we note that Assumption \ref{asmt:segment_smoothness} is weaker than global $L$-smoothness, global directional $L$-smoothness \citep{mishkin24a} and ($L_0, L_1$)-smoothness (also when defined without involving the Hessian) \citep{zhang20a}. Moreover, Assumption \ref{asmt:segment_smoothness} is also weaker than local smoothness, as it only needs to hold from $w$ along the gradient line and not everywhere around $w$.

\renewcommand{\dir}{exp3}
\renewcommand{\dataset}{CIFAR10}
\renewcommand{\batchsize}{5000}
\renewcommand{\metric}{lipschitz_per_it_0001}
\begin{figure}[H] 
	\centering
	\begin{minipage}{1\textwidth}
		\renewcommand{\model}{\dataset/CNN/\batchsize/}
		\includegraphics[width=\imgS\linewidth]{\dir/\model/\metric}
		\renewcommand{\model}{\dataset/resnet34/\batchsize/}
		\includegraphics[width=\imgS\linewidth]{\dir/\model/\metric}
        \renewcommand{\model}{\dataset/vgg11/\batchsize/}
        \includegraphics[width=\imgS\linewidth]{\dir/\model/\metric}\\
\renewcommand{\dataset}{CIFAR100}
		\renewcommand{\model}{\dataset/CNN/\batchsize/}
		\includegraphics[width=\imgS\linewidth]{\dir/\model/\metric}
		\renewcommand{\model}{\dataset/resnet34/\batchsize/}
		\includegraphics[width=\imgS\linewidth]{\dir/\model/\metric}
		\renewcommand{\model}{\dataset/vgg11/\batchsize/}
		\includegraphics[width=\imgS\linewidth]{\dir/\model/\metric}\\
\renewcommand{\dataset}{SVHN}
\renewcommand{\batchsize}{4994}
		\renewcommand{\model}{\dataset/CNN/\batchsize/}
		\includegraphics[width=\imgS\linewidth]{\dir/\model/\metric}
		\renewcommand{\model}{\dataset/resnet34/\batchsize/}
		\includegraphics[width=\imgS\linewidth]{\dir/\model/\metric}
		\renewcommand{\model}{\dataset/vgg11/\batchsize/}
		\includegraphics[width=\imgS\linewidth]{\dir/\model/\metric}\\
\renewcommand{\dataset}{EMNIST}
\renewcommand{\batchsize}{4992}
		\renewcommand{\model}{\dataset/CNN/\batchsize/}
		\includegraphics[width=\imgS\linewidth]{\dir/\model/\metric}
		\renewcommand{\model}{\dataset/resnet34/\batchsize/}
		\includegraphics[width=\imgS\linewidth]{\dir/\model/\metric}
		\renewcommand{\model}{\dataset/vgg11/\batchsize/}
		\includegraphics[width=\imgS\linewidth]{\dir/\model/\metric}
		\caption{We plot the segment smoothness (\ref{def:segment_smoothness}) for different steps along the gradient direction at varying training iterations for the NLS line search method. In addition, the vertical dashed lines correspond to the selected step size at each iteration. Finally, the horizontal dashed lines correspond to the sharpness at each iteration. This is repeated for the CNN, resnet34, and vgg11 models on the CIFAR10, CIFAR100, SVHN, and EMNIST datasets. }\label{fig:lipschitz_per_it_1_ext}
	\end{minipage}
\end{figure}

\renewcommand{\dataset}{CIFAR10}
\renewcommand{\batchsize}{5000}
\renewcommand{\metric}{lipschitz_per_it_0001}
\begin{figure}[H] 
	\centering
	\begin{minipage}{1\textwidth}
		\renewcommand{\model}{\dataset/MLP/\batchsize/}
		\includegraphics[width=\imgS\linewidth]{\dir/\model/\metric}
		\renewcommand{\model}{\dataset/wide_resnet50_2/\batchsize/}
		\includegraphics[width=\imgS\linewidth]{\dir/\model/\metric}
		\renewcommand{\model}{\dataset/densenet121/\batchsize/}
		\includegraphics[width=\imgS\linewidth]{\dir/\model/\metric}\\
\renewcommand{\dataset}{CIFAR100}
		\renewcommand{\model}{\dataset/MLP/\batchsize/}
		\includegraphics[width=\imgS\linewidth]{\dir/\model/\metric}
		\renewcommand{\model}{\dataset/wide_resnet50_2/\batchsize/}
		\includegraphics[width=\imgS\linewidth]{\dir/\model/\metric}
		\renewcommand{\model}{\dataset/densenet121/\batchsize/}
		\includegraphics[width=\imgS\linewidth]{\dir/\model/\metric}\\
\renewcommand{\dataset}{SVHN}
\renewcommand{\batchsize}{4994}
		\renewcommand{\model}{\dataset/MLP/\batchsize/}
		\includegraphics[width=\imgS\linewidth]{\dir/\model/\metric}
		\renewcommand{\model}{\dataset/wide_resnet50_2/\batchsize/}
		\includegraphics[width=\imgS\linewidth]{\dir/\model/\metric}
		\renewcommand{\model}{\dataset/densenet121/\batchsize/}
		\includegraphics[width=\imgS\linewidth]{\dir/\model/\metric}\\
\renewcommand{\dataset}{EMNIST}
\renewcommand{\batchsize}{4992}
		\renewcommand{\model}{\dataset/MLP/\batchsize/}
		\includegraphics[width=\imgS\linewidth]{\dir/\model/\metric}
		\renewcommand{\model}{\dataset/wide_resnet50_2/\batchsize/}
		\includegraphics[width=\imgS\linewidth]{\dir/\model/\metric}
		\renewcommand{\model}{\dataset/densenet121/\batchsize/}
		\includegraphics[width=\imgS\linewidth]{\dir/\model/\metric}
		\caption{We plot the segment smoothness (\ref{def:segment_smoothness}) for different steps along the gradient direction at varying training iterations for the NLS line search method. In addition, the vertical dashed lines correspond to the selected step size at each iteration. Finally, the horizontal dashed lines correspond to the sharpness at each iteration. This is repeated for the MLP, densenet121, and wide$\_$resnet50$\_$2 models on the CIFAR10, CIFAR100, SVHN, and EMNIST datasets.}\label{fig:lipschitz_per_it_2}
	\end{minipage}
\end{figure}



\newpage
\section{No-Bias Experiments}
\label{app:0-network-exp}

In this section, we discuss what can occur when removing biases from neural network architectures when training with gradient descent with large step sizes. In many of our previous experiments, we show that NLS converges to a globally-flat point. When training some networks where the biases have been removed, we instead show that NLS no longer converges to a globally-flat point but instead collapses to the $0$-network (a trivial stationary point that outputs zero regardless of the input). To verify this claim, in Figure \ref{fig:zero_network_plot} we provide the training loss, sharpness, and gradient norm in three different settings, two of which showing convergence to the $0$-network. The training loss converges to the random guessing values of $1/26$ and $1/10$, for the EMNIST and SVHN experiments, respectively. The sharpness and gradient norms converge to zero in both experiments. Finally, we checked the output of the network on each image of the dataset and this is constantly $0$, showing that NLS is converging to the $0$-network. We also provide one experiment where a model with the biases removed is trained on CIFAR10, yet NLS does not converge to the $0$-network. We leave to future work exploring the conditions under which this network collapse occurs, since removing the biases does not always lead to NLS converging to the $0$-network for all network architectures we tested.


\renewcommand{\dir}{exp9}
\renewcommand{\cval}{0001}
\renewcommand{\imgS}{0.32}
\begin{figure}[H] 
	\centering
	\begin{minipage}{1\textwidth}
        \renewcommand{\dataset}{EMNIST}
        \renewcommand{\batchsize}{4992}
		\renewcommand{\model}{\dataset/CNN/\batchsize/}
		\includegraphics[width=\imgS\linewidth]{\dir/\model/TrainingLoss_\cval_it_\dir}
        \renewcommand{\dataset}{SVHN}
        \renewcommand{\batchsize}{4994}
		\renewcommand{\model}{\dataset/resnet34/\batchsize/}
		\includegraphics[width=\imgS\linewidth]{\dir/\model/TrainingLoss_\cval_it_\dir}
        \renewcommand{\dataset}{CIFAR10}
        \renewcommand{\batchsize}{5000}
		\renewcommand{\model}{\dataset/vgg11/\batchsize/}
		\includegraphics[width=\imgS\linewidth]{\dir/\model/TrainingLoss_\cval_it_\dir}\\

        \renewcommand{\dataset}{EMNIST}
        \renewcommand{\batchsize}{4992}
		\renewcommand{\model}{\dataset/CNN/\batchsize/}
		\includegraphics[width=\imgS\linewidth]{\dir/\model/Eigenvalue1_\cval_it_\dir}
        \renewcommand{\dataset}{SVHN}
        \renewcommand{\batchsize}{4994}
		\renewcommand{\model}{\dataset/resnet34/\batchsize/}
		\includegraphics[width=\imgS\linewidth]{\dir/\model/Eigenvalue1_\cval_it_\dir}
        \renewcommand{\dataset}{CIFAR10}
        \renewcommand{\batchsize}{5000}
		\renewcommand{\model}{\dataset/vgg11/\batchsize/}
		\includegraphics[width=\imgS\linewidth]{\dir/\model/Eigenvalue1_\cval_it_\dir}\\
        
        \renewcommand{\dataset}{EMNIST}
        \renewcommand{\batchsize}{4992}
		\renewcommand{\model}{\dataset/CNN/\batchsize/}
		\includegraphics[width=\imgS\linewidth]{\dir/\model/GradientNorm_\cval_it_\dir}
        \renewcommand{\dataset}{SVHN}
        \renewcommand{\batchsize}{4994}
		\renewcommand{\model}{\dataset/resnet34/\batchsize/}
		\includegraphics[width=\imgS\linewidth]{\dir/\model/GradientNorm_\cval_it_\dir}
        \renewcommand{\dataset}{CIFAR10}
        \renewcommand{\batchsize}{5000}
		\renewcommand{\model}{\dataset/vgg11/\batchsize/}
		\includegraphics[width=\imgS\linewidth]{\dir/\model/GradientNorm_\cval_it_\dir}
        
		\caption{We plot the training loss (1st Row), sharpness (2nd Row), and gradient norm (3rd row) for the NLS line search method. This is repeated for the EMNIST$\times$CNN, SVHN$\times$resnet34, and CIFAR10$\times$vgg11 experiments, where the biases have been removed from the last layer of the networks being trained.}
        \label{fig:zero_network_plot}
	\end{minipage}
\end{figure}


\end{document}